\newtheorem{assumption}{Assumption}
\newtheorem{lemma}{Lemma}
\newtheorem{theorem}{Theorem}
\newtheorem{corollary}{Corollary}
\DeclareMathOperator{\KL}{KL}
\definecolor{hl}{RGB}{205, 232, 248}
\definecolor{lhl}{RGB}{234, 245, 250}
\definecolor{sb_orange_025}{RGB}{255, 223, 195}
\newcommand{\po}{\phantom{0}}
\newcommand{\ie}{\textit{i}.\textit{e}.}
\newcommand{\bfa}{\mathbf{a}}
\newcommand{\bfs}{\mathbf{s}}
\newcommand{\ba}{\mathbf{a}}
\newcommand{\bs}{\mathbf{s}}
\newcommand{\cmmnt}[1]{}
\newcommand{\tr}[1]{}
\newcommand{\transitions}{T}
\newcommand{\bellman}{\mathcal{B}}
\newcommand{\policy}{\pi}
\newcommand{\vpolicy}{{\pi^v_k}}
\newcommand{\behaviorv}{{\pi}_{\beta_k}}
\newcommand{\E}{\mathbb{E}}
\def\rva{{\mathbf{a}}}
\def\rvs{{\mathbf{s}}}
\newcommand{\comment}[1]{\ifthenelse{\boolean{showcomments}}
	{\textcolor{red}{(Comment: #1)}}
	{}
}
\newcommand{\red}[1]{\ifthenelse{\boolean{showcomments}}
	{\textcolor{red}{#1}}
	{}
}
\newcommand{\blue}[1]{\ifthenelse{\boolean{showcomments}}
	{\textcolor{black}{#1}}
	{}
}
\newcommand{\answer}[1]{\ifthenelse{\boolean{showanswers}}
	{\textcolor{blue}{(Answer: #1)}}
	{}
}
\newtheorem{remark}{Remark} 
\newtheorem{definition}{Definition} 
\begin{document}

\title{FOVA: Offline Federated Reinforcement Learning 
with Mixed-Quality Data
\vspace{-0pt}}       
\author{
        Nan~Qiao,~\IEEEmembership{Student Member,~IEEE},
        Sheng~Yue,~\IEEEmembership{Member,~IEEE},
        and Ju Ren,~\IEEEmembership{Senior Member,~IEEE},
        and~Yaoxue~Zhang,~\IEEEmembership{Senior Member,~IEEE}
    \IEEEcompsocitemizethanks{
    \IEEEcompsocthanksitem 
    This research was supported in part by the National Natural Science Foundation of China under Grant Nos. 62572496 and 62432004, 
    by the Young Elite Scientist Sponsorship Program by CAST under Contract No. ZB2025-218, 
    by the Shenzhen Science and Technology Program under Grant No. JCYJ20250604175500001, 
    and by a grant from the Guoqiang Institute, Tsinghua University.
    \textit{(Corresponding author: Sheng Yue)}.
    \IEEEcompsocthanksitem Nan Qiao is with the School of Computer Science and Engineering, Central South University, Changsha, Hunan, 410083 China,
    and also 
    with the 
    School of Cyber Science and Technology, Sun Yat-sen University, Shenzhen, 518107 China.
    Email: \texttt{nan.qiao@csu.edu.cn}.
    \IEEEcompsocthanksitem Sheng Yue is with the 
    School of Cyber Science and Technology, Sun Yat-sen University, Shenzhen, 518107 China. Email: \texttt{yuesh5@mail.sysu.edu.cn}.
    \IEEEcompsocthanksitem Ju Ren and Yaoxue Zhang are with the Department of Computer Science and Technology, BNRist, Tsinghua University, Beijing, 100084 China, and also with Zhongguancun Laboratory, Beijing, 100094 China. Emails: \texttt{\{renju, zhangyx\}@tsinghua.edu.cn}.
    }  
}
\maketitle

\begin{abstract}
Offline Federated Reinforcement Learning (FRL), a marriage of federated learning and offline reinforcement learning, has attracted increasing interest recently. 
Albeit with some advancement, we find that the performance of most existing offline FRL methods drops dramatically when provided with mixed-quality data, that is, the logging behaviors (offline data) are collected by policies with varying qualities across clients. 
To overcome this limitation, this paper introduces a new vote-based offline FRL framework, named FOVA. It exploits a \emph{vote mechanism} to identify high-return actions during local policy evaluation, alleviating the negative effect of low-quality behaviors from diverse local learning policies.
Besides, building on advantage-weighted regression (AWR), we construct consistent local and global training objectives, significantly enhancing the efficiency and stability of FOVA.
Further, we conduct an extensive theoretical analysis and rigorously show that the policy learned by FOVA enjoys strict policy improvement over the behavioral policy.
Extensive experiments corroborate the significant performance gains of our proposed algorithm over existing baselines on widely used benchmarks.
Code is available at:
\url{https://sites.google.com/view/fova}.
\end{abstract}
\begin{IEEEkeywords}
    Offline Federated Reinforcement Learning, Federated Learning, Reinforcement Learning, Edge Intelligence
\end{IEEEkeywords}


\section{Introduction}
\label{sec:introduction}


\IEEEPARstart{W}{ith} the growing prevalence of Artificial Intelligence of Things (AIoT) applications, a large number of edge devices involving intelligent decision-making have emerged \cite{lin2024awq,khan2022level,lim2020federated,paoi2024}, such as self-driving cars and mobile agents \cite{liu2019edge,ndikumana2020deep,yang2023path}. 
\blue{
To enable decision-making capabilities in edge devices, Federated Reinforcement Learning (FRL) \cite{kiran2021deep, masoudi2020device,yue2024CJE,kaelbling1996reinforcement}  combines Federated Learning (FL) \cite{wang2024dfrd,mcmahan2017communication} and Reinforcement Learning (RL) \cite{agarwal2019reinforcement,offtutorial,matsuo2022deep}. This approach allows for collaborative policy training in a distributed manner without sharing raw data \cite{woo2023blessing,lim2021federated}.}
In a standard FRL setting, each client independently interacts with the environment and updates the policy locally \cite{wang2024momentum,zhang2024finitetime}.
\blue{
These updates are then uploaded to a centralized server for policy aggregation \cite{qi2021federated, wang2020federated, jin2022federated}.}

Unfortunately, owing to environmental uncertainties, interactions with the real world can be costly and risky, such as in healthcare and autonomous driving \cite{levine2020offline,prudencio2023survey}.
Accordingly, \textit{offline FRL} \cite{yue2024federated}, which enables clients to learn the decision policy using only precollected static data with no further online environmental interaction, has garnered increasing attention in recent years \cite{zhou2024federated,rengarajan2023federated,qi2021federated}. Existing studies have attempted to tackle the necessary challenges in offline FRL \cite{park2022federated,wen2023federated,woo2024federated,yue2024federated, rengarajan2023federated,yue2024CJE}. 
One major challenge is the mismatch between the state-action distributions of the updating policy and the offline dataset, easily causing large extrapolation errors during offline training \cite{qi2021federated}. 
To mitigate this problem,
several efforts incorporate conservatism terms into the training loss to ensure the learning policy stays close to the behavior policy
\cite{park2022federated,wen2023federated}. Another critical issue is the limited state-action coverage of each client's local dataset, which can lead to unreliable behavior estimates in unseen environments \cite{woo2024federated}.
Some existing works propose to constrain the distribution of the learning policy around the average visitation distributions of clients to remedy this issue
\cite{shen2023distributed}.
\blue{
Other recent works explore bi-regularization techniques to manage and calibrate these distributional shifts \cite{yue2024federated, rengarajan2023federated}.}

\begin{figure}[tbp]
    \centering
        \subfigure{
                \begin{minipage}[htbp]{0.475\columnwidth}
                    \centering	
                    \label{fig:intro-mixed}
                    \includegraphics[width=\textwidth]{./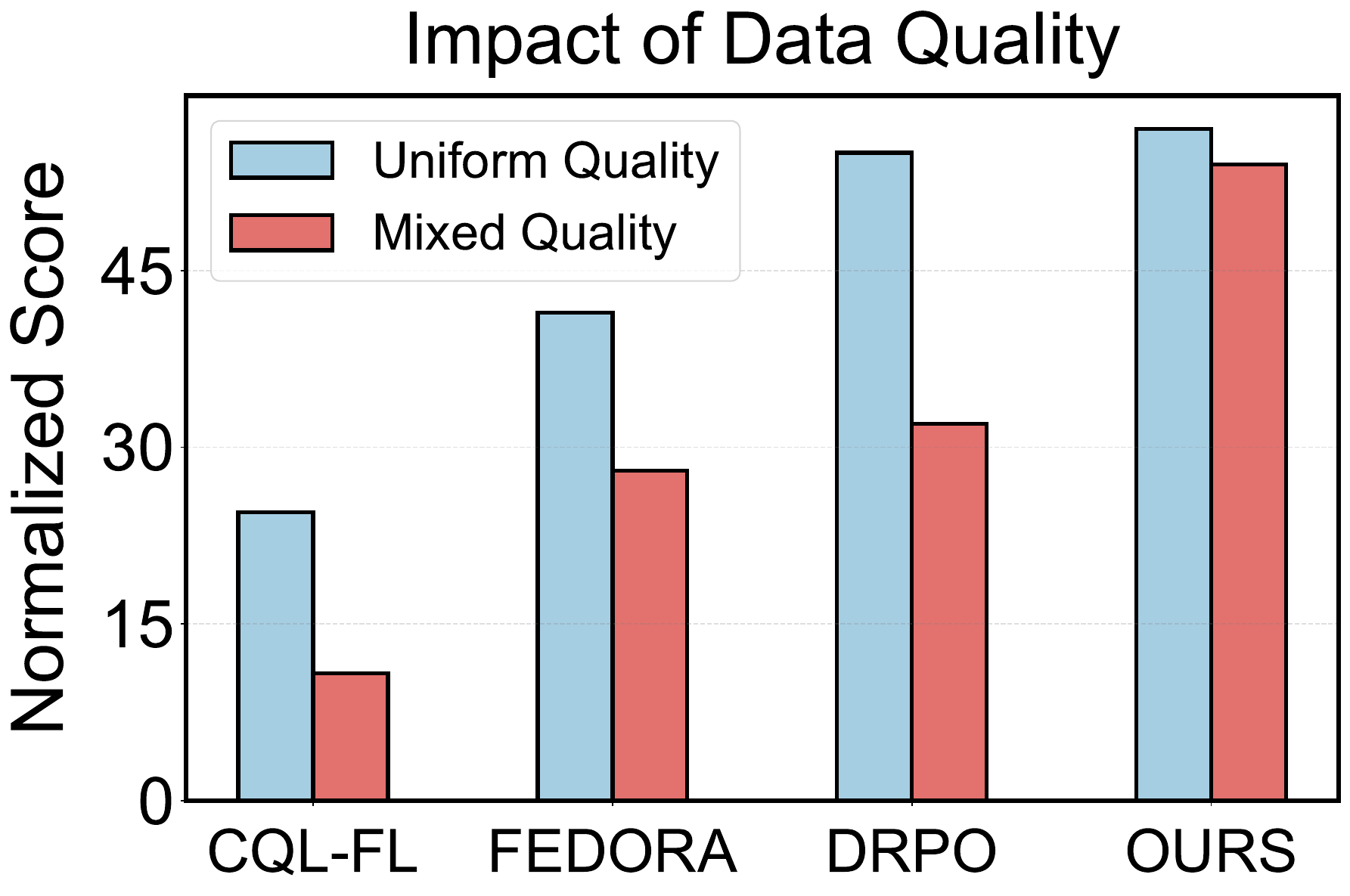}
              	\end{minipage}
        }
        \hspace{-0.02\textwidth}
        \subfigure{
                \begin{minipage}[htbp]{0.475\columnwidth}
                    \centering	
                    \label{fig:intro-incon}
                    \includegraphics[width=\textwidth]{./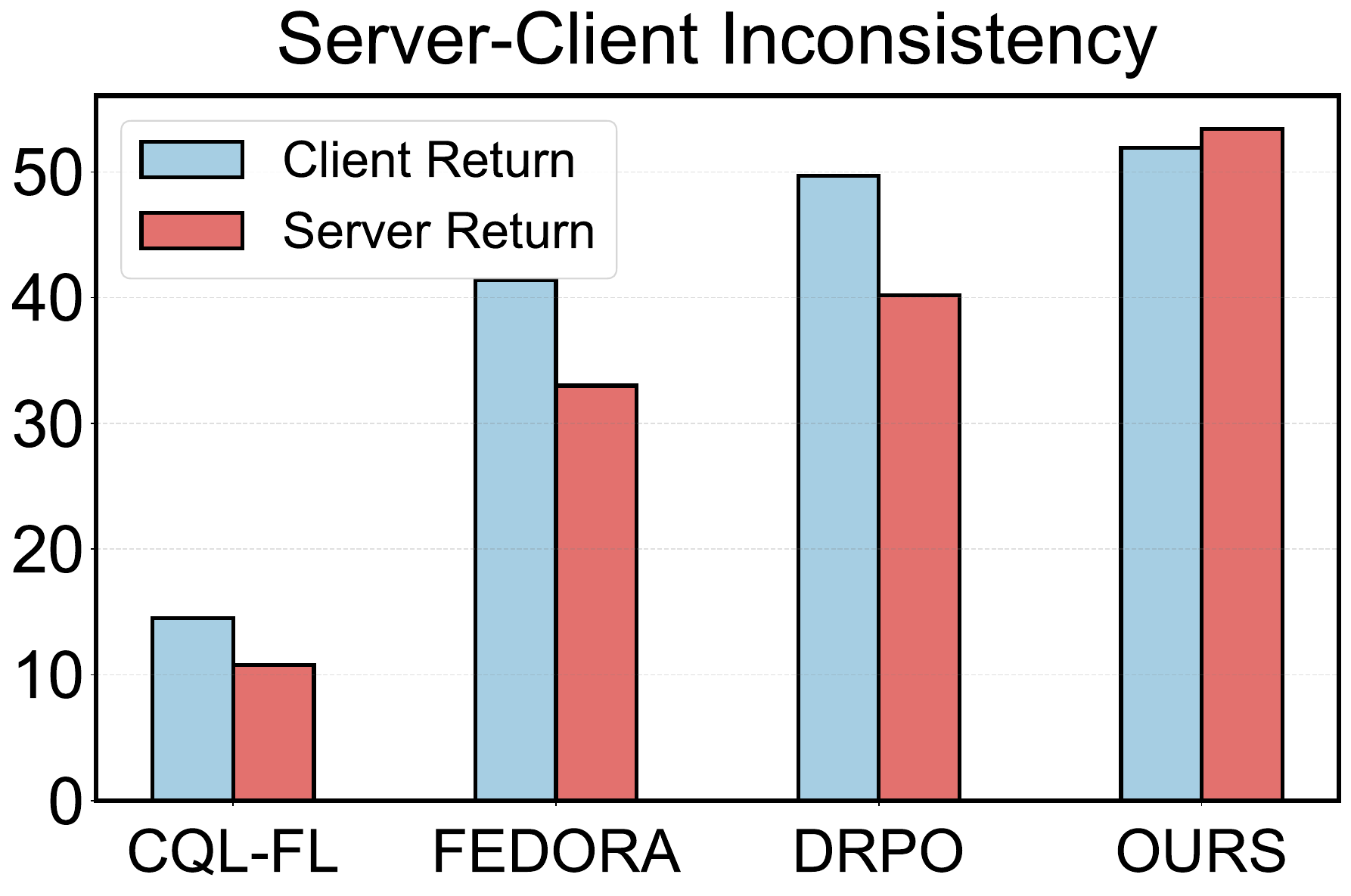}
              	\end{minipage}
        }
        
\caption{
Results of CQL-FL \cite{park2022federated}, FEDORA \cite{rengarajan2023federated}, DRPO \cite{yue2024federated} and the proposed method on Gym locomotion tasks. Left: 
Performance comparison between mixed-quality and uniform-quality (commonly used as evaluated benchmarks in the existing literature) cases. 
Right: 
Performance comparison between the average client-side (local policies) return and server-side (global policy) return.
 }
 \vspace{-17pt}
 \label{fig:intro}
\end{figure}

\blue{
Albeit with these empirical successes, we find that most existing offline FRL methods suffer from 
significant performance degradation with \textit{mixed-quality} data. This occurs because logging behaviors (offline data) are collected by policies with varying quality across different clients.}
As depicted in Fig. \ref{fig:intro-mixed}, the performance of existing approaches with mixed-quality data drops by approximately 40\% and exhibits more instability, in sharp contrast to those with uniform quality (see Section II in the supplementary file for more results) \cite{qiao2025fova_supp}.  
The rationale is that, in the mixed-quality case, some clients have low-quality offline data, which inevitably induces low-quality learning policies during the local updating phase. Due to the lack of an effective mechanism to evaluate policy quality, these low-quality policies would `contaminate' the global policy during the aggregation, ultimately resulting in the degradation of overall performance.

In addition, the existing methods are typically limited by \textit{inconsistent} optimization objectives. 
That is, in local updating, each client updates the local policy for improvement over its local logging behaviors (varying qualities across clients).
\blue{
However, in the global updating, they simply aggregate the local policies by averaging them as in standard FL.
}
This would lead to large biases in optimizing the global policy.  To corroborate the negative impact of this inconsistency, we conducted a series of experiments comparing the average client and server returns across existing algorithms.
As illustrated in Fig.~\ref{fig:intro-incon}, during training, the performance of the aggregated policy is consistently inferior to the average of local updating policies (see Section I in the supplementary file for more results) \cite{qiao2025fova_supp}.
This server-client objective misalignment would significantly affect the training efficiency and effectiveness. 
To overcome these limitations, this paper introduces a new offline FRL method, named
\textit{\underline{F}ederated \underline{O}ffline reinforcement learning with \underline{V}ote mechanism and \underline{A}dvantage-weighted regression} (FOVA), 
which encompasses two key components: the vote mechanism and the advantage-weighted regression.
Specifically, the vote mechanism is designed to 
select the behavior with the maximum Q-value from the global, local, and local behavioral policies. It is thereby capable of identifying high-return actions during local policy
evaluation, alleviating the negative effect of low-quality behaviors
from diverse local learning policies.
Further, leveraging the technique of advantage-weighted regression (AWR), we devise consistent and tractable local and global policy improvement objectives, largely improving training efficiency and stability. 
Building on this, our method significantly surpasses baseline performance.

In a nutshell, our main contributions are summarized below:
\begin{itemize}
    \item
    \blue{
    We propose a new offline FRL method, FOVA, which can be efficiently implemented in parallel across clients, with mixed-quality behavioral data.
    FOVA can mitigate the negative effects of low-quality behaviors across clients via a vote mechanism, while significantly enhancing the training efficiency and stability policy by inconsistent optimization objectives.
    } 
    \item 
    
    We conduct extensive theoretical analysis of the proposed method. We prove that FOVA learns a conservative Q-function, 
    enabling the learned policy to safely explore out-of-distribution state-action pairs from local data. Additionally, FOVA achieves strict performance improvements regardless of the quality of local datasets.
    
    \item 
    Through extensive simulations, we demonstrate that FOVA significantly outperforms existing methods on standard offline RL benchmarks, achieving average score improvements of 20\%–50\%. Furthermore, FOVA attains improvements of up to 56\%, in mixed-quality data cases.
\end{itemize}

The remainder of the paper is organized as follows. Section \ref{sec:RelatedWork} briefly reviews related work. Section \ref{sec:preliminaries} introduces the preliminaries, including FRL, offline FRL, and the system model. In Section \ref{sec:proposed_method}, we describe the proposed vote mechanism and AWR method to tackle mixed-quality data and inconsistent optimization objective problems, respectively. Section \ref{sec:improve_bp} presents the theoretical analysis. Section \ref{sec:simulation} showcases the simulation results, , followed by the discussion of limitations and future work in Section \ref{sec:future-work}, and the concluding remarks in Section \ref{sec:conclusion}.


\section{Related Work}\label{sec:RelatedWork}

\blue{As a key enabling technology, FRL has become increasingly important in the field of edge intelligence \cite{yu2020deep,mills2019communication,zhang2024distributed}.}
Early studies primarily focused on online FRL, where intelligent agents interact with the environment \cite{huang2024multi,wu2024fedcache,wu2024netllm,wang2023uav}.
However, with the development of offline RL techniques, which allow training with offline data without the need for environment interaction, offline FRL has garnered growing research attention  \cite{rengarajan2023federated,park2022federated}.


\textbf{Online FRL.} 
\blue{
FL was first introduced with the Federated Averaging algorithm, enabling collaborative model training without sharing data, thus addressing privacy concerns in distributed settings \cite{mcmahan2017communication}. 
}
\blue{
Initially focused on supervised learning, recent work has shifted towards applying FL to online RL, driven by the increasing demand for intelligent decision-making systems that can interact with dynamic environments \cite{wang2024dfrd,hebert2022fedformer,lim2021federated}.}
Some works in FRL have focused on practical applications, such as combining Proximal Policy Optimization with Federated Averaging on IoT devices \cite{lim2020federated} and adapting Deep Deterministic Policy Gradient for autonomous driving scenarios \cite{liang2022federated}.
Additionally, recent research has delved into theoretical aspects, addressing challenges like adversarial attack resilience \cite{wu2021byzantine}, environmental and task heterogeneity \cite{wang2023federated}, and the complexities of sample and communication under asynchronous or online sampling \cite{woo2023blessing}.
\blue{
Despite these innovations, many FRL methods have focused primarily on online RL, where interactions with the environment are often costly and risky \cite{xie2022fedkl,woo2024federated}.}
This has prompted increased attention to offline RL \cite{sutton2018reinforcement,prudencio2023survey}.

\blue{\textbf{Offline RL.} Offline RL focuses on learning policies from static datasets, addressing challenges like distribution shifts, where the state-action visitation distribution of the learned policy diverges from that of the behavior policy used to generate the data \cite{levine2020offline,peng2019advantage,kostrikov2021iql,kumar2019stabilizing,fujimoto2021minimalist,kumar2020conservative,lyu2022mildly,an2021uncertainty}. Such discrepancies can significantly degrade policy performance. 
To counteract this, offline RL typically employs conservatism regularization, ensuring that the learned policy does not overfit to out-of-distribution actions. Several methods achieve this by keeping the learned policy close to the behavior policy \cite{peng2019advantage,kostrikov2021iql,kumar2019stabilizing,fujimoto2021minimalist} or by penalizing excessive optimism regarding out-of-distribution actions \cite{kumar2020conservative,lyu2022mildly,an2021uncertainty}. 
These approaches have been further adapted to combine the privacy-preserving advantages of FL with offline RL techniques, leading to the development of offline FRL \cite{park2022federated,woo2024federated,yue2024federated,rengarajan2023federated}
}

\textbf{Offline FRL.}
\blue{
Offline FRL has recently gained significant attention due to its unique challenges and potential.
}
Park et al. and Yue et al. extended FRL to offline data scenarios, utilizing conservative regularization terms to mitigate extrapolation errors caused by the lack of environmental interactions \cite{park2022federated,yue2024CJE}. 
Woo et al. designed the sample-efficient FedLCB-Q method on the server side, addressing data coverage issues due to privacy constraints that prevented data sharing \cite{woo2024federated}.
Yet, they overlook the distributional differences between local policy and global policy, which have been shown to cause significant performance degradation.
Additionally, Yue et al. and Rengarajan et al. identified distribution shifts between local policies and global policy, using regularization terms to constrain the differences in policy probability distributions \cite{yue2024federated,rengarajan2023federated}. 
Nevertheless, they neglected the fact that both local and global policies are easily influenced by low-quality logging behaviors, which would be further exacerbated during continuous local training and global aggregation, ultimately leading to unsatisfactory policy performance. 
Moreover, existing work exhibits inconsistency between the optimization objective of the local policy and the actual optimization target of the global policy.
Our approach tackles these problems by employing the vote mechanism and advantage-weighted regression method, respectively.

\section{Preliminaries}\label{sec:preliminaries}
In this section, we first introduce preliminaries for FRL and offline FRL, followed by the system model and problem formulation.

\subsection{Federated Reinforcement Learning}

In RL, the environment is modeled as a Markov Decision Process (MDP), defined by the tuple $\mathcal{M} = (\mathcal{S}, \mathcal{A}, T, r, \mu_0, \gamma)$. Here, $\mathcal{S}$ is the state space, $\mathcal{A}$ is the action space, $T$ represents state transition probabilities, $r$ is the reward function, $\mu_0$ is the initial state distribution, and $\gamma \in (0,1)$ is the discount factor.

\blue{
In the \textit{actor-critic} methods, a policy (denoted as $\pi(\ba|\bs)$) is explicitly trained to maximize the expected return $J(\mathcal{M},\pi) := \frac{1}{1-\gamma}\mathbb{E}_{(\bs,\ba)\sim d_{\mathcal{M}}^\pi(\bs,\ba)}[r(\bs,\ba)]$, where $d_{\mathcal{M}}^\pi$ represents the state-action visitation distribution, through alternating phases of policy evaluation and policy improvement.
We assume $|r(\bs,\ba)|\le r_{\mathrm{max}}$}.
\blue{During policy evaluation, the Q-function $Q^\pi$ is updated using the Bellman operator, $\mathcal{B}^\pi Q = r + \gamma P^\pi Q.$
Specifically, $P^\pi$ denotes the transition probability matrix under policy $\pi$, defined as $P^\pi Q(\bs, \ba) = \E_{\bs' \sim T(\cdot | \bs, \ba), \ba' \sim \pi(\cdot|\bs')} [Q(\bs', \ba')].$}
In the policy improvement phase, the policy is refined to favor actions with higher expected Q-values.

FRL carries out this process in a distributed manner, leveraging the paradigm of FL, without exchanging local data.
The objective of FRL is to accelerate policy search and enhance sampling and communication efficiency through client collaboration while maintaining strict adherence to information privacy requirements.


\subsection{Offline Federated Reinforcement Learning}

The objective of offline FRL is to learn a global policy $\bar\pi$ from offline data $\{\mathcal{D}_k\}_{k \in \mathcal{K}}$ distributed across clients without interacting with the environment. Each offline dataset $\mathcal{D}_k$ is collected using a behavior policy $\pi_{\beta_k}$, with its discounted marginal state distribution denoted as $d_{\pi_{\beta_k}}$.
The offline mixed-data over all clients can therefore
be regarded as induced by a mixture of behavior policies $\{\pi_{\beta_k}\}_{k\in\mathcal{K}}$ \cite{jia2024offline}.
We characterize the data quality of client $k$ by the
expected return $J(\mathcal{M},\pi_{\beta_k})$. 

\blue{Furthermore, offline FRL faces significant challenges due to distribution shifts, which include discrepancies between the distributions of local policy \(\pi_k\) and the behavior policy \(\pi_{\beta_k}\), as well as variations in behavior policy distributions across different clients }\cite{woo2024federated,zhou2024federated,rengarajan2023federated}.
To overcome these challenges, current approaches design dual regularization during local updates, that is,
\begin{align}
    \max_{\pi_k}\mathbb{E}_{\bs\sim\mathcal{D}_k,\ba\sim\pi_k(\cdot|\bs)}\big[\hat{Q}^{\pi_k}(\bs,\ba)\big]
    - \alpha_1 D(\pi_k,\policy_{\beta_k}) - \alpha_2 D(\pi_k,\bar{\pi}),
    \nonumber
    \label{eq:local_updating_pi}
\end{align}
where $\alpha_1$ and $\alpha_2$ are positive coefficients, and $D(\cdot,\cdot)$ denote a divergence measure such as the total variance (TV) distance or the L2-norm distance for the constrained policy distribution shift \cite{yue2024federated, rengarajan2023federated}. 
These regularization terms attempt to strike a balance between alignment with the local behavior policy $\pi_k$ and the global policy $\bar\pi$ aggregated in the server.

However, the dual regularization approaches address the aforementioned distribution shift without taking into account the quality of local datasets. This results in poor performance in the case of mixed-quality behavioral data. 
Moreover, existing methods only seek to find multiple local policies instead of optimizing the global policy generated from server aggregation. The global policy should have been the optimization objective of offline FRL. This leads to the problem of inconsistent optimization objectives. 
In Appendix \ref{app:app-challenges}, we delve deeper into these problems.

\begin{figure}[t!]
    \setlength{\abovecaptionskip}{-0cm} 
    \setlength{\belowcaptionskip}{-0cm} 
    \centering
	\includegraphics[width=0.4\textwidth]{./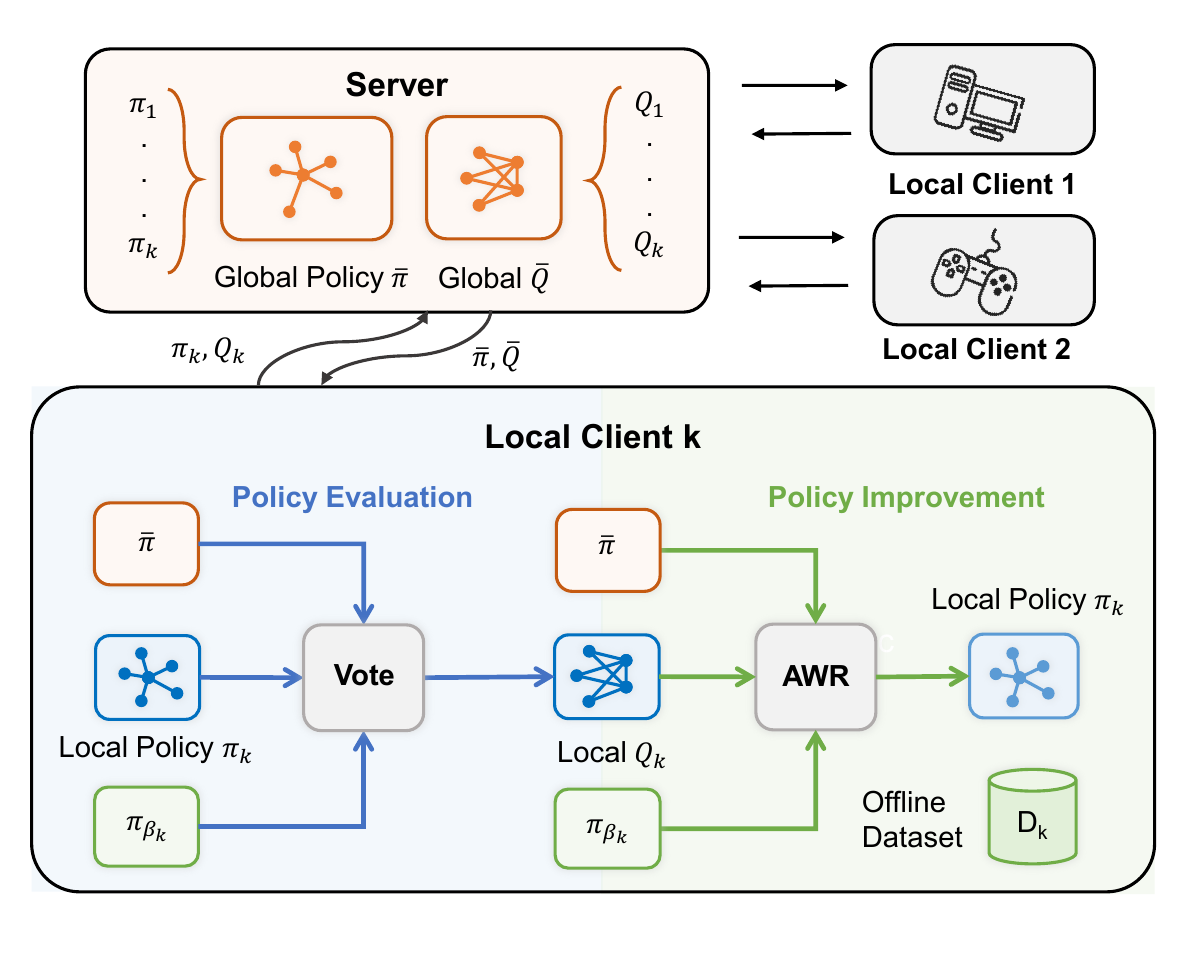}
    \caption{Illustration of offline FRL training procedure,
    demonstrating how clients collaborate training based on our FOVA method.}
    \label{fig:system-model}
\end{figure}
 
\subsection{System Model}\label{subsec:system_model}
The system model depicted in Fig.~\ref{fig:system-model} consists of a collection of clients, indexed by $\mathcal{K}$, representing $K$ entities that interface with a central server. Each client, denoted by $k \in \mathcal{K}$, possesses an individual offline dataset $\mathcal{D}_k = \{(s_i, a_i, r_i, s'_i, a'_i)\}_{i=1}^{D_k}$ collected using its own behavior policy $\pi_{\beta_k}$. Specifically, during the local update phase, we have designed a vote mechanism and an AWR method.
In the local policy evaluation phase, we implement the vote mechanism to identify and select better behavior across $\bar{\pi}$, $\pi_k$, and $\pi_{\beta_k}$, thereby training the local $Q_k$ function. In the local policy improvement phase, we develop the AWR method, which fully utilizes $\bar{\pi}$, $Q_k$, and $\pi_{\beta_k}$ to train the local policy function $\pi_k$. Subsequently, the parameters $\pi_k$ and $Q_k$ generated in the client $k$ are transmitted to the central server. The server then performs global aggregation to obtain the global parameters $\bar{\pi}$ and $\bar{Q}$, which are distributed back to the local clients.
For clarity, we have compiled the key notations used in this paper and presented them in Table~\ref{table:notations}.
\begin{table}[ht]
\small
    \caption{Key Notations}
    \label{table:notations}
    \centering
    \renewcommand\arraystretch{1.25}
    \resizebox{\columnwidth}{!}{
        \begin{tabular}{l|l}
            \hline
            \textbf{Notation} & \textbf{Definition} \\ \hline
            \(M\) & Markov Decision Process \((S, A, T, R, \mu_{0}, \gamma)\) \\
            \(S\) & state space of the MDP \\
            \(A\) & action space of the MDP \\
            \(T\) & state transition probabilities in the MDP \\
            \(R\) & reward function in the MDP \\
            \(\mu_{0}\) & initial state distribution in the MDP \\
            \(\gamma\) & discount factor in the MDP \\
            \(\pi(\bfa | \bfs)\) & probability of taking action \(\bfa\) under the input state \(\bfs\) \\
            \(Q^{\pi}(\bfs,\bfa)\) & Q function of policy \(\pi\) at state \(\bfs\) and action \(\bfa\)\\
            \(V^{\pi}(\bfs)\) & value function of policy \(\pi\) at \(\bfs\)\\
            \(A(\bfs, \bfa)\) & advantage function, calculated by \(Q(\bfs, \bfa)-V(\bfs)\) \\
            \(\mathcal{B}^{\pi}\) & actual Bellman operator under \(\pi\) \\
            \(k\) & the index of client \\
            \(D_{k}\) & offline dataset of client \(k\) by \(\pi_{\beta_{k}}\) \\
            \(\pi_{\beta_{k}}(\bfa | \bfs)\) & behavior policy for \(D_{k}\) \\
            \(d_{\pi_{\beta_{k}}}(\bfs)\) & discounted marginal state distribution of \(\pi_{\beta_{k}}(\bfa | \bfs)\) \\
            \(\pi_{k}(\bfa | \bfs)\) & local policy of client \(k\) \\
            \(\overline{\pi}(\bfa | \bfs)\) & global policy obtained in server aggregation \\
            \(\pi_{k}^{v}(\bfa | \bfs)\) & vote policy for max Q value behavior \\
            \(\hat{Q}^{\tau}(\bfs,\bfa)\) & estimated $Q(\bfs,\bfa)$ function at iteration \(\tau\) \\
            \(\hat{\mathcal{B}}^{\pi}\) & empirical Bellman operator under \(\pi\) \\
            \(\beta\) & positive temperature parameter in AWR \\
            \(\lambda\) & positive weighting parameter in AWR \\
            \(Z(\bfs)\) & partition function in AWR \\
            \hline
        \end{tabular}
    }
\end{table}

\section{Proposed Method}\label{sec:proposed_method}
In this section, 
we introduce FOVA, detailing two primary mechanisms in local updates: the vote mechanism and the AWR method. 
These approaches address the problems observed in existing methods and provide a more effective framework for policy learning in offline FRL. 
Additionally, we provide theoretical analyses for each of the two methods.

\subsection{Local Policy Evaluation with Vote Mechanism}\label{subsec:vp}

As illustrated before, existing methods often experience performance degradation in scenarios with mixed-quality data due to a lack of the mechanism to assess behavior from policies. 
To this end, we design the vote policy \(\vpolicy\), which 
selects the best behavior from mixed-quality policies. Specifically, vote policy \(\vpolicy\) chooses 
the behavior with the max Q-value from \(\pi_k\), \(\pi_{\beta_k}\), and \(\bar\pi\), which is defined as follows:
\begin{equation}
\label{eq:vote_policy}
\pi^v_k(\bfa|\bfs)={\arg\max}_{\pi^{\prime}_k\in\{\pi_k,\pi_{\beta_k},\bar{\pi}_k\}}  Q_{\bfa \sim \pi^{\prime}_k(\bfa|\bfs)}(\bs, \ba) ,~~ \forall \bfs.
\end{equation}
Notably, our method does not require training a new policy $\vpolicy$. Instead, it selects the optimal behavior and its corresponding Q-value by identifying the maximum Q-value.
This can be implemented with just a few lines of code. 
\blue{From an expectation perspective, the expected Q-value under the state–action distribution induced by a policy recovers the previously defined return $J(\mathcal{M},\pi)$, which we use to measure data quality. Therefore, choosing the action with the largest Q-value at each state can be viewed as locally selecting behaviors from higher-value policies.}
By integrating evaluations of additional \(\pi_{\beta_k}\) and \(\bar\pi\), this approach effectively enhances Q-function performance.
The Q-learning with vote policy is:
\begin{align}
\label{eq:vql}
\hat Q^{\tau+1} \leftarrow \arg \min_{Q}~ {\mathbb{E}_{\bfs, \bfa, \bfs^{\prime} \sim \mathcal{D}_{k}}\left[\left(Q(\bfs, \bfa)-\hat{\mathcal{B}}^{\red{\pi^v_k}} \hat Q^{\tau}(\bfs, \bfa)\right)^{2}\right]}.
\end{align}
However, such a Q-learning update fails to effectively address the problem of extrapolation error. Extrapolation error poses a significant challenge in local policy evaluation because offline RL training lacks interaction with the environment \cite{prudencio2023survey}.
Consequently, there is a mismatch between the distribution of state-action pairs accessed by the current policy and those sampled from the dataset \cite{levine2020offline}.
To mitigate this problem, we incorporate conservative Q-learning (CQL) regularization terms into the Q-learning update \cite{kumar2020conservative}.
These terms effectively penalize Q-values for actions outside the dataset policy, serving as a classical regularization approach in offline RL. Additionally, we integrate the vote policy into the CQL regularization terms.
To this end, the vote-based conservative Q-learning (VCQL) is as follows (changes from CQL in \red{red}):
\begin{align}
\label{eq:vcql}
&\hat Q^{\tau+1} \leftarrow \arg \min_{Q}~ {\frac{1}{2} \mathbb{E}_{\bfs, \bfa, \bfs^{\prime} \sim \mathcal{D}_{k}}\left[\left(Q(\bfs, \bfa)-\hat{\mathcal{B}}^{\red{\pi^v_k}} \hat Q^{\tau}(\bfs, \bfa)\right)^{2}\right]}\nonumber\\
&+{{\alpha}\left(\mathbb{E}_{s \sim \mathcal{D}_{k}, \bfa \sim \red{\pi^v_k(\bfa|\bfs)}}[Q(\bfs, \bfa)]-\mathbb{E}_{\bfs, \bfa \sim \mathcal{D}_{k}}[Q(\bfs, \bfa)]\right)},
\end{align}
where $\mathbb{E}_{\bfs, \bfa, \bfs^{\prime} \sim \mathcal{D}_{k}}\left[\hat{\mathcal{B}}^{\pi^v_k} Q^{\tau}_{k}(\bfs, \bfa)\right] = \gamma\ \mathbb{E}_{\mathbf{\mathbf{a'} \sim \pi^v_k (\cdot \mid \mathbf{s'})}}\left[ Q^{\tau}_{k}\left(\mathbf{s'}, \mathbf{a'} \right)\right]+r(\mathbf{s}, \mathbf{a})$.
\blue{Such vote policy significantly modifies the policy evaluation function of CQL, potentially rendering the original conservatism guarantee in \cite[Theorem 3.2]{kumar2020conservative} invalid.} 
Thus, we provide a novel theoretical analysis of conservatism in Appendix \ref{subsec:conservatism}.


\subsection{Local Policy Improvement with AWR
}\label{subsec:awr}
As mentioned in Section \ref{sec:introduction}, current efforts suffer from a decline in policy performance because of the inconsistent optimization objective.
Moreover, due to privacy constraints, the server cannot access offline datasets to optimize the global policy directly, making it challenging to address this problem on the server side.
To this end, an intuitive strategy lies in directly optimizing the global policy on local policy updates. 
The objective is mathematically expressed as follows:
\begin{align}
\label{eq:P-Advatage-of-Global}
    &~~\mathbb{E}_{\rvs\sim d_{\bar\pi_k}(\rvs), \rva \sim \bar\pi_k(\rva|\rvs)}\left[\hat{A}(\rvs, \rva)\right],
\end{align}
where advantage \( \hat A(\rvs, \rva) = \hat{Q}(\rvs, \rva) - \hat{V}(\rvs) \) is a widely recognized metric to evaluate policy performance \cite{neumann2008fitted}.
$\hat{Q} = \lim_{\tau \to \infty} \hat{Q}^{\tau}$, $\hat{V}(\bs) = \E_{\vpolicy(\ba|\bs)}[\hat{Q}(\bs, \ba)]$, 
and $\bar{\pi}_k=\frac{1}{K}\sum\nolimits^{K}_{i=1}\pi_i$ is aggregated in the same way as the global policy, whereas updated and optimized at the local end. 
We slightly abuse notation by using \(\hat{Q}\) and \(\hat{V}\) to denote \(\hat{Q}^{\vpolicy}\) and \(\hat{V}^{\vpolicy}\), respectively.
Subsequently, we design the objective as maximizing an advantage function with respect to policy $\pi_k$:
\begin{align}
    \arg \max_{\pi_k}  & \mathbb{E}_{\rvs\sim d_{\bar\pi_k}(\rvs), \rva \sim \bar\pi_k(\rva|\rvs)}\left[\hat Q(\rvs, \rva) - \hat V(\rvs)\right] 
    \label{eq:AWRProblem}\\
    \textrm{s.t.} \quad &  
    \mathbb{E}_{\rvs\sim d_{\bar\pi_k}(\rvs)}\left[\mathrm{D_{KL}} \left(\bar\pi_k(\cdot |\rvs) , \pi_{\beta_k}(\cdot |\rvs) \right)\right] \leq \epsilon ,
    \label{eq:AWRCons1}\\
    &\int_\rva \ \bar\pi_k(\rva | \rvs) d\rva = 1,~~\forall \rvs,
    \label{eq:AWRCons2}
    \\
    & \bar\pi_k = \frac{1}{K}\sum\nolimits^{K}_{i=1}\pi_i.
    \label{eq:AWRCons3}
\end{align}
The constraint in Eq.~\eqref{eq:AWRCons1} ensures that the global policy $\bar\pi_k$ remains closely aligned with the local behavioral policy $\pi_{\beta_k}$, thereby reducing extrapolation errors during the policy improvement process. Eq.~\eqref{eq:AWRCons2} ensures policy $\bar{\pi}_k$ normalization.
Eq.~\eqref{eq:AWRCons3} describes $\bar{\pi}_k$ as defined previously.
Further, we convert the constraint \eqref{eq:AWRCons1} into a Lagrangian constraint with a coefficient \(\beta\), which is a common transformation in RL \cite{kostrikov2021iql,peng2019advantage}.
Combining the fact of $\mathbb{E}_{\rvs\sim d_{\bar\pi_k}(\rvs)}\left[\mathrm{D_{KL}} \left(\bar\pi_k(\cdot |\rvs) , \pi_{\beta_k}(\cdot |\rvs) \right)\right]=\int_{\rvs} d_{\bar\pi_k}(\rvs) \mathrm{D_{KL}} \left( \bar\pi_k(\cdot |\rvs), \pi_{\beta_k}(\cdot |\rvs) \right) d\rvs$,
we have
\begin{align}
    \arg \max_{\pi_k} \quad & \left( \int_\rvs  d_{\bar\pi_k}(\rvs) \int_\rva \bar\pi_k(\rva | \rvs) \left[\hat Q(\rvs,\bar\pi(\rva|\rvs)) - \hat V(\rvs)\right] \ d\rva \ d\rvs  \right)\nonumber\\
    &+ \beta \left(\epsilon - \int_{\rvs}  d_{\bar\pi_k}(\rvs) \mathrm{D_{KL}} \left(\bar\pi_k(\cdot |\rvs) , \pi_{\beta_k}(\cdot |\rvs) \right) d\rvs \right) \nonumber\\
    \textrm{s.t.} \quad & \int_\rva \ \bar\pi_k(\rva | \rvs) \ d\rva = 1, \quad \forall \ \rvs,
    \nonumber\\
    & \bar\pi_k = \frac{1}{K}\sum\nolimits^{K}_{i=1}\pi_i,
\end{align}
which provides a form of reasonable transition,
following the method in \cite{peng2019advantage}.
Additionally, since our goal is to derive an improved aggregated policy, we directly consider \(\bar\pi_k\) as the target variable.
\begin{align}
    \arg \max_{\bar\pi_k} \quad &  \int_\rvs  d_{\bar\pi_k}(\rvs) \int_\rva \bar\pi_k(\rva | \rvs) \left[\hat Q(\rvs,\bar\pi(\rva|\rvs)) - \hat V(\rvs)\right] \ d\rva \ d\rvs  \nonumber\\
    &+ \beta \left(\epsilon - \int_{\rvs}  d_{\bar\pi_k}(\rvs) \mathrm{D_{KL}} \left(\bar\pi_k(\cdot |\rvs) , \pi_{\beta_k}(\cdot |\rvs) \right) d\rvs \right) \nonumber\\
    \textrm{s.t.} \quad & \int_\rva \ \bar\pi_k(\rva | \rvs) \ d\rva = 1, \quad \forall \ \rvs.
\end{align}
Rearranging terms, we derive the Lagrangian of this constrained optimization problem over $\bar\pi_k$, formulated as:
\begin{align}
\label{eq:lagrange-barpik}
    \mathcal{L}(\bar\pi_k, \beta, \theta) &=   \int_\rvs  d_{\bar\pi_k}(\rvs) \int_\rva \bar\pi_k(\rva | \rvs) \left[\hat Q(\rvs,\bar\pi(\rva|\rvs)) - \hat V(\rvs)\right]  d\rva  d\rvs \nonumber\\
    &+ \beta \left(\epsilon - \int_{\rvs}  d_{\bar\pi_k}(\rvs) \mathrm{D_{KL}} \left(\bar\pi_k(\cdot |\rvs) , \pi_{\beta_k}(\cdot |\rvs) \right) d\rvs \right)\nonumber\\
    & + \int_{\rvs} \theta_\rvs \left(1 - \int_\rva \ \bar\pi_k(\rva | \rvs) d\rva \right) d\rvs,
\end{align}
where $\beta$ and $\bm\theta = \{\theta_\rvs \ | \ \forall \rvs \in \mathcal{S}\}$ represent the Lagrange multipliers.
Differentiating $\mathcal{L}(\bar\pi_k, \beta,\theta)$ with respect to $\bar\pi_k$ and solving for the optimal policy $\bar\pi_k^*$, we obtain
\begin{align}
    \frac{\partial \mathcal{L}}{ \partial \bar\pi_k(\rva | \rvs)} &= \  d_{\bar\pi_k}(\rvs) \left(\hat Q(\rvs,\bar\pi(\rva|\rvs)) - \hat V(\rvs) \right)  - \beta  d_{\bar\pi_k}(\rvs) - \theta_\rvs \nonumber\\
    &- \beta   d_{\bar\pi_k}(\rvs) \ \mathrm{log} \bar\pi_k(\rva|\rvs) + \beta  d_{\bar\pi_k}(\rvs) \mathrm{log} \pi_{\beta_k}(\rva|\rvs).
\end{align}
Taking $\frac{\partial \mathcal{L}}{ \partial \bar\pi_k(\rva | \rvs)}=0$ yields
\begin{equation}
\begin{aligned}
    \mathrm{log}{ \bar\pi_k(\rva|\rvs)}= \frac{1}{\beta}\left( \hat Q(\rvs,\bar\pi(\rva|\rvs)) - \hat V(\rvs) \right) + \mathrm{log} \pi_{\beta_k}(\rva|\rvs) + C,\nonumber
\end{aligned}
\end{equation}
where $C= \left(- \frac{1}{ d_{\bar\pi_k}(\rvs)}\frac{\theta_\rvs}{\beta} - 1 \right)$. Rearranging terms, we obtain
\begin{align}
\label{eq:21}
\blue{
    \bar\pi_k(\rva|\rvs)
    = \pi_{\beta_k}(\rva|\rvs) \mathrm{exp}\left(\frac{1}{\beta}\left(\hat Q(\rvs,\bar\pi(\rva|\rvs)) - \hat V(\rvs) \right) + C \right).}
\end{align}
Given that $\int_\rva \bar\pi_k(\rva|\rvs) \ d\rva = 1$, the second exponential term serves as the partition function $Z(\rvs)$, which normalizes the conditional action distribution.
\begin{align}
\label{eq:22}
    Z(\rvs) 
    &= \int_{\rva'} \pi_{\beta_k}(\rva'|\rvs) \mathrm{exp}\left(\frac{1}{\beta}\left(\hat Q(\rvs,\bar\pi(\rva|\rvs)) - \hat V(\rvs) \right) \right) d\rva'\\
    &= \mathrm{exp}\left(\frac{1}{ d_{\bar\pi_k}(\rvs)}\frac{\theta_\rvs}{\beta} + 1 \right).\nonumber
\end{align}
Substituting Eq.~\eqref{eq:22} in Eq.~\eqref{eq:21}, we derive the optimal of $\bar\pi_k(\rva|\rvs)$ as follow:

\begin{align}
\label{eq:AWRLagrangianOPT}
 \blue{
     \bar\pi_k^*(\rva|\rvs) = \frac{\pi_{\beta_k}(\rva|\rvs)}{Z(\rvs)}  \ \mathrm{exp}\left(\frac{1}{\beta}\left(\hat Q(\rvs, \bar\pi(\rvs)) - \hat V(\rvs) \right) \right).
     }
\end{align}
In summary, we obtain Eq.~\eqref{eq:AWRLagrangianOPT} as a closed-form expression to a constrained optimization problem \eqref{eq:AWRProblem} on global policy through such scaling and equivalent transformations. 
Next, we project $\bar\pi_k^*$ onto the manifold of parameterized policies to mitigate the distribution shift between local and global policies \cite{yue2024federated,shlens2014notes}.  
\begin{align}
\label{eq:AWRUpdate}
    & \arg\min_{\pi_k}  \mathbb{E}_{\rvs \sim \mathcal{D}_k} \left[ \mathrm{D_{KL}} \left(\bar\pi_k^*(\cdot  | \rvs) \middle|\middle| \pi_k(\cdot | \rvs)\right) \right] = \\
     & \arg\max_{\pi_k}  \mathbb{E}_{\rvs,\rva \sim \mathcal{D}_k}  \left[ \mathrm{log} \pi_k(\rva | \rvs)  \mathrm{exp}\left( \frac{1}{\beta} \left( \hat Q(\rvs, \bar\pi(\rvs)) - \hat V(\rvs) \right) \right) \right]\nonumber
\end{align}
where $\beta$ acts as a positive temperature parameter. For large values of $\beta$, the objective function behaves akin to behavioral cloning, while for smaller values, it aims to maximize the Q function. Previous studies have demonstrated that this objective function learns a policy that maximizes the Q-value while adhering to distribution constraints \cite{kostrikov2021iql,peng2019advantage,peters2007reinforcement}. 
Furthermore, the goal of local policy improvement is to optimize the local policy while constraining the distribution shift between the local policy and the global policy, as \cite{yue2024federated}.

\begin{align}
\label{eq:awr}
\max_{\pi_k} ~~
 &\lambda \mathbb{E}_{\rvs,\rva \sim \mathcal{D}_k}  \left[ \mathrm{log} \pi_k(\rva | \rvs)  \mathrm{exp}\left( \frac{1}{\beta} \left(\hat Q(\rvs, \bar\pi(\rvs)) - \hat V(\rvs) \right) \right) \right] \nonumber\\
+&\mathbb{E}_{\rvs\sim\mathcal{D}_k,\rva\sim\pi_k(\rva|\rvs)}\big[\hat{Q}(\rvs,\rva)\big],
\end{align}
where $\lambda$ is a positive weighting parameter. 
Notably, $\pi_k$ achieves strict policy improvement over the local behavior policy.  
We delve into a theoretical analysis of this finding in Appendix \ref{sec:improve_bp}.
Specifically, the argument passes through an empirical–true MDP comparison, where Lemma~\ref{lemma:JMDP-new} bounds the return in the empirical MDP $\widetilde{M}$ relative to the underlying MDP $M$ and thereby enables performance transfer. Building on this bridge, Theorems~\ref{thm:policy-improvement-global-app} and \ref{thm:policy-improvement-local-app} deliver strict policy improvement guarantees for FOVA. 

\subsection{Server Aggregation}\label{subsec:server}
Following the conventional pattern of FL, during the server aggregation phase, we adopt linear aggregation as follows: 
\begin{align}
    \label{eq:aggregation}
    \bar\pi\leftarrow\frac{1}{K}\sum\nolimits^{K}_{k=1}\pi_k,
    ~~~~~~\bar{Q}\leftarrow\frac{1}{K}\sum\nolimits^{K}_{k=1} Q_k.
\end{align}


We outline the FOVA algorithm designed for offline FRL, with the pseudocode outlined in Algorithm 1.
The FOVA training code can be readily implemented on top of the classical CQL algorithm \cite{kumar2020conservative} by modifying fewer than ten lines of code in the OfflineRL-Kit library \footnote{\url{https://github.com/yihaosun1124/OfflineRL-Kit}}.
This algorithm follows a structured process, integrating computations across both the server and client sides. The initial values for $\alpha$ are derived from Lemma \ref{thm:vcql_underestimates}, while the initial values for $\beta$ and $\lambda$ are based on Theorem \ref{thm:policy-improvement-local-app} and Theorem \ref{thm:policy-improvement-global-app}, respectively.

\begin{algorithm}[h]
    \caption{FOVA}
    \label{alg:fova}
    \LinesNumbered
    \KwIn{
    $\alpha$, $\beta$, $\lambda$, $\{\mathcal{D}_k\}^\mathcal{K}_{k=1}$, $\bar\pi$ and $\bar Q$}
        \For{$t=0$ \KwTo $T$}{ \tcp{Server side}
            {Server distributes $\bar\pi$ and $\bar Q$ to clients}\\
            \For{$k=1$ \KwTo $K$}{  \tcp{Client side}
                Client $k$ receives $\bar Q$ and $\bar\pi$\\
                    Train $Q_k$ and $\pi_k$ by Eq.~\eqref{eq:vcql} and ~\eqref{eq:awr}\\
                Send $Q_k$, $\pi_k$ to the server
            }
            {Server receives $\{Q_k\}^{\mathcal{K}}_{k=1}$, and $\{\pi_k\}^{\mathcal{K}}_{k=1}$}\\
            Calculate $\bar Q$ and $\bar\pi$ by  Eq.~\eqref{eq:aggregation}\
        }
    \KwOut{Global policy ${\bar\pi}$.}
\end{algorithm}

On the client side, for each training iteration, each client samples a minibatch of transition data from the local dataset $\mathcal{D}_k$. Subsequently, they update $Q_k$ and $\pi_k$ using the vote policy and AWR methods during both the policy evaluation and policy improvement phases. After completion of local updates, each client sends their updated $Q_k$ and $\pi_k$ back to the server.
On the server side, upon receiving all updated local policies and Q-functions, global aggregation is performed to update $\bar{Q}$ and $\bar{\pi}$.
The updated global policy is then redistributed to all clients for the next round of local updates. 
This iterative process continues until convergence, ultimately yielding the global policy $\bar{\pi}$ as the output of the algorithm.

\blue{Because client datasets are non-i.i.d., Section III in the supplementary file formalizes client MDPs, quantifies cross-client heterogeneity, and derives a global improvement bridge via advantage decomposition and averaging \cite{qiao2025fova_supp}. We then establish a local, regularized policy-improvement lower bound with error terms governed by $\lambda$ and $\beta$, culminating in a global safe-improvement theorem.}

\begin{table*}[!t]
\caption[]{Averaged scores on Gym locomotion and Maze2D tasks over five random seeds.}
\label{tab:compare}
\small
\setlength{\tabcolsep}{5.5pt}
\centering
\begin{tabular}{clcccccccccc}
\toprule
\multicolumn{1}{c}{\text{Quality}} & \text{Task} & \text{Fed-TD3BC} & \text{CQL-FL} & \text{FEDORA} & \text{DRPO} & \text{FOVA (ours)}  \\
\midrule
\multirow{4}{*}{\rotatebox[origin=c]{90}{\shortstack{Medium\\-replay}}}
& HalfCheetah & $322.86$\textcolor{black!60}{\po$\pm\po46.08$} & $3599.23$\textcolor{black!60}{\po$\pm\po759.31$} & $1182.48$\textcolor{black!60}{\po$\pm\po474.17$} & \textbf{5699.47}\textcolor{black!60}{\textbf{\po\bm{$\pm$}\po172.51}} & 5318.01\textcolor{black!60}{\po$\pm\po295.25$}  \\
& Hopper      & $\po31.64$\textcolor{black!60}{\po$\pm\po3.39\po$} &  $\po474.79$\textcolor{black!60}{\po$\pm\po172.86$} & {$\po716.10$}\textcolor{black!60}{\po$\pm\po218.12$} & $\po674.67$\textcolor{black!60}{\po$\pm\po154.54$} &
\textbf{1159.54}\textcolor{black!60}{\textbf{\po\bm{$\pm$}\po359.07}}  \\
& Walker2d    & $216.47$\textcolor{black!60}{\po$\pm\po19.86$} & $1260.72$\textcolor{black!60}{\po$\pm\po190.79$} & $\po158.90$\textcolor{black!60}{\po$\pm\po154.24$} & {$1167.41$}\textcolor{black!60}{\po$\pm\po415.84$} & \textbf{1951.92}\textcolor{black!60}{\textbf{\po\bm{$\pm$}\po303.05}}  \\
& Ant         & $\po70.71$\textcolor{black!60}{\po$\pm\po28.51$} & {$1771.87$}\textcolor{black!60}{\po$\pm\po305.98$} & $1053.39$\textcolor{black!60}{\po$\pm\po328.26$} & $1648.21$\textcolor{black!60}{\po$\pm\po202.69$} & \textbf{1801.35}\textcolor{black!60}{\textbf{\po\bm{$\pm$}\po326.84}}  \\
\midrule
\multirow{4}{*}{\rotatebox[origin=c]{90}{Medium}}
& HalfCheetah & $566.15$\textcolor{black!60}{\po$\pm\po32.60$} & $4625.33$\textcolor{black!60}{\po$\pm\po665.95$} & $4221.63$\textcolor{black!60}{\po$\pm\po701.02$} & \textbf{6125.08}\textcolor{black!60}{\textbf{\po\bm{$\pm$}\po254.08}} & 5886.32\textcolor{black!60}{\po$\pm\po183.66$}  \\
& Hopper      & $\po19.64$\textcolor{black!60}{\po$\pm\po0.91\po$} & $\po480.78$\textcolor{black!60}{\po$\pm\po162.10$} & {$1780.10$}\textcolor{black!60}{\po$\pm\po211.34$} & $1736.08$\textcolor{black!60}{\po$\pm\po191.77$} & \textbf{1961.35}\textcolor{black!60}{\textbf{\po\bm{$\pm$}\po306.26}}  \\
& Walker2d    & $271.89$\textcolor{black!60}{\po$\pm\po23.19$} & $2244.43$\textcolor{black!60}{\po$\pm\po275.81$} & $1055.82$\textcolor{black!60}{\po$\pm\po540.66$} & {$2773.79$}\textcolor{black!60}{\po$\pm\po391.93$} & \textbf{2919.78}\textcolor{black!60}{\textbf{\po\bm{$\pm$}\po414.77}}  \\
& Ant         & $~66.56$\textcolor{black!60}{\po$\pm\po33.19$} & $3313.17$\textcolor{black!60}{\po$\pm\po371.59$} & $~699.09$\textcolor{black!60}{\po$\pm\po289.26$} & {$3406.45$}\textcolor{black!60}{\po$\pm\po575.70$} & \textbf{3572.45}\textcolor{black!60}{\textbf{\po\bm{$\pm$}\po478.81}}  \\
\midrule
\multirow{4}{*}{\rotatebox[origin=c]{90}{\shortstack{Medium\\-expert}}}
& HalfCheetah & $235.92$\textcolor{black!60}{\po$\pm\po24.27$} & $1855.34$\textcolor{black!60}{\po$\pm\po562.41$} & $5595.15$\textcolor{black!60}{\po$\pm\po900.75$} & {$6093.59$}\textcolor{black!60}{\po$\pm\po735.31$} & \textbf{7055.40}\textcolor{black!60}{\textbf{\po\bm{$\pm$}\po1022.5}}  \\
& Hopper      & $968.65$\textcolor{black!60}{\po$\pm\po11.90$} & $\po555.48$\textcolor{black!60}{\po$\pm\po242.02$} & $\po997.79$\textcolor{black!60}{\po$\pm\po357.30$} & {$1240.89$}\textcolor{black!60}{\po$\pm\po349.85$} & \textbf{1793.44}\textcolor{black!60}{\textbf{\po\bm{$\pm$}\po240.07}} \\
& Walker2d    & $\po14.84$\textcolor{black!60}{\po$\pm\po8.68$\po} & $\po755.33$\textcolor{black!60}{\po$\pm\po333.56$} & {$2520.77$}\textcolor{black!60}{\po$\pm\po858.47$} & $2270.17$\textcolor{black!60}{\po$\pm\po298.43$} & \textbf{2919.78}\textcolor{black!60}{\textbf{\po\bm{$\pm$}\po490.15}}   \\
& Ant         & $186.96$\textcolor{black!60}{\po$\pm\po23.51$} & $2764.01$\textcolor{black!60}{\po$\pm\po298.89$} & $2174.40$\textcolor{black!60}{\po$\pm\po205.89$} & {$2703.34$}\textcolor{black!60}{\po$\pm\po404.04$} & \textbf{3163.63}\textcolor{black!60}{\textbf{\po\bm{$\pm$}\po344.55}}  \\
\midrule
\multirow{4}{*}{\rotatebox[origin=c]{90}{Expert}}
& HalfCheetah & $315.81$\textcolor{black!60}{\po$\pm\po13.12$} & $8554.64$\textcolor{black!60}{\po$\pm\po844.09$} & {$9317.24$}\textcolor{black!60}{\po$\pm\po252.19$} & $7317.01$\textcolor{black!60}{\po$\pm\po1055.1$} & \textbf{11419.6}\textcolor{black!60}{\textbf{\po\bm{$\pm$}\po643.87}}  \\
& Hopper      & $349.29$\textcolor{black!60}{\po$\pm\po19.60$} & $\po504.82$\textcolor{black!60}{\po$\pm\po183.77$} & $1219.33$\textcolor{black!60}{\po$\pm\po140.07$} & {$2141.96$}\textcolor{black!60}{\po$\pm\po431.44$} & \textbf{3095.18}\textcolor{black!60}{\textbf{\po\bm{$\pm$}\po277.05}}  \\
& Walker2d    & $202.14$\textcolor{black!60}{\po$\pm\po40.59$} & $1600.82$\textcolor{black!60}{\po$\pm\po217.40$} & $2233.94$\textcolor{black!60}{\po$\pm\po588.13$} & {$2786.76$}\textcolor{black!60}{\po$\pm\po514.46$} & \textbf{3991.86}\textcolor{black!60}{\textbf{\po\bm{$\pm$}\po435.61}}  \\
& Ant         & $678.03$\textcolor{black!60}{\po$\pm\po131.7$} & $2508.69$\textcolor{black!60}{\po$\pm\po611.32$} & $\po801.41$\textcolor{black!60}{\po$\pm\po272.25$} & {$3646.48$}\textcolor{black!60}{\po$\pm\po407.18$} & \textbf{3548.27}\textcolor{black!60}{\textbf{\po\bm{$\pm$}\po551.96}}  \\
\midrule
\multicolumn{2}{c}{\textit{Locomotion avg.}} & $282.34$ & $2304.33$ & $2232.97$ & $3214.21$ & \textbf{3853.30} \textcolor{black!60}{\bm{$(\uparrow 19.89\%)$}}  \\
\midrule
\multirow{3}{*}{\rotatebox[origin=c]{90}{Sparse}}
& Umaze & $22.46$\textcolor{black!60}{\po$\pm\po2.06$} & $27.35$\textcolor{black!60}{\po$\pm\po6.51$} & $22.41$\textcolor{black!60}{\po$\pm\po7.49$} & $33.59$\textcolor{black!60}{\po$\pm\po3.05$} & \textbf{\po56.68}\textcolor{black!60}{\textbf{\po\bm{$\pm$}\po5.26\po}} \\
& Medium & $13.05$\textcolor{black!60}{\po$\pm\po4.54$} & $27.51$\textcolor{black!60}{\po$\pm\po7.50$} & $\po9.94$\textcolor{black!60}{\po$\pm\po3.47$} & $47.90$\textcolor{black!60}{\po$\pm\po13.3$} & \textbf{\po57.71}\textcolor{black!60}{\textbf{\po\bm{$\pm$}\po11.04}}\\ 
& Large & $\po1.27$\textcolor{black!60}{\po$\pm\po0.41$} & $\po6.50$\textcolor{black!60}{\po$\pm\po1.10$} & $17.35$\textcolor{black!60}{\po$\pm\po5.05$} & $22.14$\textcolor{black!60}{\po$\pm\po7.38$} & \textbf{\po44.70}\textcolor{black!60}{\textbf{\po\bm{$\pm$}\po11.13}} \\
\midrule
\multirow{3}{*}{\rotatebox[origin=c]{90}{Dense}}
& Umaze & $36.11$\textcolor{black!60}{\po$\pm\po3.27$} & $44.80$\textcolor{black!60}{\po$\pm\po11.8$} & $\po62.04$\textcolor{black!60}{\po$\pm\po14.15$} & $66.92$\textcolor{black!60}{\po$\pm\po5.58$} & \textbf{\po79.84}\textcolor{black!60}{\textbf{\po\bm{$\pm$}\po8.19\po}} \\
& Medium & $15.77$\textcolor{black!60}{\po$\pm\po3.72$} & $31.93$\textcolor{black!60}{\po$\pm\po8.26$} & \textbf{119.21}\textcolor{black!60}{\textbf{\po\bm{$\pm$}\po7.21\po}} & $98.94$\textcolor{black!60}{\po$\pm\po8.69$} & {$104.95$}\textcolor{black!60}{\po$\pm\po14.67$} \\
& Large & $13.94$\textcolor{black!60}{\po$\pm\po3.62$} & $17.81$\textcolor{black!60}{\po$\pm\po4.18$} & $\po42.82$\textcolor{black!60}{\po$\pm\po5.81\po$} & $29.09$\textcolor{black!60}{\po$\pm\po5.53$} & \textbf{105.45}\textcolor{black!60}{\textbf{\po\bm{$\pm$}\po7.32\po}} \\
\midrule
\multicolumn{2}{c}{\textit{Mazed2D avg.}} & $17.1$ & $25.98$ & $45.63$ & $49.76$ & \textbf{74.88 \textcolor{black!60}{\bm{$(\uparrow 50.48\%)$}}}\\  
\bottomrule
\end{tabular}
\end{table*}

\section{Simulation}\label{sec:simulation} 

In this section, we evaluate our proposed algorithm by exploring several key questions. Firstly, we compare the overall performance of our method with that of the baselines.
Secondly, we investigate how the consistent optimization objective plays a role in improving the overall performance. 
Thirdly, we examine whether our method effectively solves the performance degradation issues that arise in scenarios involving mixed-quality data. 
Finally, we explore the impact of the vote mechanism, the Advantage Weighted Regression (AWR) method, and the critical parameters on the performance of our algorithm.

\begin{figure}[h]
	\centering
    \subfigure[Halfcheetah.]{\label{fig:gym-halfcheetah}\includegraphics[width=0.11\textwidth,height=0.14\textwidth]{./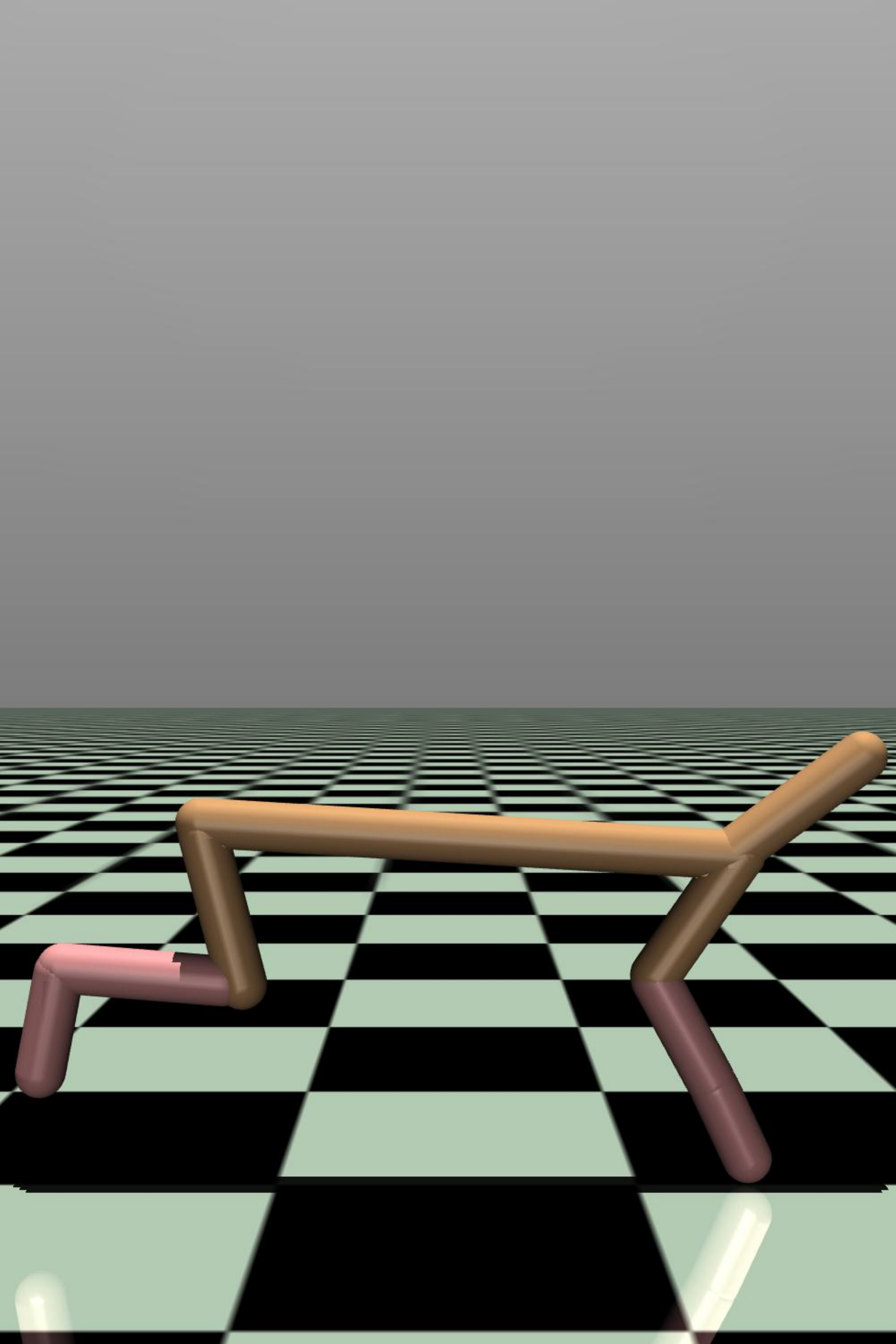}}
    \subfigure[Hopper.]{\label{fig:gym-hopper}\includegraphics[width=0.11\textwidth,height=0.14\textwidth]{./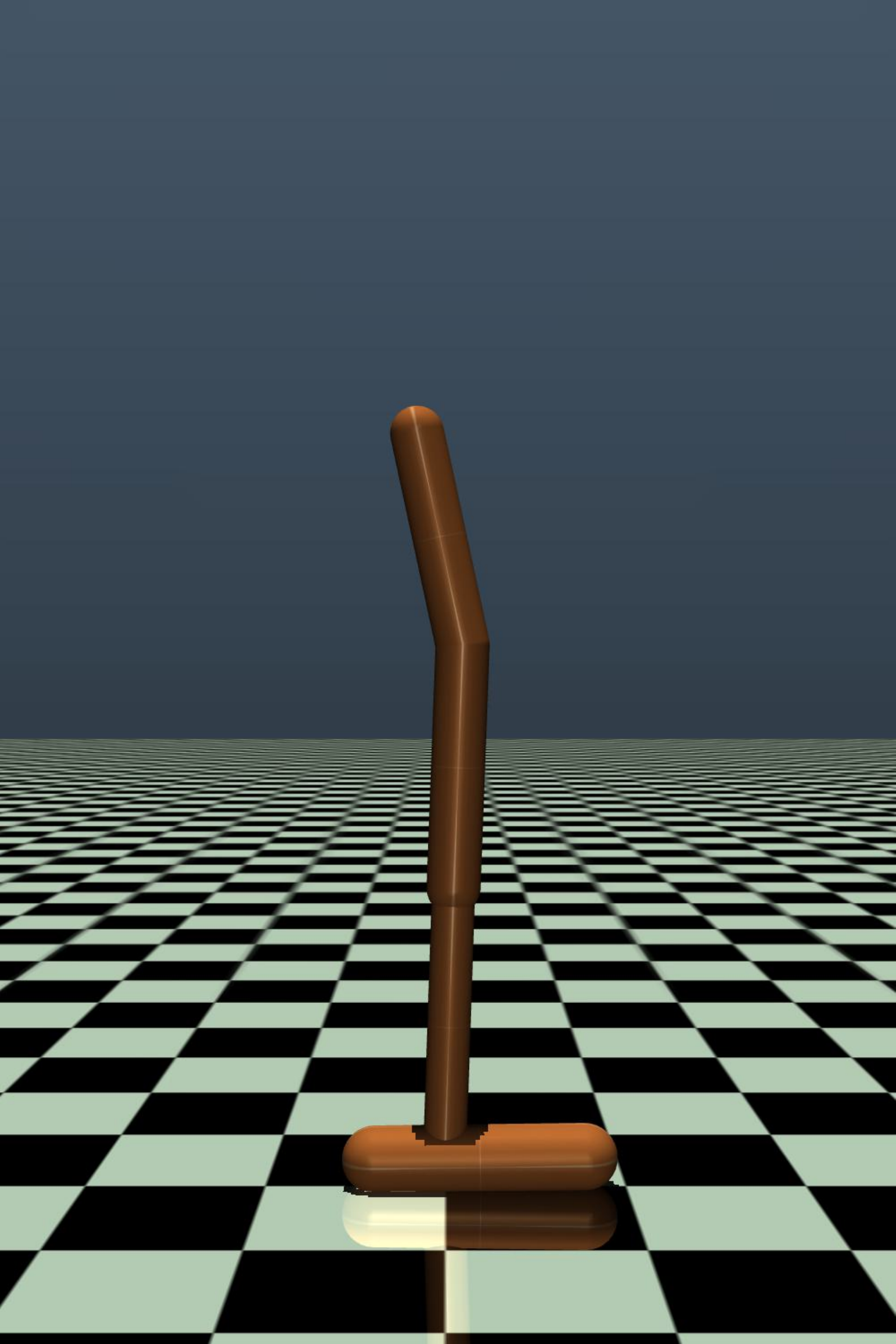}}
    \subfigure[Walker2d.]{\label{fig:gym-walker2d}\includegraphics[width=0.11\textwidth,height=0.14\textwidth]{./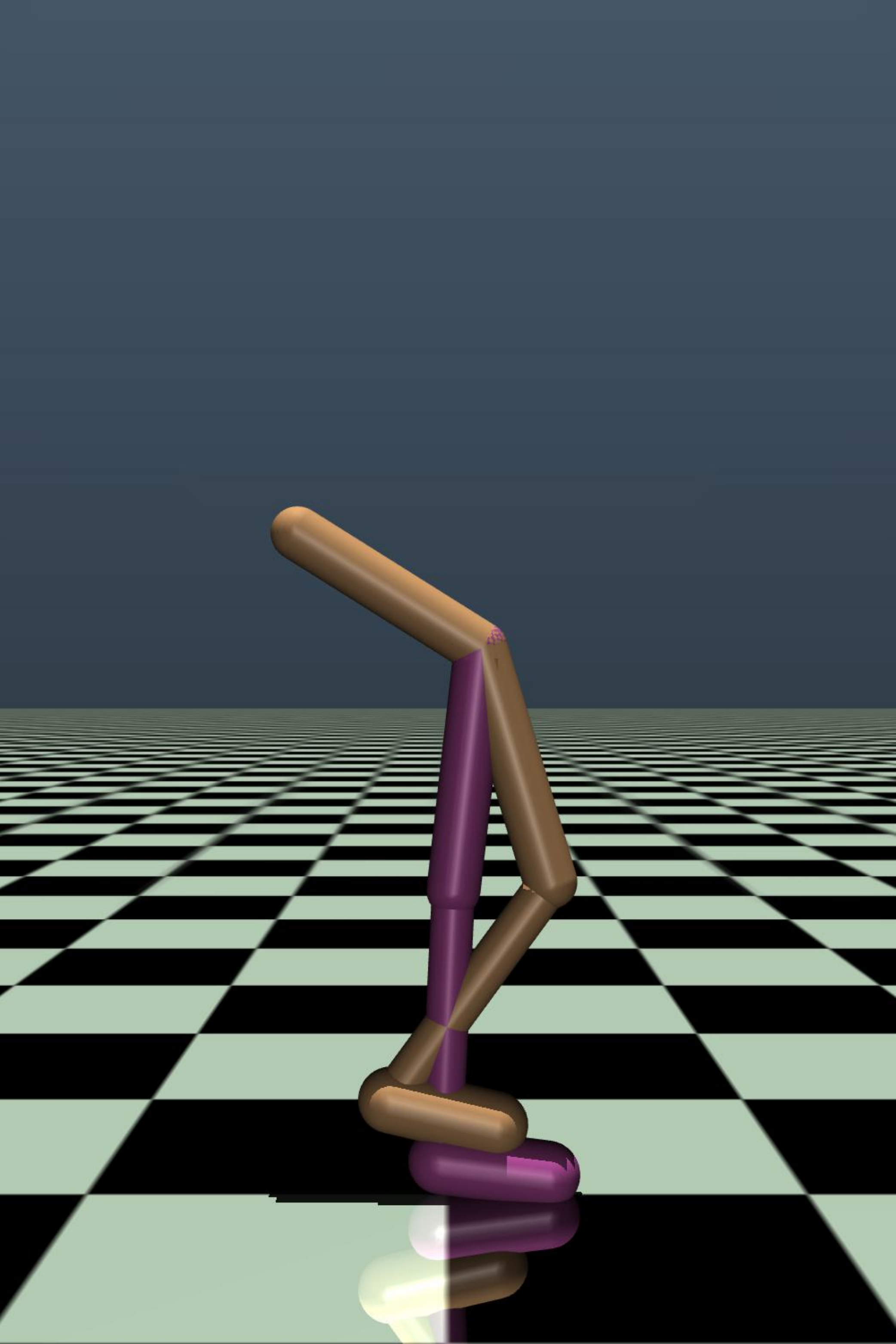}}
    \subfigure[Ant.]{\label{fig:gym-ant}\includegraphics[width=0.11\textwidth,height=0.14\textwidth]{./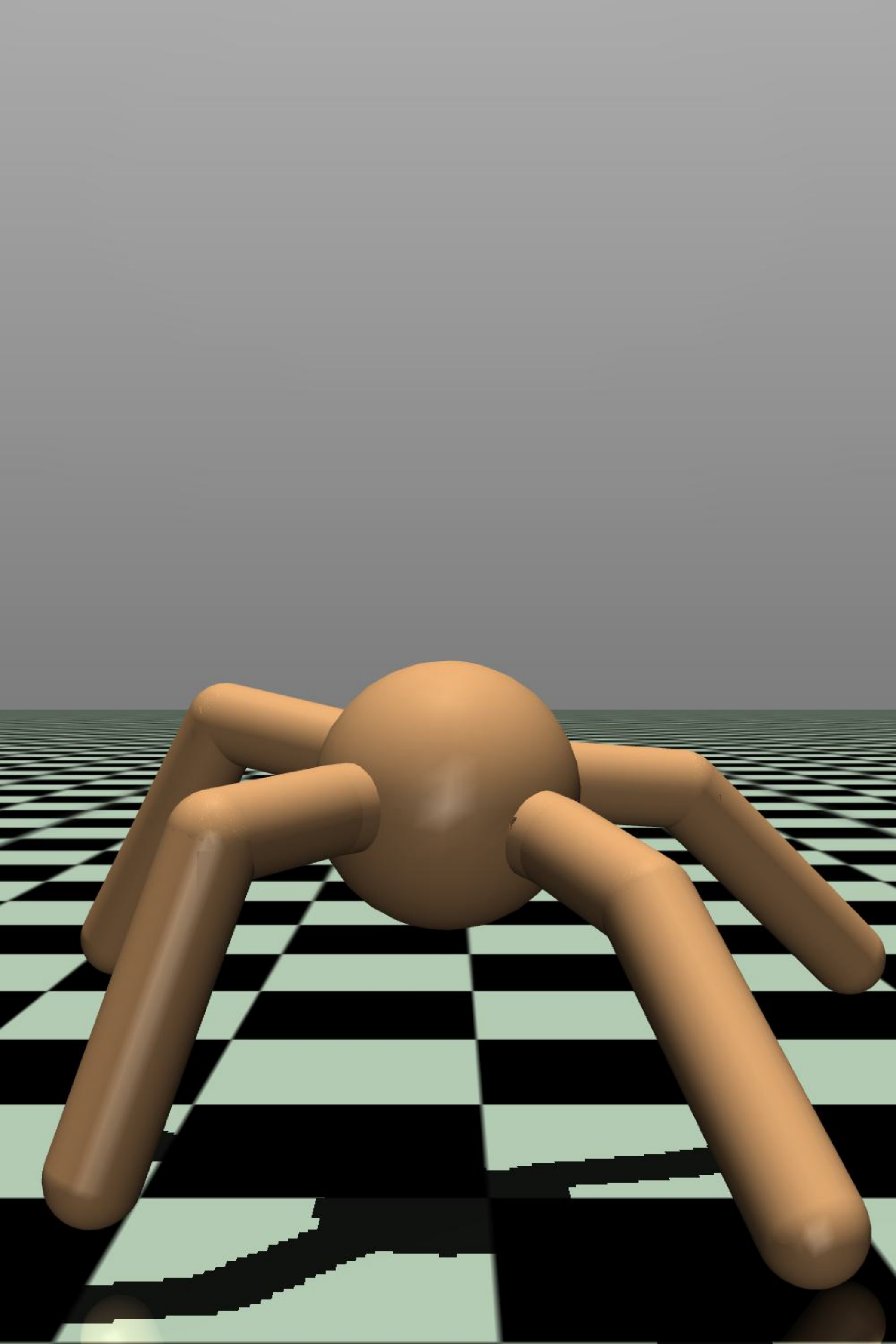}}

	\centering
    \subfigure[Umaze.]{\label{fig:maze-umaze}\includegraphics[width=0.11\textwidth]{./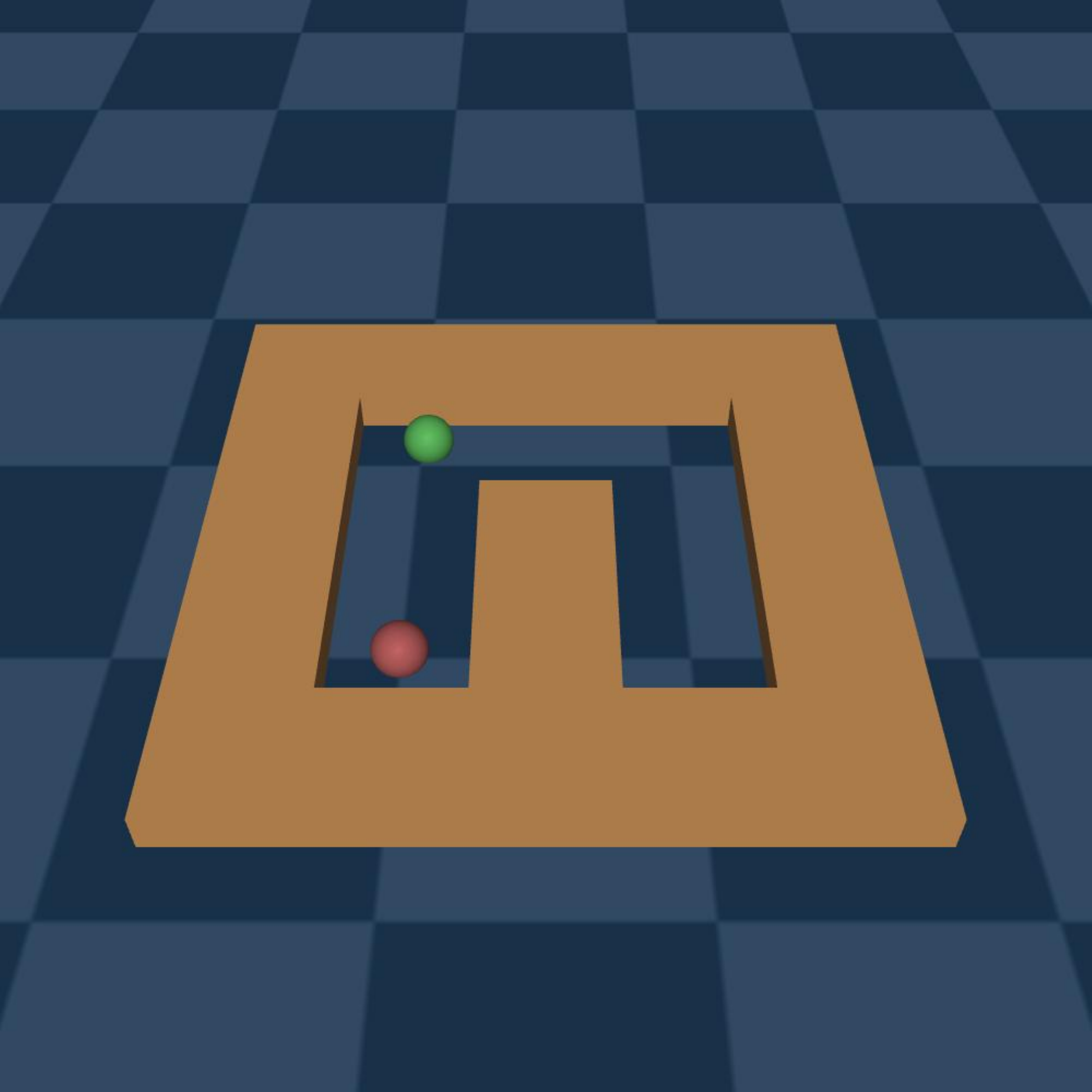}}
    \subfigure[Medium.]{\label{fig:maze-medium}\includegraphics[width=0.11\textwidth]{./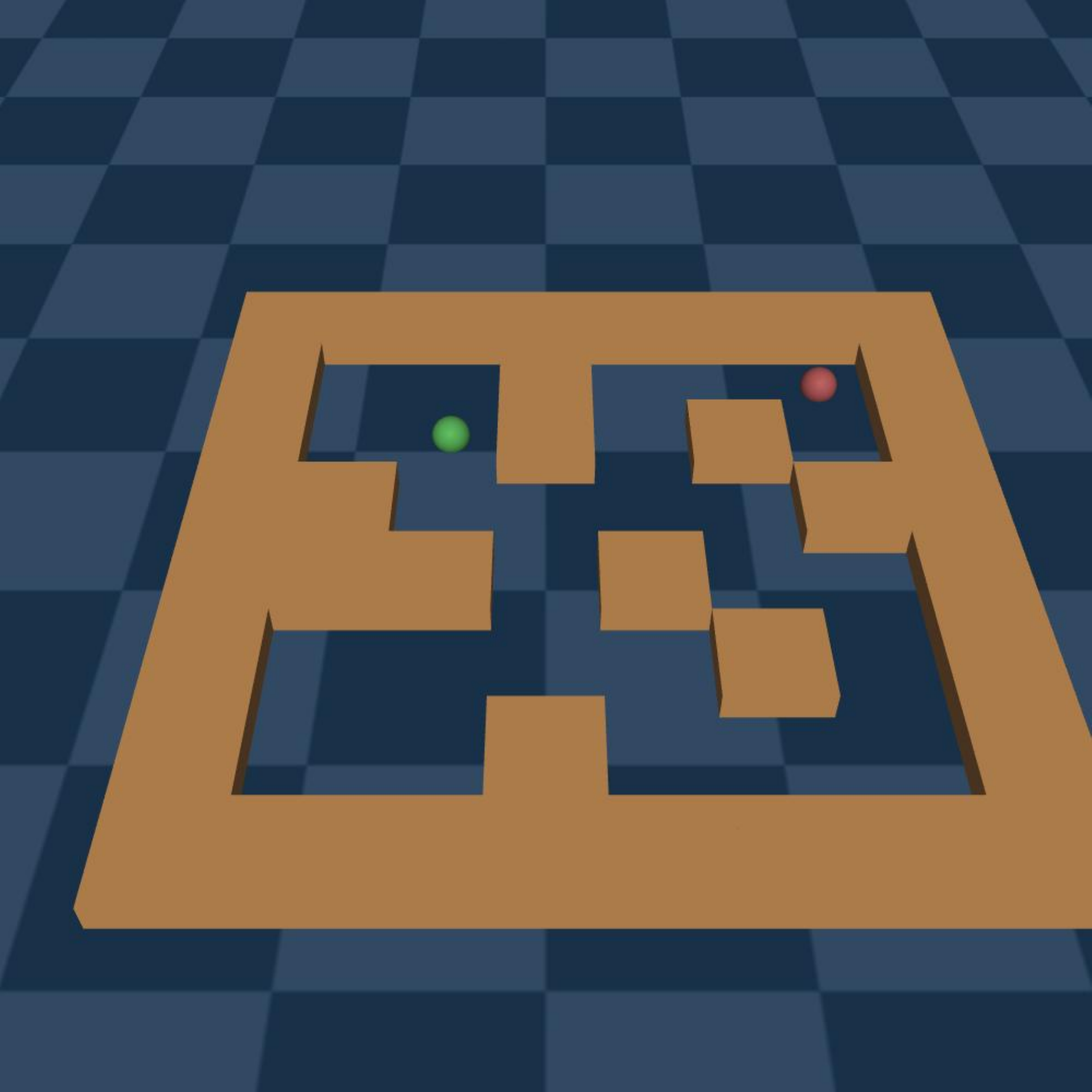}}
    \subfigure[Large.]{\label{fig:maze-large}\includegraphics[width=0.11\textwidth]{./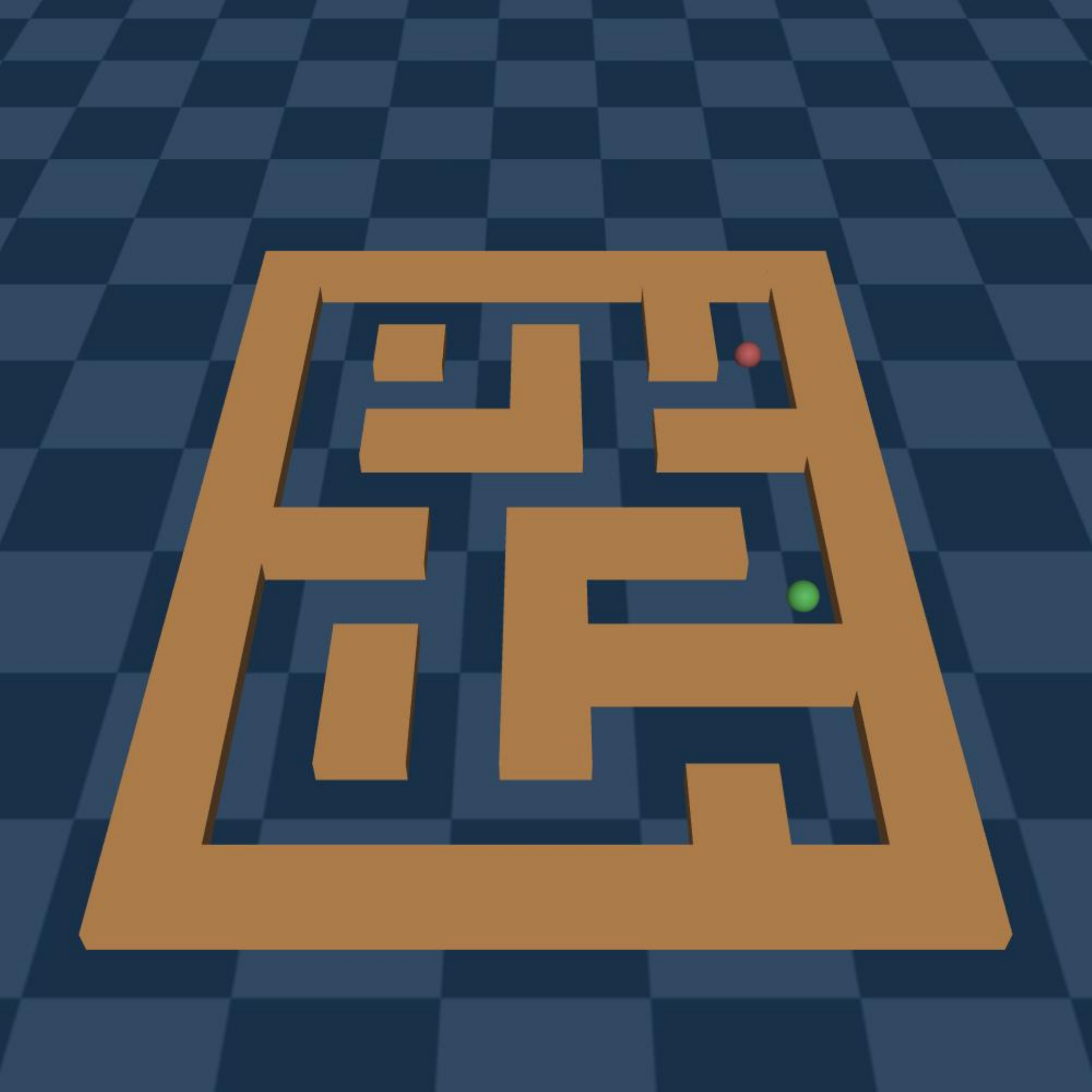}}
\caption{Gym locomotion and Maze2D tasks.} 
\label{fig:gym}
\end{figure}

\subsection{Experiment Setup}
\label{subsec:Exp-Setup}
\textbf{Environments and Baselines}.
We evaluate the performance of FOVA using datasets from the widely recognized offline RL benchmark D4RL \footnote{\url{https://github.com/rail-berkeley/d4rl}}
\cite{fu2020d4rl,todorov2012mujoco}.
This benchmark is built upon the Gym locomotion and Maze2D simulation environments, as shown in Fig.~\ref{fig:gym}. 
The Gym locomotion encompasses four challenging continuous robotic control tasks (namely "HalfCheetah", "Hopper", "Walker2d" and "Ant"), involving four different dataset-quality types (expert, medium-expert, medium, and medium-replay). The Gym The Maze2D domain is a navigation task that requires a 2D agent to reach a fixed goal location (including three types of maps, namely "umaze", "medium", and "large"), as well as two types of datasets-quality (dense reward and sparse reward).
In addition, we compare FOVA\footnote{\url{https://sites.google.com/view/fova/}} with the following baselines:

\begin{itemize} 
\item \textit{Fed-TD3BC}: Combining FedAvg with TD3BC \cite{fujimoto2021minimalist}, this method performs TD3BC updates locally on clients with a specified number of gradient steps. The policy parameters are then aggregated on the server.

\item \textit{CQL-FL}: This approach modifies the CQL \cite{kumar2020conservative} rules locally on the clients through a fixed number of gradient steps \cite{park2022federated}. Subsequently, the updated parameters are linearly aggregated on the server.

\item \textit{FEDORA} is an open-source offline FRL algorithm \cite{rengarajan2023federated}. We reproduce FEDORA by following their paper and utilizing the provided open-source implementation \footnote{\url{https://github.com/DesikRengarajan/FEDORA}}.

\item \textit{DRPO} represents the state-of-the-art offline FRL algorithm \cite{yue2024federated}. Our reproduction of DRPO is based on their paper and the open-source code available \footnote{\url{https://github.com/HansenHua/DRPO-INFOCOM24}}.
\end{itemize}


\textbf{Implementation.}
We implement all experiments in PyTorch~2.1.2 on Ubuntu~20.04.4 LTS with four NVIDIA GeForce RTX~3090 GPUs. The Actor and Critic are ReLU-based multilayer perceptrons (MLPs) with three hidden layers of 256 units; the Critic model applies Layer Normalization after each hidden layer to avoid feature rank collapse \cite{yue2023understanding}. We train with Adam (batch size 256), using learning rates of $1\!\times\!10^{-4}$ for the Actor and $3\!\times\!10^{-4}$ for the Critic, and set the discount factor to $\gamma=0.99$. Unless otherwise stated, we use default hyperparameters $\lambda=\beta=5$, and choose $\alpha$ consistent with CQL~\cite{kumar2020conservative}. All reported numbers are averaged over 3–5 runs.
Some reported results are normalized scores provided by the D4RL benchmark to measure the performance of the learned policy score compared to random and expert scores \footnote{\url{https://github.com/Farama-Foundation/D4RL/}}. The normalized score is calculated as:
\[
\mathrm{normalized\ score}
=100\times\frac{\mathrm{score}-\mathrm{random\ score}}{\mathrm{expert\ score}-\mathrm{random\ score}}\, .
\]

\subsection{Experiment Results}\label{subsec:Exp-Evaluation}

\paragraph{Overall Performance}
We conducted a comprehensive comparison of our FOVA algorithm against leading methods, including DRPO and FEDORA, as presented in Table \ref{tab:compare}. Each experiment was repeated five times using different random seeds to ensure the robustness of our results, with the highest performance scores highlighted for clarity. Utilizing the standard D4RL benchmark tests across Gym locomotion and Maze2D datasets, FOVA achieved significant performance improvements, with score enhancements of up to 20\% and 50\% respectively. Additionally, FOVA attained optimal or near-optimal performance in nearly all evaluated tasks, demonstrating its effectiveness and reliability compared to existing state-of-the-art algorithms.

\begin{figure}[h]
    \vspace{-6pt}
	\centering
    \hspace{-0.04\textwidth}
        \subfigure[Impact of datasize on baselines.]{\label{fig:incon-datasize}\includegraphics[width=0.302\textwidth]{./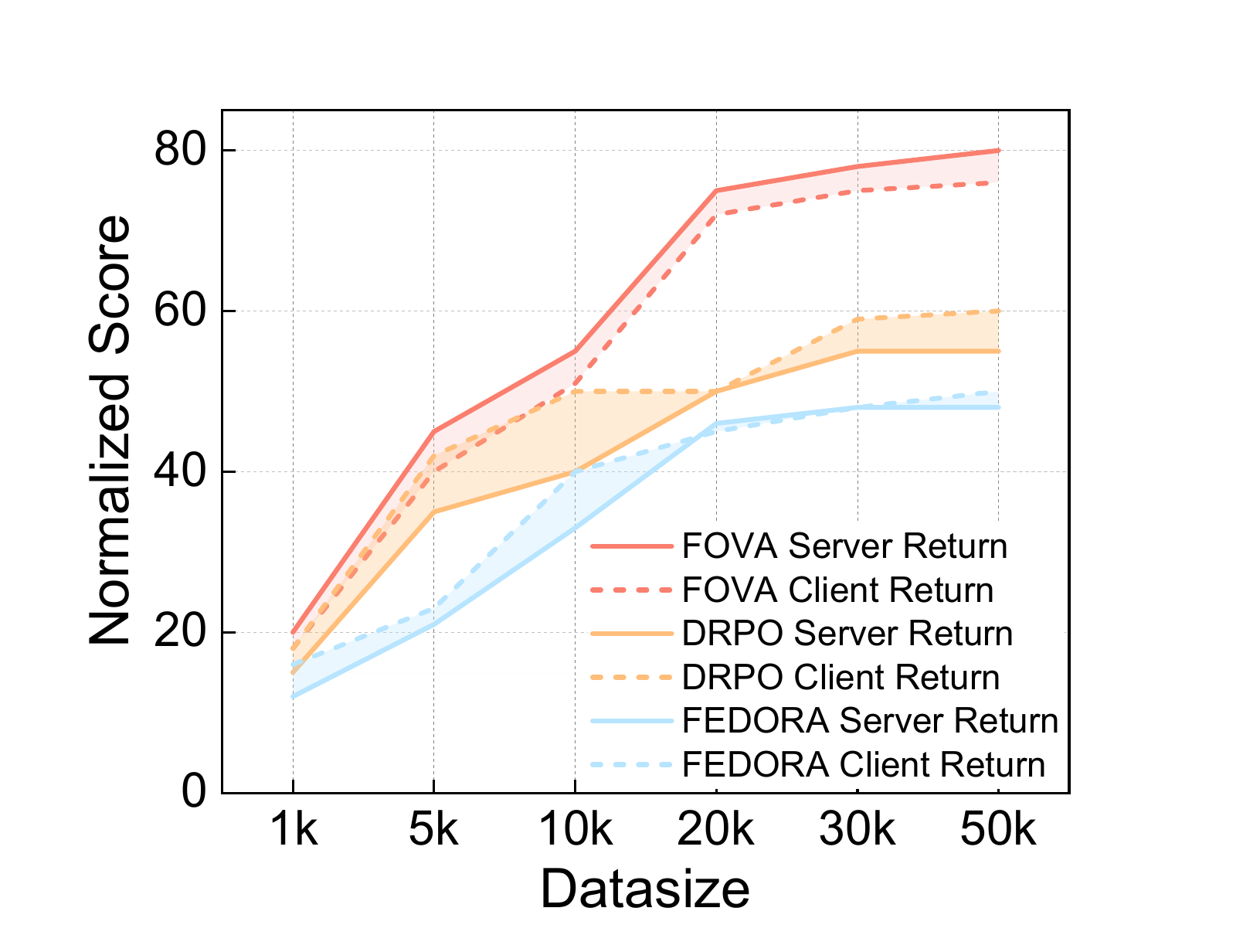}}
        \hspace{-0.04\textwidth}
	\subfigure[FOVA in Hopper-medium-replay.]{\label{fig:incon-cover}\includegraphics[width=0.245\textwidth]{./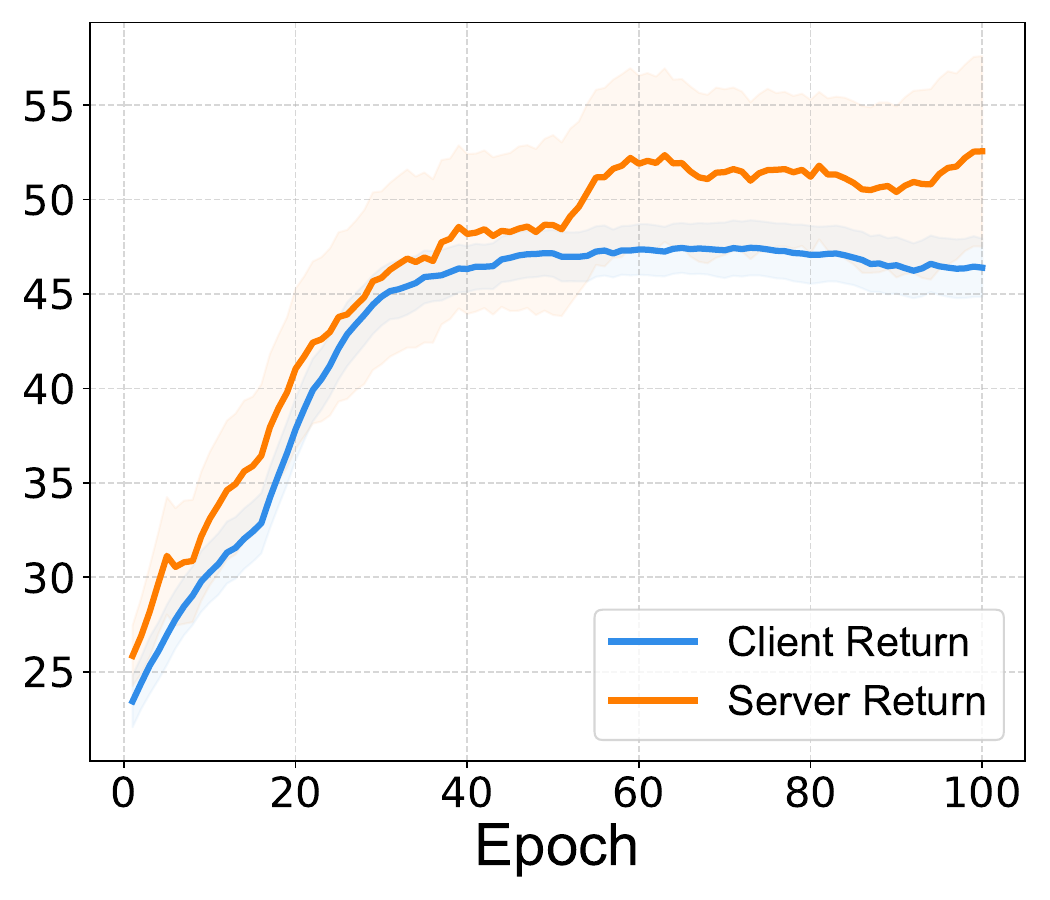}}
	\caption{Performance comparison between server-side policy and client-side policy.}
        \label{fig:moti-fova}
    \vspace{-0pt}
\end{figure}


\begin{figure*}[tbp]
    \centering 
    \subfigure{\label{fig:hoer}\includegraphics[width=0.244\textwidth]{./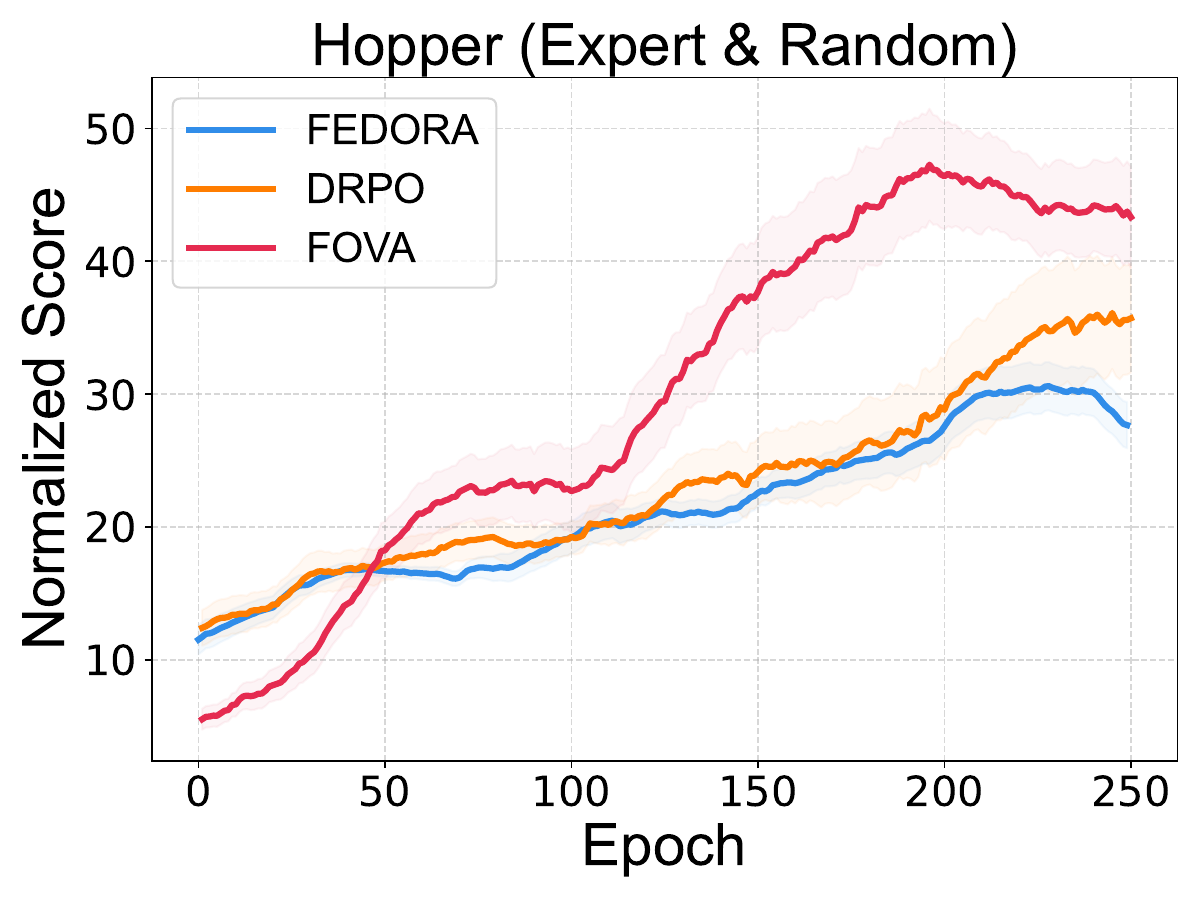}}
    \subfigure{\label{fig:hamr}\includegraphics[width=0.244\textwidth]{./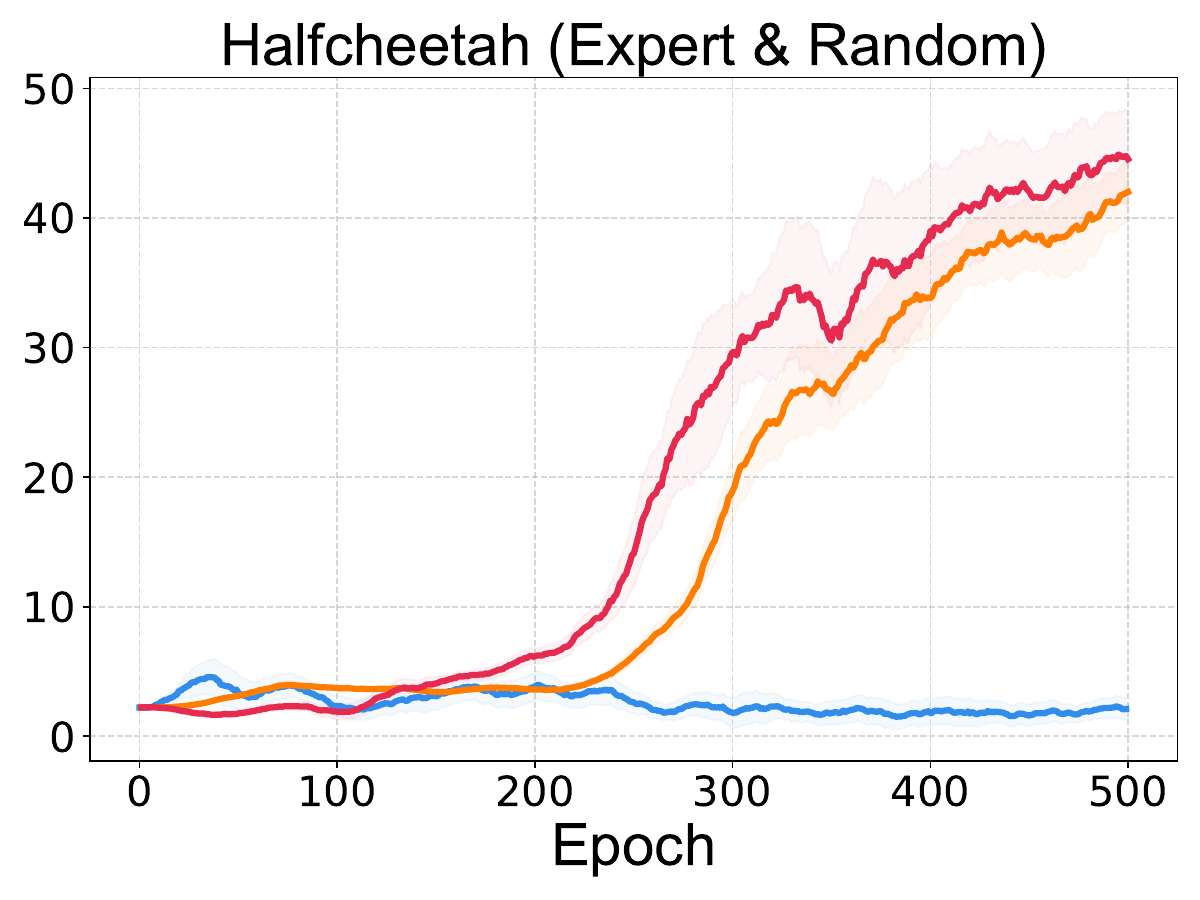}}
    \subfigure{\label{fig:anter}\includegraphics[width=0.244\textwidth]{./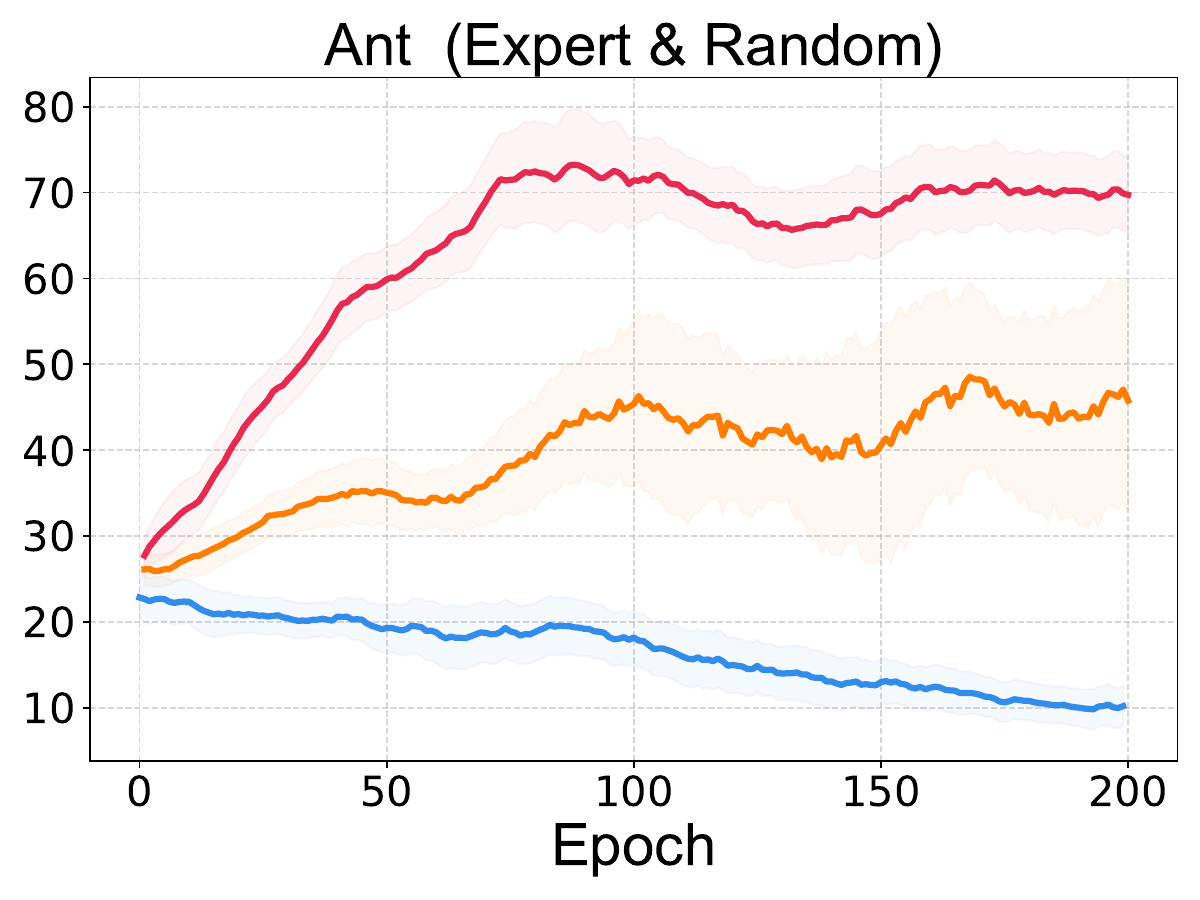}}
    \subfigure{\label{fig:wamr}\includegraphics[width=0.244\textwidth]{./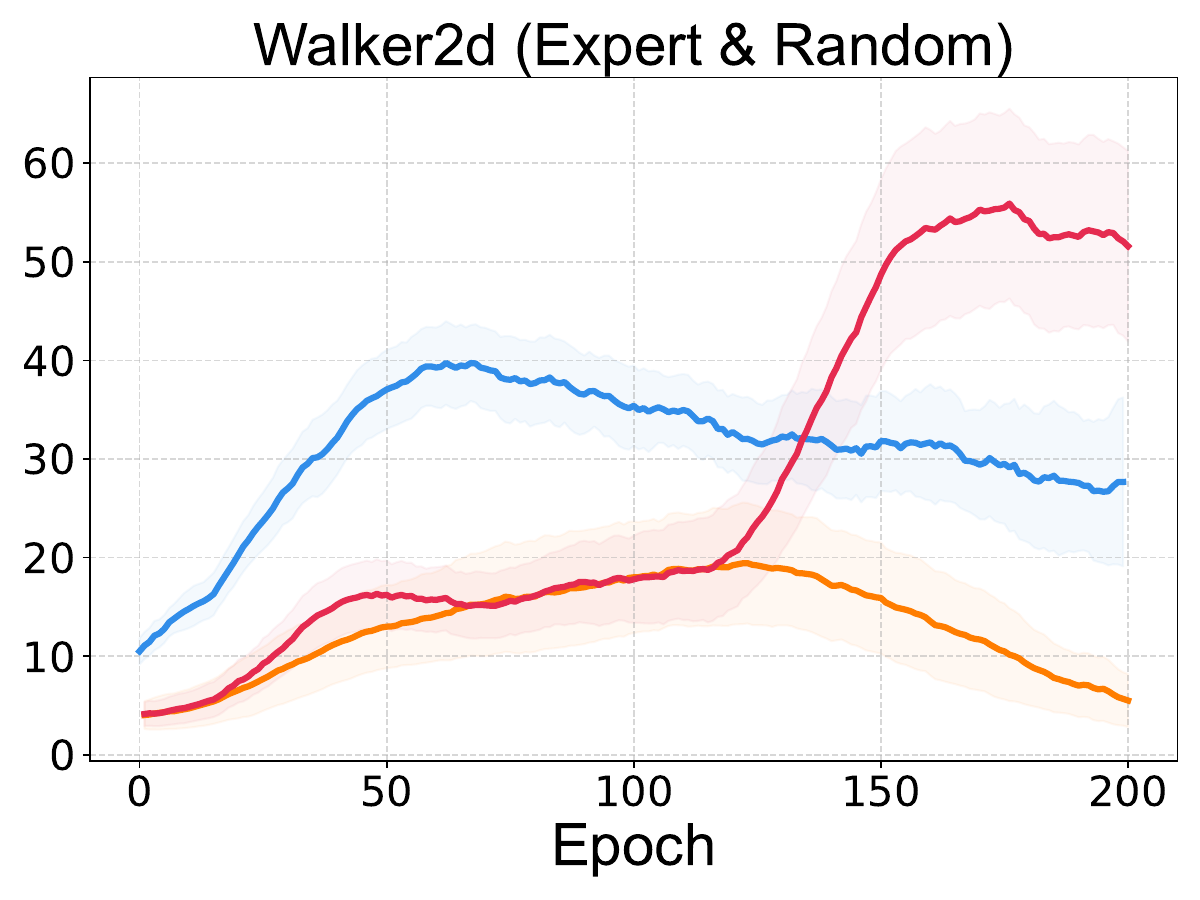}}
    \subfigure{\label{fig:hoem}\includegraphics[width=0.244\textwidth]{./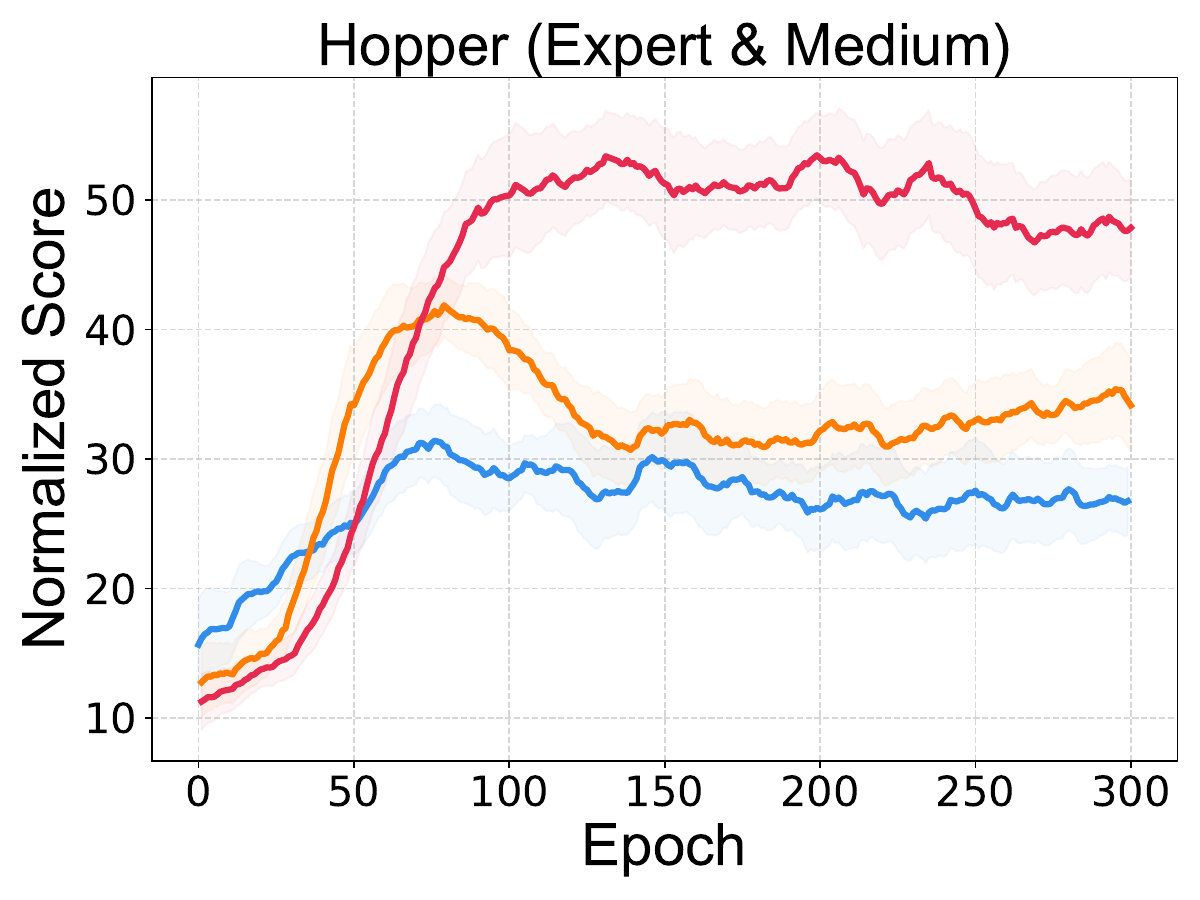}}
    \subfigure{\label{fig:hame}\includegraphics[width=0.244\textwidth]{./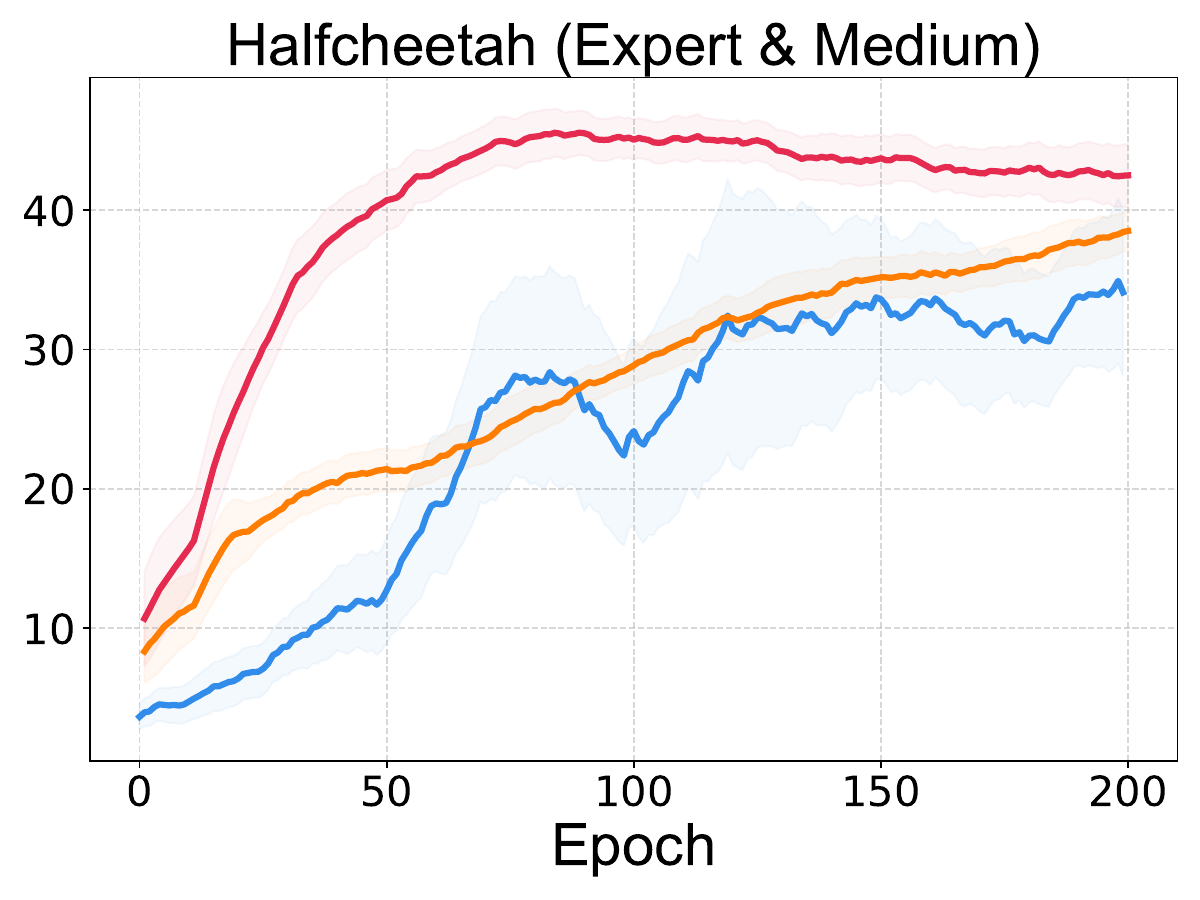}}
    \subfigure{\label{fig:antem}\includegraphics[width=0.244\textwidth]{./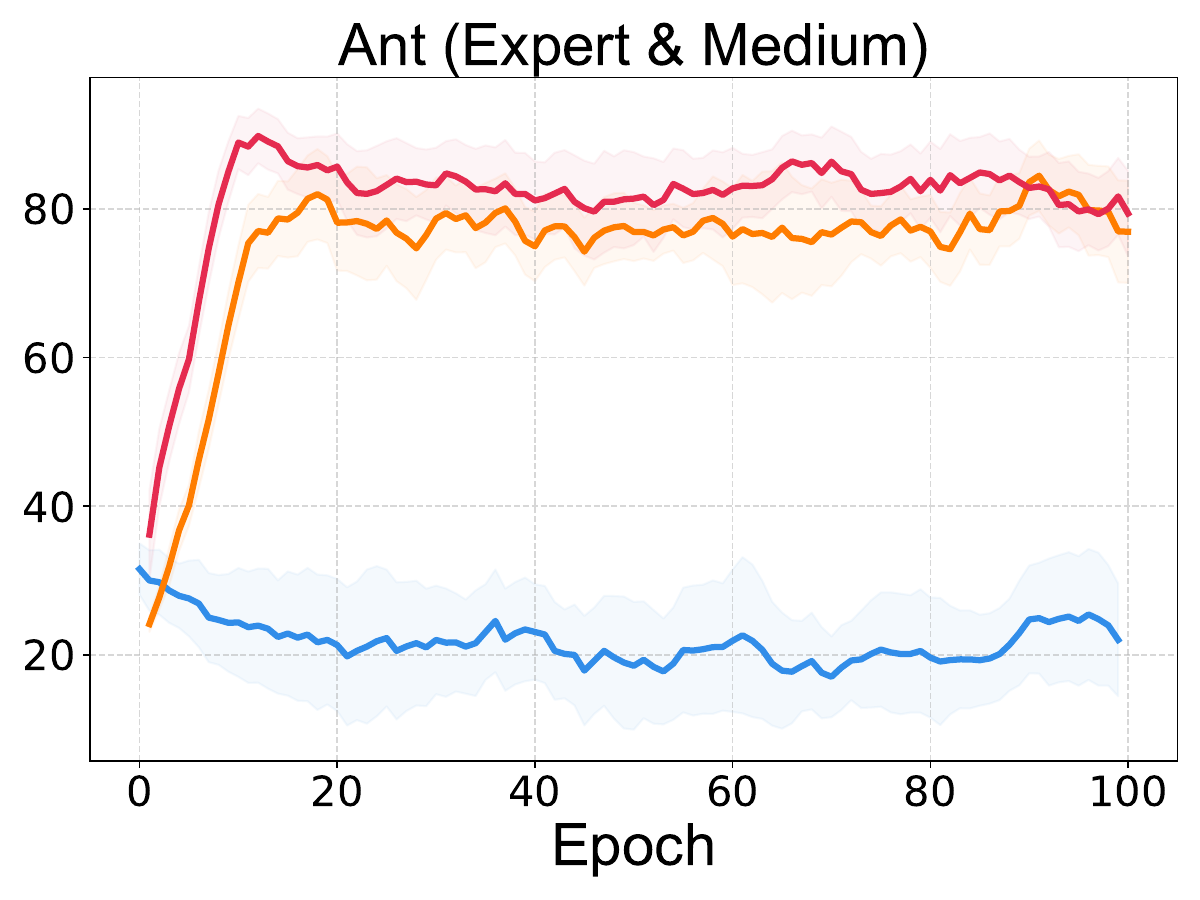}}
    \subfigure{\label{fig:wame}\includegraphics[width=0.244\textwidth]{./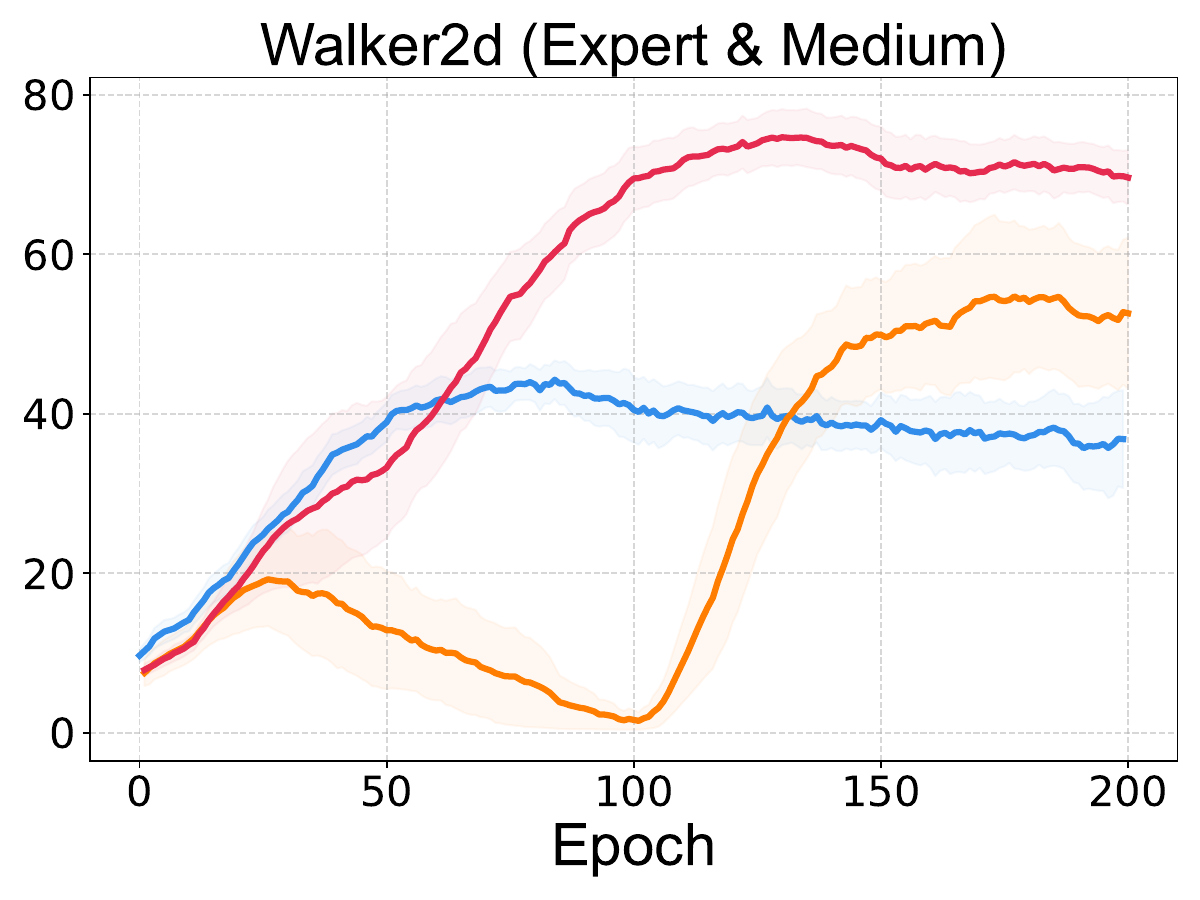}}
    \caption{Performance of FOVA against baselines.}
    \label{fig:compare}
\end{figure*}

\begin{table*}[h]
\centering
\small
\caption{FOVA Performance under Dynamic Data Quality Shifts.}
\blue{
\setlength{\tabcolsep}{14pt}
\rowcolors{2}{gray!10}{white}
\begin{tabular}{l rr rr rr rr}
\toprule
\textbf{Task} &
\multicolumn{2}{c}{\textbf{FOVA (Vote+L2)}} &
\multicolumn{2}{c}{\textbf{FOVA (Vote)}} &
\multicolumn{2}{c}{\textbf{FOVA (L2 Only)}} &
\multicolumn{2}{c}{\textbf{FOVA (Vanilla)}} \\
\cmidrule(lr){2-3}\cmidrule(lr){4-5}\cmidrule(lr){6-7}\cmidrule(lr){8-9}
& PER $\uparrow$ & BWT $\uparrow$
& PER $\uparrow$ & BWT $\uparrow$
& PER $\uparrow$ & BWT $\uparrow$
& PER $\uparrow$ & BWT $\uparrow$ \\
\midrule
Walker2d    & \textbf{3014.26} & \textbf{150.32} & 2784.72 & -190.18 & 2773.25 & -19.67 & 2629.79 & 144.28 \\
Ant         & \textbf{3085.95} & 186.23          & 3028.13 & \textbf{252.32} & 2728.50 & 114.14 & 2581.31 & 234.30 \\
Halfcheetah & \textbf{4701.41} & \textbf{-17.74} & 4654.85 & -248.30 & 4623.82 & -70.94 & 4468.63 & -461.14 \\
Hopper      & \textbf{1676.17} & 218.52          & 1550.06 & \textbf{251.07} & 1529.72 & 237.12 & 1411.74 & 134.76 \\
\midrule
\textbf{Avg.} & \textbf{3119.45} & \textbf{134.34} & 3004.44 & 16.23 & 2913.82 & 65.16 & 2772.87 & 13.05 \\
\bottomrule
\end{tabular}
\label{table:cl}}
\end{table*}

\begin{figure*}[!t]
    \centering
    \subfigure[FOVA vs. DRPO in Walker2d task with expert- and random-quality data.]{
        \label{fig:Walker2d-fova}
        \includegraphics[width=0.220\textwidth]{./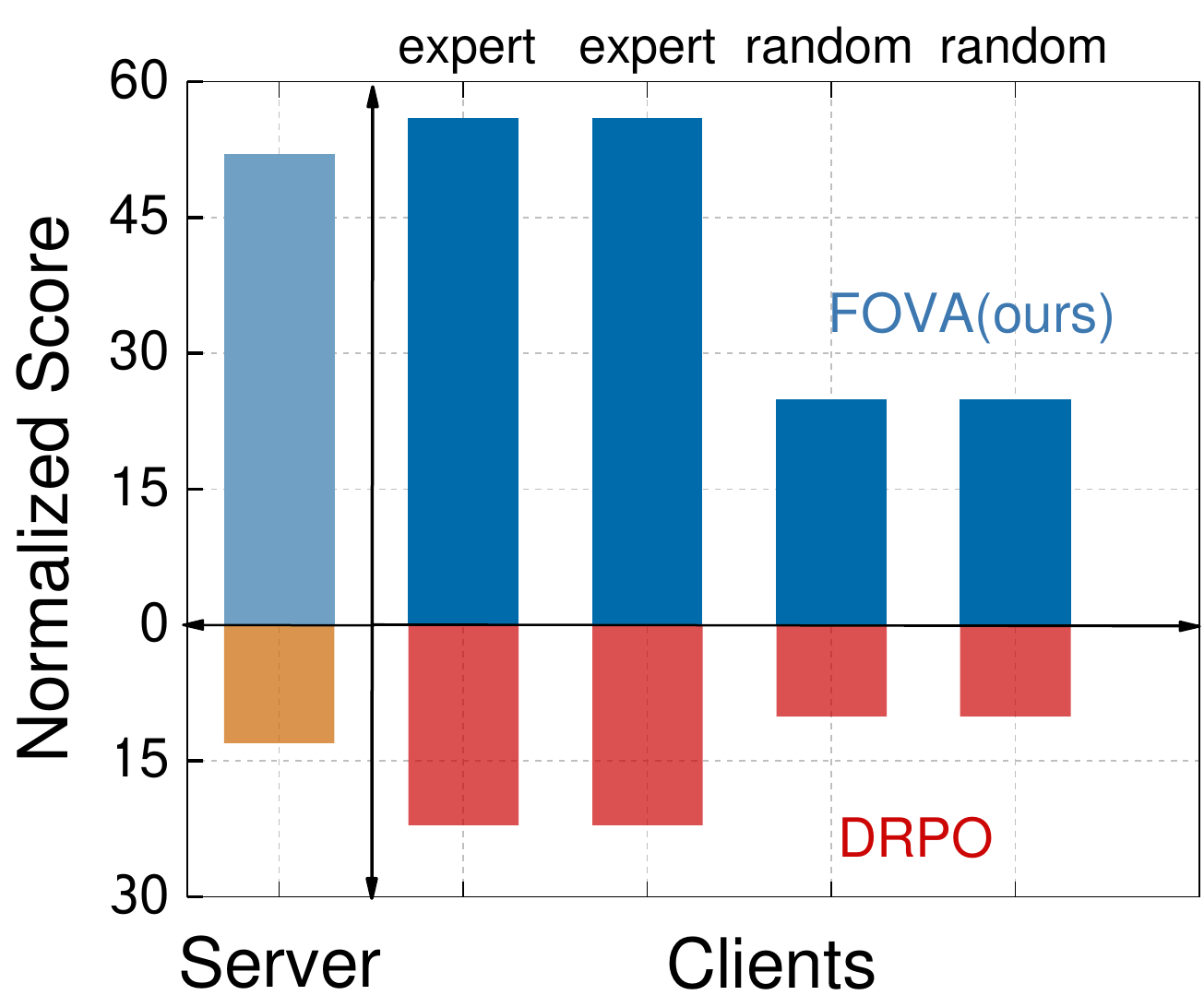}
    }
    \subfigure[Impact of number of clients in mixed-quality data.]{
        \label{fig:mix-K}
        \includegraphics[width=0.20\textwidth]{./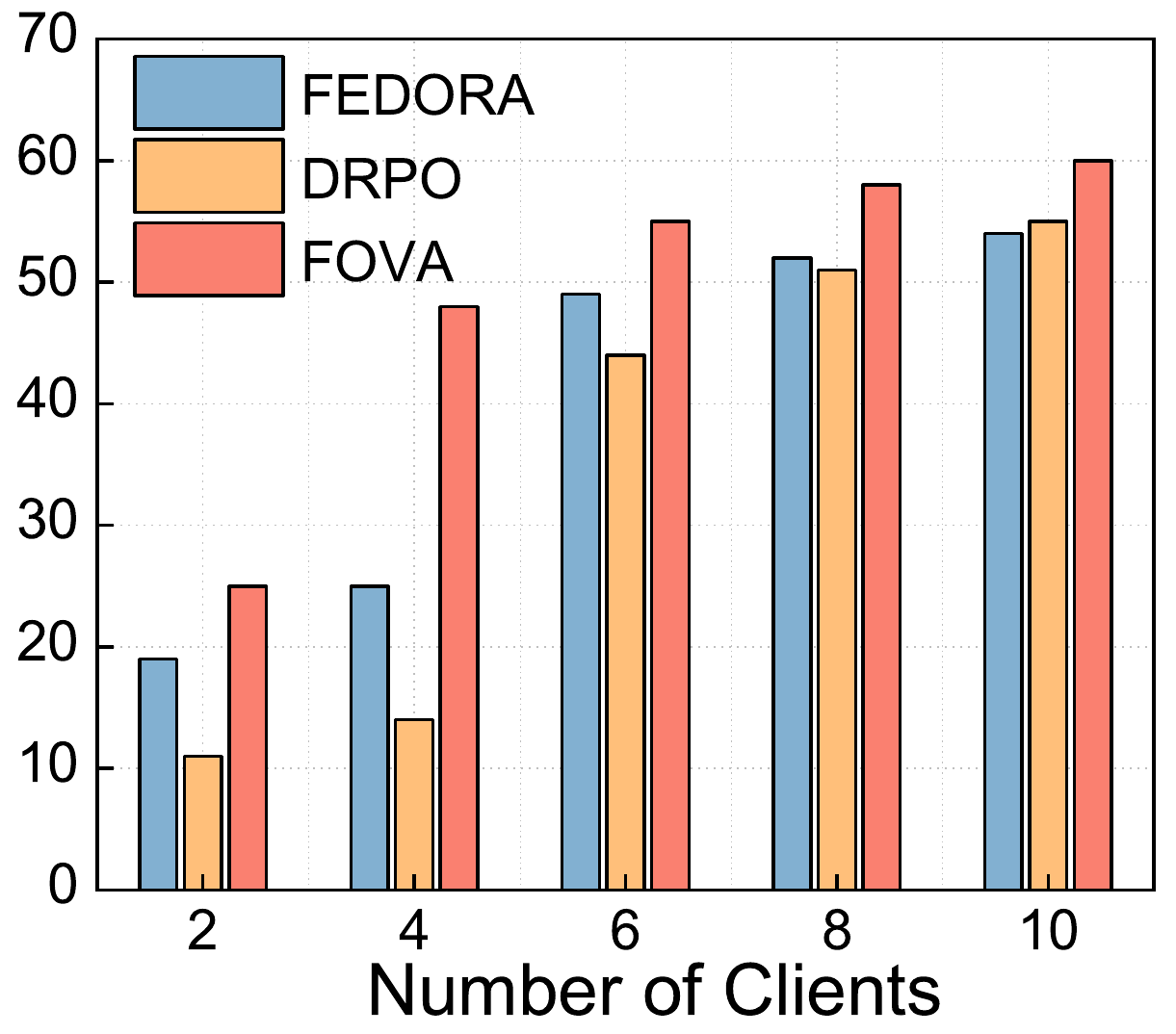}
    }
    \subfigure[Normalized score of Hopper under mixed-quality data.]{
        \label{fig:intro-hopper}
        \includegraphics[width=0.240\textwidth]{./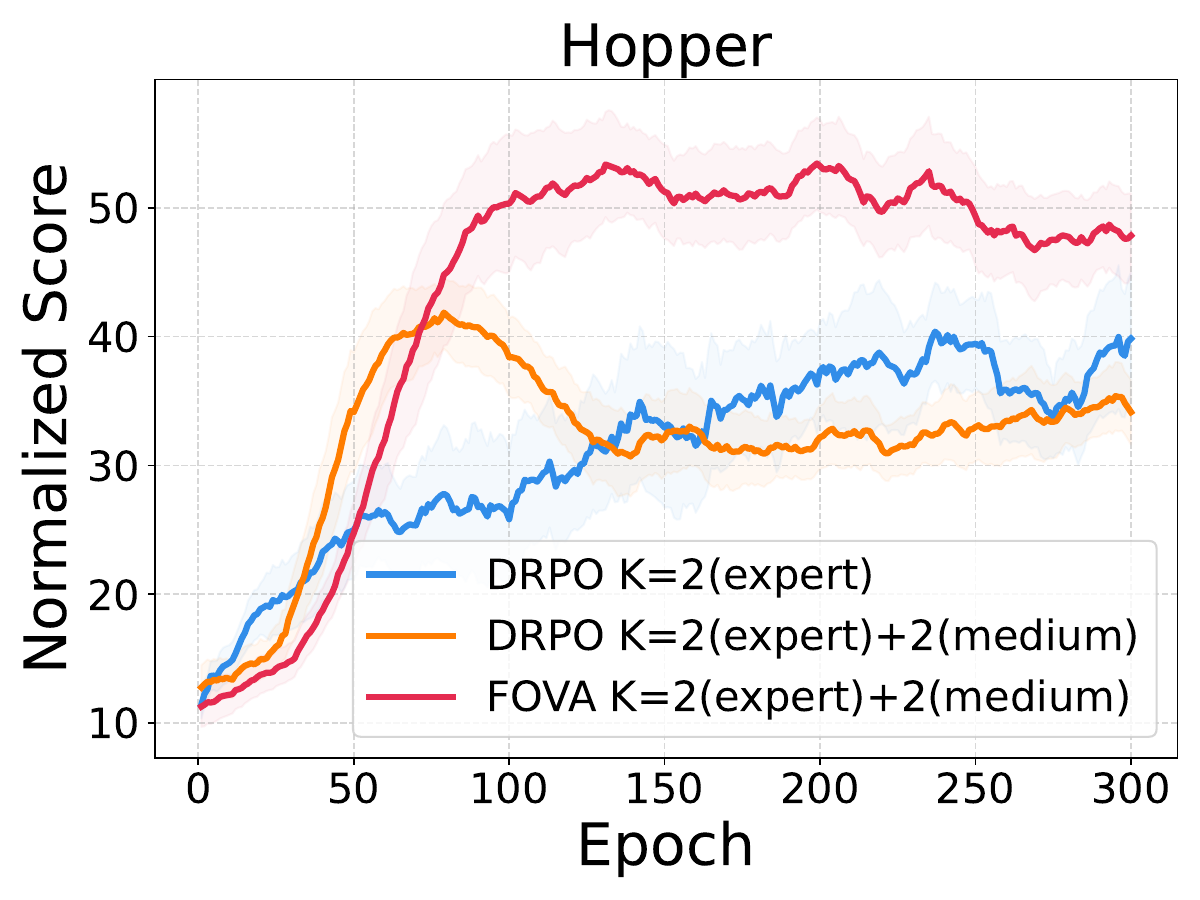}
    }
    \subfigure[Normalized score of Walker2d under mixed-quality data.]{
        \label{fig:intro1-walker2d}
        \includegraphics[width=0.240\textwidth]{./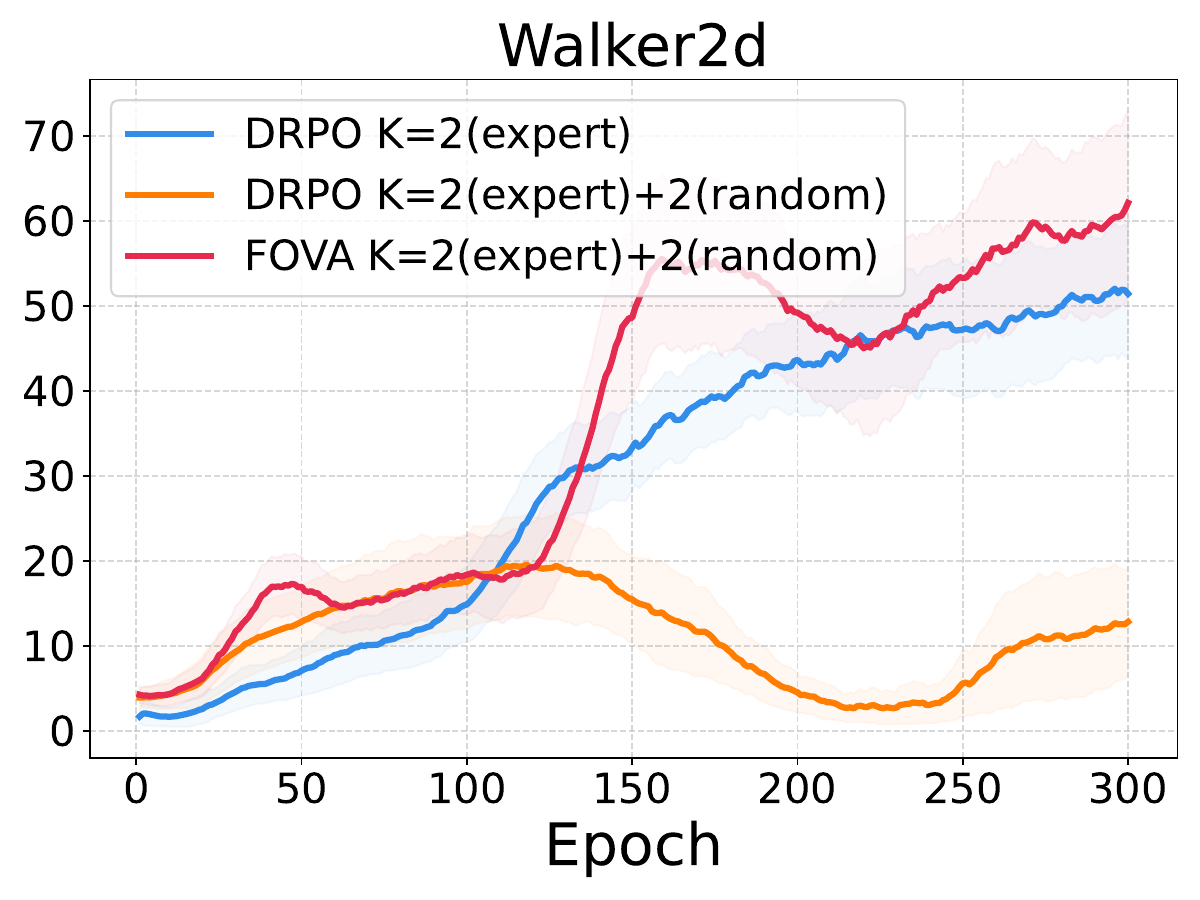}
    }
    \caption{Performance comparison on Gym-MuJoCo tasks with mixed-quality data.}
    \label{fig:add-heter}
\end{figure*}

\paragraph{Consistent Optimization Objective}
To evaluate the role of the consistent optimization objective in our method compared to the baseline methods, we utilize the Walkerd-medium-expert dataset to analyze the differences between overall server-side return and average client-side return, as illustrated in Fig.~\ref{fig:incon-datasize}. In our experiments, we fixed the number of clients at five and varied the data size from 1,000 to 50,000 state-action pairs. The results demonstrate that FOVA consistently outperforms the client-side return in terms of server-side policy performance. Although FOVA does not always surpass the baseline methods in client-side policies, its server-side policies frequently exceed those of the corresponding baseline methods. This observation suggests that the baseline methods may allocate computational resources toward enhancing client-side policies, whereas FOVA focuses on directly optimizing the final target server-side policy. Consequently, FOVA effectively maintains a consistent optimization objective, leading to superior performance in this domain.

In Fig.~\ref{fig:incon-cover}, we evaluate our FOVA method within the Hopper-medium-replay to address the inconsistency issue presented in Fig.~\ref{fig:intro-incon} of Section \ref{sec:introduction}. The experimental settings are consistent with those in Fig.~\ref{fig:intro-incon}. Notably, unlike existing approaches, our method maintains the server policy return consistently higher than the average client return throughout training iterations. This demonstrates that FOVA reliably focuses on server policy training, resulting in performance that surpasses current methods.

In general, a consistent optimization objective always leads to a better server policy as the output policy.

\begin{table*}[h]
\centering
\small
\caption{GPU Memory and Runtime Comparison}
\blue{
\setlength{\tabcolsep}{10pt}
\label{table:gpu_comparison}
\begin{tabular}{lcccccc}
\toprule
 & \textbf{Fed-TD3BC} & \textbf{Fed-CQL} & \textbf{FEDORA} & \textbf{DRPO} & \textbf{FOVA (w/o vote)} & \textbf{FOVA (w/ vote)} \\
\midrule
\textbf{GPU Mem.} (MB)       & 346   & 428   & 484   & 524  & 452     & 522     \\
\textbf{Runtime} (s/epoch)   & 5.54  & 7.61  & 14.52 & 9.84 & 9.60    & 9.73    \\
\bottomrule
\end{tabular}}
\end{table*}

\begin{figure*}[h]
	\centering
	\subfigure[The impact of $\lambda$ and $\beta$]{\label{fig:parameter}\includegraphics[width=0.214\textwidth]{./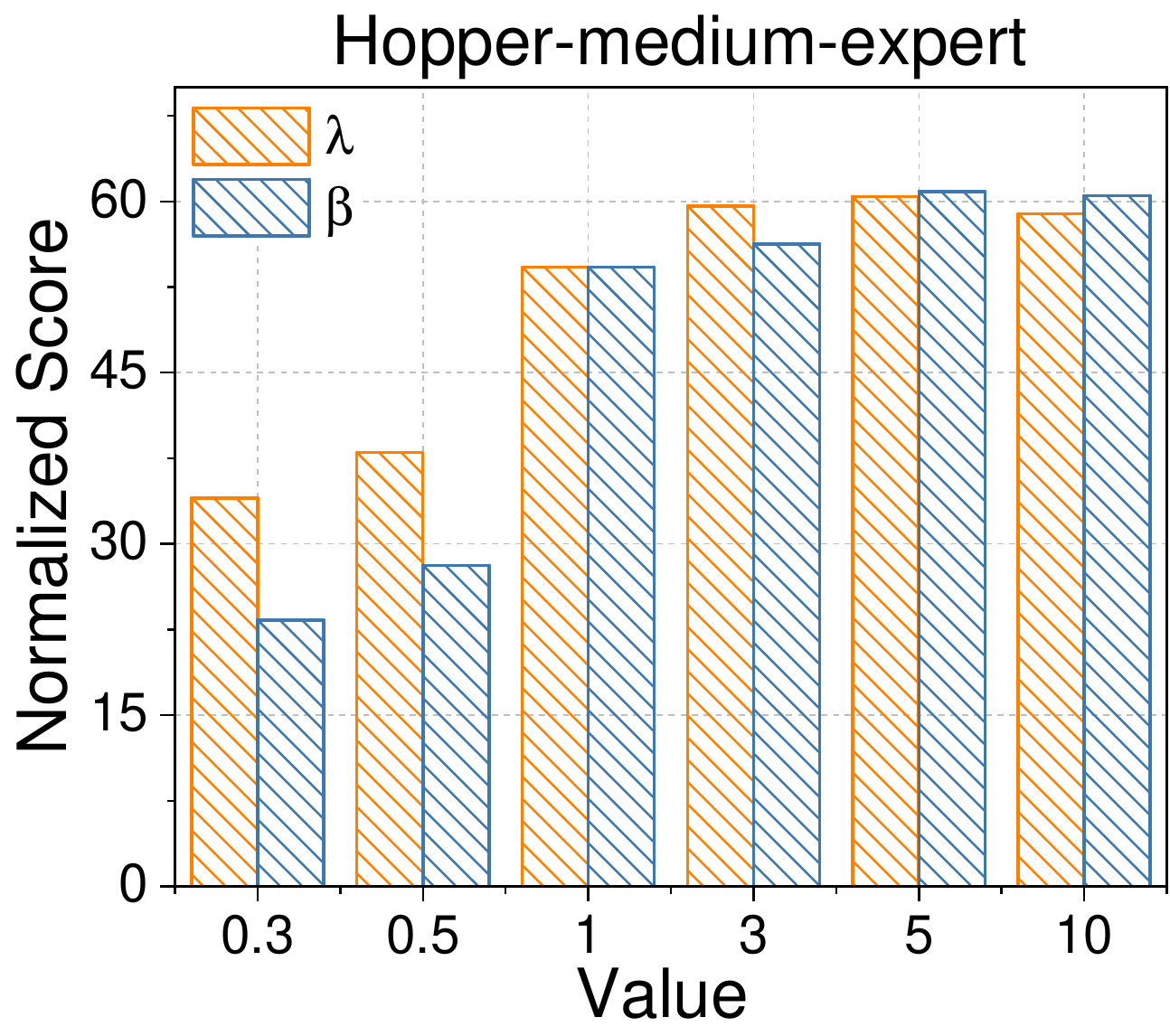}}
        \hspace{0.01\textwidth}
	\subfigure[The impact of $K$]{\label{fig:noc}\includegraphics[width=0.203\textwidth]{./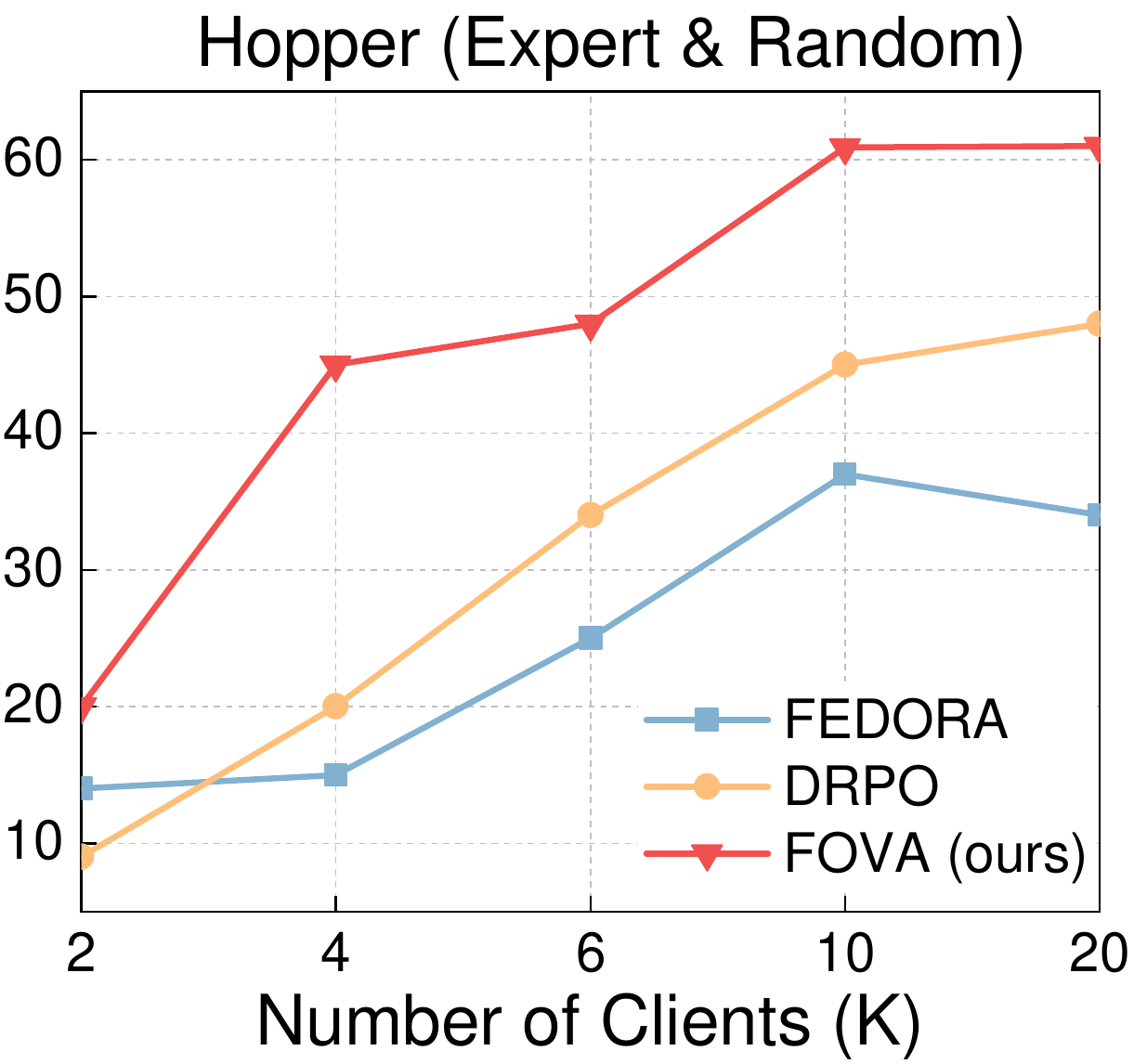}}
	\subfigure[The impact of datasize]{\label{fig:datasize}\includegraphics[width=0.252\textwidth]{./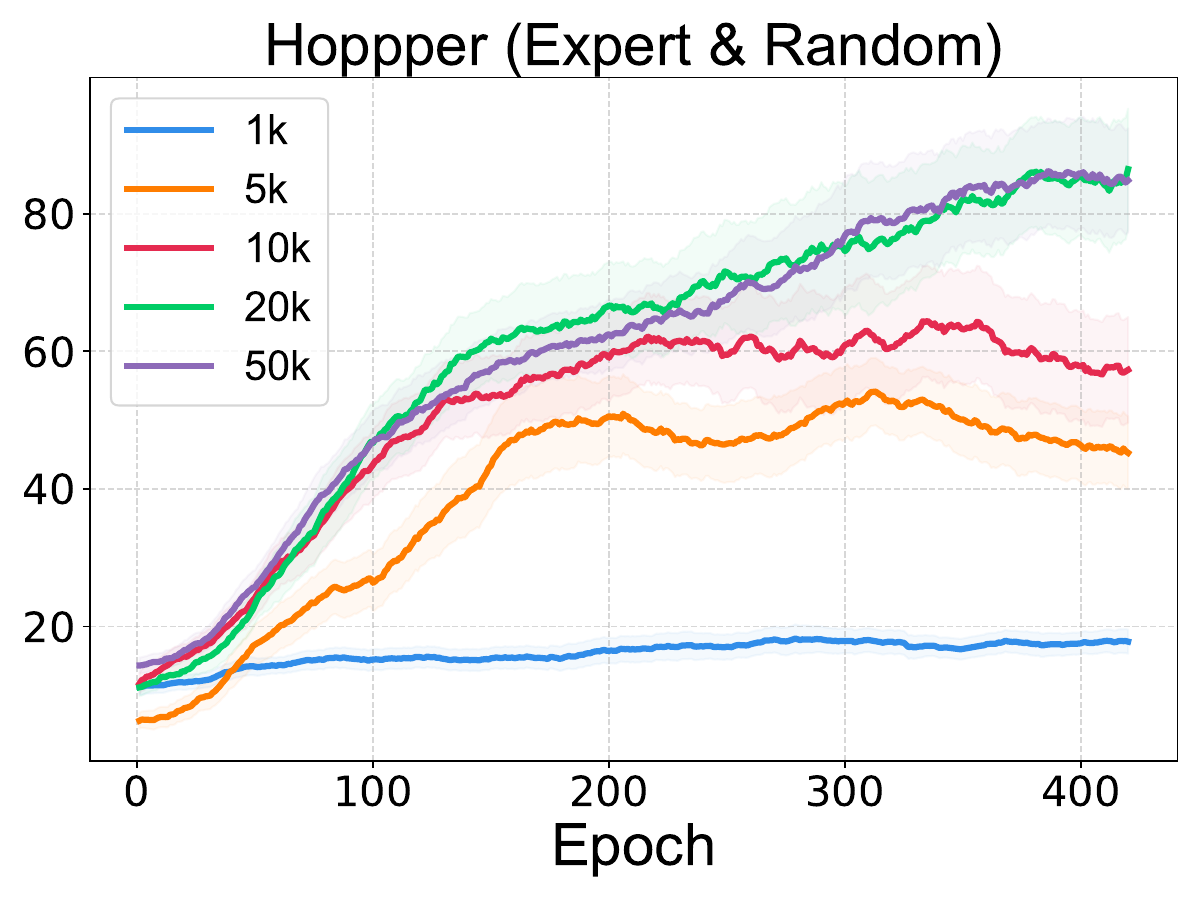}}
	\hspace{0.00\textwidth}
	\subfigure[Ablation study]{\label{fig:Ablation}\includegraphics[width=0.252\textwidth]{./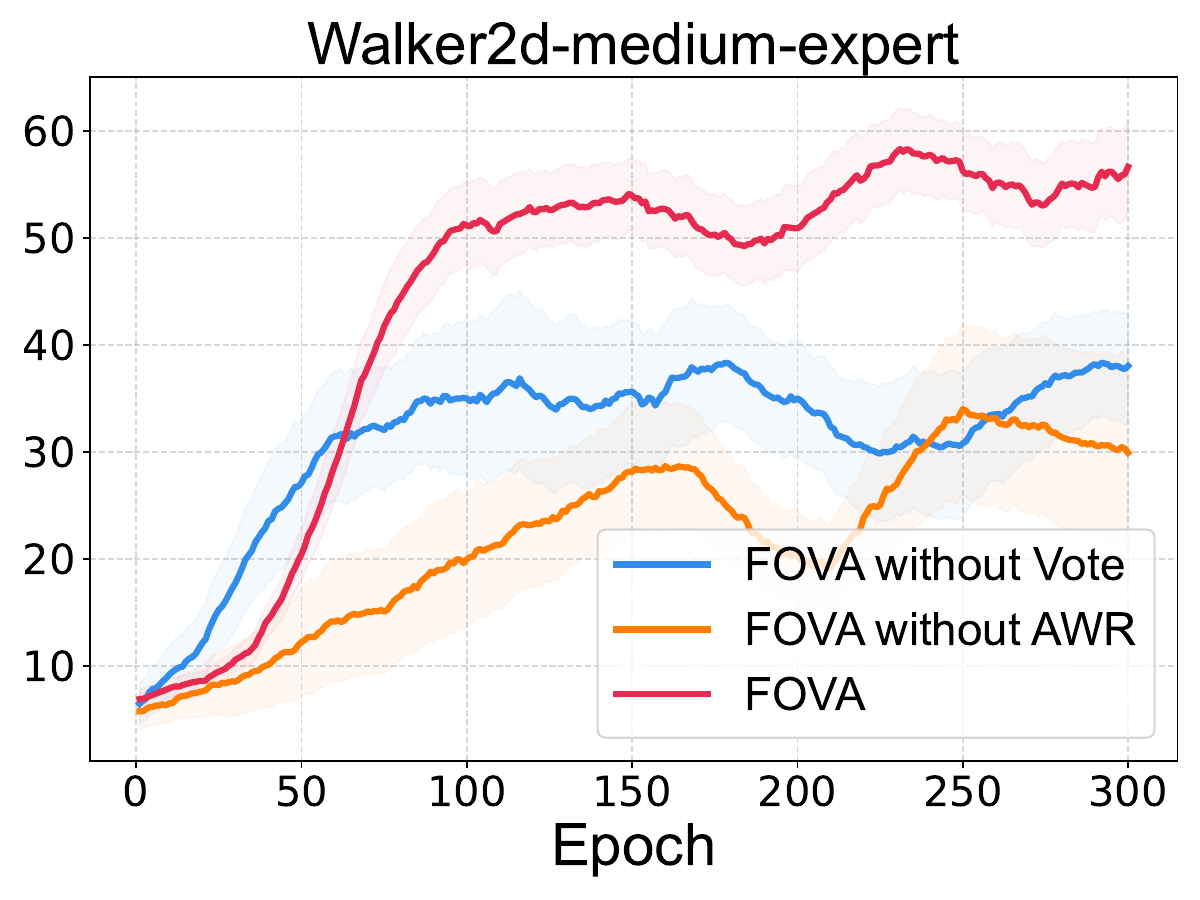}}
        \hspace{0.01\textwidth}
    \vspace{-0pt}
	\caption{\centering Impact of parameter and ablation study.}
 \label{fig:impact}
\end{figure*}

\paragraph{Mixed-quality dataset} 
First, we conducted eight experiments using mixed-quality data, configuring two clients with high-quality (Expert) datasets and two clients with low-quality (Medium/Random) datasets. FEDORA and DRPO were selected as comparison algorithms due to their strong performance in Table \ref{tab:compare}. As illustrated in Fig.~\ref{fig:compare}, the learning curves demonstrate that FOVA’s vote mechanism effectively selects superior policies in mixed-quality environments, leading to more efficient and stable learning progress. 
Compared to state-of-the-art methods, our algorithm's final performance improved by approximately 12\%–56\%, strongly supporting FOVA as an effective solution for offline FRL challenges.

Second, we revisit the experiment from Section \ref{sec:introduction}, which involves 1 server, half of clients with high-quality (Expert) data, and another half of clients with low-quality (Random) data. We carry out two experiments to explore how FOVA solves the mixed-quality data case. depicted in Fig.~\ref{fig:Walker2d-fova}, maintaining the same setup as in Fig.~\ref{fig:intro-mixed} and \ref{fig:moti-heter}, although FOVA's performance varies between expert and random clients.
Clearly, FOVA's random clients can distinguish good and bad policies, thus performing better, and the expert clients effectively avoid being contaminated by low-quality client policies. This is due to the evaluation and discrimination capabilities brought by the vote mechanism.
Furthermore, we compared FOVA with DRPO and FEDORA when the number of clients ranges from 2 to 10. FOVA outperforms them for all values of $K$, as shown in Fig.~\ref{fig:mix-K}.

Third, we further explore a mixed scenario in the context of FL. As depicted in Fig.~\ref{fig:intro-hopper} and \ref{fig:intro1-walker2d}, there are three lines in red, yellow, and blue:
(i) The blue line represents the application of the state-of-the-art (SOTA) algorithm DRPO on expert-quality datasets. These datasets are distributed across two clients, with each client having 5,000 state-action pairs.
(ii) The yellow line shows the performance of DRPO on a mixed-quality dataset configuration. This configuration consists of two expert-quality datasets and two additional medium-or random-quality datasets. Each client also has 5,000 state-action pairs.
(iii) The red line indicates that our proposed method (OURS) outperforms DRPO under the same configuration as in (ii).
Based on the above-described experimental settings, we conducted tests on both the hopper and walker2d datasets.
As illustrated in Fig.~\ref{fig:intro-hopper} and \ref{fig:intro1-walker2d}, the addition of mixed-quality datasets leads to a significant decline in the overall performance of the baselines. The reason is that mixed-quality datasets may contain low-quality local data. During the local updates processes, such data can result in the generation of low-quality local policies. Without an effective mechanism to evaluate policy qualities, it becomes difficult to identify these low-quality local policies. These low-quality policies may contaminate other policies, ultimately causing a degradation in overall performance.

\blue{
To further validate our method under scenarios with mixed-quality data, 
we investigate a non-stationary regime where data quality varies across time. 
Specifically, we adopt a sequence of datasets 
\{\textit{random, random, expert, expert, replay, replay, medium, medium}\} across all four clients.
Following continual learning practices \cite{derakhshani2021kernel,gai2023oer}, 
we report Average Performance (PER) and Backward Transfer (BWT), 
where higher values indicate better results. 
{PER and BWT are defined as
$
\mathrm{PER}=\tfrac{1}{K}\sum_{k=1}^{K}a_{K,k}, \quad
\mathrm{BWT}=\tfrac{1}{K-1}\sum_{k=1}^{K-1}\big(a_{K,k}-a_{k,k}\big),
$
where $a_{i,j}$ denotes the performance on dataset $j$ after training up to dataset $i$.}
As shown in Table~\ref{table:cl}, non-stationary data quality induces forgetting effects, 
reflecting the challenge of continually adapting to heterogeneous client data. 
Our Vote module alleviates this degradation by aggregating client policy preferences, 
thereby stabilizing the global policy update. 
In addition, we introduce an L2 regularization term on the $Q$-function 
to penalize deviations from the previously trained $Q$-function:
$
\mathcal{L}_{\text{reg}} = \alpha_2\,\mathbb{E}_{\bfs,\bfa\sim \mathcal{D}_k}\big[\|Q(\bfs,\bfa)-Q_{\text{prev}}(\bfs,\bfa)\|^2\big].
$
The combination of Vote and L2 regularization achieves the best trade-off 
between stability and plasticity. 
These results demonstrate that FOVA effectively adapts to dynamically changing data quality, 
maintaining both overall PER and BWT.
}


{
\paragraph{Impact of the Critical Methods and Parameters}
To answer the question regarding the influence of the vote mechanism, AWR method, and key hyperparameters on our FOVA algorithm, we conducted a series of experiments, the results of which are depicted in Fig. \ref{fig:impact}.

We perform ablation experiments on the Walker2d-medium-expert dataset, comparing our FOVA method with versions of FOVA that lack the vote mechanism and AWR, respectively. As shown in Fig. \ref{fig:Ablation}, FOVA outperforms both modified versions, indicating that both the vote mechanism and AWR play crucial roles in enhancing performance.

Additionally, the impact of the critical hyperparameters $\lambda$ and $\beta$ is shown in Fig. \ref{fig:parameter}. In our study, we varied one hyperparameter at a time, fixing the other at the value of 1.  The results demonstrate that the algorithm's performance consistently improves $\lambda$ and $\beta$ increases until it reaches optimal performance. This aligns with our analysis in Theorem \ref{thm:policy-improvement-local-app} and Theorem \ref{thm:policy-improvement-global-app}.
Furthermore, to show the impact of the number of clients ($K$), we fixed the data size at 5k, with half of the clients using expert datasets and the other half using random datasets, tested on the Hopper dataset. As shown in Fig.~\ref{fig:noc}, the performance of FOVA improves with the increase in the number of clients and surpasses the baseline methods across all client numbers.
To demonstrate the impact of dataset size on learning speed, we used different numbers of state-action pairs in each agent, ranging from 1k to 50k, while keeping the number of agents at 4. As shown in Fig.~\ref{fig:datasize}, FOVA's performance improves with the increase in pre-collected data.
}

To quantify the computational overhead of the vote mechanism, we benchmark six offline FRL algorithms on the Adroit suite under identical settings and report GPU memory/runtime in Table~\ref{table:gpu_comparison}. FOVA with vote uses slightly more resources than FOVA without vote due to additional $Q$-value evaluations (522\,MB, 9.73\,s/epoch vs.\ 452\,MB, 9.60\,s/epoch), yielding a negligible $<\!2\%$ runtime increase. While simpler baselines such as Fed-TD3BC (346\,MB, 5.54\,s/epoch) and Fed-CQL (428\,MB, 7.61\,s/epoch) are cheaper, they typically achieve inferior performance; FEDORA is the most demanding (484\,MB, 14.52\,s/epoch). Overall, FOVA’s overhead is modest relative to its gains.

\section{Limitations and Future Work}\label{sec:future-work}
\blue{We note that the vote mechanism's requirement for multiple Q-value evaluations incurs non-negligible GPU memory overhead. Furthermore, certain hyperparameters currently lack adaptive tuning capabilities. These limitations motivate two concrete directions for future investigation: developing memory-efficient vote through selective value estimation or distributed computation, and designing an adaptive hyperparameter tuning approach tailored to offline FRL.} 

\blue{In addition, we plan to enhance privacy protection in FOVA by exploring techniques such as differential privacy and secure multi-party computation to ensure data confidentiality during aggregation. Another important direction is adapting FOVA to dynamic environments by developing algorithms that can detect and respond to changes in real time, incorporating memory or recurrent structures to handle sequential dependencies. Furthermore, we will work on improving the scalability of FOVA to cope with large-scale FL with many clients and extensive datasets, optimizing communication and computational efficiency for broader practical applications.}


\section{Conclusion}\label{sec:conclusion}
\blue{In this paper, we introduce FOVA, an offline FRL framework that effectively addresses the challenges of mixed-quality data.} By employing vote mechanism and AWR, FOVA significantly improves overall performance. Theoretical analysis shows that FOVA guarantees policy improvement, and extensive experiments demonstrate its superiority over existing methods, particularly in scenarios with mixed-quality data.

\begin{appendices}

\blue{\section{Theoretical Analysis}\label{sec:improve_bp}}
\blue{
In this section, we provide a theoretical analysis of FOVA.  
We begin by examining the conservatism of our approach. This component addresses the extrapolation error that naturally arises in offline RL training during local updates \cite{peng2019advantage,kumar2020conservative,kostrikov2021iql}.  
Next, we show that our offline FRL method, FOVA, guarantees strict policy improvement. The guarantee is established through Theorems~\ref{thm:policy-improvement-global-app} and~\ref{thm:policy-improvement-local-app}.  
To prepare for these results, we first present Lemma~\ref{lemma:JMDP-new}, which bounds the return in the empirical MDP $\widetilde{M}$ relative to the true MDP $M$.  
We then prove the two policy improvement theorems. Our proofs are partly inspired by prior frameworks for policy improvement in offline RL \cite{kumar2020conservative,yu2021combo,yue2024federated,lin2022modelbased}.  }

\subsection{Theoretical Analysis on Conservatism}\label{subsec:conservatism}
\blue{
Similar to \cite{auer2008near,osband2017posterior,cql}, we assume concentration properties
of the reward function and the transition dynamics, as follows:}
\begin{assumption}
\label{assu:cql}
    For all $ \bfs, \bfa \in \mathcal{D} $, the reward function $ r $ and the transition probabilities $ \transitions $ satisfy the following with high probability greater than $ 1 - \delta $:
    \begin{align}
        \Phi^r(\bfs, \bfa) &= |r - r(\bfs, \bfa)| \leq 
        C^{\mathrm{vcql}}_{r, \delta} /\sqrt{|\mathcal{D}(\bfs, \bfa)|}, \nonumber\\
        \Phi^T(\bfs'|\bfs, \bfa) &= |\hat{\transitions}(\bfs'|\bfs, \bfa) - \transitions(\bfs'|\bfs, \bfa)| 
        \leq
        C^{\mathrm{vcql}}_{\transitions, \delta}/\sqrt{|\mathcal{D}(\bfs, \bfa)|}\nonumber,
    \end{align}
where $C_{r,\delta}$ and $C_{T,\delta}$ are constants that depend on $\delta$ with a $\sqrt{\log(1/\delta)}$ dependency.
\end{assumption}
\blue{
The Assumption states a concentration property that is actually considered on each client $k$. For any $(s,a)$ in $\mathcal{D}$, the estimation errors of the reward and transition relative to the true $r,T$ shrink at the rate $1/\sqrt{|\mathcal{D}(s,a)|}$.
\footnote{ For simplicity, we drop the subscript $k$ since this section focuses on local training and does not rely on cross-client differences, such as $\mathcal{D}_k$, $r_k$ and $\hat T_k$.
}
The constants $C_{r,\delta}$ and $C_{T,\delta}$ capture noise, model capacity, and the $\log(1/\delta)$ dependence, while the $\ell_1$ norm measures the discrepancy between transition distributions.
Such an assumption is common in RL and ensures the soundness of the analysis \cite{bartlett2012regal,lin2022modelbased,lyu2022mildly,yu2021combo,yue2024CJE,yue2024federated,auer2008near,osband2017posterior,cql}. In practice, it is particularly suitable for offline federated RL where clients share the same underlying dynamics (e.g., fleets, robot swarms, chain-store recommendation, or edge-device control), rewards are bounded or normalized, and each client has sufficient coverage of its frequent state-action pairs, enabling stable and efficient aggregation.}

To avoid any trivial bound, we assume that the cardinality of a state-action pair in $D$ is non-zero \cite{yu2021combo,kumar2020conservative,lyu2022mildly,lin2022modelbased}. When $(\bfs,\bfa) \notin D$, we have ${\vert D(\bfs,\bfa) \vert}/{\vert D \vert} \geq \delta$ \cite{yue2024federated}. 
Leveraging the assumption, we can bound the gap between the empirical and actual Bellman operators for policy $ \pi^v_k $:
\begin{align}
\label{eq:assu}
    &\left|\hat{\bellman}^\vpolicy \hat{Q}^{\tau}  - {\bellman}^\vpolicy \hat{Q}^{\tau} \right| \\
    \leq& \Phi^r(\bfs, \bfa)
    + \gamma \Big| \sum_{\bfs'} \mathbb{E}_{\vpolicy(\bfa'|\bfs')}\left[\hat{Q}^{\tau}(\bfs', \bfa')\right] \cdot \Phi^T(\bfs'|\bfs, \bfa) \Big|\nonumber\\
    \mathop{\le}^{(a)}& 
    \left(C^{\mathrm{vcql}}_{r, \delta} + 2\gamma r_{\max} C^{\mathrm{vcql}}_{\transitions, \delta} /{(1 - \gamma)}\right)/\sqrt{|\mathcal{D}(\bfs, \bfa)|}\nonumber, 
\end{align}
where (a) holds based on the fact that \(\hat Q^{\tau}(\bs,\ba) \leq \frac{r_{\max}}{1-\gamma},~\forall \rvs, \rva   \) \cite{kumar2020conservative,agarwal2019reinforcement}.
Next, we show that the $\hat{Q}^{\vpolicy}:=\lim_{\tau \to \infty} \hat{Q}^{\tau}$ learned by iterating Eq.\eqref{eq:vcql} lower-bounds the true Q-function.

\begin{lemma}[]
\label{thm:vcql_underestimates}
The value of the policy under the Q-function from Eq.~\eqref{eq:vcql}, $\hat{V}^\vpolicy(\bs) = \E_{\vpolicy(\ba|\bs)}[\hat{Q}^{\vpolicy}(\bs, \ba)]$, lower-bounds the true value of the policy obtained via exact policy evaluation, $V^\vpolicy(\bs) = \E_{\vpolicy(\ba|\bs)}[Q^\vpolicy(\bs, \ba)]$, for all $\bs \in \mathcal{D}$, according to:

\begin{equation*}
 \hat{V}^\vpolicy(\bs) \leq V^\vpolicy(\bs) - \alpha \underbrace{\left[\Gamma_k^{-1} \E_{\vpolicy}\left[\frac{\vpolicy}{\policy_\beta} - 1 \right] \right](\bs)}_{(a) \ge 0} + \underbrace{\left[ \Gamma_k^{-1} C_{v}\right](\bs)}_{\mathrm{(b) sampling~error}},
\end{equation*}
where $\Gamma_k = \left(I - \gamma P^{\vpolicy} \right)$ is non-negative entries and $C_{v}=\frac{C^{{vcql}}_{r, T, \delta} r_{\max}}{(1- \gamma) \sqrt{|\mathcal{D}|}}$.
For all $\bs \in \mathcal{D}$,~
if $\alpha > \frac{C^{{vcql}}_{r, T, \delta} r_{\max}}{1 - \gamma} \cdot \max_{\bs \in \mathcal{D}} \frac{1}{|\sqrt{|\mathcal{D}(\bs)|}} \cdot \left[\sum_{\ba} \vpolicy(\ba|\bs) (\frac{\vpolicy(\ba|\bs)}{\policy_\beta(\ba|\bs))} - 1)\right]^{-1}$
,~ $\hat{V}^\vpolicy(\bs) \leq {V}^\vpolicy(\bs)$, with probability $\geq 1 - \delta$. When $\hat{\bellman}^\vpolicy = {\bellman}^\vpolicy$, then any $\alpha > 0$ guarantees $\hat{V}^\vpolicy(\bs) \leq V^\vpolicy(\bs), \forall \bs \in \mathcal{D}$.
\end{lemma}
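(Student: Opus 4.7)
The plan is to mirror the CQL conservatism argument \cite{kumar2020conservative} while carefully tracking the two places where the vote policy $\vpolicy$ enters Eq.~\eqref{eq:vcql}: inside the empirical Bellman target $\hat{\bellman}^\vpolicy$ and inside the CQL penalty $\E_{\bs\sim\mathcal{D}_k,\ba\sim\vpolicy}[Q(\bs,\ba)]$. First, I would differentiate the VCQL objective with respect to $Q(\bs,\ba)$ and set the gradient to zero at the fixed point $\hat{Q}^\vpolicy = \lim_\tau \hat{Q}^\tau$. Using the fact that the data in $\mathcal{D}_k$ is sampled according to $\mu_{\mathcal{D}_k}(\bs)\policy_\beta(\ba|\bs)$, the stationarity condition simplifies to the closed form
\begin{equation*}
\hat{Q}^\vpolicy(\bs,\ba) = \hat{\bellman}^\vpolicy \hat{Q}^\vpolicy(\bs,\ba) - \alpha\left[\frac{\vpolicy(\ba|\bs)}{\policy_\beta(\ba|\bs)} - 1\right].
\end{equation*}

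Second, I would rewrite this as $(I-\gamma P^\vpolicy)\hat{Q}^\vpolicy = r + (\hat{\bellman}^\vpolicy - \bellman^\vpolicy)\hat{Q}^\vpolicy - \alpha\bigl[\vpolicy/\policy_\beta - 1\bigr]$ and subtract the exact fixed point $Q^\vpolicy = (I-\gamma P^\vpolicy)^{-1}r$ to obtain
\begin{equation*}
\hat{Q}^\vpolicy - Q^\vpolicy = \Gamma_k^{-1}\bigl[(\hat{\bellman}^\vpolicy - \bellman^\vpolicy)\hat{Q}^\vpolicy\bigr] - \alpha\,\Gamma_k^{-1}\left[\frac{\vpolicy}{\policy_\beta} - 1\right],
\end{equation*}
where $\Gamma_k^{-1} = \sum_{t\geq 0}\gamma^t (P^\vpolicy)^t$ has entrywise non-negative entries. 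Taking expectations under $\ba\sim\vpolicy(\cdot|\bs)$ yields exactly the three-term decomposition in the lemma statement, provided I can bound the Bellman-error component by the sampling-error $C_v$.

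Third, I would apply the concentration bound in Eq.~\eqref{eq:assu}, which was already derived for the vote policy using $\hat{Q}^\tau(\bs,\ba)\leq r_{\max}/(1-\gamma)$; this immediately yields $\bigl|(\hat{\bellman}^\vpolicy - \bellman^\vpolicy)\hat{Q}^\vpolicy\bigr| \leq C^{\mathrm{vcql}}_{r,T,\delta} r_{\max}/\bigl((1-\gamma)\sqrt{|\mathcal{D}(\bs,\ba)|}\bigr)$, giving term (b). For the sign of term (a), a one-line Cauchy--Schwarz argument gives $\sum_\ba \vpolicy(\ba|\bs)^2/\policy_\beta(\ba|\bs) \geq \bigl(\sum_\ba \vpolicy(\ba|\bs)\bigr)^2 = 1$, so $\E_\vpolicy[\vpolicy/\policy_\beta - 1]\geq 0$ pointwise, and the non-negativity of $\Gamma_k^{-1}$ preserves this sign. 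The explicit threshold on $\alpha$ then follows from requiring $\alpha\cdot(a)\geq(b)$ state-wise, bounding the sampling denominator by its worst case $\min_\bs\sqrt{|\mathcal{D}(\bs)|}$, and inverting the lower bound on $\E_\vpolicy[\vpolicy/\policy_\beta - 1]$; the noiseless case $\hat{\bellman}^\vpolicy = \bellman^\vpolicy$ forces $C_v=0$, so any $\alpha>0$ suffices.

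The main obstacle I anticipate is not algebraic but structural: unlike a smoothly updated policy, the vote policy $\vpolicy$ is defined via an $\arg\max$ over $\{\pi_k,\policy_\beta,\bar\pi\}$ in Eq.~\eqref{eq:vote_policy}, and one might worry that this non-smooth selection invalidates the fixed-point analysis or the concentration bound. However, once we restrict attention to the converged $\hat{Q}^\vpolicy$, the policy $\vpolicy$ is effectively a fixed stochastic policy, so $I - \gamma P^\vpolicy$ is invertible with a Neumann series and the Cauchy--Schwarz step goes through verbatim. A secondary subtlety is that the CQL penalty is evaluated under $\vpolicy$ while the Bellman residual is weighted by $\policy_\beta$; handling this mismatch is precisely what produces the importance-ratio term $\vpolicy/\policy_\beta - 1$ and is the reason the threshold on $\alpha$ scales inversely with how far $\vpolicy$ deviates from $\policy_\beta$.
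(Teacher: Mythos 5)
Your proposal is correct and follows essentially the same route as the paper: both derive the stationarity condition $\hat{Q}^{\tau+1} = \hat{\bellman}^{\vpolicy}\hat{Q}^{\tau} - \alpha\left[\vpolicy/\policy_\beta - 1\right]$, propagate it through the resolvent $(I-\gamma P^{\vpolicy})^{-1}$, bound the Bellman-error term via the concentration inequality in Eq.~\eqref{eq:assu}, and obtain the threshold on $\alpha$ by balancing the penalty term against the sampling error. The only cosmetic differences are that you prove $\E_{\vpolicy}\left[\vpolicy/\policy_\beta - 1\right]\ge 0$ by Cauchy--Schwarz while the paper uses the add-and-subtract-$\policy_\beta$ decomposition in Eq.~\eqref{eq:DVCQL}, and you treat the noisy and noiseless cases in one unified Q-level decomposition whereas the paper first handles the exact-Bellman case via a value-level recursion and then invokes the CQL sampling-error bound.
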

\begin{proof}
Building on the Assumption \ref{assu:cql} laid, we extend our analysis to demonstrate that VCQL.
In order to start, we prove this theorem in the absence of sampling error, drawing on the foundation of \cite[Theorem 3.2]{kumar2020conservative}.
The absence of sampling error assumes $\hat{\bellman}^\vpolicy = \bellman^\vpolicy$.
Setting the derivative of the objective function defined in Eq.~\eqref{eq:vcql} to zero, we obtain
\begin{equation}
    \forall ~\bs, \ba, \tau ~~ \hat{Q}^{\tau+1} (\bs, \ba) = \bellman^\vpolicy \hat{Q}^{\tau}(\bs, \ba) - \alpha \left[\frac{\vpolicy(\ba|\bs)}{\behaviorv(\ba|\bs)} - 1 \right].
    \label{eqn:q_function_modified_eval}
\end{equation}
The value $\hat{V}^{\tau+1}$ is underestimated, because:
\begin{align}
\label{eqn:value_recursion}
    \hat{V}^{\tau+1}(\bs) 
    = \bellman^\vpolicy \hat{V}^{\tau} (\bs) - \alpha D_{\textit{VCQL}}(\bs),
\end{align}
where 
$D_{\textit{VCQL}}(\bs): = \sum_{\ba} \vpolicy(\ba|\bs) \left[\frac{\vpolicy(\ba|\bs)}{\behaviorv(\ba|\bs)} - 1 \right]$ is a non-negative term according to
\begin{align}
\label{eq:DVCQL}
    D_{\textit{VCQL}}(\bs)
    &= \sum_{\ba} \vpolicy(\ba|\bs) \left[\frac{\vpolicy(\ba|\bs)}{\behaviorv(\ba|\bs)} - 1 \right]\\
    = & \sum_{\ba} (\vpolicy(\ba|\bs) - \behaviorv(\ba|\bs) + \behaviorv(\ba|\bs)) \left[\frac{\vpolicy(\ba|\bs)}{\behaviorv(\ba|\bs)} - 1 \right]\nonumber\\
    = &
    \sum_{\ba} (\vpolicy(\ba|\bs) - \behaviorv(\ba|\bs)) \left[ \frac{\vpolicy(\ba|\bs) - \behaviorv(\ba|\bs)}{\behaviorv(\ba|\bs)} \right]\nonumber\\
    & +\sum_{\ba} \behaviorv(\ba|\bs) \left[\frac{\vpolicy(\ba|\bs)}{\behaviorv(\ba|\bs)} - 1 \right]\nonumber\\
    \mathop{=}^{(i)}&
    \sum_{\ba}  \frac{\left(\vpolicy(\ba|\bs) - \behaviorv(\ba|\bs) \right)^2}{\behaviorv(\ba|\bs)} ~ +~ 0 \ge  0,
\end{align}
where $(i)$ holds because we have $\sum_{\ba} \vpolicy(\ba|\bs) = \sum_{\ba} \behaviorv(\ba|\bs) = 1$. This implies that $\sum_{\ba} \behaviorv(\ba|\bs) \left[\frac{\vpolicy(\ba|\bs)}{\behaviorv(\ba|\bs)} - 1 \right]=0$.     
Clearly, $D_\text{VCQL}(\bs)$ is a non-negative term and equals zero if and only if $\vpolicy(\ba|\bs) = \behaviorv(\ba|\bs)$.
Consequently, when $\vpolicy$ is inconsistent with $\behaviorv$, each iteration of the value update process inherently introduces some degree of underestimation, that is 
\begin{align}
   \hat{V}^{\tau+1}(\bs) \leq \bellman^\vpolicy \hat{V}^{\tau} (\bs) 
\end{align}
We further compute the fixed point of the recursion in Eq.~\eqref{eqn:value_recursion}, which yields the following approximation of the policy value:
\begin{align}
\label{eq:V-wo-samplingerror}
    \hat{V}^\vpolicy(\bs) = V^\vpolicy(\bs) - \alpha  \underbrace{(I - \gamma P^\vpolicy)^{-1}}_{\text{non-negative entries}}
    \underbrace{D_{\textit{VCQL}}(\bs)}_{\text{
    Eq.~\eqref{eq:DVCQL} 
    }\geq 0}.
\end{align}
The aforementioned conclusion underscores that, given $\alpha\ge 0$, Eq.~\eqref{eq:vcql} provides a rigorous lower bound on the policy value in the absence of sampling error.

Next, we extend the conclusion to the setting with sampling error, that is, without assuming that $\hat{\bellman}^\vpolicy = \bellman^\vpolicy$.
Based on Eq.~\eqref{eq:assu}, \eqref{eq:V-wo-samplingerror} and \cite[Theorem 3.1]{kumar2020conservative}, we have
\begin{align}
   \hat{V}^\vpolicy(\bs) \leq V^\vpolicy(\bs) - \alpha \left[\left(I - \gamma P^\vpolicy \right)^{-1} \E_{\vpolicy}\left[\frac{\vpolicy}{\behaviorv} - 1 \right] \right]&(\bs)\nonumber\\
   + \left[ (I - \gamma P^\vpolicy)^{-1} \frac{C_{r, T, \delta} r_{\max}}{(1- \gamma) \sqrt{|\mathcal{D}|}}\right]&(\bs),\nonumber
\end{align}
where $(I - \gamma P^\vpolicy)^{-1}$ is a matrix with all non-negative entries.
The above result holds (w.h.p.) $\forall \bs \in \mathcal{D}$.
In this case, the value of $\alpha$, that prevents overestimation (w.h.p.) is given by:
\begin{align}
\label{eq:alpha}
    \alpha \geq \max_{\bs, \ba \in \mathcal{D}} \frac{C_{r, T, \delta} r_{\max}}{(1 - \gamma) \sqrt{|\mathcal{D}(\bs)|}} \cdot 
    \max_{\bs \in \mathcal{D}} \left[D_{\textit{VCQL}}(\bs)\right]^{-1}.
\end{align}
Therefore, we complete the proof of Lemma \ref{thm:vcql_underestimates}. 
\end{proof}

\begin{remark}
    \blue{Lemma \ref{thm:vcql_underestimates} shows that VCQL, learned by iterating Eq.~\eqref{eq:vcql}, provides a lower bound of the true Q-function in the absence of sampling error. When sampling error exists, choosing a proper \(\alpha\) further reduces overestimation with high probability \(1-\delta\). 
    Thus, the Q-values of all learned policies will not be overestimated, ensuring that the local policy remains \text{conservative and safe}.}

    \blue{VCQL requires both modifications to CQL (the red terms in Eq.~\eqref{eq:vcql}); omitting either leads to unacceptable overestimation for some out-of-distribution actions. 
    This follows directly from the proof of Lemma \ref{thm:vcql_underestimates}, which we omit due to space limitations.}
\end{remark}
\blue{In summary, VCQL not only enables interaction between the global policy and local behavioral policy during local policy evaluation, but also helps identify better policies to guide Q-function updates. 
More importantly, the modified policy evaluation loss still preserves conservative properties, mitigating extrapolation errors caused by OOD error.}

\subsection{Bounding the Return}
To begin, we derive the form of the resulting Q-function iteration. By setting the derivative of Eq.~\eqref{eq:vcql} to zero, we have the following expression for \( \hat{Q}^{\tau+1} \):
\begin{equation}
    \label{eq:derivative}
     \hat{Q}^{\tau+1}(\bs, \ba) = \hat{\bellman}^\vpolicy \hat{Q}^\tau(\bs, \ba) - {\alpha} \left(\frac{\vpolicy(\ba|\bs)}{\behaviorv(\ba|\bs)} - 1 \right),
\end{equation}  
\blue{
where \( \hat{\bellman}^\vpolicy \) denotes the Bellman operator induced by policy \( \vpolicy \).
$\alpha$ is a positive regularization parameter whose value is rigorously constrained by the theoretical bound in Eq.~\eqref{eq:alpha}.}

Since local updates on client $k$ are fully performed on the fixed dataset $\mathcal{D}_k$, it is important to relate the returns of policy $\pi_k$ in the underlying MDP $M$ to those in the empirical MDP $\widetilde{M} \doteq \langle \mathcal{S}, \mathcal{A}, \widetilde{P}, \widetilde{R}, \mu, \gamma \rangle$, induced by $\mathcal{D}_k$.
Based on Eq.~\eqref{eq:derivative}, the Problem \eqref{eq:awr} can be transformed into the following:
\begin{align}
\label{eq:app:awr}
\max_{\pi_k} ~~ J(\widetilde{M},\pi_k) - \frac{{\alpha}{\nu}(\policy_k)}{1-\gamma} - \lambda \mathrm{KL}(\bar{\pi}^*_k,\pi_k),
\end{align}
where  
$\mathrm{KL}(\bar{\pi}^*_k,\pi_k) = \mathbb{E}_{\rvs \sim \mathcal{D}_k} \left[ \mathrm{D_{KL}} \left(\bar\pi_k^*(\cdot  | \rvs) \middle|\middle| \pi_k(\cdot | \rvs)\right) \right]$.
We sightly abuse notation in 
${\nu}(\policy_k) = \sum_{\bs,\ba}\policy_k(\bs,\ba)\left(\frac{\policy_k(\bs,\ba)}{\behaviorv(\bs,\ba)} - 1 \right)$ and denote $\policy_k(\bs,\ba)$ as state-action distribution of the local policy.
Since the learning process is entirely based on the fixed dataset $\mathcal{D}_k$, it is crucial to establish the relationship between the returns of policy $\pi_k$ on the empirical MDP $\widetilde{M}$ and the underlying MDP $M$. Consequently, we derive the following conclusion based on \cite[Lemma A.2]{yu2021combo} and \cite[Lemma 1]{yue2024federated}: 
\begin{lemma}
\label{lemma:JMDP-new}
For any $\pi_k$, the following statement holds (w.h.p.):
\begin{align}
    \label{eq:eq1-proof}
    J(\widetilde{M},\pi_k)\le J(M,\pi_k) +  \tilde{\xi},
\end{align}
where $\tilde{\xi}$ is a positive parameter, defined as
\begin{align}
    \tilde{\xi} \doteq\;&\frac{2\gamma r_{\max} C_{T,\delta}}{(1-\gamma)^2}\mathbb{E}_{\rvs\sim d_{\pi_k}(\rvs)}\bigg[ \sqrt{D_\mathrm{VCQL}(\pi_k,\pi_{\beta_k})(\rvs)|\mathcal{A}|/|\mathcal{D}|} \bigg]\nonumber\\
    & + \frac{C_{R,\delta}}{1-\gamma}\mathbb{E}_{\bfs,\bfa\sim{\pi_k}}\Big[ 1/\sqrt{|\mathcal{D}(\bs,\ba)|}\Big],
\end{align}
with $D_\mathrm{VCQL}(\pi_1,{\pi_2})(\rvs)\doteq 1 + \sum_a \pi_1(\rva | \rvs)(\pi_1(\rva | \rvs)/{\pi_2}(\rva | \rvs)-1)$. 
\end{lemma}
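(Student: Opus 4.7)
The plan is to decompose the gap $J(\widetilde M,\pi_k) - J(M,\pi_k)$ into a reward-error piece and a dynamics-error piece, control each by the pointwise concentration provided in Assumption~\ref{assu:cql}, and then manufacture the $D_\mathrm{VCQL}(\pi_k,\pi_{\beta_k})$ factor via a Cauchy--Schwarz change of measure from the learning policy $\pi_k$ to the behavior policy $\pi_{\beta_k}$.

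First, I would invoke the simulation-lemma identity in the form used in \cite[Lemma~A.2]{yu2021combo},
\begin{align*}
J(\widetilde M,\pi_k) - J(M,\pi_k)
 &= \tfrac{1}{1-\gamma}\,\mathbb{E}_{\rvs,\rva \sim d^{\widetilde M}_{\pi_k}}\!\big[(\widetilde r - r)(\rvs,\rva)\big] \\
 &\quad + \tfrac{\gamma}{1-\gamma}\,\mathbb{E}_{\rvs,\rva \sim d^{\widetilde M}_{\pi_k}}\!\big[(\widetilde T - T)(\cdot|\rvs,\rva)^{\!\top} V^{\pi_k}_M\big].
\end{align*}
Since $\|V^{\pi_k}_M\|_\infty \le r_{\max}/(1-\gamma)$ and the $\ell_1$-deviation of the next-state distributions is bounded, up to a constant factor, by $C_{T,\delta}/\sqrt{|\mathcal D(\rvs,\rva)|}$ from Assumption~\ref{assu:cql}, the dynamics piece is at most $\tfrac{2\gamma r_{\max} C_{T,\delta}}{(1-\gamma)^2}\,\mathbb{E}_{\rvs,\rva \sim d_{\pi_k}}\!\big[1/\sqrt{|\mathcal D(\rvs,\rva)|}\big]$, and the reward piece is bounded directly by $\tfrac{C_{R,\delta}}{1-\gamma}\,\mathbb{E}_{\rvs,\rva \sim \pi_k}\!\big[1/\sqrt{|\mathcal D(\rvs,\rva)|}\big]$.

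The second step converts the inner action expectation against $\pi_k$ into one against $\pi_{\beta_k}$ in order to expose the $D_\mathrm{VCQL}$ ratio. Expanding the definition yields $D_\mathrm{VCQL}(\pi_k,\pi_{\beta_k})(\rvs) = \sum_\rva \pi_k(\rva|\rvs)^2/\pi_{\beta_k}(\rva|\rvs)$, after the ``$+1$'' and ``$-1$'' terms cancel via $\sum_\rva \pi_{\beta_k} = \sum_\rva \pi_k = 1$. Cauchy--Schwarz then gives, for any non-negative $g$,
\begin{equation*}
\sum_\rva \pi_k(\rva|\rvs)\,g(\rvs,\rva) \le \sqrt{D_\mathrm{VCQL}(\pi_k,\pi_{\beta_k})(\rvs)}\cdot\sqrt{\sum_\rva \pi_{\beta_k}(\rva|\rvs)\,g(\rvs,\rva)^2}.
\end{equation*}
Applying this with $g(\rvs,\rva) = 1/\sqrt{|\mathcal D(\rvs,\rva)|}$ and bounding $\sum_\rva \pi_{\beta_k}(\rva|\rvs)/|\mathcal D(\rvs,\rva)| \le |\mathcal A|/|\mathcal D|$ (by the same dataset-coverage convention used in \cite{yu2021combo,yue2024federated}) produces exactly the first summand of $\tilde\xi$, because the $|\mathcal A|/|\mathcal D|$ factor is state-independent and can be absorbed inside the square root. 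A union bound over the two probabilistic statements in Assumption~\ref{assu:cql} then closes the argument with probability at least $1-\delta$.

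The main obstacle I anticipate is handling the mismatch between $d^{\widetilde M}_{\pi_k}$ and $d^{M}_{\pi_k}$ in the simulation-lemma step: because $\widetilde T \ne T$, these two occupancy measures genuinely differ and must be reconciled before writing the final bound in terms of a single $d_{\pi_k}$. I would address this either by using the telescoping form of the simulation lemma (which keeps everything on $d^{\widetilde M}_{\pi_k}$) and absorbing the residual dynamics mismatch into the $C_{T,\delta}$ constant, or by directly invoking the packaged conclusion of \cite[Lemma~A.2]{yu2021combo}, whose statement already matches the form required here.
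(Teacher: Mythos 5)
Your proposal is correct and follows essentially the same route as the paper's proof: both reduce to the packaged bound of \cite[Lemma~A.2]{yu2021combo} (your stated fallback, with $M_1=M_2=\widetilde M$, $f=1$), bound $\hat Q$ (resp.\ $V$) by $r_{\max}/(1-\gamma)$, and invoke Assumption~\ref{assu:cql} for the reward and transition errors. Your explicit Cauchy--Schwarz change of measure showing $D_\mathrm{VCQL}(\pi_k,\pi_{\beta_k})(\rvs)=\sum_{\rva}\pi_k^2/\pi_{\beta_k}$ and producing the $\sqrt{D_\mathrm{VCQL}\,|\mathcal{A}|/|\mathcal{D}|}$ factor is exactly the step the paper leaves implicit in its final substitution, so it is a welcome clarification rather than a deviation.
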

\begin{proof}
Let $M_1$ and $M_2$ be two Markov Decision Processes (MDPs) with the same state space, action space, and discount factor $\gamma$. Given a fraction $f\in(0,1)$, the $f$-interpolant MDP $M_f$ is defined with dynamics $T_{M_f}=fT_{M_1}+(1 - f)T_{M_2}$ and reward function $r_{M_f}=fr_{M_1}+(1 - f)r_{M_2}$.
$T^{\pi}$ denotes the transition matrix on state-action pairs induced by a stationary policy $\pi$, i.e., $T^{\pi}=T(\bfs'|\bfs,\bfa)\pi(\bfa'|\bfs')$.

According to \cite[Lemma A.2]{yu2021combo}, the returns of policy $\pi$ in any MDP $M$, denoted by $J(M,\pi)$, and in $M_f$, denoted by $J(M_1,M_2,f,\pi)$, satisfy the following conclusion:
\begin{align}
\label{eq:JM1M2}
    J(M,\pi)-\xi\leq J(M_1,M_2,f,\pi)\leq J(M,\pi)+\xi,
\end{align}
where
\begin{align*}
\xi=&\frac{2\gamma(1 - f)}{(1-\gamma)^2}r_{\max}D_{\mathrm{TV}}(T_{M_2},T_M)\\
+&\frac{\gamma f}{1-\gamma}\mathbb{E}_{d^{\pi}_{M_1}}[(T^{\pi}_{M}-T^{\pi}_{M_1})Q^{\pi}_{M_1}]\\
+&\frac{f}{1-\gamma}\mathbb{E}_{\bfs,\bfa\sim d^{\pi}_{M_1}}[|r_{M_1}(\bfs,\bfa)-r_M(\bfs,\bfa)|]\\
+&\frac{1 - f}{1-\gamma}\mathbb{E}_{\bfs,\bfa\sim d^{\pi}_{M_2}}[|r_{M_2}(\bfs,\bfa)-r_M(\bfs,\bfa)|].
\end{align*}
The above conclusion is widely used in reinforcement learning \cite{liu2023micro,lin2022modelbased,yue2024federated}.
The conclusion describes the relationship between policy returns in different MDPs based on the corresponding reward differences and dynamics differences.
Furthermore, the conclusion provides an upper and lower bound on the difference between the returns on the empirical MDP and the true MDP. 
We consider the case where we set \(M_1 = M_2=\widetilde{M}\) and \(f = 1\). This leads to the inequality:
\begin{align}
        J(M,\pi) - \xi \le J(\widetilde{M},\pi) \le J(M,\pi) + \xi,
    \end{align}
    where $\xi$ is denoted as
    \begin{align*}
        \xi\doteq\;&\frac{\gamma }{1-\gamma}\Big|\mathbb{E}_{\bfs,\bfa\sim\pi,\bfa'\sim\pi(\cdot|\bfs')}\Big[ \sum_{s'}\big(\hat{T}(\bfs'|\bfs,\bfa)-T(\bfs'|\bfs,\bfa)\big)\nonumber\\
        &\cdot \hat Q^\pi(\bfs',\bfa')\Big]\Big| + \frac{1}{1-\gamma}\mathbb{E}_{\bfs,\bfa\sim\pi}\Big[\big| r - r(\bs,\ba) \big|\Big].
\end{align*}  
According to Assumption \ref{assu:cql}, we have the following bounds:
\begin{align}
    \label{eq:assu-reward}
        &|r - r(\bfs, \bfa)| \leq 
        C^{\mathrm{vcql}}_{r, \delta} /\sqrt{|\mathcal{D}(\bfs, \bfa)|}, \\
    \label{eq:assu-transition}
        &|\hat{\transitions}(\bfs'|\bfs, \bfa) - \transitions(\bfs'|\bfs, \bfa)| \leq
        C^{\mathrm{vcql}}_{\transitions, \delta}/\sqrt{|\mathcal{D}(\bfs, \bfa)|},\\
    \label{eq:assu-q}
        &|\hat Q^{\pi_k}(\bs,\ba)| \leq \frac{r_{\max}}{1-\gamma},~\forall \rvs, \rva.   
\end{align}
Substituting these inequality into the expression for \(\xi\) and letting \(\pi=\pi_k\), we can obtain an upper bound \(\tilde{\xi}\) such that (w.h.p.):
\begin{align}
    \label{eq:eq1}
    J(\widetilde{M},\pi_k)\le J(M,\pi_k) +  \tilde{\xi},
\end{align}
where $\tilde{\xi}$ is defined as
\begin{align}
    \tilde{\xi} \doteq\;&\frac{2\gamma r_{\max} C_{T,\delta}}{(1-\gamma)^2}\mathbb{E}_{\rvs\sim d_{\pi_k}(\rvs)}\bigg[ \sqrt{D_\mathrm{VCQL}(\pi_k,\pi_{\beta_k})(\rvs)|\mathcal{A}|/|\mathcal{D}|} \bigg]\nonumber\\
    & + \frac{C_{R,\delta}}{1-\gamma}\mathbb{E}_{\bfs,\bfa\sim{\pi_k}}\Big[ 1/\sqrt{|\mathcal{D}(\bs,\ba)|}\Big],
\end{align}
with $D_\mathrm{VCQL}(\pi_1,{\pi_2})(\rvs)\doteq 1 + \sum_a \pi_1(\rva | \rvs)(\pi_1(\rva | \rvs)/{\pi_2}(\rva | \rvs)-1)$. 
Thus, we complete the proof of Lemma \ref{lemma:JMDP-new}.
\end{proof}

\subsection{Policy Improvement on Server Aggregation}
Next, we show the 
global policy achieves improvement over behavior policy.
Adapted from Lemma \ref{lemma:JMDP-new}, we establish a lower bound for the return of $\bar{\pi}$ in the $\widetilde{M}$ relative to its return in the underlying MDP (w.h.p.):
\begin{align}
    J(\widetilde{M},\bar{\pi}^*_k)\ge J(M,\bar{\pi}^*_k) - \bar{\xi},
\end{align}
where $\bar{\xi}$ is a positive parameter, defined as
\begin{align}
    \bar{\xi} \doteq\;&\frac{2\gamma r_{\max} C_{T,\delta}}{(1-\gamma)^2}\mathbb{E}_{s\sim d_{\bar{\pi}^*_k}}\bigg[ \sqrt{D_\mathrm{VCQL}(\bar{\pi}^*_k,\pi_{\beta_k})(\rvs)|\mathcal{A}|/|\mathcal{D}(\rvs)|} \bigg]\nonumber\\
    & + \frac{C_{r,\delta}}{1-\gamma}\mathbb{E}_{\bfs,\bfa\sim\bar{\pi}^*_k}\Big[ 1/\sqrt{|\mathcal{D}(\bs,\ba)|}\Big].
\end{align}
Similarly, the following holds with probability greater than $1-\delta$:
\begin{equation}
    J(\widetilde{M},\pi_{\beta_k})\ge J(M,\pi_{\beta_k}) -  \xi_b,
\end{equation}
where 
$\xi_b$ is a positive parameter, defined as
\begin{align}
    \xi_b &\doteq\frac{2\gamma r_{\max} C_{T,\delta}}{(1-\gamma)^2}\cdot\mathbb{E}_{s\sim d_{\pi_{\beta_k}}}\Big[\sqrt{|\mathcal{A}|/|\mathcal{D}(\rvs)|} \Big] \nonumber\\
    &+ \frac{C_{R,\delta}}{1-\gamma}\cdot\mathbb{E}_{\bfs,\bfa\sim{\pi_{\beta_k}}}\left[ 1/\sqrt{|\mathcal{D}(\bs,\ba)|}\right].
\end{align}
As derivation in Section \ref{subsec:awr}, $\bar\pi^*_k$ is the optimal solution to Problem \eqref{eq:P-Advatage-of-Global}. Thus, we can write
\begin{align}
    &J(M,\bar\pi^*_k) +  \bar{\xi} - \frac{\alpha{\nu}(\bar\pi^*_k)}{1-\gamma} - \beta \mathrm{KL}(\bar\pi^*_k,\pi_{\beta_k})\nonumber\\
    \ge& J(\widetilde{M},\bar\pi^*_k) - \frac{\alpha{\nu}(\bar\pi^*_k)}{1-\gamma} - \beta \mathrm{KL}(\bar\pi^*_k,\pi_{\beta_k}) \nonumber\\
    \ge& J(\widetilde{M},\pi_{\beta_k}) - \frac{\alpha{\nu}(\pi_{\beta_k})}{1-\gamma} - \beta \mathrm{KL}(\pi_{\beta_k},\pi_{\beta_k})\nonumber\\
    \ge&J(M,\pi_{\beta_k}) -  \xi_b - \frac{\alpha{\nu}(\pi_{\beta_k})}{1-\gamma} .
\end{align}
Then, the following fact holds:
\begin{align}
\label{eq:strict-global-on-data-step1}
    J(M,\pi) &\ge J(M,\pi_{\beta_k}) - \bar{\xi}  - \xi_b  + \sigma_b + \beta \mathrm{KL}(\bar\pi^*_k,\pi_{\beta_k}),
\end{align}
where $\sigma_b$ is defined as:
\begin{align}
    \sigma_b
    &\doteq \frac{\alpha\big({\nu}(\bar{\pi}^*_k) - {\nu}(\pi_{\beta_k})\big)}{1-\gamma} \nonumber\\
    &\ge -\frac{\alpha\left|{\nu}(\bar{\pi}^*_k) - {\nu}(\pi_{\beta_k})\right|}{1-\gamma} \nonumber\\
    &\mathop{\ge}^{(i)} -\frac{2\alpha}{\delta(1-\gamma)}\mathbb{E}_{\bs\sim d_k(s)}[D_\mathrm{TV}(\bar{\pi}^*_k(\bfa|\bfs),\pi_{\beta_k}(\bfa|\bfs))]\nonumber\\
    &\mathop{\ge}^{(ii)} -\frac{2\alpha}{\delta(1-\gamma)},
\label{eq:strict-global-on-data-delta}
\end{align}
where $(i)$ holds is because
\begin{align} 
&\left|{\nu}(\bar{\pi}^*_k) - {\nu}(\pi_{\beta_k})\right| \nonumber\\
=&\left|\sum_{\bs,\ba} \bar{\pi}^*_k(\bfs,\bfa) \left(\frac{\bar{\pi}^*_k(\bfa|\bfs)}{\pi_{\beta_k}(\bfa,\bfa)} - 1\right)-\pi_{\beta_k}(\bfs,\bfa) \left(\frac{\pi_{\beta_k}(\bfa|\bfs)}{\pi_{\beta_k}(\bfa|\bfs)} - 1\right)\right|\nonumber\\ 
=&\left|\sum_{\bs,\ba} \frac{\bar{\pi}^*_k(\bfs,\bfa)}{\pi_{\beta_k}(\bfa|\bfs)}  \left(\bar{\pi}^*_k(\bfa|\bfs) - \pi_{\beta_k}(\bfa|\bfs)\right)\right|\nonumber\\ 
\mathop{\leq}^{(a)}&\frac{1}{\delta} \sum_{\bs,\ba} d_k(\bfs) \left|\bar{\pi}^*_k(\bfa|\bfs)-\pi_{\beta_k}(\bfa|\bfs)\right|^2
\nonumber\\
\leq&\frac{2}{\delta} \sum_{\bs,\ba} d_k(\bfs) \left|\bar{\pi}^*_k(\bfa|\bfs)-\pi_{\beta_k}(\bfa|\bfs)\right|
, 
\end{align}
where the inequality (a) holds due to the fact $\pi_{\beta_k}(\bfs,\bfa)=d_k(\bfs)\pi_{\beta_k}(\bfa|\bfs)\approx d_k(\bfs,\bfa)$ and the cardinality $d_k(\bfs,\bfa)=|D_k(\bfs,\bfa)|/|D_k|\ge\delta$ (see Section \ref{subsec:vp}), adapted from \cite[Proof of Lemma 1]{yue2024federated}.
Additionally, the inequality $(ii)$ is held because of the following:
$$
D_{\mathrm{TV}}(\bar\pi_k^*(\cdot\mid s),\,\pi_{\beta_k}(\cdot\mid s))
=\frac12\sum_a\big|\bar\pi_k^*(a\mid s)-\pi_{\beta_k}(a\mid s)\big|.
$$
By the triangle inequality $|x-y|\le |x|+|y|$ (and probabilities are nonnegative), for every state $s$,
$$
D_{\mathrm{TV}}(\bar\pi_k^*(\cdot\mid s),\,\pi_{\beta_k}(\cdot\mid s))
\le \frac12\sum_a\big(\bar\pi_k^*(a\mid s)+\pi_{\beta_k}(a\mid s)\big)
=1,
$$
since $\sum_a \bar\pi_k^*(a\mid s)=\sum_a \pi_{\beta_k}(a\mid s)=1$.
Taking expectation over $s\sim d_k(s)$ gives
$$
\mathbb{E}_{s\sim d_k}\!\left[D_{\mathrm{TV}}(\bar\pi_k^*(\cdot\mid s),\,\pi_{\beta_k}(\cdot\mid s))\right]
\le \mathbb{E}_{s\sim d_k}[1]=1.
$$
Thus (ii) holds.
(Equivalently: $D_{\mathrm{TV}}(\cdot,\cdot)\in[0,1]$ pointwise in $s$, so its $d_k$-weighted average is also $\le 1$.)

\begin{theorem}
    \label{thm:policy-improvement-global-app}
    If Assumption \ref{assu:cql} holds, 
    we have the policy $\bar{\pi}^*_k$ of Problem (\ref{eq:lagrange-barpik}) satisfies with high probability:
    \begin{align}
    \label{eq:strict-global-on-data-step2}
        &J(M,\bar{\pi}^*_k) - J(M,\pi_{\beta_k}) \\
        \ge&  \beta \mathrm{KL}(\bar{\pi}^*_k,\pi_{\beta_k})-\frac{2\alpha}{\delta(1-\gamma)} - {\xi_b} - \bar{\xi}\nonumber ,
    \end{align}
    where ${\xi}_b,\bar{\xi}$, and $\frac{2\alpha}{\delta(1-\gamma)}$ are independent of $\beta$.
\end{theorem}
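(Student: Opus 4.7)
\textbf{Proof plan for Theorem~\ref{thm:policy-improvement-global-app}.} The plan is to pass through the empirical MDP $\widetilde{M}$ using the two-sided bridge provided by Lemma~\ref{lemma:JMDP-new}, then leverage the optimality of $\bar\pi^*_k$ under the Lagrangian~\eqref{eq:lagrange-barpik} to produce a chain of inequalities between $J(M,\bar\pi^*_k)$ and $J(M,\pi_{\beta_k})$. Finally, the residual advantage-style term $\sigma_b = \alpha\big(\nu(\bar\pi^*_k)-\nu(\pi_{\beta_k})\big)/(1-\gamma)$ is controlled by a total-variation argument that yields the $-\,2\alpha/[\delta(1-\gamma)]$ contribution.

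First, I would instantiate Lemma~\ref{lemma:JMDP-new} (and its symmetric counterpart, which follows from the two-sided inequality~\eqref{eq:JM1M2}) at $\pi=\bar\pi^*_k$ and $\pi=\pi_{\beta_k}$, obtaining with high probability $J(\widetilde{M},\bar\pi^*_k)\ge J(M,\bar\pi^*_k)-\bar\xi$ and $J(\widetilde{M},\pi_{\beta_k})\ge J(M,\pi_{\beta_k})-\xi_b$. Because $\bar\pi^*_k$ is the maximizer of the regularized objective~\eqref{eq:P-Advatage-of-Global} over $\widetilde{M}$, evaluating that objective at the feasible point $\pi_{\beta_k}$ (where $\mathrm{KL}(\pi_{\beta_k},\pi_{\beta_k})=0$) gives
\begin{align}
J(\widetilde{M},\bar\pi^*_k) - \tfrac{\alpha\,\nu(\bar\pi^*_k)}{1-\gamma} - \beta\,\mathrm{KL}(\bar\pi^*_k,\pi_{\beta_k}) \;\ge\; J(\widetilde{M},\pi_{\beta_k}) - \tfrac{\alpha\,\nu(\pi_{\beta_k})}{1-\gamma}. \nonumber
\end{align}
Substituting the two bridge inequalities and rearranging yields the intermediate bound~\eqref{eq:strict-global-on-data-step1}, namely $J(M,\bar\pi^*_k)\ge J(M,\pi_{\beta_k}) - \bar\xi - \xi_b + \sigma_b + \beta\,\mathrm{KL}(\bar\pi^*_k,\pi_{\beta_k})$.

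The remaining step, which I expect to be the technical core, is to lower bound $\sigma_b$. Writing $\nu(\pi)=\sum_{\bs,\ba}\pi(\bs,\ba)\big(\pi(\ba|\bs)/\pi_{\beta_k}(\ba|\bs)-1\big)$ and using $\pi_{\beta_k}(\bs,\ba)\approx d_k(\bs)\pi_{\beta_k}(\ba|\bs)$ together with the coverage assumption $d_k(\bs,\ba)\ge\delta$, I would express $|\nu(\bar\pi^*_k)-\nu(\pi_{\beta_k})|$ as a weighted $\ell_2$-type discrepancy between $\bar\pi^*_k$ and $\pi_{\beta_k}$, then upper bound it by a TV-type quantity $\tfrac{2}{\delta}\mathbb{E}_{\bs\sim d_k}[D_{\mathrm{TV}}(\bar\pi^*_k,\pi_{\beta_k})]$ via $|x-y|^2\le 2|x-y|$ for probabilities in $[0,1]$. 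Since $D_{\mathrm{TV}}\le 1$ pointwise, this gives $\sigma_b\ge -\,2\alpha/[\delta(1-\gamma)]$, exactly as in~\eqref{eq:strict-global-on-data-delta}.

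Combining the intermediate bound with the lower bound on $\sigma_b$ produces the claimed inequality~\eqref{eq:strict-global-on-data-step2}. The main obstacle is the TV bound on $|\nu(\bar\pi^*_k)-\nu(\pi_{\beta_k})|$: it is the only place where the coverage constant $\delta$ enters, and it requires carefully separating $\bar\pi^*_k(\bs,\ba)$ from $\pi_{\beta_k}(\bs,\ba)$ under the assumption that $\pi_{\beta_k}$ generates the data distribution. All other ingredients are algebraic rearrangements of the bridging inequalities and the optimality condition for $\bar\pi^*_k$; in particular, the terms $\xi_b$, $\bar\xi$, and $2\alpha/[\delta(1-\gamma)]$ are independent of $\beta$, so the $\beta\,\mathrm{KL}(\bar\pi^*_k,\pi_{\beta_k})$ term survives intact and delivers the strict improvement whenever the KL is bounded away from zero.
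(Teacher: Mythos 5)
Your proposal follows essentially the same route as the paper's own proof: instantiate the empirical--true MDP bridge (the two-sided consequence of Eq.~\eqref{eq:JM1M2} underlying Lemma~\ref{lemma:JMDP-new}) at $\bar\pi^*_k$ and $\pi_{\beta_k}$, use the optimality of $\bar\pi^*_k$ for the $\beta$-regularized objective evaluated at the feasible point $\pi_{\beta_k}$ to get the chain leading to Eq.~\eqref{eq:strict-global-on-data-step1}, and control $\sigma_b$ via the $\delta$-coverage/total-variation argument with $D_{\mathrm{TV}}\le 1$, exactly as in Eq.~\eqref{eq:strict-global-on-data-delta}. The only cosmetic point is that for $\bar\pi^*_k$ the chain actually uses the direction $J(\widetilde{M},\bar\pi^*_k)\le J(M,\bar\pi^*_k)+\bar\xi$ rather than the lower bound you display (the paper's prose has the same slip), but since you explicitly invoke the two-sided inequality this does not affect correctness.
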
  
\begin{proof}
Combining Eq.~\eqref{eq:strict-global-on-data-step1} and ~\eqref{eq:strict-global-on-data-delta}, 
the result can be easily obtained.
\end{proof}
According to the fact that ${\xi}_b,\bar{\xi}$, and $\frac{2\alpha}{\delta(1-\gamma)}$ are independent of $\beta$ in Theorem \ref{thm:policy-improvement-global-app}, the proper value of $\beta$ results in the policy improvement over local behavior policy $\pi_{\beta_k}$, which is:
\begin{align}
\label{eq:strict-global-on-data}
    J(M,\bar{\pi}^*_k)>J(M,{\pi}_{\beta_k}).
\end{align}

\begin{remark}
    Theorem \ref{thm:policy-improvement-global-app} demonstrates that under Assumption \ref{assu:cql}, the policy $\bar{\pi}^*_k$ strictly improves upon the behavioral policy $\policy_{\beta_k}$ based on the local offline dataset. Moreover, the global policy $\bar\pi$ aligns with the policy $\bar{\pi}^*_k$ in a fully asynchronous federated learning environment, ensuring that $\bar\pi$ also achieves strict improvement over the local behavioral policy. These results suggest that FOVA effectively extracts valuable policies from distributed offline datasets, enabling the creation of a superior global policy and reinforcing its capability to enhance overall policy performance.
\end{remark}

\subsection{Policy Improvement on Local Updates}\label{subsec:pi-local}
Subsequently, we theoretically establish the strict policy improvement guarantee for local policy $\pi_k$.
Similarly, we establish a lower bound for the return of $\bar{\pi}$ in the $\widetilde{M}$ relative to its return in the underlying MDP $M$ with high probability:
\begin{align}
    \label{eq:eq2}
    J(\widetilde{M},\bar{\pi}^*_k)\ge J(M,\bar{\pi}^*_k) - \bar{\xi},
\end{align}
where $\bar{\xi}$ is
\begin{align}
    \bar{\xi} \doteq\;&\frac{2\gamma r_{\max} C_{T,\delta}}{(1-\gamma)^2}\mathbb{E}_{s\sim d_{\bar{\pi}^*_k}}\bigg[ \sqrt{D_\mathrm{VCQL}(\bar{\pi}^*_k,\pi_{\beta_k})(\rvs)|\mathcal{A}|/|\mathcal{D}(\rvs)|} \bigg]\nonumber\\
    & + \frac{C_{R,\delta}}{1-\gamma}\mathbb{E}_{\bfs,\bfa\sim\bar{\pi}^*_k}\Big[ 1/\sqrt{|\mathcal{D}(\bs,\ba)|}\Big].
\end{align}
By combining Eq.~\eqref{eq:eq1} and \eqref{eq:eq2}, and considering that $\pi_k$ is the optimal solution to Problem \eqref{eq:awr}, we obtain the following:
\begin{align}
    &J(M,\pi_k) +  \tilde{\xi} - \frac{\alpha{\nu}(\pi_k)}{1-\gamma} - \lambda \mathrm{KL}(\bar{\pi}^*_k,\pi_k)\nonumber\\
    \ge& J(\widetilde{M},\pi_k) - \frac{\alpha{\nu}(\pi_k)}{1-\gamma} - \lambda \mathrm{KL}(\bar{\pi}^*_k,\pi_k)\nonumber\\
    \ge& J(\widetilde{M},\bar{\pi}^*_k) - \frac{\alpha{\nu}(\bar{\pi}^*_k)}{1-\gamma} - \lambda \mathrm{KL}(\bar{\pi}^*_k,\bar{\pi}^*_k)\tag{from optimality of $\pi_k$}\\
    \ge&J(M,\bar{\pi}^*_k) - \bar{\xi} - \frac{\alpha{\nu}(\bar{\pi}^*_k)}{1-\gamma}.
    \label{eq:eq9}
\end{align}
This provides a lower bound for $J(M,\pi)$ based on $J(M,\bar{\pi}^*_k)$:
\begin{align}
    J(M,\pi_k) &\ge J(M,\bar{\pi}^*_k) - \bar{\xi} - \tilde{\xi}  + \bar{\sigma} + \lambda \mathrm{KL}(\bar{\pi}^*_k,\pi_k),
    \label{eq:eq8}
\end{align}
where $\bar{\sigma}$ is
\begin{align}
    \label{eq:sigma_bar}
    \bar{\sigma}&\doteq \frac{\alpha\big({\nu}(\pi_k) - {\nu}(\bar{\pi}^*_k)\big)}{1-\gamma}\nonumber\\
    &\mathop{\ge}^{(j)} -\frac{4\alpha}{\delta(1-\gamma)}\mathbb{E}_{\bs\sim d_k(\bs)} [D_{TV}(\bar{\pi}^*_k(\bfa|\bfs),{\pi}_k(\bfa|\bfs))]\nonumber\\
    &\mathop{\ge}^{(jj)} -\frac{4\alpha}{\delta(1-\gamma)} 
\end{align}
where $(j)$ is similar with Eq.~\eqref{eq:strict-global-on-data-delta}
\begin{align} 
&\left|{\nu}(\bar{\pi}^*_k) - {\nu}(\pi_{k})\right| \nonumber\\
=&\left|\sum_{\bs,\ba} \bar{\pi}^*_k(\bfs,\bfa) \left(\frac{\bar{\pi}^*_k(\bfa|\bfs)}{\pi_{\beta_k}(\bfa|\bfs)} - 1\right)-\pi_{k}(\bfs,\bfa) \left(\frac{\pi_{k}(\bfs,\bfa)}{\pi_{\beta_k}(\bfs,\bfa)} - 1\right)\right|\nonumber\\ 
\leq&\left|\sum_{\bs,\ba} \bar{\pi}^*_k(\bfs,\bfa) \left(\frac{\bar{\pi}^*_k(\bfa|\bfs)}{\pi_{\beta_k}(\bfa|\bfs)} - 1\right)-\bar{\pi}^*_k(\bfs,\bfa) \left(\frac{{\pi}_k(\bfa|\bfs)}{\pi_{\beta_k}(\bfa|\bfs)} - 1\right)\right|\nonumber\\ 
+&\left|\sum_{\bs,\ba} \bar{\pi}^*_k(\bfs,\bfa) \left(\frac{{\pi}_k(\bfa|\bfs)}{\pi_{\beta_k}(\bfa|\bfs)} - 1\right)-{\pi}_k(\bfs,\bfa) \left(\frac{{\pi}_k(\bfa|\bfs)}{\pi_{\beta_k}(\bfa|\bfs)} - 1\right)\right| \nonumber\\
&\text{(triangle inequality)}\nonumber\\ 
=&\left|\sum_{\bs,\ba}\frac{\bar{\pi}^*_k(\bfs,\bfa)}{\pi_{\beta_k}(\bfa|\bfs)} \cdot (\bar{\pi}^*_k(\bfa|\bfs)-{\pi}_k(\bfa|\bfs)  \right|\nonumber\\ 
+&\left|\sum_{\bs,\ba} (\bar{\pi}^*_k(\bfs,\bfa)-{\pi}_k(\bfs,\bfa)) \cdot \left(\frac{{\pi}_k(\bfa|\bfs)}{\pi_{\beta_k}(\bfa|\bfs)} - 1\right)\right| \nonumber\\
&\text{(arranging terms)}\nonumber\\ 
\leq&\frac{2}{\delta} \sum_{\bs,\ba} d_k(\bs)\left|\bar{\pi}^*_k(\bfa|\bfs)-{\pi}_k(\bfa|\bfs)\right|\nonumber\\ 
\leq&\frac{4}{\delta} \mathbb{E}_{\bs\sim d_k(\bs)} [D_{TV}(\bar{\pi}^*_k(\bfa|\bfs),{\pi}_k(\bfa|\bfs))],  
\end{align}
The inequality $(jj)$ is held because, for any fixed $\bs$,
\begin{align}
&D_{\mathrm{TV}}\big(\bar{\pi}^*_k(\cdot\mid\bs),\,\pi_k(\cdot\mid\bs)\big)\nonumber\\
=&\tfrac12\sum_{\bfa}\big|\bar{\pi}^*_k(\bfa\mid\bs)-\pi_k(\bfa\mid\bs)\big|\nonumber\\
\le &\tfrac12\sum_{\bfa}\big(\bar{\pi}^*_k(\bfa\mid\bs)+\pi_k(\bfa\mid\bs)\big)\nonumber
=1.
\end{align}
Taking expectation over $\bs\sim d_k$ gives the claim.
Accordingly, we obtain the following
\begin{theorem}
    \label{thm:policy-improvement-local-app}
    If Assumption \ref{assu:cql} holds,
    we obtain an optimal solution $\pi^*_k$ of Problem (\ref{eq:awr}) satisfies with high probability:
    \begin{align}
        &J(M,\pi_k) - J(M,\pi_{\beta_k}) \\
        \ge &
         \lambda \mathrm{KL}(\bar{\pi}^*_k,\pi_k) + \beta \mathrm{KL}(\bar{\pi}^*_k,\pi_{\beta_k})
        -\frac{6\alpha}{\delta(1-\gamma)} - {\xi_b} - 2\bar{\xi} - \tilde{\xi},\nonumber
    \end{align}
    where $\bar{\xi},\tilde{\xi}$, ${\xi_b}$ and $\frac{6\alpha}{\delta(1-\gamma)}$ are independent of $\lambda$ and $\beta$.
\end{theorem}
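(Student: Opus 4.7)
The plan is to obtain the strict improvement bound by chaining two inequalities: first a local improvement step comparing $\pi_k$ to $\bar{\pi}^*_k$ inside the empirical MDP, and then the global improvement result of Theorem~\ref{thm:policy-improvement-global-app} that compares $\bar{\pi}^*_k$ to $\pi_{\beta_k}$. The key bridge between the empirical MDP $\widetilde{M}$ and the underlying MDP $M$ is Lemma~\ref{lemma:JMDP-new}, which I would apply twice: once as an upper bound $J(\widetilde{M},\pi_k)\le J(M,\pi_k)+\tilde{\xi}$ and once as a lower bound $J(\widetilde{M},\bar{\pi}^*_k)\ge J(M,\bar{\pi}^*_k)-\bar{\xi}$ (the lower-bound direction follows from the symmetric bound in Eq.~\eqref{eq:JM1M2}).

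Next I would exploit the optimality of $\pi_k$ for the regularized surrogate problem \eqref{eq:app:awr}, which is the reformulation of \eqref{eq:awr} induced by the Q-fixed-point Eq.~\eqref{eq:derivative}. Plugging $\bar{\pi}^*_k$ as a feasible competitor yields
\begin{equation*}
J(\widetilde{M},\pi_k) - \tfrac{\alpha\,\nu(\pi_k)}{1-\gamma} - \lambda\,\mathrm{KL}(\bar{\pi}^*_k,\pi_k)
\;\ge\; J(\widetilde{M},\bar{\pi}^*_k) - \tfrac{\alpha\,\nu(\bar{\pi}^*_k)}{1-\gamma},
\end{equation*}
since $\mathrm{KL}(\bar{\pi}^*_k,\bar{\pi}^*_k)=0$. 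Translating both returns back to $M$ via the two Lemma~\ref{lemma:JMDP-new} inequalities and rearranging gives the form of Eq.~\eqref{eq:eq8}, namely
\begin{equation*}
J(M,\pi_k) \;\ge\; J(M,\bar{\pi}^*_k) - \bar{\xi} - \tilde{\xi} + \bar{\sigma} + \lambda\,\mathrm{KL}(\bar{\pi}^*_k,\pi_k),
\end{equation*}
with $\bar{\sigma}$ defined as in Eq.~\eqref{eq:sigma_bar}.

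The technical heart of the argument is then to control $\bar{\sigma}$, the difference of the VCQL regularizers $\nu(\pi_k)-\nu(\bar{\pi}^*_k)$. I would follow the approach used for $\sigma_b$ in Eq.~\eqref{eq:strict-global-on-data-delta}: add-and-subtract a mixed term, use the triangle inequality, and exploit the coverage condition $d_k(\bs,\ba)\ge\delta$ (from Section~\ref{subsec:vp}) to bound ratios of $\bar{\pi}^*_k$ and $\pi_k$ against $\pi_{\beta_k}$ by $1/\delta$. This collapses the expression into a TV-distance between $\bar{\pi}^*_k$ and $\pi_k$, which is trivially bounded by $1$, yielding $\bar{\sigma}\ge -\tfrac{4\alpha}{\delta(1-\gamma)}$. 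This ratio-bounding step is the main obstacle, because it is the step where one has to be careful about which factor gets the $1/\delta$ penalty and which stays bounded; getting the constant right (namely $4$ rather than $2$ as in the global case) is what eventually produces the $6\alpha/(\delta(1-\gamma))$ in the final statement.

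Finally, I would combine this local bound with Theorem~\ref{thm:policy-improvement-global-app}, which contributes $J(M,\bar{\pi}^*_k)-J(M,\pi_{\beta_k})\ge \beta\,\mathrm{KL}(\bar{\pi}^*_k,\pi_{\beta_k}) - \tfrac{2\alpha}{\delta(1-\gamma)} - \xi_b - \bar{\xi}$. Summing the two inequalities, collecting the $\bar{\xi}$ terms into $2\bar{\xi}$ and the constants $\tfrac{4\alpha}{\delta(1-\gamma)}+\tfrac{2\alpha}{\delta(1-\gamma)}$ into $\tfrac{6\alpha}{\delta(1-\gamma)}$, produces exactly the stated bound. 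The observation that $\bar{\xi},\tilde{\xi},\xi_b$ and $6\alpha/(\delta(1-\gamma))$ do not depend on $\lambda$ or $\beta$ then follows from inspection of their definitions, so choosing these multipliers sufficiently large drives the right-hand side positive and yields strict improvement $J(M,\pi_k)>J(M,\pi_{\beta_k})$.
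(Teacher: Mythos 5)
Your proposal matches the paper's proof essentially step for step: the two applications of Lemma~\ref{lemma:JMDP-new}, the optimality comparison against the competitor $\bar{\pi}^*_k$ in the surrogate problem \eqref{eq:app:awr} leading to Eq.~\eqref{eq:eq8}, the bound $\bar{\sigma}\ge -\tfrac{4\alpha}{\delta(1-\gamma)}$ via triangle inequality, the coverage condition, and $D_{\mathrm{TV}}\le 1$, and finally the combination with Theorem~\ref{thm:policy-improvement-global-app} to collect $2\bar{\xi}$ and $\tfrac{6\alpha}{\delta(1-\gamma)}$. The argument and constants are correct, so no further changes are needed.
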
  
\begin{proof}
Plugging Eq.~\eqref{eq:sigma_bar} into Eq.~\eqref{eq:eq9}, we have
\begin{align}
    J(M,\pi_k) &\ge J(M,\bar{\pi}^*_k) - \bar{\xi} - \tilde{\xi}  - 
    \frac{4\alpha}{\delta(1-\gamma)}
    + \lambda \mathrm{KL}(\bar{\pi}^*_k,\pi_k),
    \label{eq:eq8-further}
\end{align}
Combining Eq.~\eqref{eq:strict-global-on-data-step2} and \eqref{eq:eq8-further}, we derive
\begin{align}
    &J(M,\pi_k) - J(M,\pi_{\beta_k}) \\
    \ge &
     \lambda \mathrm{KL}(\bar{\pi}^*_k,\pi_k) + \beta \mathrm{KL}(\bar{\pi}^*_k,\pi_{\beta_k})
    -\frac{6\alpha}{\delta(1-\gamma)} - {\xi_b} - 2\bar{\xi} - \tilde{\xi},\nonumber
\end{align}
Consequently, we complete the proof of Theorem \ref{thm:policy-improvement-local-app}.
\end{proof}
Clearly, $\bar{\xi},\tilde{\xi}$, ${\xi_b}$ and $\frac{6\alpha}{\delta(1-\gamma)}$ are independent of $\lambda$ and $\beta$. Thus, proper value of $\lambda$  and $\beta$ leads to the performance improvement over policy $\bar{\pi}^*_k$:
\begin{equation}
\label{eq:strict-local-on-data}
    J(M,\pi_k)>J(M,{\pi}_{\beta_k}).
\end{equation}
\begin{remark}
    Theorem \ref{thm:policy-improvement-local-app} shows that under Assumption \ref{assu:cql}, the local policy $\pi_k$ derived from Problem (\ref{eq:awr}) achieves a strict improvement over the behavioral policy $\policy_{\beta_k}$ based on the local offline dataset $D_k$. This indicates that FOVA is capable of training high-quality local policies, highlighting its strong potential to enhance policy performance.
\end{remark}

\section{Extended Experiment for Fig.~\ref{fig:intro}}\label{app:app-challenges}


We additionally conducted two separate sets of experiments using the Gym-MuJoCo dataset to evaluate the issues of mixed-quality and inconsistent optimization objectives in offline FRL, respectively.

\begin{figure}[h]
    \subfigure[Hopper with medium/expert]{\label{fig:moti-heter}\includegraphics[width=0.210\textwidth]{./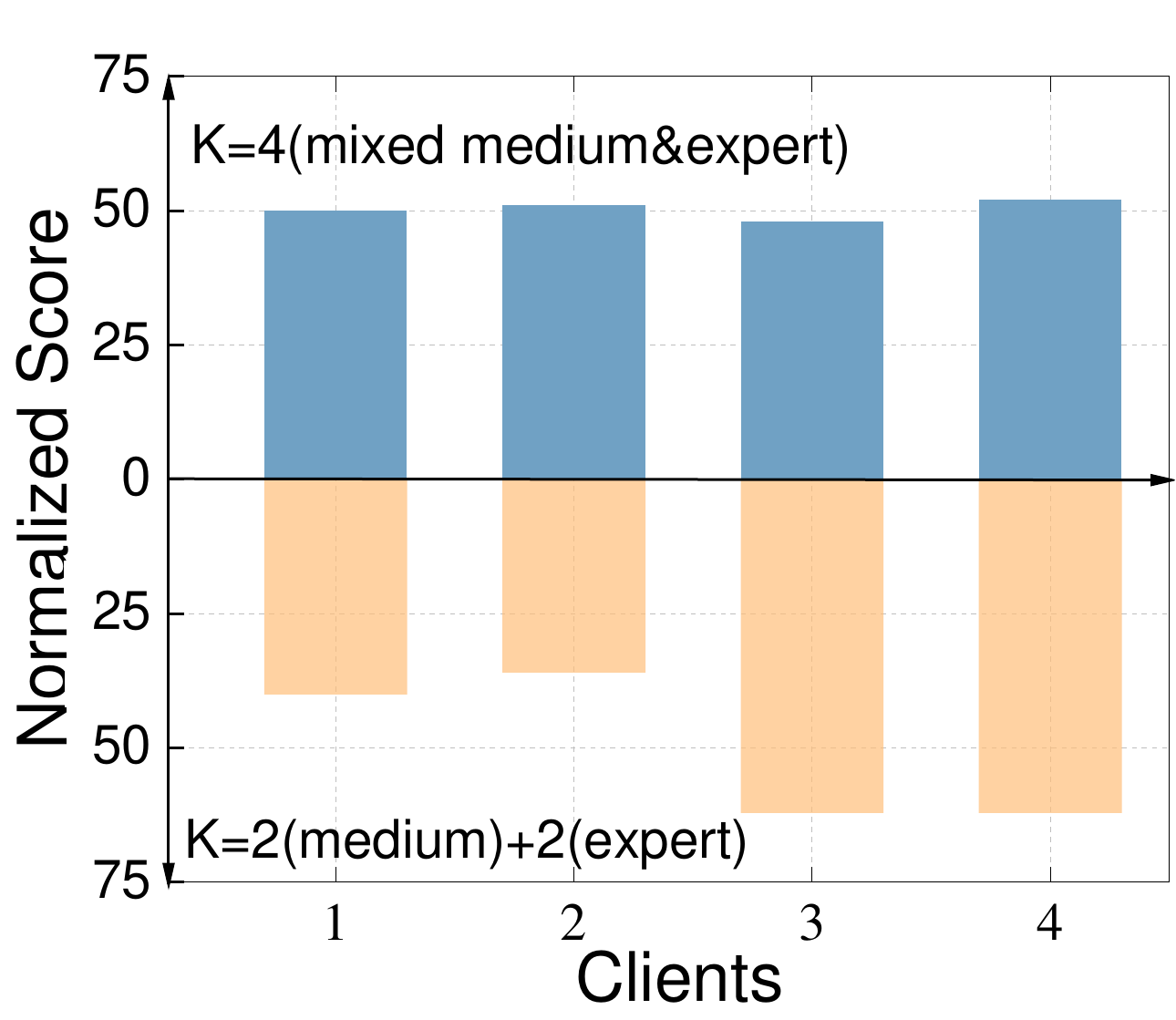}}
    \subfigure[Hopper-medium-replay]{\label{fig:moti-incon}\includegraphics[width=0.238\textwidth]{./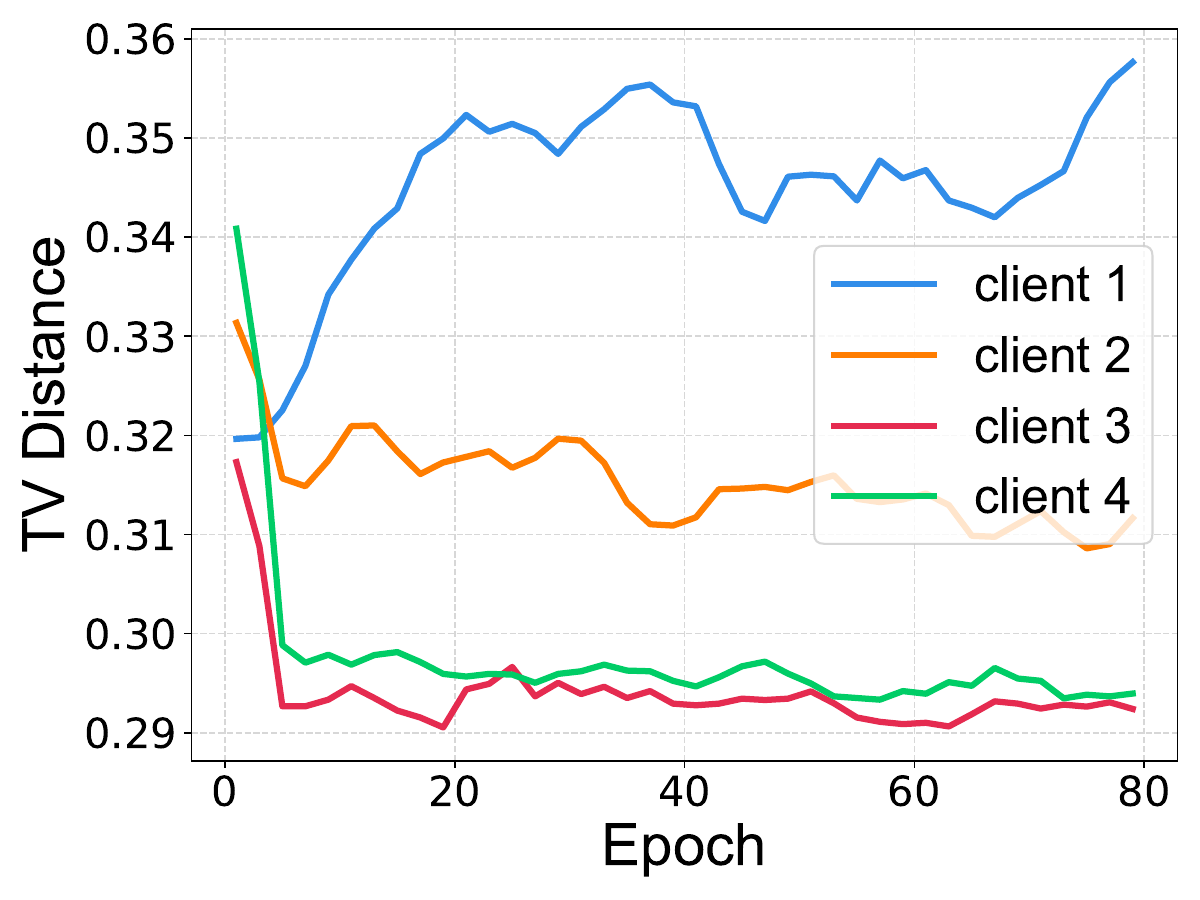}}
    \label{fig:moti}
	\caption{The performance of DRPO in Gym-MuJoCo dataset.
    Left: Four clients with a combined total of 12k Hopper data pairs (6k medium-quality and 6k expert-quality). The blue bar depicts mixed-quality clients (2 medium and 2 expert), and the yellow bar illustrates 4 clients with uniform-quality data pairs. 
    Right: 4 clients, each utilizing 3k Hopper-medium-replay data pairs.
 }
\vspace{-15pt}
\end{figure}

To investigate the impact of mixed-quality data, we addressed this issue in Fig.~\ref{fig:intro-mixed}. With the total dataset remaining constant, we partitioned the data into groups of varying quality. We found that local policies trained on mixed-quality data experienced significant performance degradation, resulting in a decreased and unstable global policy. 
To further explore this problem, we conducted a more extensive analysis based on Fig.~\ref{fig:intro-mixed}, as shown in Fig.~\ref{fig:moti-heter}. We evaluated the final performance of all clients' local policies and discovered that the quality of the local datasets had a substantial impact on the effectiveness of the trained local policies. This decline is primarily due to the fact that local policy training involves a trade-off between the local behavioral policy—some of which may be inherently low-quality—and the global policy. This simple balancing act, without accurately assessing policy quality, led to nearly a 50\% performance drop for clients with low-quality data. 
In summary, existing methods perform poorly in scenarios with mixed-quality data because they lack a mechanism to distinguish harmful data.


Fig.~\ref{fig:intro-incon} illustrates a counterintuitive finding: the global policy can significantly underperform relative to most local policies during the aggregation process. However, existing methods typically include terms in the loss function to minimize \( D(\pi_k, \bar{\pi}) \), thereby constraining the differences between the global and local policies, as shown below:
\begin{align}
\max_{\pi_k}\mathbb{E}_{\bs\sim\mathcal{D}_k,\ba\sim\pi_k(\cdot|\bs)}\big[\hat{Q}^{\pi_k}(\bs,\ba)\big]
    - \alpha_1 D(\pi_k,\policy_{\beta_k}) - \alpha_2 D(\pi_k,\bar{\pi}),
    \nonumber
\end{align}
If effective, this should mitigate the problem of inconsistent optimization objectives. To challenge this notion, we conducted an extensive analysis of TV distances (ranging from 0 to 1) based on the experimental setup in Fig.~\ref{fig:intro-incon}, as shown in Fig.~\ref{fig:moti-incon}. We examined the TV distances between local policies and the global policy from four clients trained on the Hopper-medium-replay dataset using existing methods. The results showed that although regularization reduced the overall TV distance, it remained relatively high (around 0.3), and for one client, the TV distance even slightly increased due to the additional loss terms.
Therefore, the distance constraint regularization in the loss function does not guarantee consistency between local and global policies. As a result, existing methods primarily optimize local policies without effectively optimizing the global policy.

\end{appendices}

\section*{Acknowledge}
We thank Yupeng Yao, Shuning Wang, and Junwei Liao for insightful discussions during the preparation of this work.


\bibliographystyle{IEEEtran}
\bibliography{r}

@article{yu2020deep,
  title={When deep reinforcement learning meets federated learning: Intelligent multitimescale resource management for multiaccess edge computing in 5G ultradense network},
  author={Yu, Shuai and Chen, Xu and Zhou, Zhi and Gong, Xiaowen and Wu, Di},
  journal={IEEE Internet of Things Journal},
  volume={8},
  number={4},
  pages={2238--2251},
  year={2020},
  publisher={IEEE}
}

@inproceedings{jia2024offline,
  title={Offline reinforcement learning: Role of state aggregation and trajectory data},
  author={Jia, Zeyu and Rakhlin, Alexander and Sekhari, Ayush and Wei, Chen-Yu},
  booktitle={The Thirty Seventh Annual Conference on Learning Theory},
  pages={2644--2719},
  year={2024},
  organization={PMLR}
}

@article{yue2023understanding,
  title={Understanding, predicting and better resolving q-value divergence in offline-rl},
  author={Yue, Yang and Lu, Rui and Kang, Bingyi and Song, Shiji and Huang, Gao},
  journal={Advances in Neural Information Processing Systems},
  volume={36},
  pages={60247--60277},
  year={2023}
}

@misc{qiao2025fova_supp,
  title={FOVA: Offline Federated Reinforcement Learning with Mixed-Quality Data (Supplementary File)},
  author={Qiao, Nan and Yue, Sheng and Ren, Ju and Zhang, Yaoxue},
  year={2025},
  note={Supplementary File, available at \url{https://drive.google.com/file/d/1ANMHxbao9vlsJJeEwm43-5-IQmLtHnt7/view?usp=drive_link}}
}

@article{kuba2021trust,
  title={Trust region policy optimisation in multi-agent reinforcement learning},
  author={Kuba, Jakub Grudzien and Chen, Ruiqing and Wen, Muning and Wen, Ying and Sun, Fanglei and Wang, Jun and Yang, Yaodong},
  journal={arXiv preprint arXiv:2109.11251},
  year={2021}
}

@inproceedings{derakhshani2021kernel,
  title={Kernel continual learning},
  author={Derakhshani, Mohammad Mahdi and Zhen, Xiantong and Shao, Ling and Snoek, Cees},
  booktitle={International Conference on Machine Learning},
  pages={2621--2631},
  year={2021},
  organization={PMLR}
}

@article{gai2023oer,
  title={OER: offline experience replay for continual offline reinforcement learning},
  author={Gai, Sibo and Wang, Donglin and He, Li},
  journal={arXiv preprint arXiv:2305.13804},
  year={2023}
}

@article{mills2019communication,
  title={Communication-efficient federated learning for wireless edge intelligence in IoT},
  author={Mills, Jed and Hu, Jia and Min, Geyong},
  journal={IEEE Internet of Things Journal},
  volume={7},
  number={7},
  pages={5986--5994},
  year={2019},
  publisher={IEEE}
}

@inproceedings{
zhang2024finitetime,
title={Finite-Time Analysis of On-Policy Heterogeneous Federated Reinforcement Learning},
author={Chenyu Zhang and Han Wang and Aritra Mitra and James Anderson},
booktitle={The Twelfth International Conference on Learning Representations},
year={2024},
url={https://openreview.net/forum?id=D2eOVqPX9g}
}

@inproceedings{
wang2024momentum,
title={Momentum for the Win: Collaborative Federated Reinforcement Learning across Heterogeneous Environments},
author={Han Wang and Sihong He and Zhili Zhang and Fei Miao and James Anderson},
booktitle={Forty-first International Conference on Machine Learning},
year={2024},
url={https://openreview.net/forum?id=g43yUNWX4V}
}

@article{zhang2024distributed,
  title={Distributed deep reinforcement learning based gradient quantization for federated learning enabled vehicle edge computing},
  author={Zhang, Cui and Zhang, Wenjun and Wu, Qiong and Fan, Pingyi and Fan, Qiang and Wang, Jiangzhou and Letaief, Khaled B},
  journal={IEEE Internet of Things Journal},
  year={2024},
  publisher={IEEE}
}

@article{huang2024multi,
  title={Multi-dimensional QoS evaluation and optimization of mobile edge computing for IoT: A survey},
  author={Huang, Jiwei and Liu, Fangzheng and Zhang, Jianbing},
  journal={Chinese Journal of Electronics},
  volume={33},
  number={4},
  pages={859--874},
  year={2024},
  publisher={CIE}
}

@article{wu2024fedcache,
  title={Fedcache: A knowledge cache-driven federated learning architecture for personalized edge intelligence},
  author={Wu, Zhiyuan and Sun, Sheng and Wang, Yuwei and Liu, Min and Xu, Ke and Wang, Wen and Jiang, Xuefeng and Gao, Bo and Lu, Jinda},
  journal={IEEE Transactions on Mobile Computing},
  year={2024},
  publisher={IEEE}
}

@article{yang2023path,
  title={Path Planning Technique for Mobile Robots: A Review},
  author={Yang, Liwei and Li, Ping and Qian, Song and Quan, He and Miao, Jinchao and Liu, Mengqi and Hu, Yanpei and Memetimin, Erexidin},
  journal={Machines},
  volume={11},
  number={10},
  pages={980},
  year={2023},
  publisher={MDPI}
}

@article{liu2019edge,
  title={Edge computing for autonomous driving: Opportunities and challenges},
  author={Liu, Shaoshan and Liu, Liangkai and Tang, Jie and Yu, Bo and Wang, Yifan and Shi, Weisong},
  journal={Proceedings of the IEEE},
  volume={107},
  number={8},
  pages={1697--1716},
  year={2019},
  publisher={IEEE}
}

@article{ndikumana2020deep,
  title={Deep learning based caching for self-driving cars in multi-access edge computing},
  author={Ndikumana, Anselme and Tran, Nguyen H and Kim, Ki Tae and Hong, Choong Seon and others},
  journal={IEEE Transactions on Intelligent Transportation Systems},
  volume={22},
  number={5},
  pages={2862--2877},
  year={2020},
  publisher={IEEE}
}

@inproceedings{wu2024netllm,
  title={Netllm: Adapting large language models for networking},
  author={Wu, Duo and Wang, Xianda and Qiao, Yaqi and Wang, Zhi and Jiang, Junchen and Cui, Shuguang and Wang, Fangxin},
  booktitle={Proceedings of the ACM SIGCOMM 2024 Conference},
  pages={661--678},
  year={2024}
}

@article{masoudi2020device,
  title={Device vs edge computing for mobile services: Delay-aware decision making to minimize power consumption},
  author={Masoudi, Meysam and Cavdar, Cicek},
  journal={IEEE Transactions on Mobile Computing},
  volume={20},
  number={12},
  pages={3324--3337},
  year={2020},
  publisher={IEEE}
}

@article{bartlett2012regal,
  title={REGAL: A regularization based algorithm for reinforcement learning in weakly communicating MDPs},
  author={Bartlett, Peter L and Tewari, Ambuj},
  journal={arXiv preprint arXiv:1205.2661},
  year={2012}
}

@inproceedings{osband2017posterior,
  title={Why is posterior sampling better than optimism for reinforcement learning?},
  author={Osband, Ian and Van Roy, Benjamin},
  booktitle={International conference on machine learning},
  pages={2701--2710},
  year={2017},
  organization={PMLR}
}

@article{auer2008near,
  title={Near-optimal regret bounds for reinforcement learning},
  author={Auer, Peter and Jaksch, Thomas and Ortner, Ronald},
  journal={Advances in neural information processing systems},
  volume={21},
  year={2008}
}

@article{woo2024federated,
  title={Federated offline reinforcement learning: Collaborative single-policy coverage suffices},
  author={Woo, Jiin and Shi, Laixi and Joshi, Gauri and Chi, Yuejie},
  journal={arXiv preprint arXiv:2402.05876},
  year={2024}
}

@article{yue2024CJE,
  title={Federated Offline Reinforcement Learning with Proximal Policy Evaluation},
  author={YUE, Sheng and DENG, Yongheng and WANG, Guanbo and REN, Ju and ZHANG, Yaoxue},
  journal={Chinese Journal of Electronics},
  volume={33},
  number={6},
  pages={1--13},
  year={2024},
  publisher={Chinese Journal of Electronics}
}

@inproceedings{park2022federated,
  title={Federated Offline Reinforcement Learning for Autonomous Systems},
  author={Park, Ju-eun and Woo, Honguk},
  booktitle={International Conference on Computer and Communication Engineering},
  pages={107--117},
  year={2022},
  organization={Springer}
}

@article{wen2023federated,
  title={Federated offline reinforcement learning with multimodal data},
  author={Wen, Jiabao and Dai, Huiao and He, Jingyi and Xi, Meng and Xiao, Shuai and Yang, Jiachen},
  journal={IEEE transactions on consumer electronics},
  year={2023},
  publisher={IEEE}
}

@inproceedings{yue2024federated,
  title={Federated Offline Policy Optimization with Dual Regularization},
  author={Yue, Sheng and Qin, Zerui and Hua, Xingyuan and Deng, Yongheng and Ren, Ju},
  booktitle={IEEE INFOCOM 2024-IEEE Conference on Computer Communications},
  year={2024},
  organization={IEEE}
}

@inproceedings{
rengarajan2023federated,
title={Federated Ensemble-Directed Offline Reinforcement Learning},
author={Desik Rengarajan and Nitin Ragothaman and Dileep Kalathil and Srinivas Shakkottai},
booktitle={The Thirty-eighth Annual Conference on Neural Information Processing Systems},
year={2024},
url={https://openreview.net/forum?id=ypaqE8UwsC}
}

@article{zhou2024federated,
  title={Federated offline reinforcement learning},
  author={Zhou, Doudou and Zhang, Yufeng and Sonabend-W, Aaron and Wang, Zhaoran and Lu, Junwei and Cai, Tianxi},
  journal={Journal of the American Statistical Association},
  pages={1--12},
  year={2024},
  publisher={Taylor \& Francis}
}

@article{wang2023uav,
  title={On UAV Serving Node Deployment for Temporary Coverage in Forest Environment: A Hierarchical Deep Reinforcement Learning Approach},
  author={WANG, Li and WU, Xuewei and WANG, Yanhui and XIAO, Zhe and LI, Liang and FEI, Aiguo},
  journal={Chinese Journal of Electronics},
  volume={32},
  number={4},
  pages={760--772},
  year={2023},
  publisher={Chinese Journal of Electronics}
}

@article{kaelbling1996reinforcement,
  title={Reinforcement learning: A survey},
  author={Kaelbling, Leslie Pack and Littman, Michael L and Moore, Andrew W},
  journal={Journal of artificial intelligence research},
  volume={4},
  pages={237--285},
  year={1996}
}

@article{qi2021federated,
  title={Federated reinforcement learning: Techniques, applications, and open challenges},
  author={Qi, Jiaju and Zhou, Qihao and Lei, Lei and Zheng, Kan},
  journal={arXiv preprint arXiv:2108.11887},
  year={2021}
}

@inproceedings{shen2023distributed,
  title={Distributed offline policy optimization over batch data},
  author={Shen, Han and Lu, Songtao and Cui, Xiaodong and Chen, Tianyi},
  booktitle={International Conference on Artificial Intelligence and Statistics},
  pages={4443--4472},
  year={2023},
  organization={PMLR}
}

@article{yu2021combo,
  title={Combo: Conservative offline model-based policy optimization},
  author={Yu, Tianhe and Kumar, Aviral and Rafailov, Rafael and Rajeswaran, Aravind and Levine, Sergey and Finn, Chelsea},
  journal={Advances in neural information processing systems},
  volume={34},
  pages={28954--28967},
  year={2021}
}

@article{kiran2021deep,
  title={Deep reinforcement learning for autonomous driving: A survey},
  author={Kiran, B Ravi and Sobh, Ibrahim and Talpaert, Victor and Mannion, Patrick and Al Sallab, Ahmad A and Yogamani, Senthil and P{\'e}rez, Patrick},
  journal={IEEE Transactions on Intelligent Transportation Systems},
  volume={23},
  number={6},
  pages={4909--4926},
  year={2021},
  publisher={IEEE}
}

@article{wang2020federated,
  title={Federated deep reinforcement learning for Internet of Things with decentralized cooperative edge caching},
  author={Wang, Xiaofei and Wang, Chenyang and Li, Xiuhua and Leung, Victor CM and Taleb, Tarik},
  journal={IEEE Internet of Things Journal},
  volume={7},
  number={10},
  pages={9441--9455},
  year={2020},
  publisher={IEEE}
}

@article{peng2019advantage,
  title={Advantage-weighted regression: Simple and scalable off-policy reinforcement learning},
  author={Peng, Xue Bin and Kumar, Aviral and Zhang, Grace and Levine, Sergey},
  journal={arXiv preprint arXiv:1910.00177},
  year={2019}
}

@article{agarwal2019reinforcement,
  title={Reinforcement learning: Theory and algorithms},
  author={Agarwal, Alekh and Jiang, Nan and Kakade, Sham M and Sun, Wen},
  journal={CS Dept., UW Seattle, Seattle, WA, USA, Tech. Rep},
  volume={32},
  pages={96},
  year={2019}
}

@article{liu2023micro,
  title={MICRO: Model-Based Offline Reinforcement Learning with a Conservative Bellman Operator},
  author={Liu, Xiao-Yin and Zhou, Xiao-Hu and Li, Guotao and Li, Hao and Gui, Mei-Jiang and Xiang, Tian-Yu and Huang, De-Xing and Hou, Zeng-Guang},
  journal={arXiv preprint arXiv:2312.03991},
  year={2023}
}

@inproceedings{jin2022federated,
  title={Federated reinforcement learning with environment heterogeneity},
  author={Jin, Hao and Peng, Yang and Yang, Wenhao and Wang, Shusen and Zhang, Zhihua},
  booktitle={International Conference on Artificial Intelligence and Statistics},
  pages={18--37},
  year={2022},
  organization={PMLR}
}

@article{levine2020offline,
  title={Offline reinforcement learning: Tutorial, review, and perspectives on open problems},
  author={Levine, Sergey and Kumar, Aviral and Tucker, George and Fu, Justin},
  journal={arXiv preprint arXiv:2005.01643},
  year={2020}
}

@article{fujimoto2021minimalist,
  title={A minimalist approach to offline reinforcement learning},
  author={Fujimoto, Scott and Gu, Shixiang Shane},
  journal={Advances in neural information processing systems},
  volume={34},
  pages={20132--20145},
  year={2021}
}

@article{kumar2020conservative,
  title={Conservative q-learning for offline reinforcement learning},
  author={Kumar, Aviral and Zhou, Aurick and Tucker, George and Levine, Sergey},
  journal={Advances in Neural Information Processing Systems},
  volume={33},
  pages={1179--1191},
  year={2020}
}

@article{shlens2014notes,
  title={Notes on kullback-leibler divergence and likelihood},
  author={Shlens, Jonathon},
  journal={arXiv preprint arXiv:1404.2000},
  year={2014}
}

@article{prudencio2023survey,
  title={A survey on offline reinforcement learning: Taxonomy, review, and open problems},
  author={Prudencio, Rafael Figueiredo and Maximo, Marcos ROA and Colombini, Esther Luna},
  journal={IEEE Transactions on Neural Networks and Learning Systems},
  year={2023},
  publisher={IEEE}
}

@article{xie2023client,
  title={Client selection for federated policy optimization with environment heterogeneity},
  author={Xie, Zhijie and Song, SH},
  journal={arXiv preprint arXiv:2305.10978},
  year={2023}
}

@inproceedings{achiam2017constrained,
  title={Constrained policy optimization},
  author={Achiam, Joshua and Held, David and Tamar, Aviv and Abbeel, Pieter},
  booktitle={International conference on machine learning},
  pages={22--31},
  year={2017},
  organization={PMLR}
}

@article{an2021uncertainty,
  title={Uncertainty-based offline reinforcement learning with diversified q-ensemble},
  author={An, Gaon and Moon, Seungyong and Kim, Jang-Hyun and Song, Hyun Oh},
  journal={Advances in neural information processing systems},
  volume={34},
  pages={7436--7447},
  year={2021}
}

@inproceedings{
lin2022modelbased,
title={Model-Based Offline Meta-Reinforcement Learning with Regularization},
author={Sen Lin and Jialin Wan and Tengyu Xu and Yingbin Liang and Junshan Zhang},
booktitle={International Conference on Learning Representations},
year={2022},
url={https://openreview.net/forum?id=EBn0uInJZWh}
}

@incollection{liang2022federated,
  title={Federated transfer reinforcement learning for autonomous driving},
  author={Liang, Xinle and Liu, Yang and Chen, Tianjian and Liu, Ming and Yang, Qiang},
  booktitle={Federated and Transfer Learning},
  pages={357--371},
  year={2022},
  publisher={Springer}
}

@article{lim2020federated,
  title={Federated reinforcement learning for training control policies on multiple IoT devices},
  author={Lim, Hyun-Kyo and Kim, Ju-Bong and Heo, Joo-Seong and Han, Youn-Hee},
  journal={Sensors},
  volume={20},
  number={5},
  pages={1359},
  year={2020},
  publisher={MDPI}
}

@inproceedings{peters2007reinforcement,
  title={Reinforcement learning by reward-weighted regression for operational space control},
  author={Peters, Jan and Schaal, Stefan},
  booktitle={Proceedings of the 24th international conference on Machine learning},
  pages={745--750},
  year={2007}
}

@article{lyu2022mildly,
  title={Mildly conservative Q-learning for offline reinforcement learning},
  author={Lyu, Jiafei and Ma, Xiaoteng and Li, Xiu and Lu, Zongqing},
  journal={Advances in Neural Information Processing Systems},
  volume={35},
  pages={1711--1724},
  year={2022}
}

@misc{drpo,
  title={Federated Offline Policy Optimization with Dual Regularization},
  author={Yue, Sheng and Qin, Zerui and Hua, Xingyuan and Deng, Yongheng and Ren, Ju},
  year = {2024},
  publisher = {GitHub},
  journal = {GitHub repository},
  howpublished = {\url{https://github.com/HansenHua/DRPO-INFOCOM24}},
}

@misc{fedora,
  title={Federated Ensemble-Directed Offline Reinforcement Learning},
  author={Desik Rengarajan and Nitin Ragothaman and Dileep Kalathil and Srinivas Shakkottai},
  year = {2023},
  publisher = {GitHub},
  journal = {GitHub repository},
  howpublished = {\url{https://github.com/DesikRengarajan/FEDORA}},
}

@article{paoi2024,
  title={PoPeC: PAoI-centric task offloading with priority over unreliable channels},
  author={Qiao, Nan and Yue, Sheng and Zhang, Yongmin and Ren, Ju},
  journal={IEEE/ACM Transactions on Networking},
  year={2024},
  publisher={IEEE}
}

@article{khan2022level,
  title={Level-5 autonomous driving—are we there yet? a review of research literature},
  author={Khan, Manzoor Ahmed and Sayed, Hesham El and Malik, Sumbal and Zia, Talha and Khan, Jalal and Alkaabi, Najla and Ignatious, Henry},
  journal={ACM Computing Surveys (CSUR)},
  volume={55},
  number={2},
  pages={1--38},
  year={2022},
  publisher={ACM New York, NY}
}

@article{lin2024awq,
  title={AWQ: Activation-aware Weight Quantization for On-Device LLM Compression and Acceleration},
  author={Lin, Ji and Tang, Jiaming and Tang, Haotian and Yang, Shang and Chen, Wei-Ming and Wang, Wei-Chen and Xiao, Guangxuan and Dang, Xingyu and Gan, Chuang and Han, Song},
  journal={Proceedings of Machine Learning and Systems},
  volume={6},
  pages={87--100},
  year={2024}
}

@article{cql,
  title={Conservative q-learning for offline reinforcement learning},
  author={Kumar, Aviral and Zhou, Aurick and Tucker, George and Levine, Sergey},
  journal={Advances in Neural Information Processing Systems},
  volume={33},
  pages={1179--1191},
  year={2020}
}

@article{matsuo2022deep,
  title={Deep learning, reinforcement learning, and world models},
  author={Matsuo, Yutaka and LeCun, Yann and Sahani, Maneesh and Precup, Doina and Silver, David and Sugiyama, Masashi and Uchibe, Eiji and Morimoto, Jun},
  journal={Neural Networks},
  volume={152},
  pages={267--275},
  year={2022},
  publisher={Elsevier}
}

@article{offtutorial,
  title={Offline reinforcement learning: Tutorial, review, and perspectives on open problems},
  author={Levine, Sergey and Kumar, Aviral and Tucker, George and Fu, Justin},
  journal={arXiv preprint arXiv:2005.01643},
  year={2020}
}

@article{kumar2019stabilizing,
  title={Stabilizing off-policy q-learning via bootstrapping error reduction},
  author={Kumar, Aviral and Fu, Justin and Soh, Matthew and Tucker, George and Levine, Sergey},
  journal={Advances in Neural Information Processing Systems},
  volume={32},
  year={2019}
}

@article{wang2024dfrd,
  title={DFRD: Data-Free Robustness Distillation for Heterogeneous Federated Learning},
  author={Wang, Shuai and Fu, Yexuan and Li, Xiang and Lan, Yunshi and Gao, Ming and others},
  journal={Advances in Neural Information Processing Systems},
  volume={36},
  year={2024}
}

@article{fu2020d4rl,
  title={D4rl: Datasets for deep data-driven reinforcement learning},
  author={Fu, Justin and Kumar, Aviral and Nachum, Ofir and Tucker, George and Levine, Sergey},
  journal={arXiv preprint arXiv:2004.07219},
  year={2020}
}

@inproceedings{mcmahan2017communication,
  title={Communication-efficient learning of deep networks from decentralized data},
  author={McMahan, Brendan and Moore, Eider and Ramage, Daniel and Hampson, Seth and y Arcas, Blaise Aguera},
  booktitle={Artificial intelligence and statistics},
  pages={1273--1282},
  year={2017}
}

@article{li2020federated,
  title={Federated optimization in heterogeneous networks},
  author={Li, Tian and Sahu, Anit Kumar and Zaheer, Manzil and Sanjabi, Maziar and Talwalkar, Ameet and Smith, Virginia},
  journal={Proceedings of Machine Learning and Systems},
  volume={2},
  pages={429--450},
  year={2020}
}

@article{hebert2022fedformer,
  title={FedFormer: Contextual Federation with Attention in Reinforcement Learning},
  author={Hebert, Liam and Golab, Lukasz and Poupart, Pascal and Cohen, Robin},
  journal={arXiv preprint arXiv:2205.13697},
  year={2022}
}

@article{lim2021federated,
  title={Federated reinforcement learning acceleration method for precise control of multiple devices},
  author={Lim, Hyun-Kyo and Kim, Ju-Bong and Ullah, Ihsan and Heo, Joo-Seong and Han, Youn-Hee},
  journal={IEEE Access},
  volume={9},
  pages={76296--76306},
  year={2021},
  publisher={IEEE}
}

@article{wu2021byzantine,
  title={Byzantine-resilient decentralized policy evaluation with linear function approximation},
  author={Wu, Zhaoxian and Shen, Han and Chen, Tianyi and Ling, Qing},
  journal={IEEE Transactions on Signal Processing},
  volume={69},
  pages={3839--3853},
  year={2021},
  publisher={IEEE}
}

@inproceedings{woo2023blessing,
  title={The blessing of heterogeneity in federated q-learning: Linear speedup and beyond},
  author={Woo, Jiin and Joshi, Gauri and Chi, Yuejie},
  booktitle={International Conference on Machine Learning},
  pages={37157--37216},
  year={2023},
  organization={PMLR}
}

@article{wang2023federated,
  title={Federated temporal difference learning with linear function approximation under environmental heterogeneity},
  author={Wang, Han and Mitra, Aritra and Hassani, Hamed and Pappas, George J and Anderson, James},
  journal={arXiv preprint arXiv:2302.02212},
  year={2023}
}

@article{xie2022fedkl,
  title={Fedkl: Tackling data heterogeneity in federated reinforcement learning by penalizing kl divergence},
  author={Xie, Zhijie and Song, Shenghui},
  journal={IEEE Journal on Selected Areas in Communications},
  volume={41},
  number={4},
  pages={1227--1242},
  year={2023}
}

@book{sutton2018reinforcement,
  title={Reinforcement learning: An introduction},
  author={Sutton, Richard S and Barto, Andrew G},
  year={2018},
  publisher={MIT press}
}

@article{kostrikov2021iql,
  title={Offline reinforcement learning with implicit q-learning},
  author={Kostrikov, Ilya and Nair, Ashvin and Levine, Sergey},
  journal={arXiv preprint arXiv:2110.06169},
  year={2021}
}

@article{neumann2008fitted,
  title={Fitted Q-iteration by advantage weighted regression},
  author={Neumann, Gerhard and Peters, Jan},
  journal={Advances in neural information processing systems},
  year={2008}
}

@inproceedings{todorov2012mujoco,
  title={Mujoco: A physics engine for model-based control},
  author={Todorov, Emanuel and Erez, Tom and Tassa, Yuval},
  booktitle={IEEE/RSJ international conference on intelligent robots and systems},
  pages={5026--5033},
  year={2012}
}
\newpage

\onecolumn
\section{Extented Experiment of Inconsistent
Optimization Objective 
Problem}\label{app:motivating-inconsistent-extension}
We evaluated the inconsistent optimization objective problems in offline FRL using the Gym locomotion dataset, as shown in Fig.~\ref{fig:app-incon}.
It is evident that the client-side policy consistently exhibits a substantial performance gap when compared to the server-side policy. In most cases, the server-side policy performs worse, which is undesirable, as the global policy of the server-side should ideally be the sole output of the algorithm.

\section{Extented Experiment of Mixed-Quality Problem}\label{app:motivating-mixed-extension}
{
We investigated the mixed-quality issue in offline FRL using the Gym locomotion dataset, including HalfCheetah, Hopper, Walker2d, and Ant.} 
The experiments were conducted with existing algorithms, as illustrated in Fig.~\ref{fig:app-heter}.
As seen in Fig.~\ref{fig:app-heter}, the performance of algorithms under the mixed-quality setting is significantly worse compared to a uniform-quality setting, where the data, though identical, is divided into different quality categories. 
\begin{figure*}[h]
    \centering 
    \vspace{-1pt}
    \subfigure{\label{fig:app-ant-e-incon}\includegraphics[width=0.235\textwidth]{./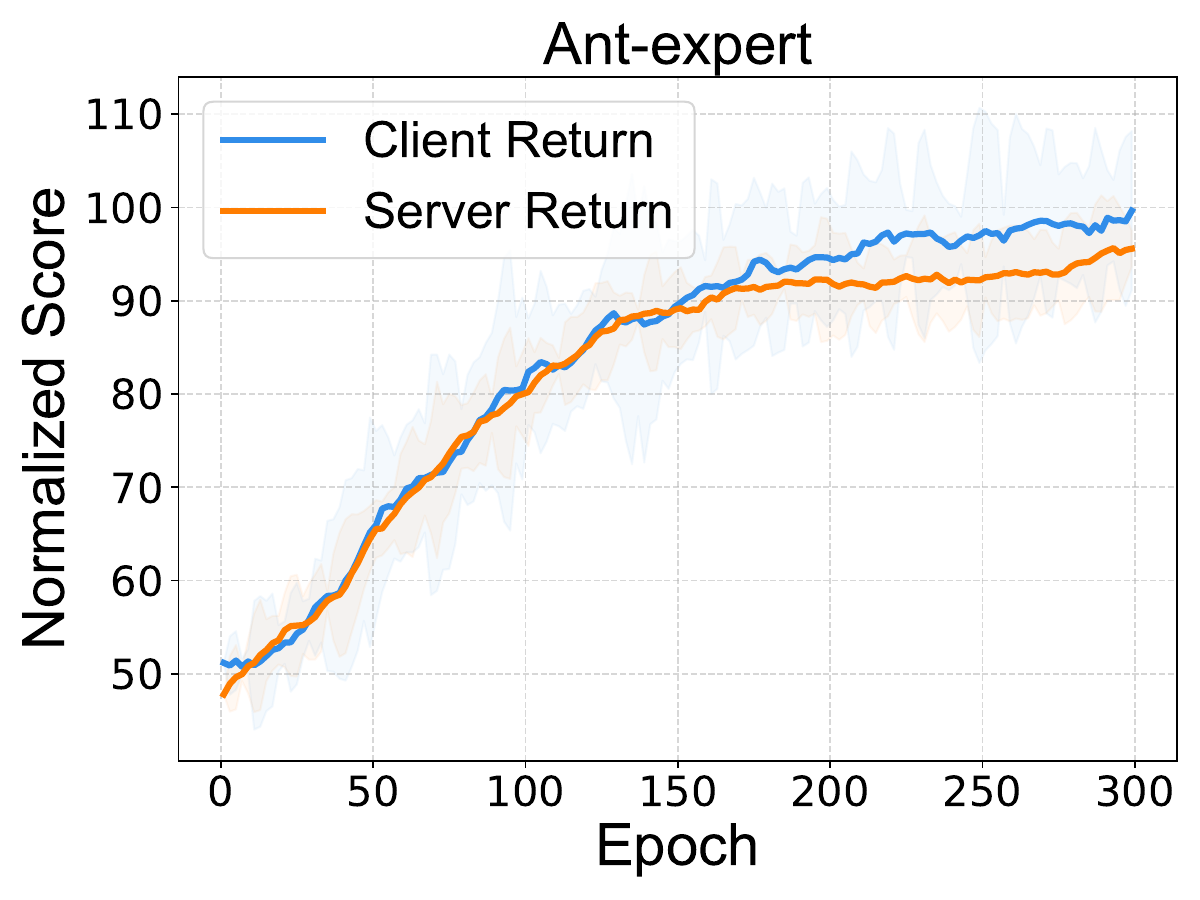}}
    \subfigure{\label{fig:app-ant-me-incon}\includegraphics[width=0.235\textwidth]{./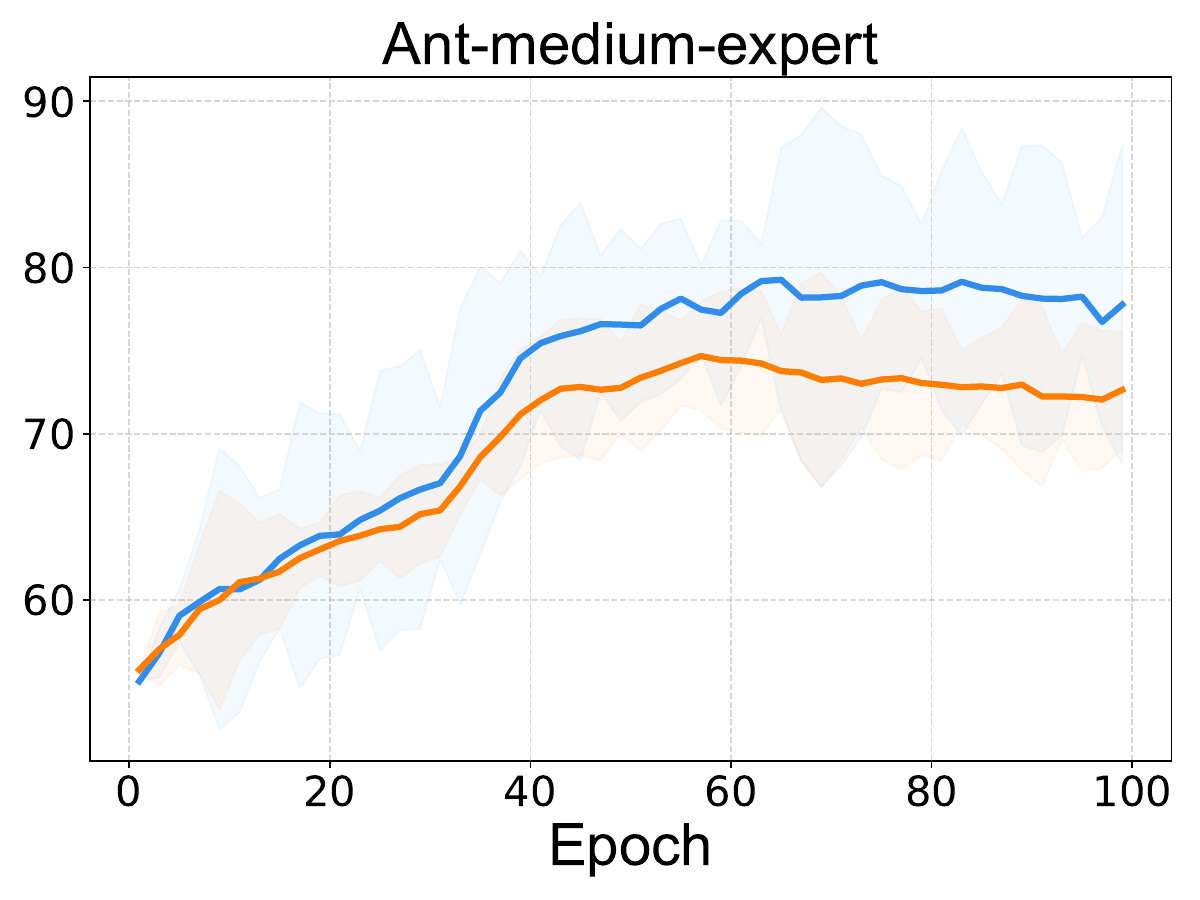}}
    \subfigure{\label{fig:app-ant-m-incon}\includegraphics[width=0.235\textwidth]{./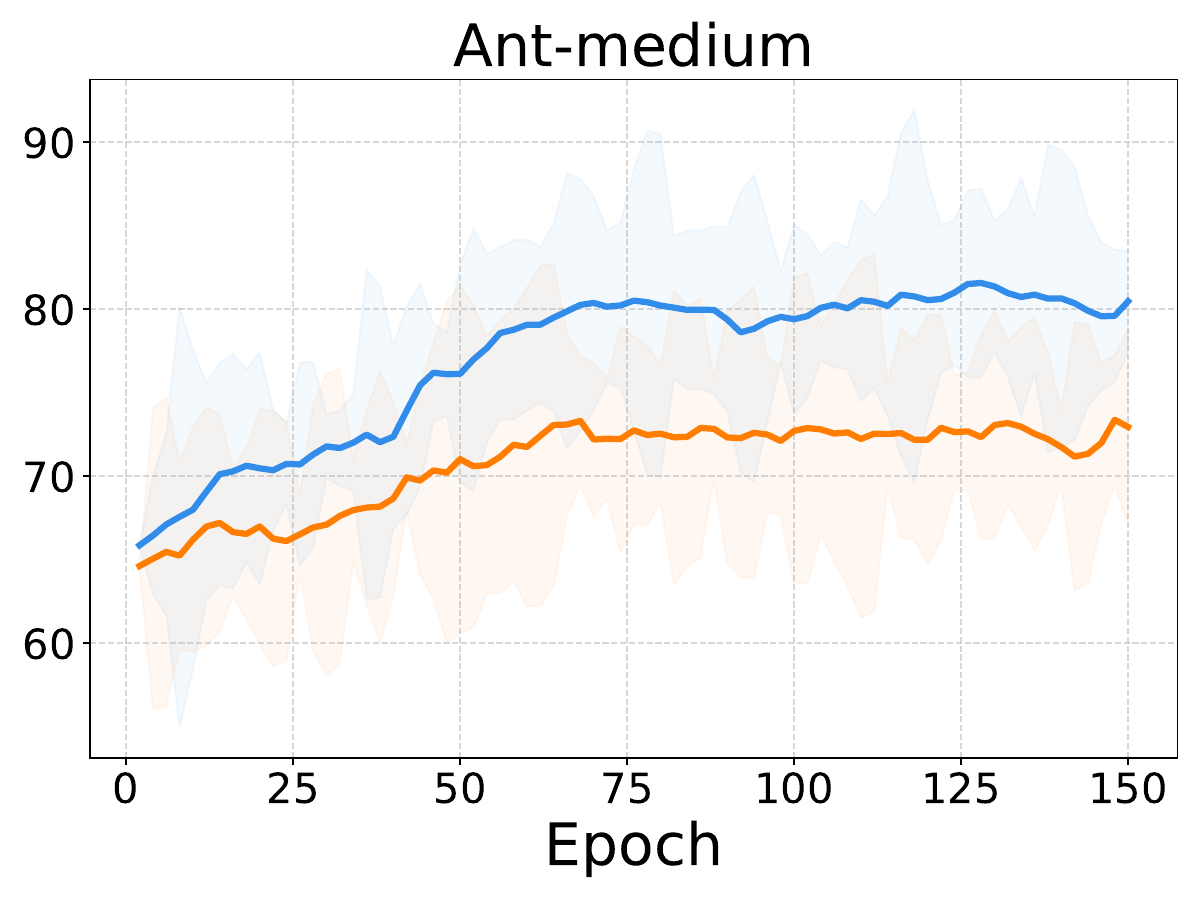}}
    \subfigure{\label{fig:app-ant-mr-incon}\includegraphics[width=0.235\textwidth]{./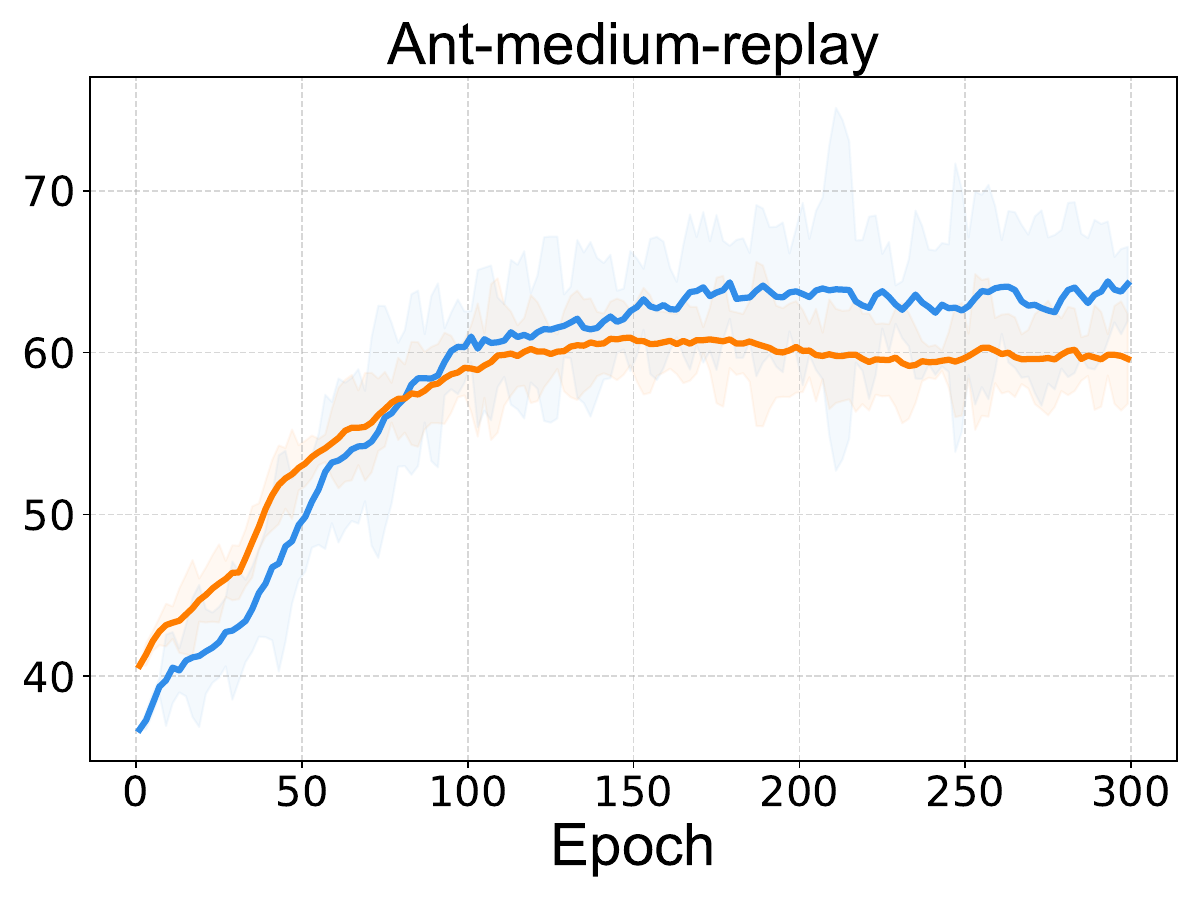}}
    \vspace{-1pt}
    \subfigure{\label{fig:app-halfcheetah-e-incon}\includegraphics[width=0.235\textwidth]{./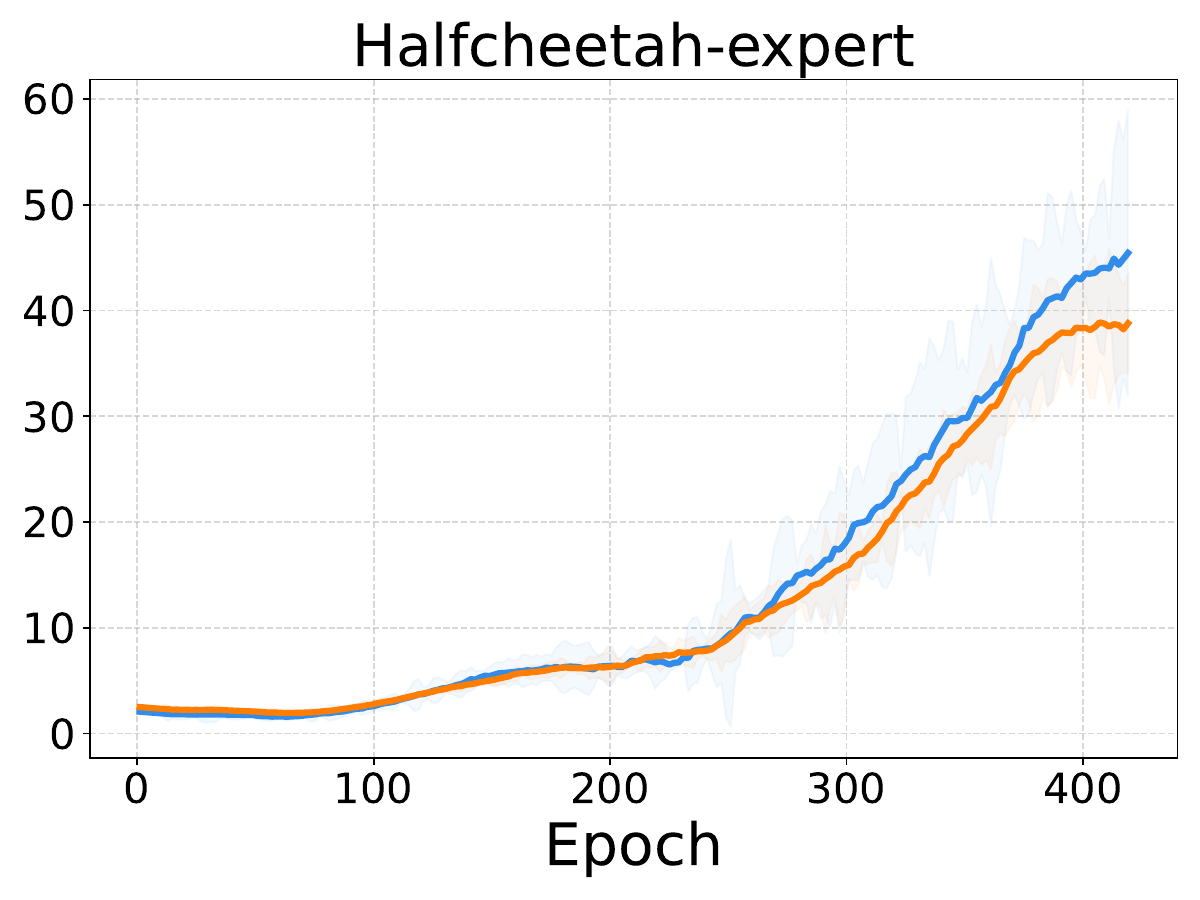}}
    \subfigure{\label{fig:app-halfcheetah-me-incon}\includegraphics[width=0.235\textwidth]{./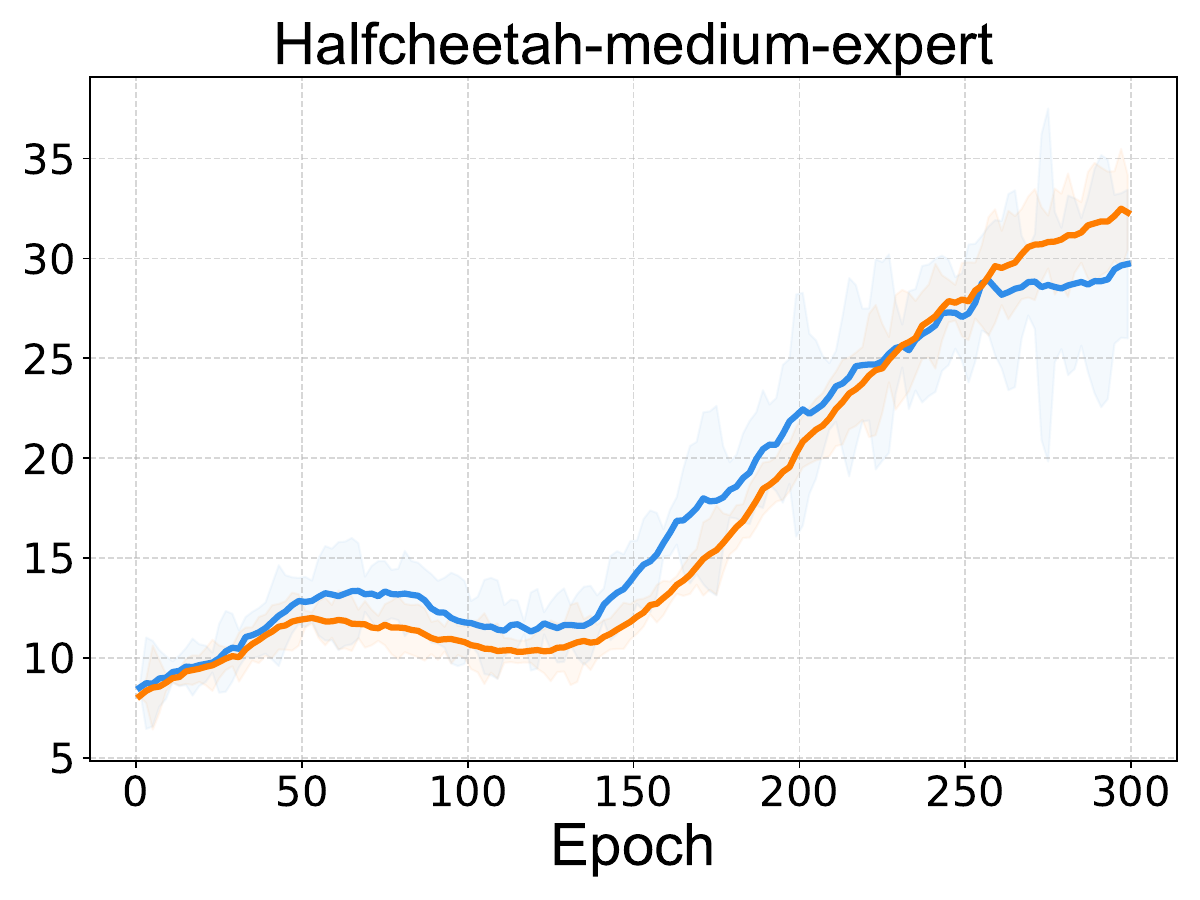}}
    \subfigure{\label{fig:app-halfcheetah-m-incon}\includegraphics[width=0.235\textwidth]{./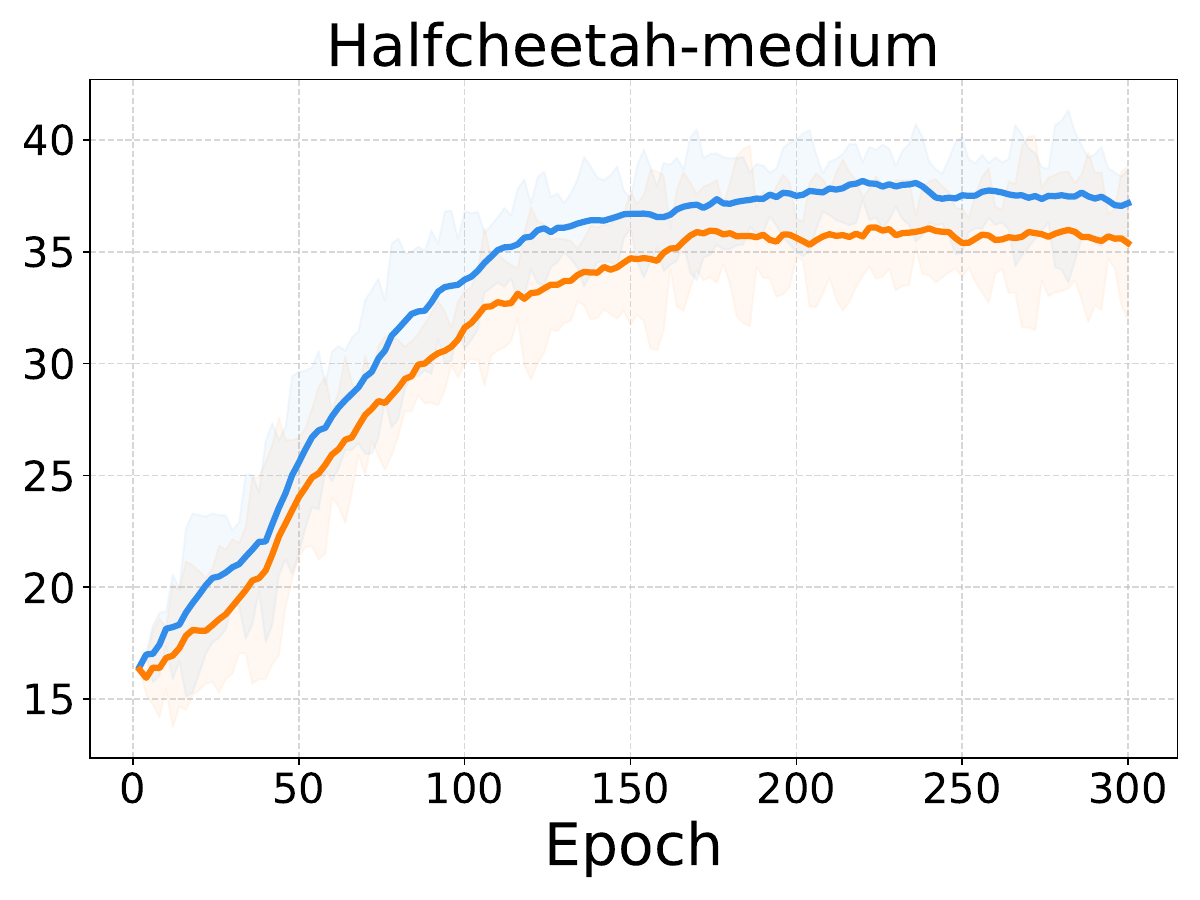}}
    \subfigure{\label{fig:app-halfcheetah-mr-incon}\includegraphics[width=0.235\textwidth]{./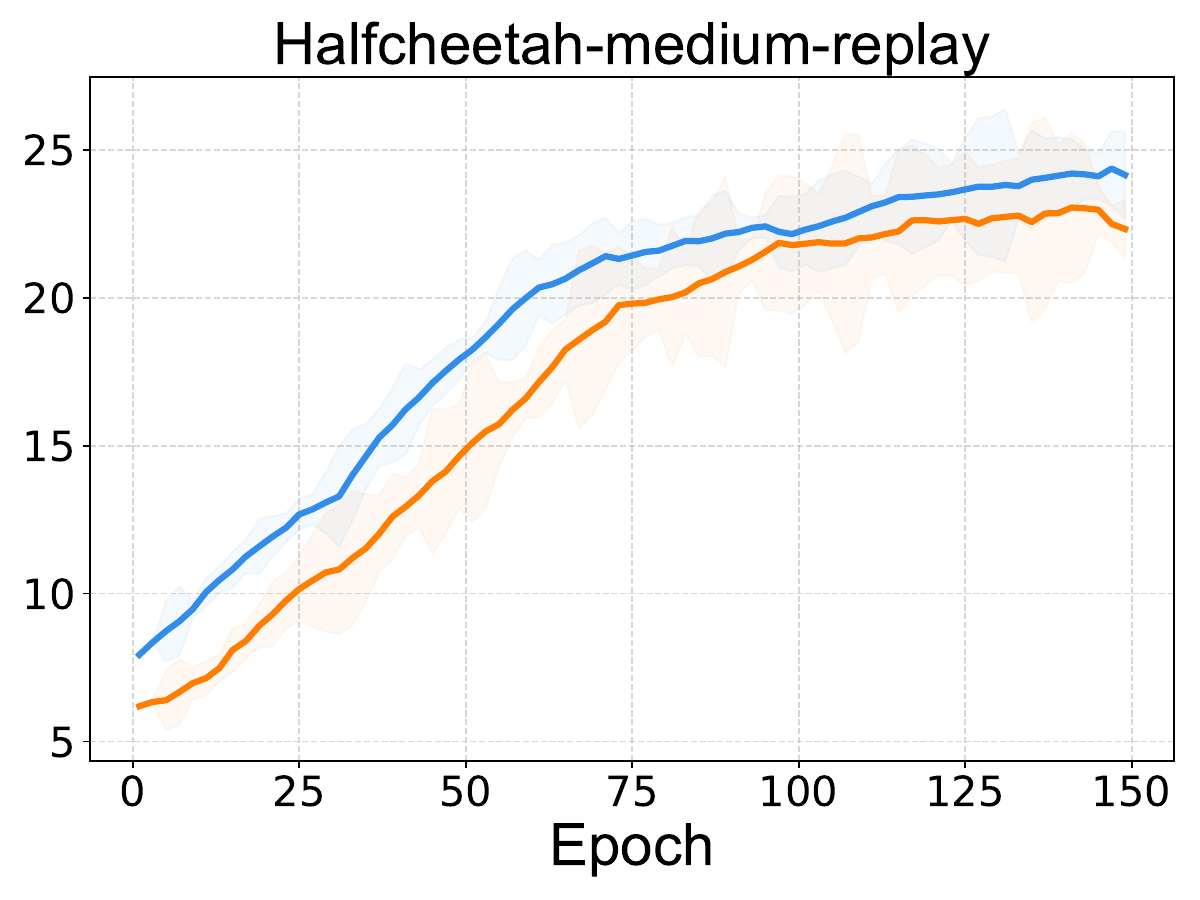}}
    \vspace{-1pt}
    \subfigure{\label{fig:app-hopper-e-incon}\includegraphics[width=0.235\textwidth]{./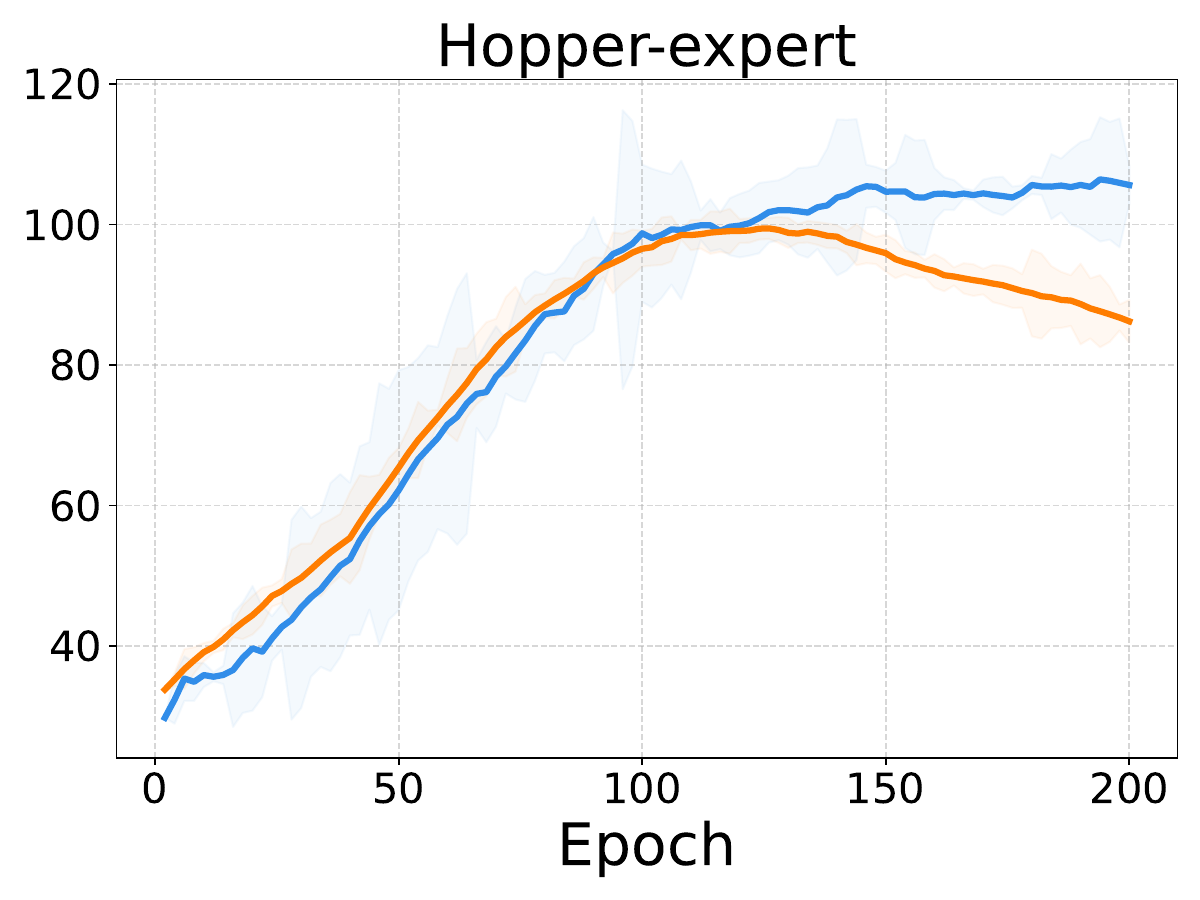}}
    \subfigure{\label{fig:app-hopper-me-incon}\includegraphics[width=0.235\textwidth]{./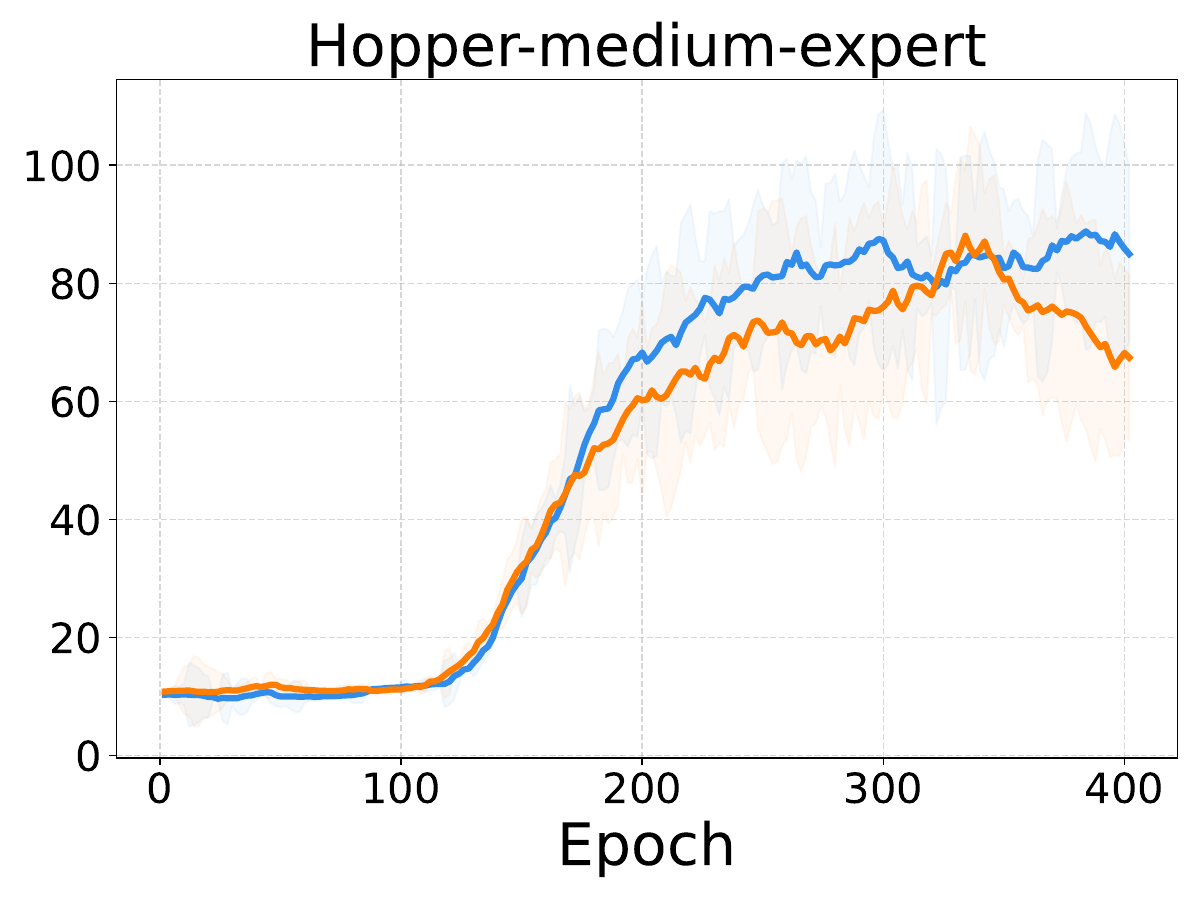}}
    \subfigure{\label{fig:app-hopper-m-incon}\includegraphics[width=0.235\textwidth]{./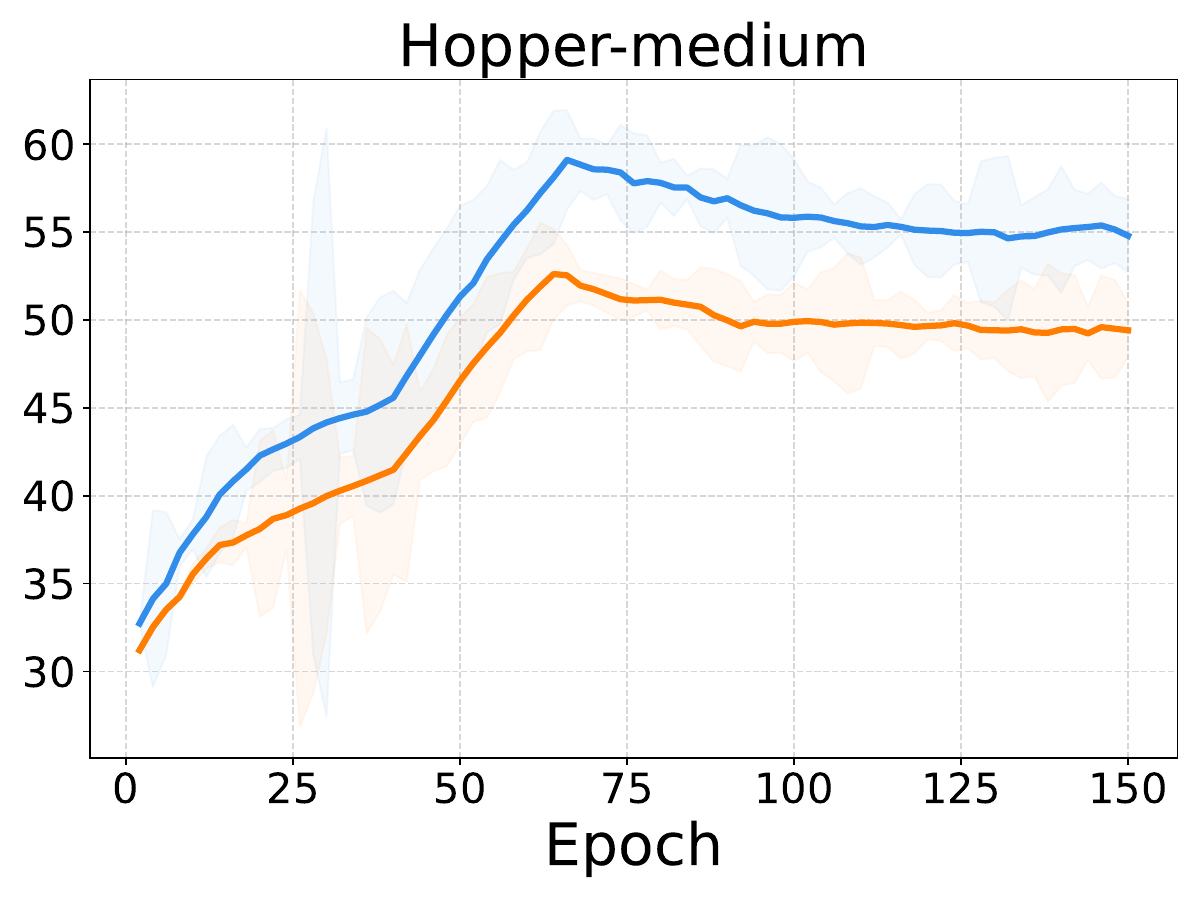}}
    \subfigure{\label{fig:app-hopper-mr-incon}\includegraphics[width=0.235\textwidth]{./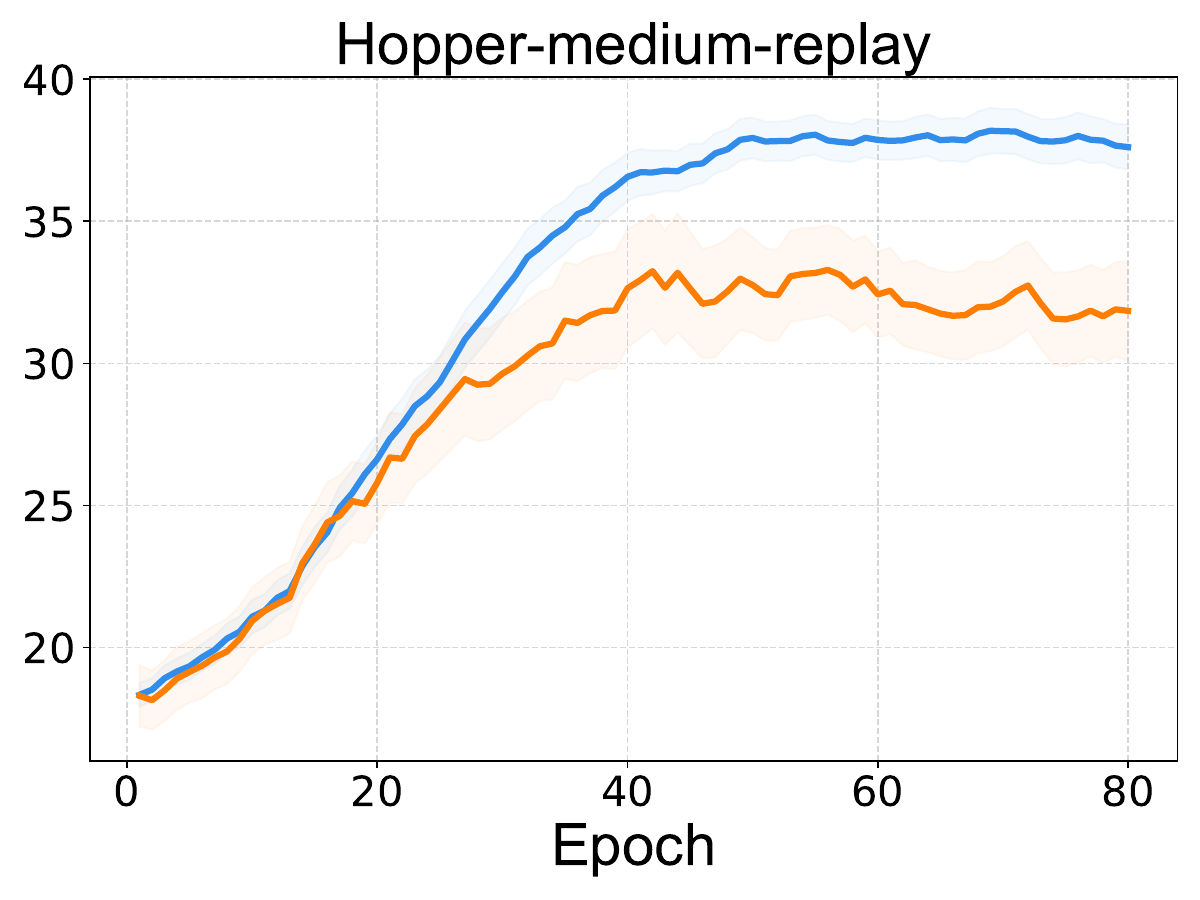}}
    \vspace{-1pt}
    \subfigure{\label{fig:app-walker2d-e-incon}\includegraphics[width=0.235\textwidth]{./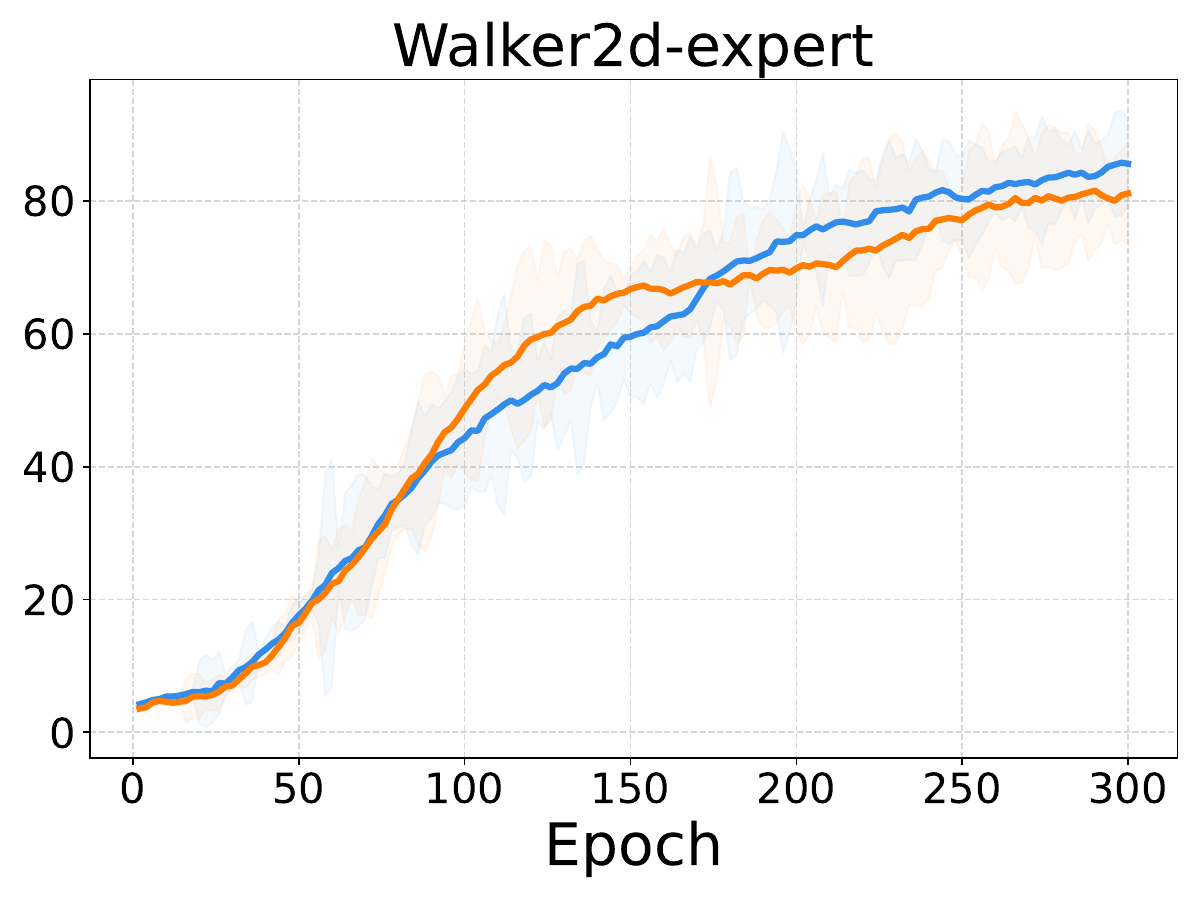}}
    \subfigure{\label{fig:app-walker2d-me-incon}\includegraphics[width=0.235\textwidth]{./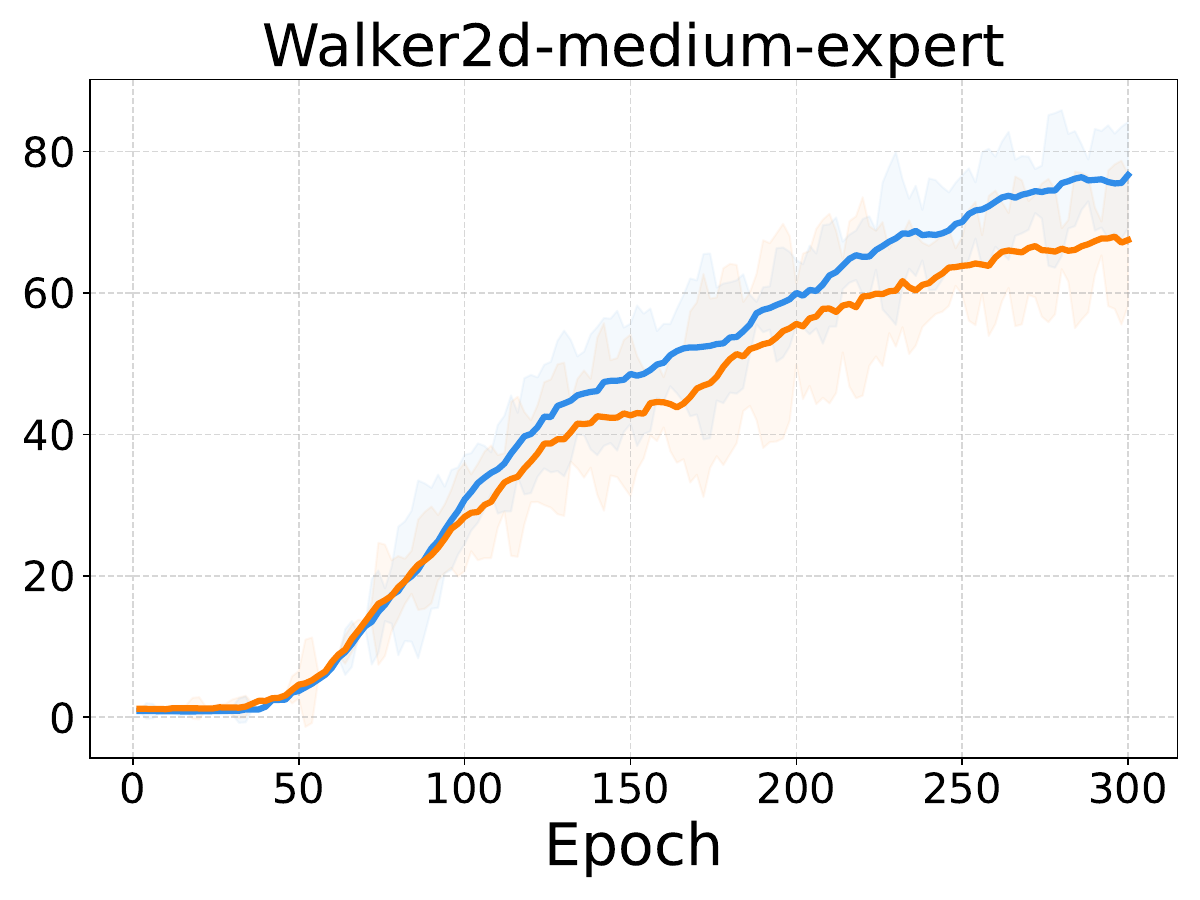}}
    \subfigure{\label{fig:app-walker2d-m-incon}\includegraphics[width=0.235\textwidth]{./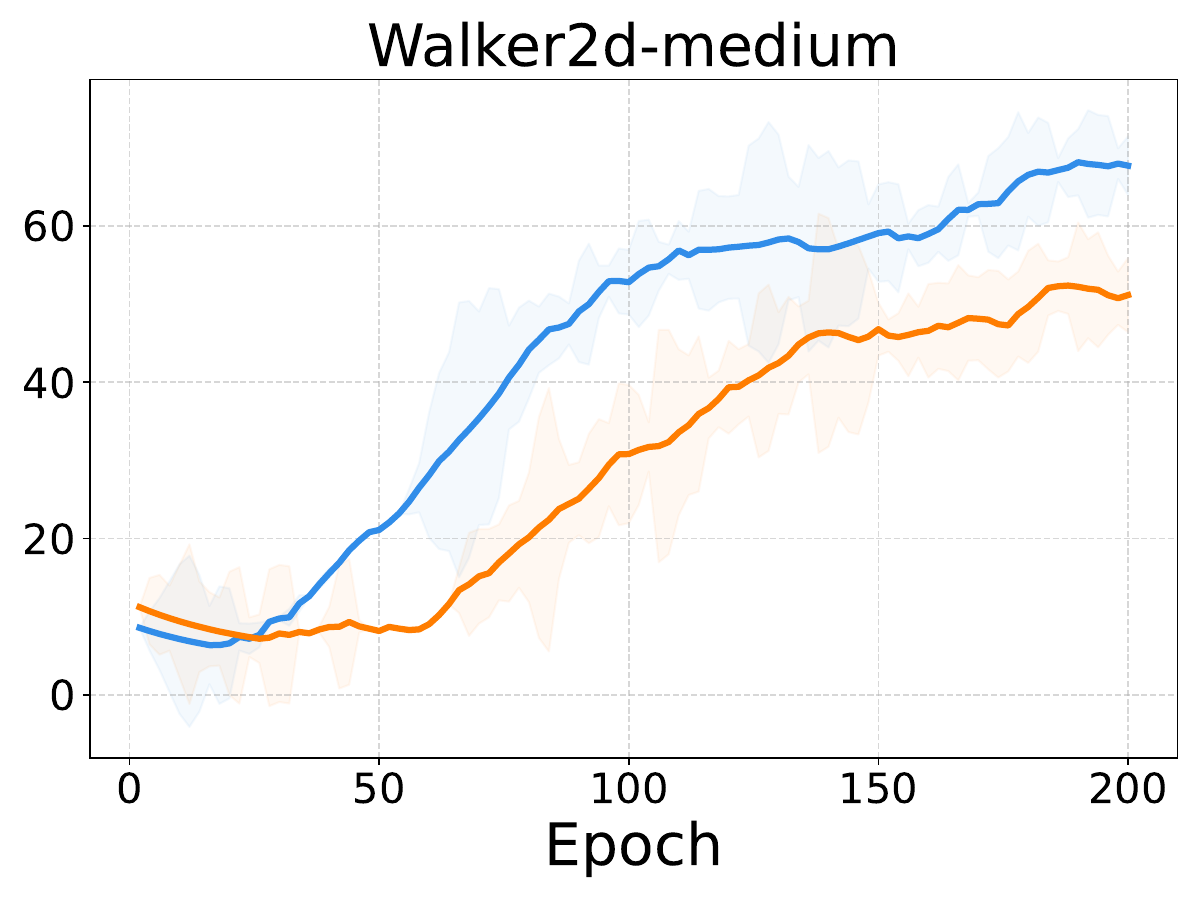}}
    \subfigure{\label{fig:app-walker2d-mr-incon}\includegraphics[width=0.235\textwidth]{./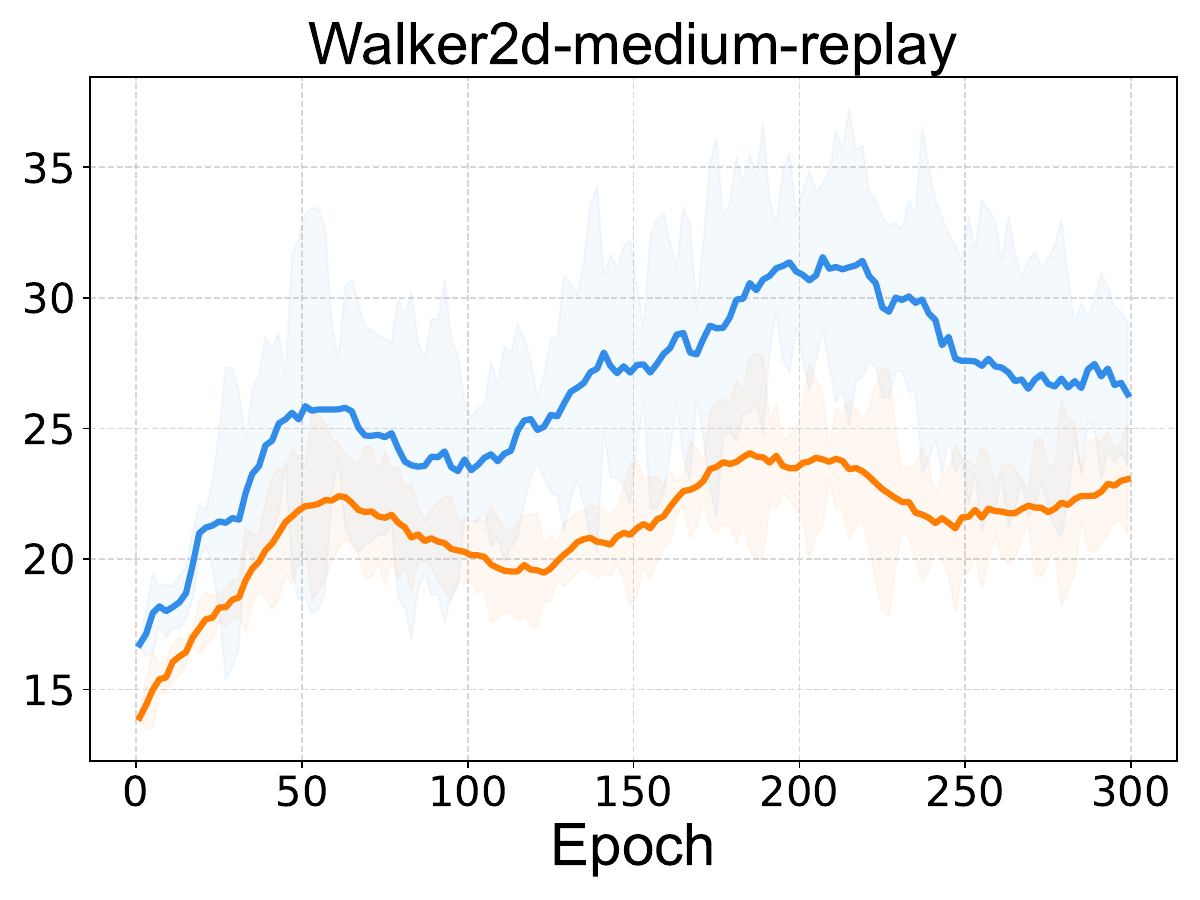}}
    \caption{Extension experiment for inconsistent problem.}
    \label{fig:app-incon}
\end{figure*}
\begin{figure*}[h]
	\centering
	\subfigure{\label{fig:app-ant-heter}\includegraphics[width=0.235\textwidth]{./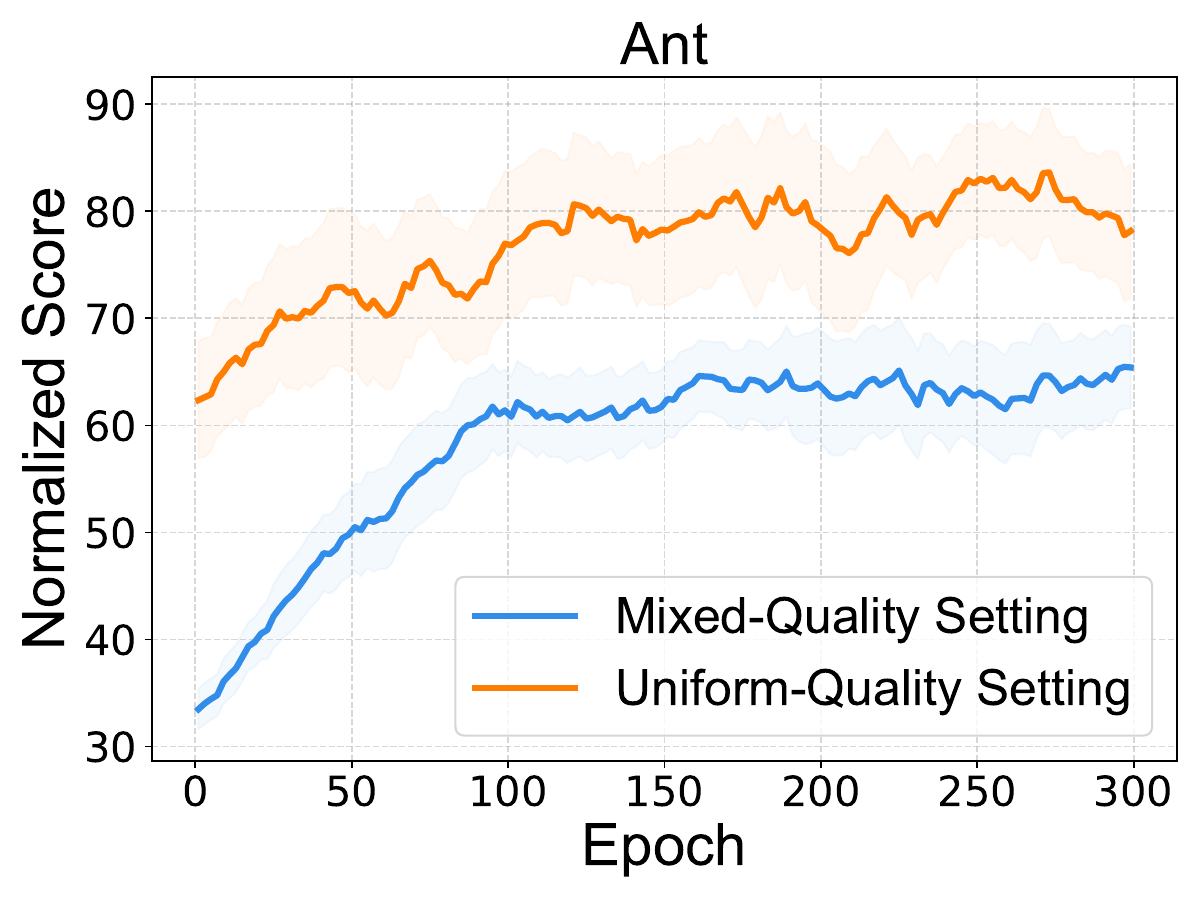}}
	\subfigure{\label{fig:app-halfcheetah-heter}\includegraphics[width=0.235\textwidth]{./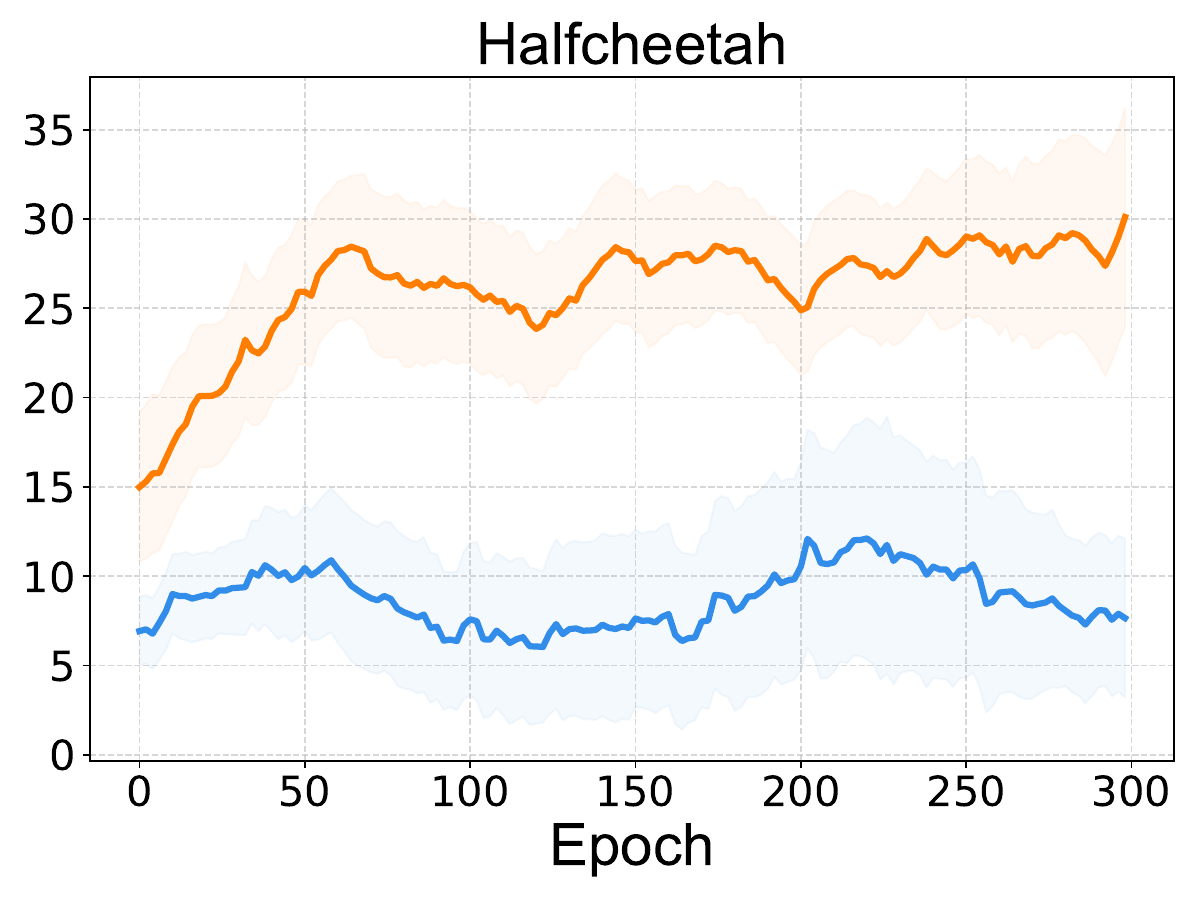}}
	\subfigure{\label{fig:app-hopper-heter}\includegraphics[width=0.235\textwidth]{./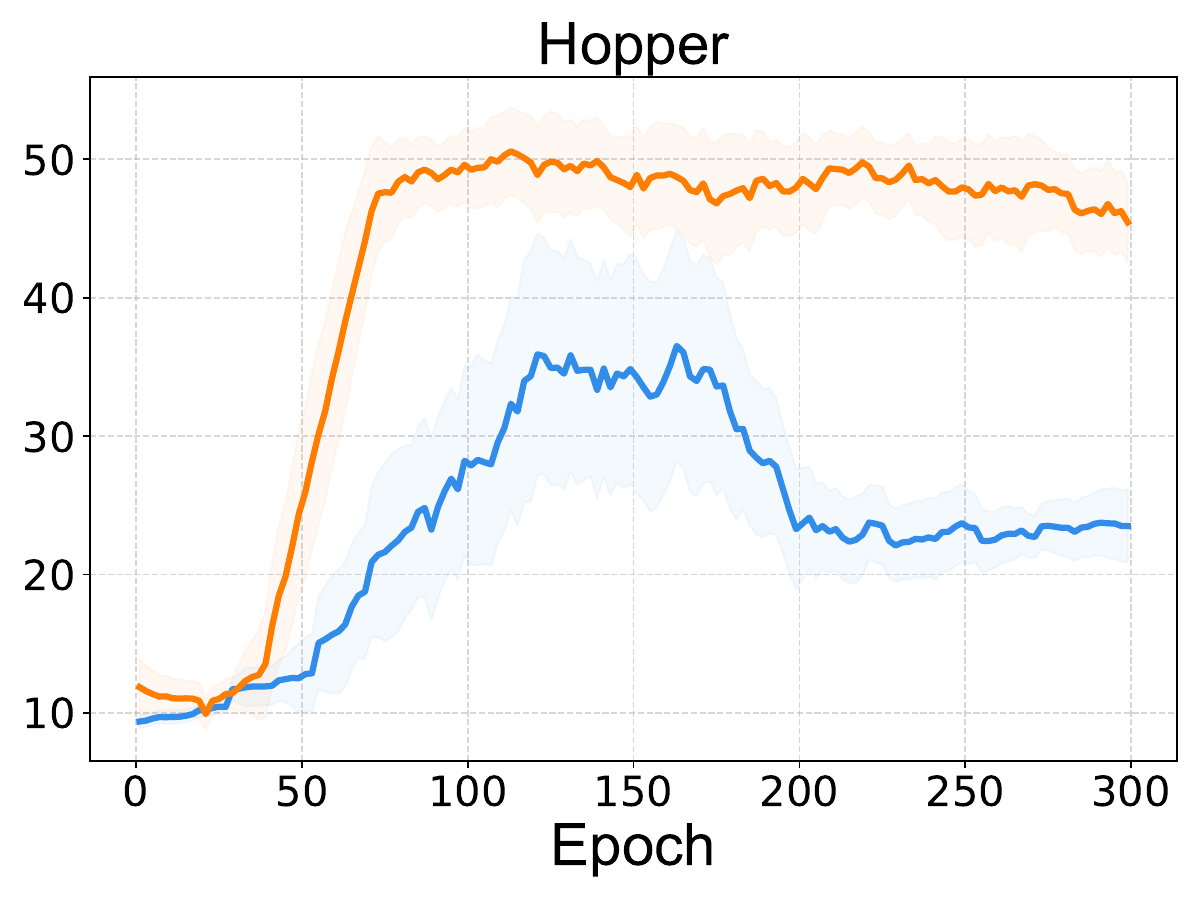}}
	\subfigure{\label{fig:app-walker2d-heter}\includegraphics[width=0.235\textwidth]{./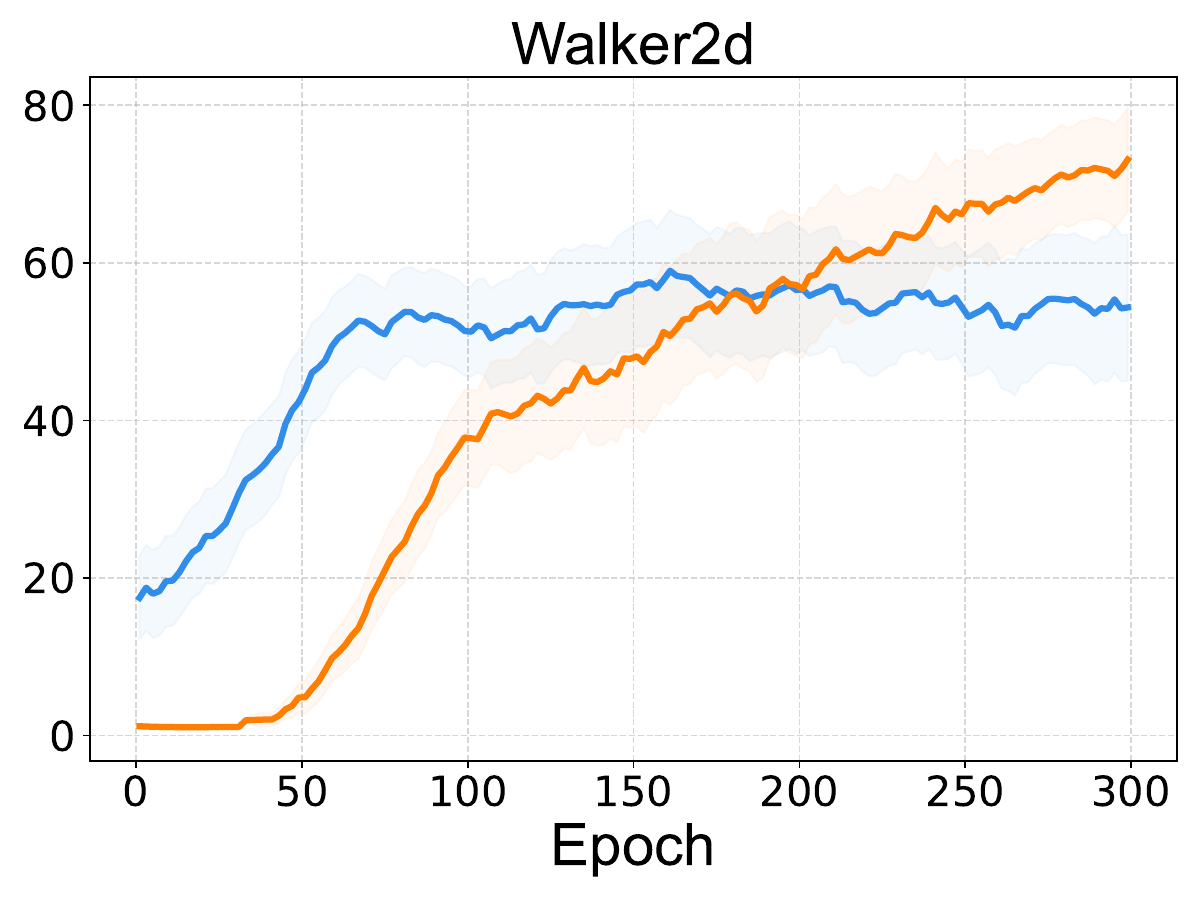}}
    \vspace{-0pt}
	\caption{Extension experiment for mixed-quality problem.}
 \label{fig:app-heter}
    \vspace{-10pt}
\end{figure*}
\let\citep\cite
\blue{
\section{Safe Policy improvement for global policy}
\label{sec:heter}
This section establishes safe policy improvement for FOVA in heterogeneous offline FRL. We first lay out the problem setup and a formal \textit{Preliminary on Heterogeneity}, fixing notation for client MDPs, discounted occupancy, and advantages. We then derive a \textit{global improvement bridge} via advantage decomposition and averaging, and, under an \textit{offline estimation/shift assumption}, transfer bounds from empirical to true MDPs. Finally, we prove a \textit{local regularized policy-improvement lower bound}, and aggregate these pieces into the \textit{global safe-improvement theorem}, where the error terms are explicitly controlled by the regularizers $\lambda$ and $\beta$. 

\subsection{Preliminary on Heterogeneity}
\label{subsec:heter-def}
Offline FRL heterogeneity stems from different initial states and transition dynamics.
The performance of a policy on $\mathcal{M}_k$ depends jointly on the initial state distribution $\mu_k$ and the transition dynamics $T_k$, both of which can be stochastic or deterministic. Variations in either of these factors can lead to different learned policies, resulting in the heterogeneity problem.
For each client $k$, we have the local objective as
$
J_k(\pi)=J(\widetilde{M}_k, \pi),
$
where
$\widetilde{M}_k$ denotes the empirical Markov Decision Process (MDP) constructed from the client’s local dataset $D_k$. Specifically, $\widetilde{M}_k = (\mathcal{S}, \mathcal{A}, \widetilde{T_k}, \widetilde{\mathcal{R}_k}, \gamma, \widetilde{\mu_k})$ consists of the estimated transition kernel $\widetilde{T_k}$, estimated reward function $\widetilde{\mathcal{R}_k}\le\widetilde{\mathcal{R}}^{\max}_k$, and estimated initial state distribution $\widetilde{\mu_k}$, all derived from $D_k$.
The value $J_k(\pi)$ for a given policy $\pi$ can be expressed as:
$$
J_k(\pi) = \frac{1}{1-\gamma} \sum_{\bs} d^{\pi}_{\widetilde{M}_k}(\bs) \sum_{\ba} \pi(\ba|\bs) \mathcal{R}_k(\bs,\ba),
$$
where $d^{\pi}_{\widetilde{M}_k}(\bs)$ represents the normalized discounted state distribution, defined as:
$$
d^{\pi}_{\widetilde{M}_k}(\bs) = (1-\gamma) \sum_{t=0}^{\infty} \gamma^t \Pr(\bs_t = \bs \mid \pi, \widetilde{\mu_k}, \widetilde{T_k}).
$$
In offline federated reinforcement learning, the global objective aggregates the local objectives uniformly across $K$ clients. The primary goal is to learn a policy based on the set of offline datasets $\{\widetilde{M_k}\}_{k=1}^{K}$:
$$
\max_{\pi} \; J(\pi) = \frac{1}{K} \sum_{k=1}^{K} J_k(\pi).
$$
To quantify the heterogeneity between MDPs, we introduce the following quantity on the
level of heterogeneity. Similar definitions are widely used in the FL/FRL work
\citep{li2020federated,wang2023federated,xie2022fedkl,xie2023client}

\begin{definition}[Level of Heterogeneity]
\label{def:heterogeneity_measure}
For any policy $\pi$, the heterogeneity level of client $n$ is measured by the Frobenius norm $\lVert \mathbf{H}_k \rVert_F$. The heterogeneity matrix is constructed as 
\begin{align}
    \mathbf{H}_k = \sum_{n=1}^{K} \frac{1}{K} (\mathbf{\Lambda}_k)^{-1} \mathbf{\Lambda}_n \mathbf{A}_n - \mathbf{A}_k,
\end{align}
where $\mathbf{\Lambda}_k$ denotes the diagonal state visitation frequency matrix for dataset $k$, while $\mathbf{A}_k$ represents the advantage matrix under transition dynamic $T_k$.
\end{definition}
This matrix $\mathbf{H}_k$ captures the deviation between client $n$'s local performance and the federated average performance. Since policy and Q-value directly determine gradient directions, significant deviations indicate potential optimization conflicts in federated learning. In IID settings, $\mathbf{H}_k$ reduces to the zero matrix and $\lVert \mathbf{H} \rVert_F = 0$. Under non-IID conditions, the norm's magnitude quantifies the client's environmental divergence. 
The matrix $\mathbb{A}_{k}$ is a $\left|\mathcal{S}\right| \times \left|\mathcal{A}\right|$ matrix, with the $(i, j)$-th entry denoted by $A_k(\bs_i, \ba_j)$. This entry represents the advantage function for the $i$-th state and $j$-th action, capturing the difference between the action-value function and the state-value function. It quantifies the benefit of taking action $\ba_j$ at state $\bs_i$ within the local environment of client $k$.
The policy advantage $\mathbb{A}_k(\pi')$ can be expressed as:
$$
\mathbb{A}_k(\pi) = \sum_{\bs} d^{\pi}_{\widetilde{M}_k}(\bs) \sum_{\ba} \pi(\ba|\bs) A_k(\bs, \ba)
.
$$
\subsection{Advantage Decomposition and Averaging}
\begin{lemma}
\label{lemma:advantage-bound}
For all clients $k = 1, \dots, K$, the following bound holds:
\begin{align}
\sum_{n=1}^{K} \frac{1}{K} \mathbb{A}_{n} (\pi_k^{t+1}) 
&\mathop{\geq} \mathbb{A}_{k} (\pi_k^{t+1}) - \frac{2 \|\mathbf{H}_k\|_F}{1-\gamma} \mathbb{E}_{\bs \sim d^{\pi_{\beta_k}}_{\widetilde{M}_k}} \left[ D_{TV} (\pi_k^{t+1} (\cdot | \bs) \| \pi_{\beta_k} (\cdot | \bs)) \right],\nonumber\\
&\geq \mathbb{A}_{k} (\pi_k^{t+1}) - \frac{\sqrt{2} }{1-\gamma}\|\mathbf{H}_k\|_F 
\sqrt{\mathbb{E}_{\bs \sim d^{\pi_{\beta_k}}_{\widetilde{M}_k}} \left[ D_{KL} (\pi_k^{t+1} (\cdot | \bs) \| \pi_{\beta_k} (\cdot | \bs)) \right]},
\end{align}
\end{lemma}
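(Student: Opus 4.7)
The plan is to prove the first inequality by expressing the gap $\frac{1}{K}\sum_{n=1}^{K}\mathbb{A}_n(\pi_k^{t+1})-\mathbb{A}_k(\pi_k^{t+1})$ as a Frobenius inner product against the heterogeneity matrix $\mathbf{H}_k$, and then controlling it by a policy-deviation term. I would start by unrolling $\mathbb{A}_n(\pi)=\sum_{s,a}d^{\pi}_{\widetilde{M}_n}(s)\pi(a|s)A_n(s,a)$ and rewriting the discounted state-visitation factor $d^{\pi}_{\widetilde{M}_n}$ using the diagonal visitation matrix $\mathbf{\Lambda}_n$ plus a change-of-measure factor to a common reference $d^{\pi_{\beta_k}}_{\widetilde{M}_k}$. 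After averaging over $n$, the contribution of the local advantages collapses exactly to $(\mathbf{\Lambda}_k)^{-1}\mathbf{\Lambda}_n \mathbf{A}_n$ summed and averaged, so subtracting $\mathbf{A}_k$ precisely reproduces $\mathbf{H}_k$ as in Definition~\ref{def:heterogeneity_measure}.

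Next, I would introduce the behavior policy $\pi_{\beta_k}$ through the identity $\pi_k^{t+1}=\pi_{\beta_k}+(\pi_k^{t+1}-\pi_{\beta_k})$. Because $\sum_a \pi_{\beta_k}(a|s)A_k(s,a)=0$ pointwise by the definition of the advantage under $\pi_{\beta_k}$, and by the structure of $\mathbf{H}_k$ under the reweighting $(\mathbf{\Lambda}_k)^{-1}\mathbf{\Lambda}_n$, the $\pi_{\beta_k}$-piece cancels, leaving a term linear in the deviation $\pi_k^{t+1}-\pi_{\beta_k}$ weighted by $d^{\pi_{\beta_k}}_{\widetilde{M}_k}$. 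Applying the matrix Cauchy--Schwarz inequality $|\langle \mathbf{H}_k,\mathbf{M}\rangle_F|\le \|\mathbf{H}_k\|_F\|\mathbf{M}\|_F$ and bounding the residual policy matrix entrywise via $\sum_a |\pi_k^{t+1}(a|s)-\pi_{\beta_k}(a|s)|=2\,D_{TV}(\pi_k^{t+1}(\cdot|s)\|\pi_{\beta_k}(\cdot|s))$ yields the factor $2\,\mathbb{E}_{s\sim d^{\pi_{\beta_k}}_{\widetilde{M}_k}}[D_{TV}(\cdot\|\cdot)]$. The $1/(1-\gamma)$ factor enters from the geometric series in $d^{\pi_k^{t+1}}_{\widetilde{M}_k}\le (1-\gamma)^{-1}\,d^{\pi_{\beta_k}}_{\widetilde{M}_k}$-type comparisons used when we changed the reference occupancy, in the spirit of the performance-difference lemma.

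For the second inequality, I would apply Pinsker's inequality pointwise, $D_{TV}(\pi_k^{t+1}(\cdot|s)\|\pi_{\beta_k}(\cdot|s))\le \sqrt{\tfrac{1}{2}D_{KL}(\pi_k^{t+1}(\cdot|s)\|\pi_{\beta_k}(\cdot|s))}$, then push the expectation inside the square root using Jensen's inequality (since $\sqrt{\cdot}$ is concave). This converts the prefactor $2/(1-\gamma)$ into $\sqrt{2}/(1-\gamma)$, matching the stated bound exactly.

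The step I expect to be most delicate is the matrix-form bookkeeping that identifies the difference $\frac{1}{K}\sum_{n}\mathbb{A}_n(\pi)-\mathbb{A}_k(\pi)$ with $\langle \mathbf{H}_k,\mathbf{M}\rangle_F$ for an appropriate $\mathbf{M}$. The obstacle is that each $\mathbb{A}_n$ uses its own discounted occupancy $d^{\pi}_{\widetilde{M}_n}$ driven by the client-specific transition kernel $\widetilde{T}_n$, so the change of measure into the common reference $d^{\pi_{\beta_k}}_{\widetilde{M}_k}$ must be handled carefully to isolate exactly the $(\mathbf{\Lambda}_k)^{-1}\mathbf{\Lambda}_n\mathbf{A}_n-\mathbf{A}_k$ structure and to absorb the remaining dynamics shift into the $1/(1-\gamma)$ factor rather than into an uncontrolled residual.
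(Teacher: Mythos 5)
Your finishing steps coincide with what the paper actually does: the paper likewise passes from an $\ell_2$-type bound $\|\mathbf{H}_k\|_F\bigl(\sum_{\bs,\ba}[\cdot]^2\bigr)^{1/2}$ to the expected-TV form via $\sqrt{\sum_s x_s^2}\le\sum_s|x_s|$, $\|\cdot\|_2\le\|\cdot\|_1$ and $\|p-q\|_1=2D_{TV}(p\|q)$, and the KL form (left implicit in the paper) is exactly the Pinsker-plus-Jensen step you describe. The difference is that the paper does not derive the key inequality at all: it imports it wholesale by substituting $q_k=1/K$ and $\pi_{\beta_k}$ into Theorem~I of the cited FedKL work, whereas you propose to rebuild that inequality from scratch. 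Your reconstruction of that imported step has two concrete gaps. First, the change of measure you invoke, a domination of the form $d^{\pi_k^{t+1}}_{\widetilde{M}_k}\le(1-\gamma)^{-1}d^{\pi_{\beta_k}}_{\widetilde{M}_k}$, is false in general: the new policy can put occupancy mass on states the behavior policy essentially never visits, so no such pointwise comparison holds. The $1/(1-\gamma)$ in the stated bound comes from the unnormalized discounted visitation, not from dominating one occupancy by another. If you replace this by the correct occupancy-shift bound, you generate an additional additive error term (scaling with $\max_{\bs,\ba}|A_k(\bs,\ba)|$ times an expected TV distance) that is not present in the lemma, so the dynamics-shift residual cannot simply be ``absorbed''; the FedKL-style statement avoids this because its surrogate advantages are already evaluated under a fixed reference occupancy.

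Second, your cancellation of the $\pi_{\beta_k}$-piece relies on $\sum_{\ba}\pi_{\beta_k}(\ba|\bs)A_k(\bs,\ba)=0$, which handles the $n=k$ term only. For $n\neq k$, $A_n$ is the advantage taken with respect to client $n$'s own baseline and dynamics, so $\sum_{\ba}\pi_{\beta_k}(\ba|\bs)A_n(\bs,\ba)\neq 0$ in general, and after averaging over $n$ and reweighting by $(\mathbf{\Lambda}_k)^{-1}\mathbf{\Lambda}_n$ the gap does not collapse exactly onto $\langle\mathbf{H}_k,\mathbf{M}\rangle_F$ with $\mathbf{M}$ built solely from $d^{\pi_{\beta_k}}_{\widetilde{M}_k}(\bs)\bigl(\pi_k^{t+1}(\ba|\bs)-\pi_{\beta_k}(\ba|\bs)\bigr)$. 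In the original FedKL setting all clients share one common reference policy (the previous global policy), which is precisely what makes both the baseline cancellation and the fixed-occupancy surrogate legitimate; transplanting it to client-specific behavior policies is exactly the delicate bookkeeping you flagged as the hard part, and your sketch does not yet close it. So the overall route matches the paper's, but the one step the paper handles by citation is the step your proposal leaves unproven, and the two specific devices you offer for it would fail as stated.
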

\begin{proof}
    This proof uses some useful lemmas and corollaries from \citep{xie2022fedkl}. By substituting $q_k = \frac{1}{K}$ and $\pi_{\beta_k}$ into \citep{xie2022fedkl}[Theorem I], we easily derive the following:
    \begin{align}
        \left|\sum_{n=1}^{K} \frac{1}{K} \mathbb{A}_{n} (\pi_k^{t+1}) - \mathbb{A}_{k} (\pi_k^{t+1})\right| \leq \mathbb{A}_{k} (\pi_k^{t+1}) - \|\mathbf{H}_k\|_F \sqrt{\sum_{s,a}[\frac{1}{1-\gamma}d^{\pi_{\beta_k}}_{\widetilde{M}_k}(\pi_k^{t+1} (\cdot | \bs), \pi_{\beta_k} (\cdot | \bs))]^2}
    \end{align}
Thus, we have
\begin{align}
\sum_{n=1}^{K} \frac{1}{K} \mathbb{A}_{n} (\pi_k^{t+1}) 
&\geq \mathbb{A}_{k} (\pi_k^{t+1}) - \|\mathbf{H}_k\|_F \sqrt{\sum_{s,a}[\frac{1}{1-\gamma}d^{\pi_{\beta_k}}_{\widetilde{M}_k}(\pi_k^{t+1} (\cdot | \bs), \pi_{\beta_k} (\cdot | \bs))]^2},\nonumber\\
&\mathop{\geq}_{\text{(a)}} \mathbb{A}_{k} (\pi_k^{t+1}) - \frac{2 \|\mathbf{H}_k\|_F}{1-\gamma} \mathbb{E}_{\bs \sim d^{\pi_{\beta_k}}_{\widetilde{M}_k}} \left[ D_{TV} (\pi_k^{t+1} (\cdot | \bs) \| \pi_{\beta_k} (\cdot | \bs)) \right],
\end{align}
where (a) follows from norm inequalities. Let $d_k(\bs) = d^{\pi_{\beta_k}}_{\widetilde{M}_k}(\bs)$ and $\Delta_s = \pi_k^{t+1}(\cdot|\bs) - \pi_{\beta_k}(\cdot|\bs)$. Then
$$
\sqrt{\sum_s \Big[c d_k(\bs)\,\|\Delta_s\|_2\Big]^2}
\;\leq\; c\sum_s d_k(\bs)\,\|\Delta_s\|_2
\;\leq\; c\sum_s d_k(\bs)\,\|\Delta_s\|_1
= 2c\,\mathbb{E}_{\bs\sim d_k}\!\left[D_{TV}\big(\pi_k^{t+1}\,\|\pi_{\beta_k}\big)\right],
$$
where we used $\sqrt{\sum x_s^2} \leq \sum |x_s|$, the inequality $\|\cdot\|_2 \leq \|\cdot\|_1$, $c = \frac{1}{1-\gamma}$, and the identity $\|p-q\|_1 = 2D_{TV}(p\|q)$.
\end{proof}
The lemma impolies that after client $k$ updates to $\pi_k^{t+1}$, the federated average advantage is bounded below by the client’s own advantage minus a penalty that scales with heterogeneity $\sqrt{2}\|\mathbf{H}_k\|_F/(1-\gamma)$ and the average TV shift from the behavior policy (scaled by $\alpha$). Thus, in IID settings ($\|\mathbf{H}_k\|_F=0$) or with conservative/ trust-region updates (small TV), the penalty is negligible, so local improvements carry over to global gains.
Further, we show that the average policy advantage is not a direct measure of the final performance, as it is a proxy for the improvement of the global objective. A more useful result should directly measure
the improvement in terms of the global expected discounted reward. For that purpose, we have the following Lemma.


\begin{lemma}[Global performance improvement via averaged local advantages]
The next global policy are the uniform mixture 
$
\bar\pi^{t+1} := \tfrac{1}{K}\sum_{k=1}^{K}\pi_k^{t+1}
$
generated by the global aggregation in Algorithm~FOVA.  
The performance gap of global policy enjoys the following lower bound:
\begin{align}
J(\bar\pi^{t+1}) - J(\bar\pi^t) \;
\ge\;\frac{1}{K} 
\sum_{k=1}^{K} \left\{ \frac{\mathbb{A}_k(\pi_k^{t+1})}{(1-\gamma)}
- l_k \,\mathrm{TV}\!\left(\pi_k^{t+1},\,\bar\pi^t\right) 
- h_k \,\mathrm{TV}\!\left(\pi_k^{t+1},\,\pi_{\beta_k}\right) \right\},
\label{eq:improvement-gap}
\end{align}
where 
$\mathrm{KL}(\pi_1,\pi_2) = \mathbb{E}_{\bs \sim d_k} \left[ \mathrm{D_{KL}} \left(\pi_1(\cdot  | \bs) \middle|\middle| \pi_2(\cdot | \bs)\right) \right]$,
$l_k = \frac{{2}\gamma\max_{s,a} |\mathbb{E}[A_k(\bs,\ba)]|}{(1-\gamma)^2}$,
$h_k = \frac{{2}}{(1-\gamma)^2}\|\mathbf{H}_k\|_{F}$,
and
$\mathbf{H}_k$ be defined as in Definition~\ref{def:heterogeneity_measure}.
\end{lemma}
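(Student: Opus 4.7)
The plan is to combine a TRPO-style surrogate lower bound, applied per client on each empirical MDP $\widetilde{M}_k$, with the advantage-decomposition tool already established in Lemma~\ref{lemma:advantage-bound}. The former turns the aggregate return gap $J(\bar\pi^{t+1})-J(\bar\pi^t)$ into a federated surrogate that is linear in the action-policy and picks up the $l_k$-weighted TV-to-$\bar\pi^t$ term; the latter lets us swap a federated-average advantage of client $k$'s local update $\pi_k^{t+1}$ for the local advantage $\mathbb{A}_k(\pi_k^{t+1})$, at the cost of the heterogeneity penalty $h_k\,\mathrm{TV}(\pi_k^{t+1},\pi_{\beta_k})$.

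First, I would decompose $J=\tfrac{1}{K}\sum_k J_k$ and, on each $\widetilde{M}_k$, apply the classical Schulman--Kakade--Langford performance-difference inequality with reference $\bar\pi^t$ and query $\bar\pi^{t+1}$, namely $J_k(\bar\pi^{t+1})-J_k(\bar\pi^t)\ge \tfrac{1}{1-\gamma}\mathbb{E}_{s\sim d^{\bar\pi^t}_{\widetilde{M}_k},\,a\sim\bar\pi^{t+1}}[A_k(s,a)]-l_k\,\mathrm{TV}(\bar\pi^{t+1},\bar\pi^t)$, where the constant $l_k=2\gamma\max_{s,a}|\mathbb{E}[A_k(s,a)]|/(1-\gamma)^2$ emerges directly from the standard derivation. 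Because $\bar\pi^{t+1}=\tfrac{1}{K}\sum_j\pi_j^{t+1}$ is a uniform mixture, the inner action-expectation is linear in the policy, so $\mathbb{E}_{a\sim\bar\pi^{t+1}}[A_k(s,a)]=\tfrac{1}{K}\sum_j\mathbb{E}_{a\sim\pi_j^{t+1}}[A_k(s,a)]$, and convexity of $D_{TV}$ yields $\mathrm{TV}(\bar\pi^{t+1},\bar\pi^t)\le\tfrac{1}{K}\sum_j\mathrm{TV}(\pi_j^{t+1},\bar\pi^t)$. Averaging the per-client bound over $k$ and relabelling the resulting double sum aligns the TV penalty with each client's own update, giving $\tfrac{1}{K}\sum_k l_k\,\mathrm{TV}(\pi_k^{t+1},\bar\pi^t)$ on the right-hand side.

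Next, I need to reduce the cross-client average $\tfrac{1}{K^2}\sum_{n,k}\mathbb{E}_{s\sim d^{\bar\pi^t}_{\widetilde{M}_n},\,a\sim\pi_k^{t+1}}[A_n(s,a)]$ to the target main term $\tfrac{1}{K}\sum_k\mathbb{A}_k(\pi_k^{t+1})$. I would first shift each inner state-distribution from $d^{\bar\pi^t}_{\widetilde{M}_n}$ to $d^{\pi_k^{t+1}}_{\widetilde{M}_n}$ via the standard occupancy-coupling bound $\|d^{\pi'}-d^{\pi}\|_1\le\tfrac{2\gamma}{1-\gamma}\mathbb{E}_{s\sim d^{\pi}}[D_{TV}(\pi',\pi)]$, which reabsorbs into the $l_k\,\mathrm{TV}(\pi_k^{t+1},\bar\pi^t)$ coefficient already present. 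The inner sum then becomes $\tfrac{1}{K}\sum_n\mathbb{A}_n(\pi_k^{t+1})$, to which Lemma~\ref{lemma:advantage-bound} applies, yielding the lower bound $\mathbb{A}_k(\pi_k^{t+1})-\tfrac{2\|\mathbf{H}_k\|_F}{1-\gamma}\mathrm{TV}(\pi_k^{t+1},\pi_{\beta_k})$; dividing by the external $(1-\gamma)$ produces precisely $h_k=2\|\mathbf{H}_k\|_F/(1-\gamma)^2$, and assembling the three pieces reproduces \eqref{eq:improvement-gap}.

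The main obstacle will be the occupancy-shift step: the coefficient $l_k$ appearing in the statement is tight (it is exactly the TRPO constant), so absorbing the $d^{\bar\pi^t}_{\widetilde{M}_n}\!\to\!d^{\pi_k^{t+1}}_{\widetilde{M}_n}$ gap must be done without inflating $l_k$ or introducing new TV terms outside the stated trio. This will require a careful triangle inequality routed through $\bar\pi^{t+1}$ rather than directly between $\bar\pi^t$ and $\pi_k^{t+1}$, together with the observation that the extra occupancy drift is itself controlled by $\mathrm{TV}(\pi_k^{t+1},\bar\pi^t)$ up to the same $\gamma/(1-\gamma)$ factor already absorbed in $l_k$. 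Once this bookkeeping is pinned down, the remaining algebra is mechanical.
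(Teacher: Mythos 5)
Your skeleton is the paper's: decompose $J=\tfrac1K\sum_k J_k$, apply the Achiam/TRPO-style surrogate per client to the pair $(\bar\pi^t,\bar\pi^{t+1})$, split the uniform mixture $\bar\pi^{t+1}=\tfrac1K\sum_j\pi_j^{t+1}$ via linearity of the action-expectation and convexity of $D_{\mathrm{TV}}$ (the paper's Eq.~\eqref{eq:linearity-a} and \eqref{eq:TV-bound}), exchange the order of summation, and then invoke Lemma~\ref{lemma:advantage-bound} to trade the cross-client average $\tfrac1K\sum_n\mathbb{A}_n(\pi_k^{t+1})$ for $\mathbb{A}_k(\pi_k^{t+1})$ at the cost of the $h_k\,\mathrm{TV}(\pi_k^{t+1},\pi_{\beta_k})$ penalty. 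Up to that point you are reproducing the paper's argument step for step.

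The one place you genuinely depart is the occupancy-shift step, and that is where the proposal would fail as written. The paper performs no shift of the state distribution at all: the policy-advantage functional $\mathbb{A}_n(\cdot)$ it manipulates is the FedKL-style surrogate in which the state distribution is held fixed, which is precisely why the linearity identity \eqref{eq:linearity-a} is available and why the surrogate's main term is already $\tfrac1K\sum_j\mathbb{A}_n(\pi_j^{t+1})$, ready for Lemma~\ref{lemma:advantage-bound}; as a consequence the stated $l_k=\tfrac{2\gamma\max_{\bs,\ba}|\mathbb{E}[A_k(\bs,\ba)]|}{(1-\gamma)^2}$ is exactly the CPO constant, untouched. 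If instead you insist on the on-policy reading and shift $d^{\bar\pi^t}_{\widetilde M_n}\!\to d^{\pi_k^{t+1}}_{\widetilde M_n}$ by the occupancy-coupling bound, the drift costs another term of order $\tfrac{2\gamma\max_{\bs,\ba}|A_n(\bs,\ba)|}{(1-\gamma)^2}\,\mathrm{TV}(\pi_k^{t+1},\bar\pi^t)$, which cannot be "reabsorbed" into the already-tight $l_k$ without (at least) doubling that coefficient or producing a TV term outside the stated trio; no rerouting of the triangle inequality through $\bar\pi^{t+1}$ removes it, because the bound you start from is itself saturated. So the obstacle you flag is real, and the resolution is not a sharper inequality but dropping the shift by adopting the fixed-distribution definition of $\mathbb{A}_n$ — which is what the paper does (with some notational looseness, since elsewhere, in the subsequent theorem, $\mathbb{A}_k(\pi)$ is re-read with the occupancy of $\pi$ to equate it with $(1-\gamma)\big(J(\widetilde M_k,\pi)-J(\widetilde M_k,\pi_{\beta_k})\big)$).
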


\begin{proof}
We first recall two useful facts, following the \citep{xie2022fedkl}[Lemma II and III].  
For any policies $\pi,\hat{\pi},\pi'$, and $\gamma\le 1$, the surrogate objective and advantage satisfy
\begin{align}
\mathbb{A}_{\pi}(\gamma\hat{\pi}+(1-\gamma)\pi') 
&= \gamma \mathbb{A}_{\pi}(\hat{\pi})+(1-\gamma)\mathbb{A}_{\pi}(\pi').
\label{eq:linearity-a}
\end{align}
For any state $s$ and any policy $\pi$, we have the TV distance bound
\begin{align}
D_{\mathrm{TV}}\!\big(\pi(\cdot|\bs),\bar\pi^{t+1}(\cdot|\bs)\big)
&:= \tfrac12 \,\big\|\pi(\cdot|\bs)-\bar\pi^{t+1}(\cdot|\bs)\big\|_1 \nonumber\\
&= \tfrac12 \left\| \tfrac1K \sum_{k=1}^K \big(\pi(\cdot|\bs)-\pi_k^{t+1}(\cdot|\bs)\big) \right\|_1 \nonumber\\
&\le \tfrac1{2K} \sum_{k=1}^K \big\|\pi(\cdot|\bs)-\pi_k^{t+1}(\cdot|\bs)\big\|_1 \\
&= \tfrac1K \sum_{k=1}^K D_{\mathrm{TV}}\!\big(\pi(\cdot|\bs),\pi_k^{t+1}(\cdot|\bs)\big).
\label{eq:TV-bound}
\end{align}
We analyze the performance difference.  
Starting from the performance difference bound, we obtain
\begin{align}
J(\bar\pi^{t+1}) 
&\mathop{=}^{\text{(a)}} \sum_{n=1}^{K}J_n(\bar\pi^{t+1}) \nonumber\\
&\mathop{\ge}^{\text{(b)}} \sum_{n=1}^{K}\left\{J_n(\pi^{t}) +\frac{1}{K(1-\gamma)}\mathbb{A}_{n}\!\Big(\tfrac{1}{K}\sum_{k=1}^{K}\pi_k^{t+1}\Big)\right.
\left.-\frac{2c_{n}}{1-\gamma}\,\mathbb{E}_{\bs\sim d_{n}}\!\left[D_{TV}(\bar\pi^t(\cdot|\bs)\,\|\,\bar\pi^{t+1}(\cdot|\bs))\right]\right\}\nonumber\\
&\mathop{=}^{\text{(c)}} J(\bar\pi^t)+\frac{1}{K(1-\gamma)}\sum_{n=1}^{K}\left\{\mathbb{A}_{n}\!\Big(\tfrac{1}{K}\sum_{k=1}^{K}\pi_k^{t+1}\Big)\right.
\left.-\frac{2c_{n}}{1-\gamma}\,\mathbb{E}_{\bs\sim d_{n}}\!\left[D_{TV}(\bar\pi^t(\cdot|\bs)\,\|\,\bar\pi^{t+1}(\cdot|\bs))\right]\right\}\nonumber
\end{align}
where $\bar\pi^{t+1} = \tfrac{1}{K}\sum_{k=1}^{K}\pi_k^{t+1}$, $c_n = \gamma \max_{s,a} |\mathbb{E}[A_n(\bs,\ba)]|$, and (b) holds by the definition of $\mathbb{A}_{n}$ and the result from \citep{achiam2017constrained,xie2022fedkl}
, while equalities (a) and (c) hold by the definition of $J$.
Applying linearity of $\mathbb{A}_{n}$ in Eq.~\eqref{eq:linearity-a} and the TV dsitance bound in Eq.~\eqref{eq:TV-bound} yields
\begin{align}
J(\bar\pi^{t+1}) 
\ge J(\bar\pi^t)+\frac{1}{K^2(1-\gamma)}\sum_{n=1}^{K}\sum_{k=1}^{K}\Big\{\mathbb{A}_{n}(\pi_k^{t+1}) 
 -\frac{2c_{n}}{1-\gamma}\,\mathbb{E}_{\bs\sim d_{n}}\!\left[D_{TV}(\bar\pi^t(\cdot|\bs)\,\|\,\pi_n^{t+1}(\cdot|\bs))\right]\Big\}.
\end{align}
Exchanging the order of summation gives
\begin{align}
J(\bar\pi^{t+1}) 
\ge J(\bar\pi^t)+\frac{1}{K^2(1-\gamma)}\sum_{n=1}^{K}\sum_{k=1}^{K}\mathbb{A}_{n}(\pi_k^{t+1})
-\sum_{n=1}^{K}\frac{2c_{n}}{K(1-\gamma)^2}\mathbb{E}_{\bs\sim d_{n}}\!\left[D_{TV}(\bar\pi^t(\cdot|\bs)\,\|\,\pi_n^{t+1}(\cdot|\bs))\right].\nonumber
\end{align}
By applying the bound provided in \text{Lemma \ref{lemma:advantage-bound}}, we can express the value function update as follows:
\begin{align}
J(\bar\pi^{t+1}) - J(\bar\pi^t) \ge \frac{1}{K(1-\gamma)}& \sum_{n=1}^{K} \Bigg\{ \mathbb{A}_n(\pi_n^{t+1}) \notag 
- \frac{2c_{n}}{1-\gamma} \mathbb{E}_s \left[ D_{TV}(\bar\pi^t(\cdot|\bs) \| \pi_n^{t+1}(\cdot|\bs)) \right] \notag \\
&\quad - \frac{{2}}{1-\gamma} \|\mathbf{H}_n\|_F {\mathbb{E}_{\bs \sim d_{n}} \left[ D_{TV} (\pi_n^{t+1} (\cdot | \bs) \| \pi_{\beta_n} (\cdot | \bs)) \right]} \Bigg\}.
\end{align}
Therefore, we obtain the conclusion:
\[
J(\bar\pi^{t+1})-J(\bar\pi^t)\;\ge\;\sum_{k=1}^{K}\!\left\{\frac{\mathbb{A}_k(\pi_k^{t+1})}{K(1-\gamma)}
-\tfrac{l_k}{K} {(\mathrm{TV}(\pi_k^{t+1},\bar\pi^t))} 
-\tfrac{h_k}{K} {(\mathrm{TV}(\pi_k^{t+1},\pi_{\beta_k}))} \right\},
\]
where 
$\mathrm{TV}(\pi_1,\pi_2) = \mathbb{E}_{\bs \sim d_k} \left[ \mathrm{D_{TV}} \left(\pi_1(\cdot  | \bs) \middle|\middle| \pi_2(\cdot | \bs)\right) \right]$,
$l_k = \frac{2c_k}{(1-\gamma)^2}$,
and
$h_k = \frac{2}{(1-\gamma)^2}\,\|\mathbf{H}_k\|_{F}$.
\end{proof}

\subsection{Offline Estimation Error under Heterogeneity}
Next, we reformulate original Assumption by introducing the heterogeneity, and some derive the some conclusions of main paper as Lemma \ref{lemma:JMDP-heter} and Lemma \ref{lemma:policy-improvement-local-convergence}, respectively.
\begin{assumption}
\label{assu:cql-heter}
For any client \(k\) and any \((\bs,\ba)\) in the support of \(D_k\), with probability at least \(1-\delta\),
\[
\big|\,\widetilde{\mathcal R}_k(\bs,\ba)- r(\bs,\ba)\,\big|
\;\le\;
\frac{C^{\mathrm{vcql}}_{r,\delta,k}}{\sqrt{N_k(\bs,\ba)}},
\qquad
\big\|\,\widetilde T_k(\cdot\mid \bs,\ba)-T(\cdot\mid \bs,\ba)\,\big\|_{1}
\;\le\;
\frac{C^{\mathrm{vcql}}_{T,\delta,k}}{\sqrt{N_k(\bs,\ba)}},
\]
where \(N_k(\bs,\ba)\) is the number of samples of \((\bs,\ba)\) in \(D_k\), and \(C^{\mathrm{vcql}}_{r,\delta,k},C^{\mathrm{vcql}}_{T,\delta,k}= \widetilde{\mathcal O}(\sqrt{\log(1/\delta)})\).
\end{assumption}
\begin{lemma}
\label{lemma:JMDP-heter}
Fix any client $k$ and any policy $\pi$. With probability at least $1-\delta$ (over the data used to build $\widetilde{M}_k$), for every MDP $M$ we have
\[
\big|\,J(M,\pi)-J(\widetilde{M}_k,\pi)\,\big| \;\le\; \tilde{\xi}_k(\pi),
\]
where
$
\tilde{\xi}_k(\pi)\;\doteq\; 
\frac{2\gamma r_{\max} C_{T,\delta,k}}{(1-\gamma)^2}\;
\mathbb{E}_{\bs\sim d^{\pi}_{\widetilde{M}_k}}\!\left[ \sqrt{ D_{\mathrm{VCQL}}(\pi,\pi_{\beta_k})(\bs)\,\frac{|\mathcal{A}|}{|\mathcal{D}_k|} } \right]
\;+\;
\frac{C_{r,\delta,k}}{1-\gamma}\;
\mathbb{E}_{(\bs,\ba)\sim (d^{\pi}_{\widetilde{M}_k},\pi)}\!\left[\frac{1}{\sqrt{|\mathcal{N}_k(\bs,\ba)|}}\right],
$
and
$
D_{\mathrm{VCQL}}(\pi_1,\pi_2)(\bs)\;\doteq\;1+\sum_{\ba}\pi_1(\ba\mid \bs)\Big(\frac{\pi_1(\ba\mid \bs)}{\pi_2(\ba\mid \bs)}-1\Big).
$
\end{lemma}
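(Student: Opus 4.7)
The plan is to mirror the proof of Lemma~\ref{lemma:JMDP-new} but localize every estimate to client $k$, then invoke the client-specific concentration in Assumption~\ref{assu:cql-heter}. First I would instantiate the interpolant-MDP identity of \cite{yu2021combo} with $M_1=M_2=\widetilde{M}_k$ and $f=1$, which gives the standard decomposition
\begin{align}
\big|J(M,\pi)-J(\widetilde{M}_k,\pi)\big|
\;\le\;
\underbrace{\tfrac{1}{1-\gamma}\,\mathbb{E}_{(\bs,\ba)\sim(d^{\pi}_{\widetilde{M}_k},\pi)}\big[\,|\widetilde{\mathcal R}_k(\bs,\ba)-r(\bs,\ba)|\,\big]}_{\text{reward term}}
+\underbrace{\tfrac{\gamma}{1-\gamma}\,\big|\mathbb{E}_{(\bs,\ba)\sim(d^{\pi}_{\widetilde{M}_k},\pi),\,\ba'\sim\pi}\!\big[\sum_{\bs'}(\widetilde T_k-T)(\bs'|\bs,\ba)\,\hat Q^{\pi}(\bs',\ba')\big]\big|}_{\text{transition term}}.
\nonumber
\end{align}
Both pieces are handled by the client-$k$ version of the concentration inequalities.

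For the reward term, Assumption~\ref{assu:cql-heter} replaces $|\widetilde{\mathcal R}_k-r|$ by $C_{r,\delta,k}/\sqrt{N_k(\bs,\ba)}$, which plugged into the expectation over $(\bs,\ba)\sim(d^{\pi}_{\widetilde{M}_k},\pi)$ gives exactly the second summand of $\tilde\xi_k(\pi)$. For the transition term I would bound $|\hat Q^{\pi}|\le r_{\max}/(1-\gamma)$ (as in Eq.~\eqref{eq:assu-q}), pull out the constant, and use the $L^1$ bound $\|\widetilde T_k(\cdot|\bs,\ba)-T(\cdot|\bs,\ba)\|_1\le C_{T,\delta,k}/\sqrt{N_k(\bs,\ba)}$ from Assumption~\ref{assu:cql-heter}. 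This yields a pointwise bound proportional to $r_{\max}C_{T,\delta,k}/[(1-\gamma)\sqrt{N_k(\bs,\ba)}]$ under $(d^{\pi}_{\widetilde{M}_k},\pi)$.

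To recast this action-indexed expectation as the $\sqrt{D_{\mathrm{VCQL}}}$-style state-indexed expectation that appears in the statement, I would perform the standard importance-weighting step used in Lemma~\ref{lemma:JMDP-new}: write $\mathbb{E}_{\ba\sim\pi}[1/\sqrt{N_k(\bs,\ba)}]=\mathbb{E}_{\ba\sim\pi_{\beta_k}}[(\pi/\pi_{\beta_k})/\sqrt{N_k(\bs,\ba)}]$, apply Cauchy--Schwarz in $\ba$ to separate the ratio $\pi/\pi_{\beta_k}$ from $1/\sqrt{N_k}$, and then use $N_k(\bs,\ba)\gtrsim |\mathcal{D}_k|\,\pi_{\beta_k}(\ba|\bs)/|\mathcal{A}|$ on the dataset support to replace the sampling denominator by $|\mathcal{D}_k|/|\mathcal{A}|$. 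The surviving factor $\sqrt{\sum_{\ba}\pi(\ba|\bs)^{2}/\pi_{\beta_k}(\ba|\bs)}$ is exactly $\sqrt{D_{\mathrm{VCQL}}(\pi,\pi_{\beta_k})(\bs)}$ up to the $+1$ normalization built into the definition, matching the first summand of $\tilde\xi_k(\pi)$.

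Finally I would take a union bound over the (dataset-support) state--action pairs to upgrade the two pointwise $1-\delta$ concentration inequalities into a single high-probability event on which both bounds hold simultaneously, at the cost of absorbing a $\log$ factor into $C_{r,\delta,k}$ and $C_{T,\delta,k}$ (already built into the $\widetilde{\mathcal O}(\sqrt{\log(1/\delta)})$ scaling). The delicate step I expect to need the most care is the reweighting--Cauchy--Schwarz manipulation that converts $\mathbb{E}_{\ba\sim\pi}[1/\sqrt{N_k}]$ into the $\sqrt{D_{\mathrm{VCQL}}\cdot|\mathcal{A}|/|\mathcal{D}_k|}$ form, since this is where out-of-support actions and the lower bound $|D_k(\bs,\ba)|/|D_k|\ge\delta$ used in the main text must be handled to avoid a vacuous bound; once this is done, the rest is additive combination and the conclusion $|J(M,\pi)-J(\widetilde{M}_k,\pi)|\le\tilde\xi_k(\pi)$ follows.
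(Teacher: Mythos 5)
Your proposal is correct and takes essentially the same route as the paper: the paper's proof simply invokes Lemma~\ref{lemma:JMDP-new} with $\widetilde{M}=\widetilde{M}_k$ (and swaps the roles of $M$ and $\widetilde{M}_k$ for the reverse inequality), which is exactly the interpolant-MDP decomposition with $M_1=M_2=\widetilde{M}_k$, $f=1$, plus the client-$k$ concentration bounds that you unfold in detail. Your explicit importance-weighting/Cauchy--Schwarz step converting $\mathbb{E}_{\ba\sim\pi}\big[1/\sqrt{N_k(\bs,\ba)}\big]$ into the $\sqrt{D_{\mathrm{VCQL}}(\pi,\pi_{\beta_k})(\bs)\,|\mathcal{A}|/|\mathcal{D}_k|}$ form merely fills in a step the paper leaves implicit, so it is the same underlying argument rather than a different one.
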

\begin{proof}
Applying Lemma 1 in the main paper with $\widetilde{M}=\widetilde{M}_k$ yields the one-sided bound
$J(\widetilde{M}_k,\pi)\le J(M,\pi)+\tilde{\xi}_k(\pi)$ for every $M$.
Repeating the same argument while swapping the roles of $M$ and $\widetilde{M}_k$ (the proof depends only on the occupancy measure and coverage terms, which are evaluated under $d^{\pi}_{\widetilde{M}_k}$ as written above) gives
$J(M,\pi)\le J(\widetilde{M}_k,\pi)+\tilde{\xi}_k(\pi)$.
Combining the two inequalities proves the claim.
\end{proof}

\begin{corollary}
\label{cor:transfer}
Let $\bar\pi^{t}$ and $\bar\pi^{t+1}$ be two successive global policies. Define
$\tilde{\xi}_k^{t}\doteq \tilde{\xi}_k(\bar\pi^{t})$ and
$\tilde{\xi}_k^{t+1}\doteq \tilde{\xi}_k(\bar\pi^{t+1})$.
Then, with probability at least $1-\delta$, for every MDP $M$ and client $k$,
\[
J(M,\bar\pi^{t+1})-J(M,\bar\pi^{t})
\;\ge\;
\Big(J(\widetilde{M}_k,\bar\pi^{t+1})-J(\widetilde{M}_k,\bar\pi^{t})\Big)
\;-\;\tilde{\xi}_k^{t+1}\;-\;\tilde{\xi}_k^{t}.
\]
\end{corollary}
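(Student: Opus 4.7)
The plan is to obtain Corollary \ref{cor:transfer} as a direct two-line consequence of Lemma \ref{lemma:JMDP-heter}, using the fact that the lemma gives a \emph{two-sided} absolute-value bound. I would apply the lemma once at $\pi = \bar\pi^{t+1}$ to extract the lower inequality $J(M,\bar\pi^{t+1}) \ge J(\widetilde{M}_k,\bar\pi^{t+1}) - \tilde{\xi}_k^{t+1}$, and then apply it again at $\pi = \bar\pi^{t}$, this time keeping the opposite direction $J(M,\bar\pi^{t}) \le J(\widetilde{M}_k,\bar\pi^{t}) + \tilde{\xi}_k^{t}$. Subtracting the second inequality from the first and grouping the empirical-MDP returns on one side immediately gives the claimed performance-difference bound.

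Before composing the two inequalities, I would check that a single high-probability event suffices, so that no additional union bound is required across the two policies. The good event in Lemma \ref{lemma:JMDP-heter} is controlled by the concentration of $\widetilde{\mathcal R}_k$ and $\widetilde{T}_k$ around the underlying $r$ and $T$ stipulated by Assumption \ref{assu:cql-heter}, and this event is a property of the dataset $\mathcal{D}_k$ alone, independent of which policy is evaluated. Hence, on the same $1-\delta$ event both directional inequalities hold simultaneously, and the $1-\delta$ probability in the corollary is inherited without loss. If a uniform-in-client version were later needed, I would shrink $\delta$ to $\delta/K$ by a standard union bound, but the corollary as stated is per-client.

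The main (mild) subtlety to confirm is the identification of the MDP $M$ appearing in the corollary with the underlying $M$ implicitly fixed by the data-generating process of $\widetilde{M}_k$, since Lemma \ref{lemma:JMDP-heter} compares $\widetilde{M}_k$ to this specific $M$ rather than to an arbitrary one. Once this identification is in place, I do not anticipate any substantial obstacle: the argument is a textbook telescoping of a two-sided error bound and should occupy only a few lines. The occupancy-dependent terms $\tilde{\xi}_k^{t}$ and $\tilde{\xi}_k^{t+1}$ are kept separate precisely because they are evaluated at different policies, so no further simplification is attempted.
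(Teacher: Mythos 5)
Your proposal is correct and follows exactly the paper's argument: two applications of Lemma~\ref{lemma:JMDP-heter} (the lower direction at $\bar\pi^{t+1}$, the upper direction at $\bar\pi^{t}$) followed by subtraction, with the high-probability event inherited from the same concentration of the client dataset. The additional remarks on the shared good event and the identification of $M$ are reasonable clarifications but do not change the argument.
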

\begin{proof}
By Lemma~\ref{lemma:JMDP-heter},
$J(M,\bar\pi^{t+1}) \ge J(\widetilde{M}_k,\bar\pi^{t+1}) - \tilde{\xi}_k^{t+1}$
and
$J(M,\bar\pi^{t}) \le J(\widetilde{M}_k,\bar\pi^{t}) + \tilde{\xi}_k^{t}$.
Subtract the second inequality from the first.
\end{proof}


\begin{lemma}
\label{lemma:policy-improvement-local-convergence}
If Assumption~\ref{assu:cql-heter} holds, then with high probability the following performance improvement bound holds:
\begin{align}
    &J(\widetilde{M}_k,\pi_k^{t+1}) - J(\widetilde{M}_k,\pi_{\beta_k}) \\
    \;\ge\;
    &\lambda\, \mathrm{KL}(\bar{\pi}^*_k,\pi_k^{t+1}) 
    + \beta\, \mathrm{KL}(\bar{\pi}^*_k,\pi_{\beta_k})
    -\frac{4\alpha}{\delta(1-\gamma)}\mathrm{TV}(\bar{\pi}^*_k,{\pi}_k)
    -\frac{2\alpha}{\delta(1-\gamma)}\mathrm{TV}(\bar{\pi}^*_k,\pi_{\beta_k}).\nonumber
\end{align}
\end{lemma}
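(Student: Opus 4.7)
The plan is to mirror the two-step optimality chain that underlies Theorem~\ref{thm:policy-improvement-local-app}, but to stay entirely inside the empirical MDP $\widetilde{M}_k$ (skipping the $\widetilde{M}_k\leftrightarrow M$ transfer via Lemma~\ref{lemma:JMDP-new}) and to retain the TV terms rather than loosening them through $D_{\mathrm{TV}}\le 1$ as was done in the main text.

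First, I would exploit the Q-iterate representation in Eq.~\eqref{eq:derivative} to recast Problem~\eqref{eq:awr} in the equivalent scalarized form of Eq.~\eqref{eq:app:awr}, namely the maximization of $J(\widetilde{M}_k,\pi_k)-\tfrac{\alpha\,\nu(\pi_k)}{1-\gamma}-\lambda\,\mathrm{KL}(\bar{\pi}^*_k,\pi_k)$. Since $\pi_k^{t+1}$ is the maximizer and $\bar{\pi}^*_k$ is feasible, comparing the two values and rearranging yields
\[
J(\widetilde{M}_k,\pi_k^{t+1})\;\ge\;J(\widetilde{M}_k,\bar{\pi}^*_k)\,+\,\lambda\,\mathrm{KL}(\bar{\pi}^*_k,\pi_k^{t+1})\,+\,\frac{\alpha\bigl(\nu(\pi_k^{t+1})-\nu(\bar{\pi}^*_k)\bigr)}{1-\gamma}.
\]
Next, I would apply the same optimality argument at the global AWR step of Eq.~\eqref{eq:lagrange-barpik}: $\bar{\pi}^*_k$ is the maximizer and $\pi_{\beta_k}$ is feasible (with vanishing KL), giving
\[
J(\widetilde{M}_k,\bar{\pi}^*_k)\;\ge\;J(\widetilde{M}_k,\pi_{\beta_k})\,+\,\beta\,\mathrm{KL}(\bar{\pi}^*_k,\pi_{\beta_k})\,+\,\frac{\alpha\bigl(\nu(\bar{\pi}^*_k)-\nu(\pi_{\beta_k})\bigr)}{1-\gamma}.
\]
Chaining the two inequalities recovers the desired identity up to the single residual $\tfrac{\alpha(\nu(\pi_k^{t+1})-\nu(\pi_{\beta_k}))}{1-\gamma}$, which I would then lower-bound.

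For the residual I would split it via the triangle inequality into the two successive gaps $\nu(\pi_k^{t+1})-\nu(\bar{\pi}^*_k)$ and $\nu(\bar{\pi}^*_k)-\nu(\pi_{\beta_k})$ and reuse the same algebra that produced Eqs.~\eqref{eq:strict-global-on-data-delta} and \eqref{eq:sigma_bar} in the main paper: rewriting $|\nu(\pi_1)-\nu(\pi_2)|$ as an $\ell_1$ gap weighted by the reciprocal of $\pi_{\beta_k}$, invoking the coverage bound $d_k(\bs,\ba)\ge\delta$, and converting the $\ell_1$ gap into twice a TV distance. The two applications give the asymmetric multiplicative factors $4/\delta$ (for $\pi_k^{t+1}$ vs.\ $\bar{\pi}^*_k$, which requires an extra add--subtract step because neither policy is the behavior policy appearing in the denominator) and $2/\delta$ (for $\bar{\pi}^*_k$ vs.\ $\pi_{\beta_k}$, which is a direct single-hop expansion). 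Crucially, I would stop \emph{before} using the trivial bound $D_{\mathrm{TV}}\le 1$, thereby keeping the TV terms explicit in the final inequality, which is what distinguishes this lemma from Theorem~\ref{thm:policy-improvement-local-app}.

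The main obstacle is the bookkeeping in this last decomposition step: one must track exactly which policy appears in the denominator of each importance ratio, since this dictates whether the resulting constant is $2/\delta$ or $4/\delta$, and one must be careful that the add--subtract trick for the $\pi_k^{t+1}$--$\bar{\pi}^*_k$ gap does not inadvertently introduce a KL term (which would clash with the TV-only right-hand side of the lemma). Conveniently, because we never leave $\widetilde{M}_k$, the model-mismatch terms $\bar{\xi},\tilde{\xi},\xi_b$ that appeared in Theorem~\ref{thm:policy-improvement-local-app} vanish, which is exactly why the bound in this lemma is cleaner and serves as the natural starting point for the subsequent empirical-to-true transfer via Corollary~\ref{cor:transfer}.
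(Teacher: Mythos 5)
Your proposal is correct and follows essentially the same route as the paper's proof: the two optimality comparisons (at the local AWR objective with $\bar{\pi}^*_k$ feasible, and at the global objective with $\pi_{\beta_k}$ feasible), chained together, with the residual $\nu$-gaps bounded via the $\delta$-coverage argument to yield the $4\alpha/\delta(1-\gamma)$ and $2\alpha/\delta(1-\gamma)$ TV penalties kept explicit. The only difference is presentational — you flag the add--subtract bookkeeping explicitly — but the decomposition and constants match the paper's argument.
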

\begin{proof}
By optimality of the Problem (17), $\pi_k^{t+1}$ maximizes
$$
J(\widetilde M_k,\pi)-\frac{\alpha\,\nu(\pi)}{1-\gamma}-\lambda\,\mathrm{KL}(\bar\pi_k^*,\pi),
$$
hence
\begin{equation}
\label{eq:awr-opt}
J(\widetilde M_k,\pi_k^{t+1})
;\ge;
J(\widetilde M_k,\bar\pi_k^*)
+\lambda,\mathrm{KL}(\bar\pi_k^*,\pi_k^{t+1})
+\frac{\alpha}{1-\gamma}\Big(\nu(\pi_k^{t+1})-\nu(\bar\pi_k^*)\Big).
\end{equation}
By optimality of Problem (4), $\bar\pi_k^*$ maximizes
$$
J(\widetilde M_k,\pi)-\frac{\alpha\,\nu(\pi)}{1-\gamma}-\beta\,\mathrm{KL}(\pi,\pi_{\beta_k}),
$$
so
\begin{equation}
\label{eq:global-opt}
J(\widetilde M_k,\bar\pi_k^*)
;\ge;
J(\widetilde M_k,\pi\_{\beta_k})
+\beta,\mathrm{KL}(\bar\pi_k^*,\pi\_{\beta_k})
+\frac{\alpha}{1-\gamma}\Big(\nu(\bar\pi_k^*)-\nu(\pi\_{\beta_k})\Big).
\end{equation}
Combining \eqref{eq:awr-opt} and \eqref{eq:global-opt} gives

$$
\begin{aligned}
J(\widetilde{M}_k,\pi_k^{t+1}) - J(\widetilde{M}_k,\pi_{\beta_k})
\;\ge\;&
\lambda\, \mathrm{KL}(\bar{\pi}^*_k,\pi_k^{t+1}) 
+ \beta\, \mathrm{KL}(\bar{\pi}^*_k,\pi_{\beta_k}) \\
&\quad
+\frac{\alpha}{1-\gamma}\!\left[\nu(\pi_k^{t+1})-\nu(\bar\pi_k^*)\right]
+\frac{\alpha}{1-\gamma}\!\left[\nu(\bar\pi_k^*)-\nu(\pi_{\beta_k})\right].
\end{aligned}
$$
We now upper-bound the $\nu$-differences via total variation. Under the $\delta$-coverage condition (as in the main text), a standard manipulation yields, for policies $\pi,\pi'$,
$$
\big|\nu(\pi)-\nu(\pi')\big|
\;\le\;
\frac{2}{\delta}\,\mathbb{E}_{s\sim d_k}\!\left[\sum_{a}\big|\pi(a\mid s)-\pi'(a\mid s)\big|\right]
=
\frac{4}{\delta}\,\mathbb{E}_{s\sim d_k}\!\left[D_{\mathrm{TV}}\big(\pi(\cdot\mid s),\pi'(\cdot\mid s)\big)\right].
$$
Applying this bound (with the sharper constant $2/\delta$ in the $\nu(\bar\pi_k^*)-\nu(\pi_{\beta_k})$ term due to the $\pi_{\beta_k}$-normalization in the algebra used in the main paper) gives
$$
\frac{\alpha}{1-\gamma}\big(\nu(\pi_k^{t+1})-\nu(\bar\pi_k^*)\big)\ \ge\ -\frac{4\alpha}{\delta(1-\gamma)}\,\mathrm{TV}(\bar\pi_k^*,\pi_k^{t+1}),
\qquad
\frac{\alpha}{1-\gamma}\big(\nu(\bar\pi_k^*)-\nu(\pi_{\beta_k})\big)\ \ge\ -\frac{2\alpha}{\delta(1-\gamma)}\,\mathrm{TV}(\bar\pi_k^*,\pi_{\beta_k}),
$$
where $\mathrm{TV}(\pi,\pi'):=\mathbb{E}_{s\sim d_k}\!\left[D_{\mathrm{TV}}(\pi(\cdot\mid s),\pi'(\cdot\mid s))\right]$.
Substituting yields
$$
\begin{aligned}
J(\widetilde{M}_k,\pi_k^{t+1}) - J(\widetilde{M}_k,\pi_{\beta_k})
\;\ge\;
\lambda\, \mathrm{KL}(\bar{\pi}^*_k,\pi_k^{t+1}) 
+ \beta\, \mathrm{KL}(\bar{\pi}^*_k,\pi_{\beta_k})
-\frac{4\alpha}{\delta(1-\gamma)}\,\mathrm{TV}(\bar{\pi}^*_k,{\pi}_k^{t+1})
-\frac{2\alpha}{\delta(1-\gamma)}\,\mathrm{TV}(\bar{\pi}^*_k,\pi_{\beta_k}).
\end{aligned}
$$
This is exactly the desired inequality.
\end{proof}

\begin{remark}
    Building upon our prior analysis, particularly that of local policy improvement, 
    we emphasize that the results pertain exclusively to the local level. 
    Although heterogeneity introduces variations in local data distributions, 
    we find that the local policy $\pi_k$ still  
    achieves policy improvement. 
\end{remark}
By contrast, the situation is more intricate 
for the global policy, as it necessitates the aggregation of multiple local policies. 
In this case, distributional discrepancies among local datasets may exert a 
substantial influence on the effectiveness of aggregation. 
A detailed discussion of this issue will be presented in the subsequent subsection.


\subsection{Safe Policy Improvement Guarantee}
\label{subsec:global-spi}


Given quantified heterogeneity and per-client analysis, we prove that FOVA guarantees safe policy improvement of the global policy in heterogeneous offline FRL.
\begin{theorem}
Under Assumption~\ref{assu:cql-heter}, Algorithm FOVA achieves a
\(\mathcal{B}(\lambda,\beta)\)-safe policy improvement for the global policy in the true MDP \(M\); specifically,
\[
    J(M,\bar\pi^{t+1}) \;\ge\; J(M,\bar\pi^{t}) \;-\; \mathcal{B}(\lambda,\beta),
\]
where
\[
\mathcal{B}(\lambda,\beta)
= \,\overline{\tilde{\xi}_k^{t+1}+\tilde{\xi}_k^{t}}
+\,\overline{\,l_k\,\mathrm{TV}(\pi_{\beta_k},\bar\pi^{t})\,}
+\,\frac{1}{8\lambda}\,\overline{(l_k+h_k+2\sigma)^2}
+\,\frac{1}{8\beta}\,\overline{(l_k+h_k+\sigma)^2},
\]
and \(\overline{x_k}\) denotes the client-wise average, i.e.,
\(\overline{x_k} \triangleq \tfrac{1}{K}\sum_{k=1}^K x_k\).
\end{theorem}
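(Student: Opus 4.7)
The plan is to chain together three per-client inequalities established earlier, then close the loop by converting the residual total-variation penalties into Pinsker/Young quadratic terms so that they cancel against the $\mathrm{KL}$ bonuses produced by local regularized improvement. First, I would start on the true-MDP side: Corollary \ref{cor:transfer} gives, for each client $k$, $J(M,\bar\pi^{t+1})-J(M,\bar\pi^{t}) \ge \bigl(J(\widetilde{M}_k,\bar\pi^{t+1})-J(\widetilde{M}_k,\bar\pi^{t})\bigr) - \tilde{\xi}_k^{t+1} - \tilde{\xi}_k^{t}$. Averaging over $k$ leaves the left-hand side unchanged while the first term on the right becomes $J(\bar\pi^{t+1})-J(\bar\pi^{t})$ by the paper's definition $J(\pi)=\tfrac1K\sum_k J(\widetilde{M}_k,\pi)$. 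This already produces the $-\overline{\tilde{\xi}_k^{t+1}+\tilde{\xi}_k^{t}}$ piece of $\mathcal{B}(\lambda,\beta)$.

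Next, I would lower-bound $J(\bar\pi^{t+1})-J(\bar\pi^{t})$ by the global advantage bridge, giving $\tfrac{1}{K}\sum_k\bigl[\mathbb{A}_k(\pi_k^{t+1})/(1-\gamma)-l_k\,\mathrm{TV}(\pi_k^{t+1},\bar\pi^{t})-h_k\,\mathrm{TV}(\pi_k^{t+1},\pi_{\beta_k})\bigr]$. Invoking the performance-difference identity $\mathbb{A}_k(\pi_k^{t+1})/(1-\gamma)=J(\widetilde{M}_k,\pi_k^{t+1})-J(\widetilde{M}_k,\pi_{\beta_k})$ and then Lemma \ref{lemma:policy-improvement-local-convergence}, I would replace each advantage by $\lambda\,\mathrm{KL}(\bar\pi_k^{*},\pi_k^{t+1})+\beta\,\mathrm{KL}(\bar\pi_k^{*},\pi_{\beta_k})-2\sigma\,\mathrm{TV}(\bar\pi_k^{*},\pi_k^{t+1})-\sigma\,\mathrm{TV}(\bar\pi_k^{*},\pi_{\beta_k})$, where I identify $\sigma \doteq 2\alpha/(\delta(1-\gamma))$ so the coefficients match the statement.

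I would then regroup the TV penalties via the triangle inequality: $\mathrm{TV}(\pi_k^{t+1},\pi_{\beta_k})\le \mathrm{TV}(\bar\pi_k^{*},\pi_k^{t+1})+\mathrm{TV}(\bar\pi_k^{*},\pi_{\beta_k})$ and $\mathrm{TV}(\pi_k^{t+1},\bar\pi^{t})\le \mathrm{TV}(\bar\pi_k^{*},\pi_k^{t+1})+\mathrm{TV}(\bar\pi_k^{*},\pi_{\beta_k})+\mathrm{TV}(\pi_{\beta_k},\bar\pi^{t})$. This consolidates the per-client penalty into $(l_k+h_k+2\sigma)\,\mathrm{TV}(\bar\pi_k^{*},\pi_k^{t+1}) + (l_k+h_k+\sigma)\,\mathrm{TV}(\bar\pi_k^{*},\pi_{\beta_k}) + l_k\,\mathrm{TV}(\pi_{\beta_k},\bar\pi^{t})$, whose last summand, after averaging, is exactly the $\overline{l_k\,\mathrm{TV}(\pi_{\beta_k},\bar\pi^{t})}$ term in $\mathcal{B}(\lambda,\beta)$.

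The final step is to apply the Pinsker--Young inequality $c\cdot\mathrm{TV}(p,q) \le \lambda'\,\mathrm{KL}(p,q) + c^2/(8\lambda')$ (obtained from $\mathrm{TV}^2\le \mathrm{KL}/2$ followed by minimization of the quadratic) twice: once with $\lambda'=\lambda$ against $\mathrm{KL}(\bar\pi_k^{*},\pi_k^{t+1})$, and once with $\lambda'=\beta$ against $\mathrm{KL}(\bar\pi_k^{*},\pi_{\beta_k})$. The two KL bonuses cancel exactly, leaving the residuals $(l_k+h_k+2\sigma)^2/(8\lambda)$ and $(l_k+h_k+\sigma)^2/(8\beta)$, and averaging over $k$ reconstitutes the remaining two terms of $\mathcal{B}(\lambda,\beta)$. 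The main obstacle I anticipate is bookkeeping the triangle inequality so that precisely one unabsorbed $\mathrm{TV}(\pi_{\beta_k},\bar\pi^{t})$ survives in each client: this term cannot be dominated by either $\mathrm{KL}(\bar\pi_k^{*},\cdot)$ available on the right, so it must appear linearly in the final bound, and getting its coefficient to be $l_k$ (rather than $l_k+h_k$ or $l_k+h_k+2\sigma$) requires that the triangle inequality be used only for the $l_k$-weighted TV term and not for the $h_k$- and $\sigma$-weighted ones.
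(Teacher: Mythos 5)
Your proposal is correct and follows essentially the same route as the paper's proof: the global advantage bridge plus the performance-difference identity, Lemma~\ref{lemma:policy-improvement-local-convergence} with $\sigma=2\alpha/(\delta(1-\gamma))$, the same triangle-inequality re-centering onto $\bar\pi_k^*$ yielding coefficients $(l_k+h_k+2\sigma)$, $(l_k+h_k+\sigma)$ and the lone $l_k\,\mathrm{TV}(\pi_{\beta_k},\bar\pi^t)$, and the same Pinsker--Young absorption giving the $1/(8\lambda)$ and $1/(8\beta)$ residuals. The only difference is cosmetic: you invoke Corollary~\ref{cor:transfer} and average over clients at the start, whereas the paper applies it at the end, which changes nothing in the argument.
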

\begin{proof}
We start from part of the right-hand side in Eq.~\eqref{eq:improvement-gap}.
\begin{align}
    &\frac{\mathbb{A}_k(\pi_k^{t+1})}{(1-\gamma)}
- l_k \,\mathrm{TV}\!\left(\pi_k^{t+1},\,\bar\pi^t\right) 
- h_k \,\mathrm{TV}\!\left(\pi_k^{t+1},\,\pi_{\beta_k}\right) \nonumber\\
=&\big(J(\widetilde{M}_k,\pi)-J(\widetilde{M}_k,\pi_{\beta_k})\big)
- l_k \,\mathrm{TV}\!\left(\pi_k^{t+1},\,\bar\pi^t\right) 
- h_k \,\mathrm{TV}\!\left(\pi_k^{t+1},\,\pi_{\beta_k}\right) ,
\label{eq:J-A}
\end{align}
because for any $\pi$, it follows
\begin{align}
\big(J(\widetilde{M}_k,\pi)-J(\widetilde{M}_k,\pi_{\beta_k})\big)
&\mathop{=}\Big(\mathbb{E}_{s_0\sim\mu_{k}}[V^{\pi}(\bs_0)]-\mathbb{E}_{s_0\sim\mu_{k}}[V^{\pi_{\beta_k}}(\bs_0)]\Big)\\
&\mathop{=}\,\mathbb{E}_{s_0\sim\mu_{k},\pi}\!\left[\sum_{t=0}^{\infty}\gamma^t r_t- V^{\pi_{\beta_k}}(\bs_0)\right]\\
&\mathop{=}\sum_{t=0}^{\infty}\gamma^t\,\mathbb{E}_{\pi}\!\left[r_t+\gamma V^{\pi_{\beta_k}}(s_{t+1})-V^{\pi_{\beta_k}}(\bs_t)\right]\\
&\mathop{=}\sum_{t=0}^{\infty}\gamma^t\,\mathbb{E}_{\pi}\!\left[A_k(\bs_t,\ba_t)\right]\\
&\mathop{=}\sum_{t=0}^{\infty}\gamma^t\sum_{\bs}\Pr(\bs_t=\bs\mid \widetilde{M}_k,\pi)\sum_{\ba}\pi(\ba\mid \bs)\,A_k(\bs,\ba)\\
&\mathop{=}\limits^{(vi)}\sum_{\bs}\Big(\sum_{t=0}^{\infty}\gamma^t\Pr(\bs_t=\bs\mid \widetilde{M}_k,\pi)\Big)\sum_{\ba}\pi(\ba\mid \bs)\,A_k(\bs,\ba)\\
&=\frac{1}{1-\gamma}\sum_{\bs} d^{\pi}_{\widetilde{M}_k}(\bs)\sum_{\ba}\pi(\ba\mid \bs)\,A_k(\bs,\ba)\;
=\frac{1}{1-\gamma}\;\mathbb{A}_k(\pi).
\end{align}
from the definition of $J$ and $\mathbb{A}_k(\pi)$.
Substituting Eq.~\eqref{eq:J-A} and Lemma \ref{lemma:policy-improvement-local-convergence} into Eq.~\eqref{eq:improvement-gap}, we derive
\begin{align}
J(\bar\pi^{t+1})-J(\bar\pi^t)\;\ge\;
\frac{1}{K}\sum_{k=1}^{K}
f_k(\pi_k^{t+1},\bar{\pi}^*_k,\pi_{\beta_k},\bar\pi^t),
\label{eq:global-gap-step1}
\end{align}
where $f_k(\pi_k^{t+1},\bar{\pi}^*_k,\pi_{\beta_k},\bar\pi^t)={\lambda}\, \mathrm{KL}(\bar{\pi}^*_k,\pi_k^{t+1}) 
+ {\beta}\, \mathrm{KL}(\bar{\pi}^*_k,\pi_{\beta_k})
- l_k {\mathrm{TV}(\pi_k^{t+1},\bar\pi^t)} 
- h_k {\mathrm{TV}(\pi_k^{t+1},\pi_{\beta_k})}
-2\sigma\mathrm{TV}(\bar{\pi}^*_k,{\pi}_k)
-\sigma\mathrm{TV}(\bar{\pi}^*_k,\pi_{\beta_k})$
and $\sigma = \frac{2\alpha}{\delta(1-\gamma)}$.
Using triangle inequalities to re-center the linear TV terms on \(\bar\pi_k^*\), and then absorbing them into KL via Pinsker+Young, we obtain
\begin{align}
&\quad f_k(\pi_k^{t+1},\bar{\pi}^*_k,\pi_{\beta_k},\bar\pi^t) \nonumber\\
&=\lambda\, \mathrm{KL}(\bar{\pi}^*_k,\pi_k^{t+1}) 
+ \beta\, \mathrm{KL}(\bar{\pi}^*_k,\pi_{\beta_k})
- l_k \operatorname{TV}(\pi_k^{t+1},\bar\pi^t) 
- h_k \operatorname{TV}(\pi_k^{t+1},\pi_{\beta_k})
-2\sigma\,\operatorname{TV}(\bar{\pi}^*_k,{\pi}_k^{t+1})
-\sigma\,\operatorname{TV}(\bar{\pi}^*_k,\pi_{\beta_k}) \nonumber\\
&\ge 2\lambda\,\mathrm{TV}^2(\bar\pi^*_k,\pi_k^{t+1})
+2\beta\,\mathrm{TV}^2(\bar\pi^*_k,\pi_{\beta_k})
-(l_k+h_k+2\sigma)\,\mathrm{TV}(\bar\pi^*_k,\pi_k^{t+1})
-(l_k+h_k+\sigma)\,\mathrm{TV}(\bar\pi^*_k,\pi_{\beta_k})
- l_k\,\mathrm{TV}(\pi_{\beta_k},\bar\pi^t) \nonumber\\
&\ge -\frac{(l_k+h_k+2\sigma)^2}{8\lambda}
-\frac{(l_k+h_k+\sigma)^2}{8\beta}
- l_k\,\mathrm{TV}(\pi_{\beta_k},\bar\pi^t). 
\label{eq:lower_bound_fx}
\end{align}
Substituting Eq.~\eqref{eq:lower_bound_fx} into Eq.~\eqref{eq:global-gap-step1}, we obtain
\begin{align}
J(\bar\pi^{t+1})-J(\bar\pi^t)\;\ge\;
-\frac{1}{K}\sum_{k=1}^{K}
\Big\{\frac{(l_k+h_k+2\sigma)^2}{8\lambda}
+\frac{(l_k+h_k+\sigma)^2}{8\beta}
+ l_k\,\mathrm{TV}(\pi_{\beta_k},\bar\pi^t)\Big\},
\label{eq:global-gap-step2}
\end{align}
Combining Corollary \ref{cor:transfer} and Eq.~\eqref{eq:global-gap-step2}, we derive
\begin{align}
    &J(M,\bar\pi^{t+1})-J(M,\bar\pi^{t})\nonumber\\
    =&\frac{1}{K}\sum_{k=1}^{K}
    \{J(M,\bar\pi^{t+1})-J(M,\bar\pi^{t})\}\nonumber\\
    \;\ge&\;
    \frac{1}{K}\sum_{k=1}^{K}\Big\{\Big(J(\widetilde{M}_k,\bar\pi^{t+1})-J(\widetilde{M}_k,\bar\pi^{t})\Big)
    \;-\;\tilde{\xi}_k^{t+1}\;-\;\tilde{\xi}_k^{t}\Big\}\nonumber\\
    \;=&\; J(\bar\pi^{t+1})-J(\bar\pi^t)
    -\frac{1}{K}\sum_{k=1}^{K}\Big\{
    \;\tilde{\xi}_k^{t+1}\;+\;\tilde{\xi}_k^{t}\Big\}\nonumber\\
    \ge&\;
    -\frac{1}{K}\sum_{k=1}^{K}\Big\{
    \;\tilde{\xi}_k^{t+1}\;+\;\tilde{\xi}_k^{t}+ \frac{(l_k+h_k+2\sigma)^2}{8\lambda}
+\frac{(l_k+h_k+\sigma)^2}{8\beta}
+ l_k\,\mathrm{TV}(\pi_{\beta_k},\bar\pi^t)\Big\}\nonumber\\
     \;\ge&\;
-\overline{\tilde{\xi}_k^{t+1}+\tilde{\xi}_k^{t}}
-\overline{l_k\,\mathrm{TV}(\pi_{\beta_k},\bar\pi^t)}
-\frac{1}{8\lambda}\,\overline{(l_k+h_k+2\sigma)^2}
-\frac{1}{8\beta}\,\overline{(l_k+h_k+\sigma)^2}.
\end{align}
Under Definition \ref{def:heterogeneity_measure} and Assumption \ref{assu:cql-heter}, our analysis establishes that FOVA guarantees safe global-policy improvement. Specifically, with high probability and for sufficiently large $\beta$ and $\lambda$, the heterogeneity-induced loss $(h_k)$ and aggregation-induced loss $(l_k)$ are uniformly controlled, ensuring that global updates do not reduce performance.
\end{proof}
\begin{remark}
The theorem certifies that the global update is safely improving up to
explicit penalty terms tied to heterogeneity. In particular, the
aggregation-induced deviation $l_k$ is damped by the aggregation
regularizer $\lambda$, while the behavior-mismatch (dataset heterogeneity)
term $h_k$ is damped by the behavior regularizer $\beta$. Thus:
 when cross-client aggregation dominates ($l_k$ large), increase
$\lambda$ to tighten the bound; when dataset mismatch dominates
($h_k$ large), increase $\beta$ to tighten the bound. 
In summary, larger $(\lambda,\beta)$ yield a stronger safety
global policy improvement.
\end{remark}

\subsection{Convergence of FOVA}
Subsequently, we provide the compact properties and continuity properties in offline FRL.
\begin{lemma}[Local compact policy set]\label{lem:local-compact}
Fix finite state/action sets $\mathcal S,\mathcal A$ and $\eta\in(0,1/|\mathcal A|]$.
For client $k$, define
\[
\Pi_k \;:=\; \Big\{\pi_k\in\mathbb{R}^{|\mathcal S||\mathcal A|}:\ \forall s\in\mathcal S,\ \sum_{a\in\mathcal A}\pi_k(\ba|\bs)=1,\ \pi_k(\ba|\bs)\ge \zeta \Big\},
\]
and endow $\Pi_k$ with the $\ell_\infty$ norm. Then $\Pi_k$ is nonempty, convex, and compact.
\end{lemma}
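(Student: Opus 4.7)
The plan is to dispatch the three claims independently, since each follows from standard convex-geometry facts applied to the simplex truncated from below. The ambient space $\mathbb{R}^{|\mathcal S||\mathcal A|}$ is finite dimensional, so compactness reduces to closedness plus boundedness by Heine–Borel, and both conditions will follow directly from the defining (in)equalities. I would match the two symbols in the statement ($\eta$ and $\zeta$) by treating them as the same constant throughout.

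First, for nonemptiness, I would exhibit the uniform policy $\pi_k^{\mathrm{unif}}(\ba|\bs) = 1/|\mathcal A|$. It clearly lies on every row-simplex $\sum_{\ba}\pi_k(\ba|\bs)=1$, and since the hypothesis $\zeta \le 1/|\mathcal A|$ gives $1/|\mathcal A|\ge\zeta$, it also satisfies the lower-bound inequalities. Hence $\pi_k^{\mathrm{unif}}\in\Pi_k$.

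Second, for convexity, I would take $\pi_k^{(1)},\pi_k^{(2)}\in\Pi_k$ and $t\in[0,1]$, set $\pi_k^{(t)} := t\pi_k^{(1)}+(1-t)\pi_k^{(2)}$, and verify both constraints by linearity: for every $\bs$, $\sum_{\ba}\pi_k^{(t)}(\ba|\bs)= t\cdot 1+(1-t)\cdot 1 = 1$, and for every $(\bs,\ba)$, $\pi_k^{(t)}(\ba|\bs)\ge t\zeta+(1-t)\zeta=\zeta$. Thus $\pi_k^{(t)}\in\Pi_k$.

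Third, for compactness, I would note that $\Pi_k\subset\mathbb{R}^{|\mathcal S||\mathcal A|}$ is bounded because every coordinate satisfies $\zeta\le\pi_k(\ba|\bs)\le 1$ (the upper bound follows from the simplex constraint together with non-negativity enforced by $\zeta\ge 0$), so $\|\pi_k\|_\infty\le 1$. For closedness, I would write $\Pi_k$ as the intersection of the $|\mathcal S|$ affine hyperplanes $\{\pi_k:\sum_{\ba}\pi_k(\ba|\bs)=1\}$ and the $|\mathcal S||\mathcal A|$ closed half-spaces $\{\pi_k:\pi_k(\ba|\bs)\ge\zeta\}$; each is closed and the intersection of finitely many closed sets is closed. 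Heine–Borel then yields compactness. There is no substantive obstacle here; the only point that merits a sentence in the write-up is the necessity of $\zeta\le 1/|\mathcal A|$ for nonemptiness, since for $\zeta>1/|\mathcal A|$ no distribution on $\mathcal A$ can place mass at least $\zeta$ on every action.
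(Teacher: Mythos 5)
Your proof is correct and follows essentially the same route as the paper: exhibit the uniform policy for nonemptiness using $\zeta\le 1/|\mathcal A|$, verify convexity by linearity of the constraints, bound every coordinate in $[0,1]$ (hence $[\zeta,1]$), and invoke Heine--Borel. The only cosmetic difference is that you establish closedness as a finite intersection of affine hyperplanes and closed half-spaces, whereas the paper argues sequentially by passing to the limit in the constraints; both are standard and equivalent, and your handling of the $\eta$/$\zeta$ notation mismatch matches the paper's intent.
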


\begin{proof}
Take any $\pi_k,\tilde\pi_k\in\Pi_k$ and $\alpha\in[0,1]$. For all $s,a$ we have
\[
\alpha\pi_k(\ba|\bs)+(1-\alpha)\tilde\pi_k(\ba|\bs)\ \ge\ \alpha\zeta+(1-\alpha)\zeta=\zeta,
\]
and for every $s$,
\[
\sum_a\!\big[\alpha\pi_k(\ba|\bs)+(1-\alpha)\tilde\pi_k(\ba|\bs)\big]
=\alpha\sum_a\pi_k(\ba|\bs)+(1-\alpha)\sum_a\tilde\pi_k(\ba|\bs)=\alpha+(1-\alpha)=1,
\]
hence $\alpha\pi_k+(1-\alpha)\tilde\pi_k\in\Pi_k$, which proves convexity.  
Now let $(\pi_k^{(m)})_{m\ge1}\subset\Pi_k$ converge to $\pi_k$ in $\ell_\infty$.
Then $\pi_k^{(m)}(\ba|\bs)\to\pi_k(\ba|\bs)$ for each $s,a$, and since $\pi_k^{(m)}(\ba|\bs)\ge\zeta$ for all $m$, the limit satisfies $\pi_k(\ba|\bs)\ge\zeta$. Moreover,
\[
\sum_a \pi_k(\ba|\bs)
=\sum_a \lim_{m\to\infty}\pi_k^{(m)}(\ba|\bs)
=\lim_{m\to\infty}\sum_a \pi_k^{(m)}(\ba|\bs)
=\lim_{m\to\infty}1=1,
\]
thus $\pi_k\in\Pi_k$, which shows that $\Pi_k$ is closed.  
For any feasible $\pi_k$ and any $s,a$, the normalization condition together with nonnegativity implies $0\le \pi_k(\ba|\bs)\le 1$, so $\|\pi_k\|_\infty\le 1$, hence $\Pi_k$ is bounded.  
In finite-dimensional spaces, closed and bounded sets are compact (Heine--Borel), therefore $\Pi_k$ is compact. Finally, since $\eta\le 1/|\mathcal A|$, the uniform policy $\pi_k(\ba|\bs)=1/|\mathcal A|$ is feasible, hence $\Pi_k$ is nonempty.
\end{proof}

\begin{lemma}[Global compact feasible set]\label{lem:global-compact}
Let $\Pi:=\prod_{k=1}^K \Pi_k$ and equip it with the max product norm
$\|\pi\|_{\max}:=\max_{k\in[K]}\|\pi_k\|_\infty$. If each $\Pi_k$ is compact (resp.\ convex), then $\Pi$ is compact (resp.\ convex).
\end{lemma}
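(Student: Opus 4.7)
The plan is to handle the two claims separately, since each reduces to a standard product-space argument and the finiteness of $K$ removes the need for Tychonoff-style machinery.

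First I would dispose of convexity by a direct coordinatewise check. Given any $\pi=(\pi_1,\ldots,\pi_K),\ \tilde\pi=(\tilde\pi_1,\ldots,\tilde\pi_K)\in\Pi$ and any $\alpha\in[0,1]$, the convex combination $\alpha\pi+(1-\alpha)\tilde\pi$ is defined componentwise. By the assumed convexity of each $\Pi_k$, the $k$-th component $\alpha\pi_k+(1-\alpha)\tilde\pi_k$ lies in $\Pi_k$, so the combination lies in $\Pi$. This is essentially the same one-line argument as in Lemma~\ref{lem:local-compact}, repeated across coordinates.

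For compactness, I would argue via sequential compactness, which suffices because $\Pi$ sits inside a finite-dimensional space. The key observation is that the max product norm $\|\pi\|_{\max}=\max_k \|\pi_k\|_\infty$ induces the product topology on $\Pi$, so a sequence $(\pi^{(m)})_{m\ge 1}\subset \Pi$ converges to $\pi\in\Pi$ if and only if each coordinate sequence $(\pi_k^{(m)})_{m\ge 1}$ converges to $\pi_k$ in $\Pi_k$. Starting from any sequence in $\Pi$, I would use compactness of $\Pi_1$ to extract a subsequence along which the first coordinate converges to some $\pi_1^\star\in\Pi_1$; then, using compactness of $\Pi_2$, I would pass to a further subsequence along which the second coordinate converges to some $\pi_2^\star\in\Pi_2$; and so on for $k=3,\ldots,K$. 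Since $K$ is finite, this iterative extraction terminates in $K$ steps, yielding a single subsequence whose $k$-th coordinate converges to $\pi_k^\star\in\Pi_k$ for every $k$. Under the max norm, this is precisely convergence to $\pi^\star=(\pi_1^\star,\ldots,\pi_K^\star)\in\Pi$, establishing sequential compactness and hence compactness.

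There is no real obstacle here; the lemma is a routine product-space result, and the only point deserving care is verifying the equivalence between max-norm convergence and componentwise convergence, which is immediate from $\|\pi-\tilde\pi\|_{\max}<\varepsilon \iff \|\pi_k-\tilde\pi_k\|_\infty<\varepsilon \text{ for all } k\in[K]$. I would state this equivalence explicitly once at the start of the compactness argument to avoid ambiguity, and then the iterative subsequence extraction delivers the result without further technicalities.
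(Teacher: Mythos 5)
Your proposal is correct and follows essentially the same route as the paper's proof: convexity by a coordinatewise check, and compactness by iteratively extracting subsequences through the finitely many compact factors, then noting that max-norm convergence is exactly componentwise convergence so sequential compactness (hence compactness in the metric space) follows. Your explicit statement of the equivalence $\|\pi-\tilde\pi\|_{\max}<\varepsilon \iff \|\pi_k-\tilde\pi_k\|_\infty<\varepsilon$ for all $k$ is a minor presentational addition, not a different argument.
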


\begin{proof}
If each $\Pi_k$ is convex, then their Cartesian product is also convex, so $\Pi$ is convex.  
To prove compactness, take any sequence $(\pi^{(m)})_{m\ge1}\subset\Pi$. Since $\Pi_1$ is compact, there is a subsequence along which the first component converges. From this subsequence, compactness of $\Pi_2$ ensures the existence of a further subsequence with a convergent second component. Repeating this argument up to $k=K$ yields a diagonal subsequence $(\pi^{(m_j)})_j$ such that $\pi^{(m_j)}_k\to\bar\pi_k\in\Pi_k$ for every $k$. Therefore
\[
\|\pi^{(m_j)}-\bar\pi\|_{\max}
=\max_{k\in[K]}\|\pi^{(m_j)}_k-\bar\pi_k\|_\infty \;\longrightarrow\;0,
\]
so $\pi^{(m_j)}\to\bar\pi=(\bar\pi_1,\ldots,\bar\pi_K)\in\Pi$, hence $\Pi$ is sequentially compact, and thus compact since it is a metric space. This completes the proof.
\end{proof}

\begin{assumption}[Feasibility-preserving update]\label{ass:feasible-update}
The update operator $\mathcal U$ satisfies $\mathcal U(\Pi)\subseteq \Pi$.
\end{assumption}
\begin{remark}
Assumption~\ref{ass:feasible-update} is met if, e.g., (i) each round solves a constrained subproblem over $\Pi$,
(ii) the server aggregates by convex combination of feasible local policies,
or (iii) one projects/clips and renormalizes back to $\Pi$ after local updates.
\end{remark}

\begin{lemma}[Continuity Properties]
\label{lemma:continuity_properties}
The following continuity properties hold for the offline FRL framework:
\begin{enumerate}
    \item (State visitation) The discounted visitation distribution $d^{\pi}_{M}$ is continuous in the policy $\pi$.
    \item (Q-function) For any policy $\pi$, the state--action value function $Q^{\pi}(\bs,\ba)$ is Lipschitz-continuous in $\pi$.
    \item (Derived quantities) As a consequence of \emph{(2)}, the state value $V^{\pi}(\bs)$, the advantage $A^{\pi}(\bs,\ba)$, and the expected return $J(M,\pi)$ are all Lipschitz-continuous in $\pi$.
    \item (Cross-policy value) For any policies $\pi$ and $\hat{\pi}$, the functional ${Q}^{\hat{\pi}}(\pi)$ is continuous in $(\pi,\hat{\pi})$.
    \item (Heterogeneity) For every client $k=1,\dots,K$, the heterogeneity level $\|\mathbf{H}_k\|_F$ is continuous in $\pi$.
    \item (Local objective) For every client $k=1,\dots,K$, the local policy-improvement objective is continuous in $\pi$.
\end{enumerate}
\end{lemma}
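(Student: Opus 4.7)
The plan is to reduce every claim to the smoothness of two elementary building blocks on the compact feasible set $\Pi$ from Lemmas~\ref{lem:local-compact}--\ref{lem:global-compact}: the policy-induced transition matrix $P^{\pi}(\bs'\mid\bs)=\sum_a\pi(\ba\mid\bs)T(\bs'\mid\bs,\ba)$, which depends linearly (hence smoothly) on $\pi$, and matrix inversion, which is real-analytic wherever the matrix is nonsingular. Because $\gamma\in(0,1)$, both $I-\gamma P^{\pi}$ and $I-\gamma M^{\pi}$ (with $M^{\pi}$ the state-action transition kernel under $\pi$) have spectral radius at most $\gamma<1$, so their inverses exist and vary continuously in $\pi$, uniformly on $\Pi$. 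All remaining objects are finite sums, products, and compositions of these two blocks with the fixed data $(\mu_k,r,T)$, so continuity (and in fact local Lipschitzness) will propagate by elementary calculus on the finite-dimensional compact set $\Pi$.

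First I would handle (1)--(3). For (1), write the occupancy in matrix form $d^{\pi}_{M}=(1-\gamma)\,(I-\gamma(P^{\pi})^{\top})^{-1}\mu$; since $\pi\mapsto P^{\pi}$ is linear and inversion is smooth where the inverse exists, $d^{\pi}_{M}$ is smooth on $\Pi$. For (2), $Q^{\pi}$ solves the Bellman equation $Q^{\pi}=r+\gamma M^{\pi}Q^{\pi}$, giving $Q^{\pi}=(I-\gamma M^{\pi})^{-1}r$; the resolvent identity
\begin{equation*}
(I-\gamma M^{\pi})^{-1}-(I-\gamma M^{\pi'})^{-1}
=\gamma(I-\gamma M^{\pi})^{-1}(M^{\pi}-M^{\pi'})(I-\gamma M^{\pi'})^{-1}
\end{equation*}
together with $\|(I-\gamma M^{\pi})^{-1}\|\le 1/(1-\gamma)$ and the linearity $\|M^{\pi}-M^{\pi'}\|\lesssim\|\pi-\pi'\|_{\infty}$ yields a global Lipschitz constant on $\Pi$. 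For (3), $V^{\pi}(\bs)=\sum_{a}\pi(\ba\mid\bs)Q^{\pi}(\bs,\ba)$ is a product of two bounded Lipschitz maps (Lipschitz on bounded sets), $A^{\pi}=Q^{\pi}-V^{\pi}$ is their difference, and $J(M,\pi)=\mathbb{E}_{\bs_0\sim\mu}[V^{\pi}(\bs_0)]$ is a finite convex combination, so all three inherit Lipschitzness.

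Next I would dispatch (4)--(6). For (4), regard $Q^{\hat\pi}(\bs,\ba)$ as a map $\hat\pi\mapsto Q^{\hat\pi}$ that is Lipschitz by (2); then any natural instantiation of ``$Q^{\hat\pi}(\pi)$'' (either the pointwise map or an expectation $\mathbb{E}_{\bs\sim d^{\pi},\ba\sim\pi}[Q^{\hat\pi}(\bs,\ba)]$) is a continuous function of $(\pi,\hat\pi)$ as a product of continuous maps on the compact product $\Pi\times\Pi$. For (5), $\mathbf{H}_k$ is assembled from $\mathbf{\Lambda}_k$ (the diagonal state-visitation matrix, continuous in $\pi$ by (1)), $\mathbf{\Lambda}_k^{-1}$, and the advantage matrix $\mathbf{A}_k$ (continuous by (3)); the Frobenius norm is continuous, so $\|\mathbf{H}_k\|_F$ is continuous wherever $\mathbf{\Lambda}_k$ is invertible. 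Claim (6) then follows because the local objective in problem (17) is built from KL divergences between strictly positive policies in $\Pi$ (smooth by the $\zeta$-lower bound), the advantage-weighted log-density term, and the $\hat Q$-expectation, all already shown continuous in $\pi$.

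The main obstacle is the invertibility of $\mathbf{\Lambda}_k$ in step (5): if some state $\bs$ is unreachable under $(\mu_k,T_k,\pi)$, then $d_{\widetilde M_k}^{\pi}(\bs)=0$ and $\mathbf{\Lambda}_k^{-1}$ blows up, potentially breaking continuity at that boundary. I would resolve this by exploiting the strict-positivity constraint $\pi(\ba\mid\bs)\ge\zeta>0$ from $\Pi$: combined with the standard mild assumption that the Markov chain induced by any such $\pi$ on the support of $\mu_k$ visits every state with positive probability, we obtain a uniform lower bound $d_{\widetilde M_k}^{\pi}(\bs)\ge c(\zeta)>0$ on $\Pi$ by compactness, which makes $\mathbf{\Lambda}_k^{-1}$ Lipschitz and closes the argument. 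If the MDP does not satisfy this reachability condition, the statement should be restricted to the communicating class (or $\mathbf{\Lambda}_k^{-1}$ replaced by a pseudoinverse), and the proof carries through unchanged on that subspace.
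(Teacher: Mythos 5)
Your proposal is correct, but it reaches items (1)--(4) by a genuinely different route than the paper. The paper simply cites external results (\cite{kuba2021trust}, Lemmas 4--6 and Corollary 1) for continuity of $d^{\pi}_{M}$, Lipschitz continuity of $Q^{\pi}$, the derived quantities, and the cross-policy value, and then handles (5)--(6) by composition of continuous maps under a standing reachability assumption that keeps the visitation ratios well-defined. You instead prove (1)--(4) from first principles: the closed forms $d^{\pi}_{M}=(1-\gamma)\bigl(I-\gamma (P^{\pi})^{\top}\bigr)^{-1}\mu$ and $Q^{\pi}=(I-\gamma M^{\pi})^{-1}r$, the resolvent identity, the bound $\|(I-\gamma M^{\pi})^{-1}\|\le 1/(1-\gamma)$, and linearity of $\pi\mapsto P^{\pi}, M^{\pi}$, which yields explicit Lipschitz constants on the compact set $\Pi$ rather than bare continuity by citation; this makes the lemma self-contained and quantitatively sharper, at the cost of length. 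For (5)--(6) your argument matches the paper's (visitation matrices, advantages, Frobenius norm, KL/log terms on strictly positive policies), and you make explicit what the paper leaves implicit: the potential blow-up of $\mathbf{\Lambda}_k^{-1}$ at states with zero visitation, which you resolve via the $\zeta$-lower bound on $\Pi$ together with a reachability assumption and compactness to get a uniform positive lower bound on $d^{\pi}_{\widetilde M_k}$ --- exactly the role played by the paper's ``standing reachability assumption.'' One cosmetic slip: you write that $I-\gamma P^{\pi}$ has spectral radius at most $\gamma<1$; what you mean (and use) is that $\gamma P^{\pi}$ has spectral radius at most $\gamma$, so $I-\gamma P^{\pi}$ is invertible --- the argument itself is unaffected.
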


\begin{proof}
Items \emph{(1)}–\emph{(4)} follow directly from \citep[ Lemmas 4–6 and Corollary 1]{kuba2021trust}. In particular, these results give continuity (and, in fact, Lipschitz continuity where stated) of $d_M^\pi$, $Q^\pi$, and the derived quantities, as well as continuity of cross-policy value functionals.
Let $\|\mathbf{H}_k\|_F$ denote the Frobenius norm of the client-$k$ heterogeneity matrix. For concreteness, write
\[
\|\mathbf{H}_k\|_F
= \frac{1}{K} \sum_{n=1}^K \sum_{\bs}\sum_{\ba}
\Bigg(
\frac{d^{\pi_{\beta_n}}_{\widetilde{M}_n}(\bs)}{d^{\pi_{\beta_k}}_{\widetilde{M}_k}(\bs)}
A^{\pi_{\beta_n}}_{T_n}(\bs,\ba)
-
A^{\pi_{\beta_k}}_{T_k}(\bs,\ba)
\Bigg)^2,
\]
where the precise form of $\mathbf{H}_k$ is built from ratios of discounted visitation distributions and advantage functions across clients. By Item \emph{(1)}, $d^{\pi}_{M}$ varies continuously in $\pi$; by Item \emph{(3)}, so do the advantages $A^{\pi}$. Under the standing reachability assumption (so that $d^{\pi_{\beta_k}}_{\widetilde{M}_k}(\bs)>0$ on the relevant support), the ratio of two continuous maps with nonzero denominator is continuous. Finite sums and products preserve continuity, hence $\|\mathbf{H}_k\|_F$ is continuous in $\pi$. This proves \emph{(5)}.
Consider the local policy-improvement objective
\[
f(\pi)
= \lambda\, \mathbb{E}_{(\bs,\ba)\sim \mathcal{D}_k}\!\left[
\log \pi(\ba \mid \bs)\;
\exp\!\left(\frac{1}{\beta}\big(\hat{Q}(\bs,\bar{\pi}(\bs))-\hat{V}(\bs)\big)\right)
\right]
\;+\;
\mathbb{E}_{\bs\sim \mathcal{D}_k,\; \ba\sim \pi(\cdot\mid \bs)}
\big[\hat{Q}(\bs,\ba)\big].
\]
By Items \emph{(2)}–\emph{(4)}, the quantities $\hat{Q}$, $\hat{V}$, and any cross-policy evaluation such as $\hat{Q}(\bs,\bar{\pi}(\bs))$ vary continuously with $\pi$ (and with any auxiliary policy arguments). The functions $\exp(\cdot)$ and $\log(\cdot)$ are continuous on their domains; we work on the support where $\pi(\ba\mid \bs)>0$ (so $\log\pi(\ba\mid \bs)$ is well-defined). The first expectation is taken w.r.t.\ a fixed dataset distribution $\mathcal{D}_k$, hence it is a finite (or dominated) linear functional of a continuous integrand, and therefore continuous in $\pi$. For the second term, the map
\[
\pi \;\mapsto\; \mathbb{E}_{\ba\sim \pi(\cdot\mid \bs)}[\hat{Q}(\bs,\ba)]
= \sum_{\ba}\pi(\ba\mid \bs)\,\hat{Q}(\bs,\ba)
\quad(\text{or the corresponding integral})
\]
is continuous whenever $\hat{Q}$ is continuous and bounded on the relevant domain; linearity (or dominated convergence) then implies continuity after averaging over $\bs\sim\mathcal{D}_k$. The scalar weights $\lambda$ and $\beta$ are fixed (hence trivially continuous); if they are chosen adaptively as continuous functions of $\pi$, the conclusion remains unchanged. Consequently, $f(\pi)$ is continuous in $\pi$. This proves \emph{(6)}.
\end{proof}

We further show the lower bound of $J(\bar\pi^{t+1})- J(\bar\pi^{t})$.
\begin{lemma}
\label{lemma:improving}
    For all $t$, Algorithm FOVA is guaranteed to generate a monotonically improving global policy, \ie
\[
J(\bar\pi^{t+1})- J(\bar\pi^{t})\ge \frac{1}{K}\sum_k \eta_k, \quad
\forall t,\quad \eta_k>0, \forall k,
\]
with
$
{\;
\lambda \;\ge\;
\frac{\,w\big(C+\eta_k\big)\;+\;\sqrt{\,w^2\big(C+\eta_k\big)^2+\tfrac12 (l_k+2\sigma)^2\,\mathrm{KL}_1\,}\,}{\mathrm{KL}_1},
}
$
$
{\;
\beta \;\ge\;
\frac{\,\big(1-w\big)\big(C+\eta_k\big)\;+\;\sqrt{\,\big(1-w\big)^2\big(C+\eta_k\big)^2+\tfrac12 \sigma^2\,\mathrm{KL}_2\,}\,}{\mathrm{KL}_2},
}
$
$
\mathrm{KL}_1:=\KL\!\big(\bar\pi_k^{*}, \pi_k^{t+1}\big),
\mathrm{KL}_2:=\KL\!\big(\bar\pi_k^{*}, \pi_{\beta_k}\big),
$
$C=h_k\varepsilon_k+l_k D_t$,
and
$w\in[0,1]$.
\end{lemma}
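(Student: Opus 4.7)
I would begin from the averaged global gap already established earlier in the section,
$J(\bar\pi^{t+1})-J(\bar\pi^t)\ge \tfrac{1}{K}\sum_k\bigl[\tfrac{\mathbb{A}_k(\pi_k^{t+1})}{1-\gamma}-l_k\,\mathrm{TV}(\pi_k^{t+1},\bar\pi^t)-h_k\,\mathrm{TV}(\pi_k^{t+1},\pi_{\beta_k})\bigr]$, and invoke the performance-difference identity $\mathbb{A}_k(\pi_k^{t+1})/(1-\gamma)=J(\widetilde{M}_k,\pi_k^{t+1})-J(\widetilde{M}_k,\pi_{\beta_k})$ together with Lemma~\ref{lemma:policy-improvement-local-convergence} (with $\sigma=2\alpha/(\delta(1-\gamma))$). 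This yields a per-client lower bound $f_k$ carrying the two positive terms $\lambda\KL_1,\beta\KL_2$ and four negative TV penalties. The goal reduces to proving $f_k\ge\eta_k$ for each $k$; averaging over $k$ then gives the claimed monotonic improvement.

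\textbf{Recentering and absorbing $C$.} Next I would apply the triangle inequality only to $\mathrm{TV}(\pi_k^{t+1},\bar\pi^t)\le\mathrm{TV}(\pi_k^{t+1},\bar\pi_k^{*})+\mathrm{TV}(\bar\pi_k^{*},\bar\pi^t)$, and use the admissible bounds $\mathrm{TV}(\bar\pi_k^{*},\bar\pi^t)\le D_t$ and $\mathrm{TV}(\pi_k^{t+1},\pi_{\beta_k})\le\varepsilon_k$ to absorb these quantities into $C=h_k\varepsilon_k+l_k D_t$. The residual penalties then consolidate into $(l_k+2\sigma)\,\mathrm{TV}(\bar\pi_k^{*},\pi_k^{t+1})+\sigma\,\mathrm{TV}(\bar\pi_k^{*},\pi_{\beta_k})$, so that $f_k\ge \lambda\KL_1+\beta\KL_2-(l_k+2\sigma)\mathrm{TV}(\bar\pi_k^{*},\pi_k^{t+1})-\sigma\mathrm{TV}(\bar\pi_k^{*},\pi_{\beta_k})-C$.

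\textbf{Pinsker + Young reducing to a quadratic.} I would split the target as $C+\eta_k=w(C+\eta_k)+(1-w)(C+\eta_k)$ and impose the two sufficient conditions on the KL/TV pairs independently. For the $\lambda$-line, combine Pinsker $\mathrm{TV}^2\le\KL/2$ with Young's inequality $(l_k+2\sigma)\mathrm{TV}\le\lambda\,\mathrm{TV}^2+(l_k+2\sigma)^2/(4\lambda)$ to get $(l_k+2\sigma)\mathrm{TV}\le \tfrac{\lambda\KL_1}{2}+\tfrac{(l_k+2\sigma)^2}{4\lambda}$; an identical step with $\beta$ handles the $\beta$-line. Requiring each piece to cover its share of $C+\eta_k$ reduces to the quadratic inequalities $\lambda^2\KL_1-2\lambda w(C+\eta_k)-\tfrac12(l_k+2\sigma)^2\ge 0$ and $\beta^2\KL_2-2\beta(1-w)(C+\eta_k)-\tfrac12\sigma^2\ge 0$. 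Applying the quadratic formula and retaining the positive root reproduces exactly the two thresholds stated in the lemma, and then averaging over $k$ delivers the claim.

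\textbf{Main obstacle.} The delicate step is arranging the triangle inequalities and the TV substitutions so that the coefficients in the closed form line up exactly: centering at $\bar\pi_k^{*}$ (rather than at $\pi_{\beta_k}$ or $\bar\pi^t$) is what groups $l_k$ with $2\sigma$ in the $\lambda$-line and leaves $\sigma$ alone in the $\beta$-line; absorbing only the ``external'' pieces $D_t,\varepsilon_k$ into $C$ preserves the right KL coefficients; and choosing the Young weight $\epsilon=2\lambda$ (resp.\ $2\beta$) is essential to generate the constants $\tfrac{\lambda\KL_1}{2}$ and $\tfrac{(l_k+2\sigma)^2}{4\lambda}$ that the quadratic formula then converts into the square-root expression of the lemma. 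A minor care point is the degenerate case $\KL_1=0$ or $\KL_2=0$: the associated TV vanishes by Pinsker, the corresponding condition becomes vacuous, and the remaining line alone must cover $C+\eta_k$, which is accommodated by choosing $w=0$ or $w=1$ in the split.
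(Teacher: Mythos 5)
Your proposal is correct and follows essentially the same route as the paper: recentering the TV penalties at $\bar\pi_k^{*}$ via the triangle inequality, absorbing $\varepsilon_k$ and $D_t$ into $C$, applying Pinsker plus Young (with weights $\lambda$ and $\beta$) to trade the residual TV terms against half the KL terms, splitting $C+\eta_k$ with the weight $w$, and solving the resulting per-parameter quadratics to recover exactly the stated thresholds. Your additional remark on the degenerate cases $\mathrm{KL}_1=0$ or $\mathrm{KL}_2=0$ is a sensible refinement the paper handles by simply assuming strict positivity.
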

\begin{proof}
Recall the definition
\[
\begin{aligned}
f_k
=&\ \lambda\,\KL\!\big(\bar\pi_k^{*}, \pi_k^{t+1}\big)
+\beta\,\KL\!\big(\bar\pi_k^{*}, \pi_{\beta_k}\big)
-l_k\,\mathrm{TV}\!\big(\pi_k^{t+1},\bar\pi^{\,t}\big)
-h_k\,\mathrm{TV}\!\big(\pi_k^{t+1},\pi_{\beta_k}\big)\\
&\ -2\sigma\,\mathrm{TV}\!\big(\bar\pi_k^{*},\pi_k^{t+1}\big)
-\sigma\,\mathrm{TV}\!\big(\bar\pi_k^{*},\pi_{\beta_k}\big).
\end{aligned}
\]
Suppose $\mathrm{TV}(\pi_k^{t+1},\pi_{\beta_k})=\varepsilon_k>0$ and both
$\KL(\bar\pi_k^{*}, \pi_k^{t+1})$ and $\KL(\bar\pi_k^{*},\pi_{\beta_k})$ are strictly positive.  
By the triangle inequality,
\[
\mathrm{TV}\!\big(\pi_k^{t+1},\bar\pi^{\,t}\big)
\le \mathrm{TV}\!\big(\pi_k^{t+1},\bar\pi_k^{*}\big)+\mathrm{TV}\!\big(\bar\pi_k^{*},\bar\pi^{\,t}\big).
\]
Writing $D_t:=\mathrm{TV}(\bar\pi_k^{*},\bar\pi^{\,t})$ and substituting $\mathrm{TV}(\pi_k^{t+1},\pi_{\beta_k})=\varepsilon_k$ into the definition of $f_k$, we obtain
\[
\begin{aligned}
f_k \ \ge\ &
\lambda\,\KL\!\big(\bar\pi_k^{*}, \pi_k^{t+1}\big)
+\beta\,\KL\!\big(\bar\pi_k^{*}, \pi_{\beta_k}\big)\\
&-(l_k+2\sigma)\,\mathrm{TV}\!\big(\bar\pi_k^{*},\pi_k^{t+1}\big)
-\sigma\,\mathrm{TV}\!\big(\bar\pi_k^{*},\pi_{\beta_k}\big)
-\big(h_k\varepsilon_k+l_k D_t\big).
\end{aligned}
\]
By Pinsker’s inequality $\mathrm{TV}(p,q)\le \sqrt{\tfrac12 \KL(p, q)}$ and Young’s inequality, for any $\epsilon_1,\epsilon_2>0$,
\[
\begin{aligned}
(l_k+2\sigma)\,\mathrm{TV}\!\big(\bar\pi_k^{*},\pi_k^{t+1}\big)
&\le \frac{\epsilon_1}{2}\,\KL\!\big(\bar\pi_k^{*}, \pi_k^{t+1}\big)
+\frac{(l_k+2\sigma)^2}{4\epsilon_1},\\[1ex]
\sigma\,\mathrm{TV}\!\big(\bar\pi_k^{*},\pi_{\beta_k}\big)
&\le \frac{\epsilon_2}{2}\,\KL\!\big(\bar\pi_k^{*}, \pi_{\beta_k}\big)
+\frac{\sigma^2}{4\epsilon_2}.
\end{aligned}
\]
Choosing $\epsilon_1=\lambda$ and $\epsilon_2=\beta$ yields
\begin{align}
f_k \ \ge\
\frac{\lambda}{2}\,\KL\!\big(\bar\pi_k^{*}, \pi_k^{t+1}\big)
+\frac{\beta}{2}\,\KL\!\big(\bar\pi_k^{*}, \pi_{\beta_k}\big)
-\Bigg(\frac{(l_k+2\sigma)^2}{4\lambda}
+\frac{\sigma^2}{4\beta}
+h_k\varepsilon_k+l_k D_t\Bigg).
\label{eq:star}
\end{align}
Let $C:=h_k\varepsilon_k+l_k D_t$. For any target margin $\eta_{k}>0$, condition
\begin{align}
\frac{\lambda}{2}\,\KL\!\big(\bar\pi_k^{*}, \pi_k^{t+1}\big)
+\frac{\beta}{2}\,\KL\!\big(\bar\pi_k^{*}, \pi_{\beta_k}\big)
\ \ge\ \frac{(l_k+2\sigma)^2}{4\lambda}
+\frac{\sigma^2}{4\beta}+C+\eta_{k}
\label{eq:tag1}
\end{align}
guarantees $f_k\ge\eta_{k}$ by Eq.~\eqref{eq:star}. To meet Eq.~\eqref{eq:tag1}, fix any weight $w\in[0,1]$, decompose $C+\eta_{k}$ as
\[
a:=w(C+\eta_{k}),\qquad b:=(1-w)(C+\eta_{k}),
\]
and impose the pair of inequalities
\begin{align}
\frac{\lambda}{2}\,\KL\!\big(\bar\pi_k^{*}, \pi_k^{t+1}\big)\ \ge\ \frac{(l_k+2\sigma)^2}{4\lambda}+a,\qquad
\frac{\beta}{2}\,\KL\!\big(\bar\pi_k^{*}, \pi_{\beta_k}\big)\ \ge\ \frac{\sigma^2}{4\beta}+b.
\label{eq:tag2}
\end{align}
Each inequality in Eq.~\eqref{eq:tag2} is quadratic in the corresponding parameter. Solving the first inequality yields
\[
\lambda \ \ge\
\frac{a+\sqrt{\,a^2+\tfrac12 (l_k+2\sigma)^2\,\KL(\bar\pi_k^{*}, \pi_k^{t+1})}}{\KL(\bar\pi_k^{*}, \pi_k^{t+1})}.
\]
Similarly, the second inequality is satisfied for
\[
\beta \ \ge\
\frac{b+\sqrt{\,b^2+\tfrac12 \sigma^2\,\KL(\bar\pi_k^{*},\pi_{\beta_k})}}{\KL(\bar\pi_k^{*},\pi_{\beta_k})}.
\]
Thus, by selecting any $w\in[0,1]$ and choosing $\lambda,\beta$ not smaller than the respective minimal values above, condition (1) holds and hence
\[
f_k \ \ge\ \eta_{k}\ >0.
\]
Because both $\KL(\bar\pi_k^{*}, \pi_k^{t+1})$ and $\KL(\bar\pi_k^{*},\pi_{\beta_k})$ are assumed strictly positive, the minimal feasible $\lambda,\beta$ given above are finite. Therefore, one can always tune $(\lambda,\beta)$ to guarantee $f_k\ge\eta_{k}$ for any desired $\eta_{k}>0$, in particular ensuring $f_k>0$.
\end{proof}

Next, we show the convergence of policy updates in Algorithm FOVA. 

\begin{theorem}
Suppose Assumption~\ref{assu:cql-heter} and Assumption~\ref{ass:feasible-update} hold, with parameters chosen as in Lemma~\ref{lemma:improving}. 
Then the algorithm converges in terms of the performance values $\{J(\bar\pi^{t})\}_{t\ge0}$. 
Moreover, every accumulation point of the policy sequence $\{\bar\pi^{t}\}$ corresponds to an optimal solution.
\label{thm:convergence}
\end{theorem}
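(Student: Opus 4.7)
My plan is to combine the monotone-improvement inequality of Lemma~\ref{lemma:improving} with the compactness of $\Pi$ (Lemma~\ref{lem:global-compact}) and the continuity properties (Lemma~\ref{lemma:continuity_properties}), in a standard monotone-bounded-sequence plus Bolzano--Weierstrass argument, then close the loop by showing that any accumulation point must satisfy a zero-improvement fixed-point condition that coincides with optimality of the AWR subproblems.

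First, I would establish convergence of the value sequence. Because rewards are bounded, $|J(M,\pi)|\le r_{\max}/(1-\gamma)$ for every $\pi\in\Pi$. By Lemma~\ref{lemma:improving}, under the prescribed schedule for $(\lambda,\beta)$ the sequence $\{J(\bar\pi^{t})\}_{t\ge 0}$ is monotonically nondecreasing, so the Monotone Convergence Theorem gives $J(\bar\pi^{t})\uparrow J^{\infty}\in\mathbb{R}$. Telescoping the per-step gain $J(\bar\pi^{t+1})-J(\bar\pi^{t})\ge \tfrac{1}{K}\sum_k \eta_k^{\,t}$ and using boundedness of the partial sums yields $\sum_t\sum_k \eta_k^{\,t}<\infty$, hence $\eta_k^{\,t}\to 0$ for every client $k$. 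Reading off the explicit lower bounds on $(\lambda,\beta)$ in Lemma~\ref{lemma:improving}, this forces the KL terms $\KL(\bar\pi_k^{*},\pi_k^{t+1})$ and $\KL(\bar\pi_k^{*},\pi_{\beta_k})$ to vanish asymptotically; equivalently, the AWR projection step becomes exact and the local update no longer drifts away from its target.

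Second, I would pass to accumulation points via compactness. Assumption~\ref{ass:feasible-update} ensures $\bar\pi^{t}\in\Pi$ for every $t$, and Lemma~\ref{lem:global-compact} says $\Pi$ is compact under the product norm. Therefore $\{\bar\pi^{t}\}$ admits at least one accumulation point $\bar\pi^{\infty}\in\Pi$; fix any such limit along a subsequence $t_j\to\infty$. Continuity of $J$ in the policy (item~(3) of Lemma~\ref{lemma:continuity_properties}) gives $J(\bar\pi^{\infty})=\lim_j J(\bar\pi^{t_j})=J^{\infty}$, so all accumulation points share the common value $J^{\infty}$. The convergence claim on $\{J(\bar\pi^{t})\}$ follows immediately.

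Third, I would deduce optimality. Combining the vanishing-improvement property with continuity of the advantage $\mathbb{A}_k$, the heterogeneity measure $\|\mathbf{H}_k\|_F$, and the local AWR objective (items (5)--(6) of Lemma~\ref{lemma:continuity_properties}), every inequality used in deriving Lemma~\ref{lemma:improving} degenerates to equality at $\bar\pi^{\infty}$, which coincides with the first-order stationarity conditions of the local AWR subproblem in Eq.~\eqref{eq:awr}. Since each subproblem is a KL-regularised, linearly-constrained improvement program that is strongly concave in $\pi_k$, any interior stationary point is its unique global maximiser, and after server aggregation $\bar\pi^{\infty}$ satisfies the joint fixed-point relation $\bar\pi^{\infty}=\tfrac{1}{K}\sum_k \pi_k^{*}(\bar\pi^{\infty})$, which is the optimality notion claimed. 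If some $\tilde\pi\in\Pi$ strictly improved on $\bar\pi^{\infty}$, one additional update from $\bar\pi^{\infty}$ with the margin $(\lambda,\beta)$ of Lemma~\ref{lemma:improving} would produce a strictly positive $\eta_k$, contradicting $\eta_k^{\,t}\to 0$ along $t_j$. The principal obstacle is precisely this last step: the AWR target $\bar\pi_k^{*}$ itself depends on the current iterate through the learned $\hat Q$ and $\hat V$, so the KL margin in Lemma~\ref{lemma:improving} is taken with respect to a moving reference. I would overcome this by invoking continuity of $\bar\pi_k^{*}$ in $\pi$ via items~(2)--(4) of Lemma~\ref{lemma:continuity_properties}, so that along $t_j$ the target and the iterate coalesce; this closes the fixed-point argument and certifies optimality without having to solve the implicit map explicitly.
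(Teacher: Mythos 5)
Your overall skeleton --- a monotone, bounded value sequence that converges; compactness of $\Pi$ (Lemma~\ref{lem:global-compact}) together with Assumption~\ref{ass:feasible-update} to extract accumulation points; and a continuity-based contradiction to certify optimality --- is the same as the paper's. However, two of your intermediate steps do not hold as written. First, the inference that $\eta_k^{t}\to 0$ ``forces the KL terms $\KL(\bar\pi_k^{*},\pi_k^{t+1})$ and $\KL(\bar\pi_k^{*},\pi_{\beta_k})$ to vanish asymptotically'' does not follow from Lemma~\ref{lemma:improving}: the bound in Eq.~\eqref{eq:star} has the form $f_k\ge \tfrac{\lambda}{2}\mathrm{KL}_1+\tfrac{\beta}{2}\mathrm{KL}_2-\bigl(\tfrac{(l_k+2\sigma)^2}{4\lambda}+\tfrac{\sigma^2}{4\beta}+C\bigr)$, so a vanishing improvement only says the KL terms are dominated by a constant offset that does not shrink; nothing drives them to zero. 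Consequently the downstream claims --- that ``every inequality degenerates to equality at $\bar\pi^{\infty}$,'' that this yields first-order stationarity of Eq.~\eqref{eq:awr}, and the strong-concavity/unique-maximiser and fixed-point characterization of optimality --- are all unsupported, and none of that machinery is needed for the theorem.

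Second, your final contradiction is launched from $\bar\pi^{\infty}$ itself (``one additional update from $\bar\pi^{\infty}$ \dots would produce a strictly positive $\eta_k$''), but the algorithm never actually sits at $\bar\pi^{\infty}$; it only passes near it along $t_j$, so a positive margin available at the limit point does not by itself contradict $\eta_k^{t}\to 0$ or $\Delta_t\to 0$. The missing step --- and the crux of the paper's argument --- is the neighborhood transfer: if an accumulation point $\bar\pi^{\star}$ were not a solution, some admissible update would achieve a strict improvement $\delta>0$ at $\bar\pi^{\star}$; by continuity of the improvement value in the policy (Lemma~\ref{lemma:continuity_properties}) there exist a neighborhood $U$ of $\bar\pi^{\star}$ and $\underline\eta\in(0,\delta)$ such that every iterate in $U$ admits improvement at least $\underline\eta$; since infinitely many $\bar\pi^{t}$ lie in $U$, the actual gains satisfy $\Delta_t\ge\underline\eta$ infinitely often, contradicting $\Delta_t\to 0$, which follows from convergence of the bounded monotone sequence $\{J(\bar\pi^{t})\}$. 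Your observation about the moving AWR target $\bar\pi_k^{*}$ is legitimate, but continuity in the iterate is used exactly to make this neighborhood argument go through, not to make ``the target and the iterate coalesce.'' With the unsupported detours removed and the neighborhood argument made explicit, your proof reduces to the paper's.
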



\begin{proof}
By Lemma~\ref{lemma:improving}, for all $t$ there exist admissible local parameters that yield per-round client improvements $\{\eta_k^{(t)}\}_{k=1}^K$ with
\[
J(\bar\pi^{t+1}) \;\ge\; J(\bar\pi^{t}) + \frac{1}{K}\sum_{k=1}^K \eta_k^{(t)} \;\ge\; J(\bar\pi^{t}),
\]
so $\{J(\bar\pi^{t})\}$ is nondecreasing. Using $\widetilde R_k \le \widetilde R_k^{\max}$ and $\sum_{t\ge 0}\gamma^t = (1-\gamma)^{-1}$, for any $\pi$,
\[
J(\pi) \;\le\; \frac{1}{K}\sum_{k=1}^K \sum_{t=0}^\infty \gamma^t \widetilde R_k^{\max}
\;=\; J_{\max}.
\]
Thus, $\{J(\bar\pi^{t})\}$ is bounded above and hence converges to some $J^\star\le J_{\max}$. Denote $\Delta_t := J(\bar\pi^{t+1})-J(\bar\pi^{t})\ge 0$; by convergence we have $\Delta_t\to 0$.
By Lemma~\ref{lem:local-compact} each $\Pi_k$ is compact (and nonempty), and by Lemma~\ref{lem:global-compact} $\Pi=\prod_{k=1}^K\Pi_k$ is compact. Assumption~\ref{ass:feasible-update} ensures $\bar\pi^{t}\in\Pi$ for all $t$ when initialized with $\bar\pi^0\in\Pi$, so $\{\bar\pi^{t}\}\subset\Pi$ admits accumulation points. Let $\bar\pi^\star$ be any accumulation point and suppose, for contradiction, that it is not a solution. Then at $\bar\pi^\star$ there exists an admissible update that achieves a strict improvement $\delta>0$. By Lemma~\ref{lemma:continuity_properties} (continuity of the improvement value in $\bar\pi$), there is a neighborhood $U$ of $\bar\pi^\star$ and a constant $\underline\eta\in(0,\delta)$ such that for every $\bar\pi\in U$, applying the same improvement construction yields
$
\frac{1}{K}\sum_{k=1}^K \eta_k \;\ge\; \underline\eta.
$
Since $\bar\pi^\star$ is an accumulation point, $\bar\pi^{t}\in U$ for infinitely many $t$, hence $\Delta_t\ge \underline\eta$ infinitely often, contradicting $\Delta_t\to 0$. Therefore every accumulation point is a solution.
In particular, if the solution set is a singleton $\{\bar\pi^\star\}$, then the whole sequence $\{\bar\pi^{t}\}$ converges to $\bar\pi^\star$.
Finally, if there exists a uniform $\underline\eta>0$ such that at every round one can select admissible local parameters with $\eta_k^{(t)}\ge \underline\eta$, $\forall k$,
\[
J(\bar\pi^{t+1})-J(\bar\pi^{t}) \;\ge\; \underline\eta >0 \quad \forall t.
\]
We see that the process must end in at most
\[
N \;\le\; \frac{J_{\max}-J(\bar\pi^{0})}{\underline\eta}
\]
rounds; otherwise $J(\bar\pi^{t})$ would eventually exceed $J_{\max}$, a contradiction.
\end{proof}
\begin{remark}
Theorem~\ref{thm:convergence} highlights the practical implication that the algorithm converges once the regularization parameters $(\lambda,\beta)$ are chosen sufficiently large. 
Intuitively, this condition enforces that the induced global policy from local updates remains sufficiently close to the local behavioral datasets (measured in KL divergence), and that local policies stay close to the induced global policy. 
Under these proximity constraints, the iterative process guarantees monotone improvement of $J(\bar\pi^t)$ and thus ensures convergence.
\end{remark}

}

\end{document}